\documentclass[10pt]{article}

\usepackage{amsmath,amsthm,verbatim,amssymb,amsfonts,amscd, graphicx, enumitem}
\usepackage{mathrsfs}
\usepackage{graphics}
\usepackage{centernot}
\usepackage{authblk}
\usepackage[T1]{fontenc}
\theoremstyle{plain}
\newtheorem{theorem}{Theorem}
\newtheorem{corollary}{Corollary}
\newtheorem{lemma}{Lemma}
\newtheorem*{remark}{Remark}
\newtheorem*{example}{Example}

\newtheorem{assumption}{Assumption}

\usepackage[colorlinks,linkcolor=blue,citecolor=blue]{hyperref}

\usepackage[bottom]{footmisc}
\usepackage{caption}
\usepackage{subcaption}
\usepackage{helvet}  
\usepackage{courier}  
\usepackage{url}  
\usepackage{graphicx}  
\usepackage{multirow}
\usepackage{amsthm}
\usepackage{color}
\usepackage{MnSymbol}
\usepackage{makecell}
\usepackage{arydshln}
\usepackage{amsmath}
\usepackage[dvipsnames]{xcolor}
\usepackage{caption} 
\usepackage{natbib}
\usepackage{bbm}

\usepackage{textcomp}
\usepackage{wrapfig}
\usepackage{algorithm}
\usepackage{algorithmic}

\usepackage{csquotes}

\newcommand{\Tr}{{\rm Tr}}

\title{High-Dimensional Learning Dynamics of \\ Attention-Indexed Models}

\author[1,2]{Yizhou Xu}
\author[2]{Margarita Sagitova}
\author[2]{Lenka Zdeborová}
\author[1]{Florent Krzakala}
\affil[1]{\small Information, Learning and Physics Laboratory, \'Ecole Polytechnique F\'ed\'erale de Lausanne (EPFL)}
\affil[2]{\small Statistical Physics of Computation Laboratory, \'Ecole Polytechnique F\'ed\'erale de Lausanne (EPFL)}
\date{}

\begin{document}
\maketitle
\begin{abstract}
Attention mechanisms are central to modern foundation models, yet their training dynamics remain poorly understood, especially when the attention matrices have extensive rank. In this work, we study attention-indexed models, a broad framework that can represent multi-layer and multi-head attention architectures. First, we show that, in a suitable high-dimensional limit, the population-loss landscape is characterized by a finite set of trace order parameters. In contrast, online stochastic gradient descent (SGD) is governed by an infinite hierarchy of matrix moments, which we show can be exponentially well-approximated by a finite truncated system. Second, this framework reveals that attention parameterization itself can act as an architectural implicit bias. Direct optimization of an attention matrix \(S\in\mathbb{R}^{d\times d}\) can remain trapped in an uninformative state. Tied attention (\(S=WW^\top\)) induces an automatic symmetry-breaking mechanism and yields weak recovery in \(\Theta(d^2\log d)\) samples. For untied attention, \(S=UV^\top\), we uncover a fast-slow mechanism: the pre-activation mean first evolves on a fast timescale, while the overlaps evolve on a slower one. Weak recovery on the \(\Theta(d^2\log d)\) scale occurs when the state selected by the fast dynamics breaks the initial symmetry.
\end{abstract}

\section{Introduction}
Attention mechanisms \cite{vaswani2017attention} are the central building blocks of modern foundation models, but their training dynamics remain poorly understood. This is particularly true in the high-dimensional feature-learning regime with extensive-rank attention matrices, whose ranks diverge as the embedding dimension grows. Understanding this regime raises two related but distinct questions. First, what is the correct macroscopic description of learning when the attention matrices have extensive rank? Second, once such a description is available, how does the parameterization of an attention matrix affect whether features can be learned?

Existing theoretical work has made substantial progress on several aspects of attention. One line studies the mechanisms of in-context learning \cite{kim2024transformers,chen2024training,nishikawa2025nonlinear}, while another analyzes training from scratch in simplified transformer models \cite{yang2024training,huang2025transformers,kunin2025alternating}. For feature-learning dynamics, however, most existing analyses either do not take a high-dimensional limit \cite{song2024unraveling,tian2023scan,makkuva2024local,nichani2024transformers,yang2024training} or focus on finite-rank attention matrices, corresponding to sequence-indexed models \cite{cui2024phase,cui2025high,troiani2025fundamental,duranthon2025statistical,marion2024attention,arnaboldi2025asymptotics}. Such low-rank descriptions do not capture the extensive-rank regime naturally associated with attention matrices in practice, whose rank scales with the embedding dimension.

In this work, we study this regime through attention-indexed models, building on the formulation introduced in \cite{boncoragliobayes,boncoragliosingle}. The model class describes losses depending on collections of quadratic token interaction $\frac{1}{\sqrt{d}}x_i^TS_{ij}x_k$ with the attention matrix $S_{ij}\in\mathbb{R}^{d\times d}$ of extensive rank. The formulation is broad: multi-layer and multi-head attention-only architectures can be expressed in terms of finitely many such quadratic forms; we give the construction in Section \ref{sec:models} and Appendix \ref{app:examples}. Previous analyses of attention-indexed models focused on static Bayes-optimal or empirical-risk-minimization \cite{boncoragliobayes,boncoragliosingle}, with tractable analysis restricted to a single-layer, single-head tied setting. Dynamical analyses of extensive-rank matrices have so far been available mainly for substantially simpler quadratic models \cite{martin2024impact,martin2026high,ziyin2026neural}.

A first difficulty is that the static and dynamical descriptions have fundamentally different dimensionality. We prove that, as \(d\to\infty\), the population loss is determined by finitely many order parameters. At the level of the loss landscape, the original \(\Theta(d^2)\)-dimensional optimization problem therefore admits a finite-dimensional description. The training dynamics are richer. Unlike multi-indexed and sequence-indexed models whose dynamics can be tracked by a finite set of order parameters \cite{goldt2019dynamics,arnaboldi2025asymptotics,bocchi2026escape}, for the tied and untied factorizations analyzed in this paper, online SGD generates an infinite hierarchy of joint matrix moments, and no fixed finite collection of moments is sufficient in general to close the dynamics. We nevertheless show that this hierarchy is tractable: its empirical trajectory converges to a deterministic infinite-dimensional ODE, and finite-order truncations approximate the limiting dynamics with an error that decays exponentially with the truncation degree.

The second question is what this dynamical description reveals about parameterization. We compare three ways of representing an effective attention matrix: direct optimization of \(S\), the tied factorization  $S=WW^\top$, and the untied factorization $ S=UV^\top$. Although these parameterizations can describe closely related predictors, they induce qualitatively different optimization geometries. This distinction becomes especially sharp when learning starts from an uninformative initialization. In high-dimensional inference, direct optimization can be bottlenecked by the information exponent \cite{arous2021online}: when the first-order correlation vanishes, SGD over \(S\) can remain trapped on the \(d^2\log d\) sample scale. Our analysis shows that factorized attention can alter this conclusion, but through different mechanisms in the tied and untied cases.

For tied attention, \(S=WW^\top\), the positive-semidefinite (PSD) factorization provides an automatic symmetry-breaking mechanism. Under a mild non-degeneracy condition, this mechanism yields weak recovery in $\Theta(d^2\log d)$ samples. The untied factorization \(S=UV^\top\) behaves differently. We find a fast–slow learning mechanism. The pre-activation means $\frac{1}{\sqrt d}\operatorname{Tr}(UV^\top)$ evolve on a fast timescale, while the matrix moments evolve on a slower timescale. The subsequent weak-recovery behavior is therefore controlled by the state selected during the fast boundary layer. If this shift breaks the relevant symmetry, weak recovery occurs in \(\Theta(d^2\log d)\) samples; if the symmetry survives, the dynamics remains uninformative on this sample scale.

These results give two complementary messages. At the methodological level, extensive-rank attention requires a new macroscopic dynamical description: a finite-dimensional loss landscape coexists with an intrinsically infinite-dimensional training dynamics. At the learning level, the same framework shows that attention parameterization acts as an architectural implicit bias: tied and untied factorizations can overcome—or fail to overcome—uninformative symmetries through mechanisms that are absent from direct optimization of the attention matrix.

Our main contributions are therefore:
\begin{itemize}
\item A high-dimensional dynamical framework for extensive-rank attention. We formulate a broad class of attention-indexed models and show that their population loss admits a finite-dimensional characterization. For the tied and untied parameterizations, we derive high-dimensional limits for online SGD. The resulting dynamics involve an infinite hierarchy of matrix moments, for which we establish well-posedness, convergence, and exponentially accurate truncations.
\item Tied attention induces automatic symmetry breaking and achieves weak recovery in \(\Theta(d^2\log d)\) samples. Untied attention learns through a fast–slow mechanism, which leads to a weak-recovery dichotomy: recovery on the \(\Theta(d^2\log d)\) scale occurs when the fast process breaks the relevant symmetry, whereas the dynamics stalls on this scale when it does not.
\end{itemize}
The remainder of the paper is organized as follows. Section \ref{sec:models} introduces attention-indexed models and establishes the finite-dimensional characterization of their population-loss landscape. Section \ref{sec:tied} studies tied attention, derives its infinite-dimensional macroscopic dynamics and truncation theory, and identifies the symmetry-breaking mechanism responsible for weak recovery. Section \ref{sec:untied} turns to untied attention and derives its two-timescale dynamics and associated weak-recovery dichotomy. The appendices contain the proofs, additional examples of attention-indexed models, and the analysis of direct optimization over the attention matrix $S$.

\section{Attention-indexed models and their population loss landscape}
\label{sec:models}

The first goal of this paper is to identify a macroscopic description that remains tractable when the ranks of the attention matrices diverge with the embedding dimension. Before turning to training dynamics, we first ask a simpler static question: which quantities determine the population loss in the high-dimensional limit? We show in this section that, despite the high dimensionality of the underlying matrices, the answer is finite-dimensional. This finite-dimensional loss landscape will provide the potential that drives the dynamical equations in Sections \ref{sec:tied} and \ref{sec:untied}.

Let $L=\Theta(1)$ represent the sequence length (number of tokens), and $K=\Theta(1)$ denote the number of indices, such as attention heads or layers ($K=2$ in \cite{boncoragliosingle} with one student and one teacher index). Let the data vectors $x_1, \dots, x_L \in \mathbb{R}^d$ be jointly Gaussian with zero mean and covariance given by $\mathbb{E}[x_{ia}x_{jb}] = \mathcal{C}_{ij}\delta_{ab}$, where $C \in \mathbb{R}^{L \times L}$ is a PSD matrix. Let $S^k_{ij} \in \mathbb{R}^{d \times d}$ for $i,j = 1, \dots, L$ and $k=1,\dots,K$ be a set of deterministic weight matrices. The assumption $\mathbb{E}[x_{ia}x_{jb}] = \mathcal{C}_{ij}\delta_{ab}$ is made without loss of generality, as any arbitrary covariance structure can be absorbed into the weight matrix $S$. 

We consider losses of the form
\begin{equation*}
\mathcal{L}\left(\left\{\frac{x_i^TS_{ij}^kx_j}{\sqrt{d}}\right\}_{i,j,k=1}^{L,L,K}\right).
\end{equation*}
under the following assumption.
\begin{assumption}
In the limit as $d \to \infty$ with $K,L = \Theta(1)$, the following conditions hold for all $i, j, k, i', j', k'$:

\begin{itemize}
\item[1.] The matrices satisfy $\lim_{d\to\infty} \frac{||S^k_{ij}||_{op}}{||S^k_{ij}||_F} = 0$.
\item[2.] The scaled traces have finite limits: $\lim_{d\to\infty} \frac{1}{\sqrt{d}}\Tr(S^k_{ij}) = \mu_{ijk}$, $\lim_{d\to\infty} \frac{1}{d}\Tr(S^k_{ij}(S^{k'}_{i'j'})^T) = \Omega_{ijk,i'j'k'}$ and $\lim_{d\to\infty} \frac{1}{d}\Tr(S^k_{ij}S^{k'}_{i'j'}) = \Psi_{ijk,i'j'k'}$.
\item[3.] The loss $\mathcal{L}: \mathbb{R}^{L \times L\times K} \to \mathbb{R}$ is continuous with at most polynomial growth.
\end{itemize}
\label{assum:attention-index}
\end{assumption}
\begin{remark} While Assumptions \ref{assum:attention-index}.2 and \ref{assum:attention-index}.3 are primarily regularity conditions, Assumption \ref{assum:attention-index}.1 identifies the extensive-rank regime of interest here. We argue that this assumption is highly relevant in practice. In standard multi-head attention, for example, the rank of an individual attention matrix can grow with the embedding dimension when the number of heads remains fixed. This marks a critical difference from existing literature on low-rank attention (sequence-indexed models) \cite{cui2024phase,troiani2025fundamental,arnaboldi2025asymptotics,duranthon2025statistical}.
\end{remark}

In this section, we do not impose any structural assumptions on the attention matrices $S_{ij}^k$. We will specifically analyze both tied and untied attention mechanisms in subsequent sections. Remarkably, the attention-indexed model inherently covers standard multi-layer, multi-head attention networks (i.e., attention-only transformers).
\begin{example}[Multi-layer multi-head attention]
Consider an $M$-layer, $H$-head attention network:
\begin{equation}
X_m = X_{m-1} + \sum_{h=1}^H \text{softmax}\left(A_m^{(h)}(X_{m-1})\right) X_{m-1} V_{m-1}^{(h)} \in \mathbb{R}^{L \times d}, \quad m=1,\dots,M
\end{equation}
where the pre-softmax attention matrix is given by:
\begin{equation}
A_m^{(h)}(X_{m-1}) = \frac{1}{\sqrt{d}} X_{m-1} K_{m-1}^{(h)}(Q_{m-1}^{(h)})^T X_{m-1}^T \in \mathbb{R}^{L \times L}.
\end{equation}
The matrices $K_m^{(h)}, Q_m^{(h)}, \text{ and } V_m^{(h)}$ represent the key, query, and value weights of the $m-$th layer, respectively. The final output is $y=\sigma\left(\frac{1}{\sqrt{d}}X_MK_MQ_M^TX_M^T\right)$ for some activation function $\sigma$. 

Consider a teacher-student setup with input data $X_0 \sim \mathcal{N}(0, \mathcal{C} \otimes I_d)$ and a loss function $\mathcal{L}(y,\hat{y})$, where both teacher and student networks employ multi-layer multi-head architecture.
This setup naturally falls within our general framework (see Appendix \ref{app:examples} for detailed derivations). Notably, we do not require the teacher weights to be Gaussian, rendering the existence of such a teacher model a relatively plausible assumption.
\end{example}
In Appendix \ref{app:examples}, we introduce several additional models that fit this framework, including auto-regressive sequence models, multiple-location regression models, multiple-step reasoning models and matrix denoising problems.

Although the matrices \(S_{ij}^k\) contain \(\mathcal O(d^2)\) degrees of freedom, the loss does not retain this full complexity in the high-dimensional limit. The reason is that, under the extensive-rank condition, the finite collection of quadratic forms entering the loss becomes jointly Gaussian. Its limiting distribution is therefore completely specified by the first- and second-order trace statistics in Assumption \ref{assum:attention-index}.

The following theorem makes this reduction precise.
\begin{theorem}
\label{theo:general}
Under Assumption \ref{assum:attention-index}, the joint distribution of the quadratic forms converges weakly to a multivariate Gaussian, and expectations converge as follows:
\begin{equation}
\lim_{d\to\infty} \mathbb{E}_x\left[\mathcal{L}\left(\left\{\frac{x_i^TS^k_{ij}x_j}{\sqrt{d}}\right\}_{i,j,k=1}^{L,L,K}\right)\right] = \mathbb{E}\left[\mathcal{L}(\{G_{ijk}\}_{i,j,k=1}^{L,L,K})\right],    
\end{equation}
where $\{G_{ijk}\}_{i,j,k=1}^{L,L,K}$ is a multivariate Gaussian vector whose mean and covariance are given by:
\begin{equation}
\mathbb{E}[G_{ijk}] = \mathcal{C}_{ij}\mu_{ijk},\
\text{Cov}(G_{ijk}, G_{i'j'k'}) = \mathcal{C}_{ii'}\mathcal{C}_{jj'}\Omega_{ijk,i'j'k'} + \mathcal{C}_{ij'}\mathcal{C}_{ji'}\Psi_{ijk,i'j'k'}.
\end{equation}
\end{theorem}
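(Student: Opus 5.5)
The approach is to reduce everything to a single Gaussian quadratic form. Write the data as $X = (x_1,\dots,x_L)$, and represent it as $x_i = \sum_l A_{il} z_l$ with $z_1,\dots,z_L$ i.i.d.\ $\mathcal N(0,I_d)$ and $AA^T=\mathcal C$. Stack $z=(z_1,\dots,z_L)\in\mathbb R^{Ld}$. Each quadratic form $\frac{1}{\sqrt d}x_i^TS^k_{ij}x_j$ then equals $z^T B^{k}_{ij} z$, where $B^k_{ij} = \frac{1}{\sqrt d}(A_{i\cdot}^T A_{j\cdot})\otimes S^k_{ij}$ is an $Ld\times Ld$ matrix.

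By the Cramér--Wold device, it suffices to show that for every fixed real vector $t=(t_{ijk})$ the scalar $Q_t=\sum t_{ijk}\, z^TB^k_{ij}z = z^T M_t z$ converges in law to the corresponding one-dimensional Gaussian.

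The first step is the moment computation. We have $\mathbb E[z^TMz]=\Tr M$. Replacing $M$ by its symmetrization $\tilde M=(M+M^T)/2$, we have $\mathrm{Var}(z^TMz)=2\Tr(\tilde M^2)=\Tr(MM^T)+\Tr(M^2)$. Expanding with the Kronecker structure gives $\Tr(B^k_{ij})=\mathcal C_{ij}\frac{1}{\sqrt d}\Tr S^k_{ij}\to\mathcal C_{ij}\mu_{ijk}$. It also gives $\Tr(B^k_{ij}(B^{k'}_{i'j'})^T)=\mathcal C_{ii'}\mathcal C_{jj'}\frac1d\Tr(S^k_{ij}(S^{k'}_{i'j'})^T)$ and $\Tr(B^k_{ij}B^{k'}_{i'j'})=\mathcal C_{ij'}\mathcal C_{ji'}\frac1d\Tr(S^k_{ij}S^{k'}_{i'j'})$, using $A_{j\cdot}\cdot A_{i'\cdot}=\mathcal C_{ji'}$ and so on. By Assumption \ref{assum:attention-index}.2 these converge to exactly the stated mean and covariance. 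Alternatively, one can verify the covariance directly via Wick's theorem on $x$, bypassing $A$.

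The second step is the CLT for $Q_t$. Diagonalize $\tilde M_t=\sum_a\lambda_a v_av_a^T$. Then $Q_t-\mathbb E Q_t=\sum_a\lambda_a(g_a^2-1)$ with $g_a$ i.i.d.\ standard Gaussians, which is a sum of independent centered variables. Lindeberg's condition (equivalently, the cumulant bound $\kappa_m\le C_m\|\tilde M\|_{op}^{m-2}\|\tilde M\|_F^2$) reduces it to showing $\max_a|\lambda_a|/(\sum_a\lambda_a^2)^{1/2}\to0$, i.e.\ $\|\tilde M_t\|_{op}/\|\tilde M_t\|_F\to0$. This is \textbf{the main obstacle}. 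On one hand, $\|\tilde M_t\|_{op}\le\sum|t_{ijk}|\,\|B^k_{ij}\|_{op}\lesssim\sum |t|\,\|S^k_{ij}\|_{op}/\sqrt d$, and each term is $o(\|S^k_{ij}\|_F/\sqrt d)=o(1)$ by Assumption \ref{assum:attention-index}.1 combined with boundedness of $\frac1d\|S\|_F^2=\Omega$. On the other hand, $\|\tilde M_t\|_F^2\to\sigma_t^2$, the limiting variance. If $\sigma_t^2>0$ the ratio vanishes and the CLT follows. If $\sigma_t^2=0$, then $Q_t$ converges in $L^2$ to its limiting mean, which is a degenerate Gaussian, so the conclusion still holds. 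This handles degenerate covariances without needing to divide by the Frobenius norm. Combining with the mean convergence and Cramér--Wold yields weak convergence of the whole vector to $G$.

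The third step is convergence of expectations. Since $\mathcal L$ is continuous, weak convergence gives $\mathcal L(\cdot)\Rightarrow\mathcal L(G)$. It remains to show uniform integrability, which follows from a uniform bound on $\mathbb E|\mathcal L|^{2}$. By polynomial growth it suffices to bound $\sup_d\mathbb E|z^TB^k_{ij}z|^{p}$ for every $p$. This follows from Gaussian hypercontractivity, since quadratic forms live in a finite chaos and so $\|\cdot\|_{L^p}\le C_p\|\cdot\|_{L^2}$. The $L^2$ norms are bounded because the means and variances converge by the first step. Hence the expectations converge, which gives the theorem.
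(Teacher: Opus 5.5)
Your proof is correct, but it reaches the Gaussian limit by a different mechanism than the paper. The paper never reduces to scalars. It writes the joint cumulant generating function of the whole vector as $-\frac12\log\det(I-2\Sigma\tilde A)$ and reads off that the order-$m$ joint cumulant is a finite sum of traces $\Tr(A_{\pi(1)}\Sigma\cdots A_{\pi(m)}\Sigma)$. It bounds each of these by $\|\cdot\|_{op}^{m-2}\|\cdot\|_F^2=o(1)$ for $m\ge3$ and concludes by the method of moments. The same bounds give uniformly bounded joint moments, since moments are polynomials in cumulants, and this supplies the uniform integrability.

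You instead use Cram\'er--Wold and diagonalize each linear combination into a weighted sum of independent centered $\chi^2_1$ variables. You then invoke Lindeberg (or Lyapunov), and obtain uniform integrability from hypercontractivity of the second chaos.

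The decisive estimate is the same in both: $\|\cdot\|_{op}=o(1)$ while Frobenius norms stay bounded, which is exactly where Assumption \ref{assum:attention-index}.1 enters. Your route buys two things. It needs only the classical CLT for independent summands, and it makes transparent that this condition is the ``no dominant eigenvalue'' Lindeberg condition. It costs two things. You need a separate argument for degenerate limiting variances, which the method of moments handles automatically because a degenerate Gaussian is still moment-determinate. You also import hypercontractivity, whereas the paper's single cumulant computation does double duty.

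One small slip: $\mathrm{Var}(Q_t)=2\|\tilde M_t\|_F^2$, so $\|\tilde M_t\|_F^2\to\sigma_t^2/2$ rather than $\sigma_t^2$. This does not affect the argument.
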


Theorem \ref{theo:general} gives the basic reduction underlying the rest of the paper. In the high-dimensional limit, the population loss depends only on the finite collection of order parameters \(\{\mu,\Omega,\Psi\}\). Thus, although the parameter space grows with \(d\), the limiting loss landscape is finite-dimensional.

The Gaussian limit in Theorem \ref{theo:general} can be viewed as a consequence of the fourth-moment theorem for Wiener chaos \cite{nualart2005central}; for completeness, we provide a direct proof in Appendix \ref{app:proof_theo_general}. A centered version needed for tied attention is given in Appendix \ref{app:cor-symmetric}.

Theorem \ref{theo:general} characterizes the loss at any fixed state. It also implies a stronger optimization-level statement: subject to an appropriate realizability condition, optimizing over the high-dimensional matrices is asymptotically equivalent to optimizing over their finite set of limiting order parameters.
\begin{corollary}
\label{cor:minimum}
Let $\mathcal{K} \subseteq \{1, 2, \dots, K\}$ denote the index set of the learnable weights, while the remaining weights $\{S^k_{ij}\}_{k \notin \mathcal{K}}$ are fixed and satisfy the condition $\lim_{d\to\infty} \|S^k_{ij}\|_{\text{op}} / \|S^k_{ij}\|_F = 0$. Define the order parameter set $Q$ as
\begin{equation}
Q:=\left\{\left\{\frac{1}{\sqrt{d}}\Tr[S^k_{ij}]\right\}_{i,j,k},\left\{\frac{1}{d}\Tr[S^k_{ij}S^{k'}_{i'j'}], \frac{1}{d}\Tr[S^k_{ij}(S^{k'}_{i'j'})^T]\right\}_{i,j,k,i',j',k'}\right\}.
\end{equation}
Let $\mathcal{Q}$ denote the feasible domain of $Q$ realizable by matrix sequences satisfying Assumption \ref{assum:attention-index} in the limit $d \to \infty$, subject to the constraints imposed by the fixed weights $\{S^k_{ij}\}_{k \notin \mathcal{K}}$. Let $\mathcal{R}(Q) := \mathbb{E}_{G}[\mathcal{L}(\{G_{ijk}\}_{i,j,k=1}^{L,L,K})]$ denote the expected population loss in Theorem \ref{theo:general} specified by $Q$ and the covariance $\mathcal{C}$.

Under the regularity assumption specified in Appendix \ref{app:proof-cor-minimum},
\begin{equation}
\lim_{d\to\infty} \inf_{\{S^k_{ij}\}_{k\in\mathcal{K}}} \mathbb{E}_x\left[\mathcal{L}\left(\left\{\frac{x_i^TS^k_{ij}x_j}{\sqrt{d}}\right\}_{i,j,k=1}^{L,L,K}\right)\right] = \inf_{Q \in \mathcal{Q}} \mathcal{R}(Q).
\end{equation}
\end{corollary}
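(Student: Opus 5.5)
The proof will establish two inequalities, an upper bound $\limsup_d \inf_S \le \inf_{Q\in\mathcal{Q}}\mathcal{R}(Q)$ and a lower bound $\liminf_d \inf_S \ge \inf_{Q\in\mathcal{Q}}\mathcal{R}(Q)$. Throughout, Theorem \ref{theo:general} is the basic tool that transfers statements about the $d$-dimensional expected loss to statements about $\mathcal{R}$. The regularity assumption of Appendix \ref{app:proof-cor-minimum} is taken to provide three things. First, $\mathcal{R}$ is continuous on $\mathcal{Q}$ (or at least lower semicontinuous along limits of realizable sequences). Second, near-minimizers can be restricted to a set where the order parameters stay bounded, via coercivity of $\mathcal{L}$ or an a priori norm constraint. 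Third, near-minimizing sequences can be chosen to satisfy the extensive-rank condition $\|S\|_{op}/\|S\|_F\to 0$, so that Theorem \ref{theo:general} applies to them.

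\textbf{Upper bound.} Fix $\varepsilon>0$ and pick $Q^\star\in\mathcal{Q}$ with $\mathcal{R}(Q^\star)\le \inf_{\mathcal{Q}}\mathcal{R}+\varepsilon$. By the definition of $\mathcal{Q}$, there is a matrix sequence $\{S^k_{ij}(d)\}_{k\in\mathcal{K}}$ that satisfies Assumption \ref{assum:attention-index}, is compatible with the fixed weights, and has order parameters converging to $Q^\star$. Theorem \ref{theo:general} then gives
\[
\lim_d \mathbb{E}_x[\mathcal{L}(\cdot)] = \mathcal{R}(Q^\star).
\]
Since $\inf_S \le$ the value at this particular sequence, we get $\limsup_d \inf_S \le \inf_{\mathcal{Q}}\mathcal{R}+\varepsilon$. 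This direction is essentially immediate.

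\textbf{Lower bound.} For each $d$, choose $S(d)$ with expected loss at most $\inf_S+1/d$. Take a subsequence $d_n$ along which $\inf_S$ converges to its $\liminf$. Using the regularity assumption, the order parameters $Q(S(d_n))$ are bounded, so a further subsequence converges to some $\bar Q$. The operator/Frobenius ratio along this subsequence tends to zero, again by the regularity assumption; for the fixed weights this holds by hypothesis. Hence the chosen sequence satisfies Assumption \ref{assum:attention-index} along the subsequence, so $\bar Q\in\mathcal{Q}$. Note that $\mathcal{Q}$ is defined through limits, and Theorem \ref{theo:general} only needs convergence along the subsequence. Applying Theorem \ref{theo:general} gives $\lim_n \mathbb{E}_x[\mathcal{L}] = \mathcal{R}(\bar Q)\ge \inf_{\mathcal{Q}}\mathcal{R}$. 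Combining the two bounds yields the stated equality.

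\textbf{Main obstacle.} The hard step is the lower bound: showing that near-minimizers do not escape the regime where Theorem \ref{theo:general} holds. Two failure modes are possible. The order parameters could diverge, which compactness must rule out. Or the minimizers could concentrate on low-rank directions with $\|S\|_{op}/\|S\|_F\not\to 0$; then the quadratic forms are non-Gaussian and might achieve strictly lower loss. I would handle this by making it part of the regularity assumption. If a proof is wanted instead, I would try to show that any $S$ can be replaced by a perturbation $S+\delta$ with spread spectrum and asymptotically the same trace statistics. The replacement would be justified by a uniform continuity/Lipschitz bound on the expected loss, which follows from the polynomial growth of $\mathcal{L}$ and Gaussian hypercontractivity. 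The argument would also need a uniform integrability estimate, namely bounded higher moments of the quadratic forms in terms of $Q$, so that weak convergence gives convergence of expectations along the subsequence.
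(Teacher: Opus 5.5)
Your proposal is correct and follows the paper's proof. The upper bound is identical: pick $Q_\epsilon$, realize it by a matrix sequence, and apply Theorem~\ref{theo:general}. The lower bound likewise applies Theorem~\ref{theo:general} to minimizers that the regularity assumption keeps in the extensive-rank regime.

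The differences are small. The paper's Assumption~\ref{assum:cor-minimum} directly posits exact minimizers whose order parameters converge to some $\hat Q\in\mathcal{Q}$, so no compactness or subsequence extraction is needed. Your near-minimizer and subsequence version is slightly more general, but it must additionally justify that a subsequential limit $\bar Q$ belongs to $\mathcal{Q}$, which is defined through full sequences. Finally, the continuity of $\mathcal{R}$ that you list among your assumptions is never actually used.
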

Corollary \ref{cor:minimum}, proven in Appendix \ref{app:proof-cor-minimum}, makes the finite-dimensional nature of the limiting landscape explicit: The optimal asymptotic population loss is determined by an optimization over the finite-dimensional feasible set \(\mathcal Q\).

Importantly, this does not imply that training can be described by gradient descent directly on \(Q\). Whether the order parameters form a closed dynamical system depends on how the attention matrices are parameterized. Direct optimization over \(S\) does close at the level of finitely many first- and second-order quantities (Appendix \ref{app:sgd-over-S}), whereas the tied and untied factorizations studied in Sections \ref{sec:tied} and \ref{sec:untied} generate additional matrix moments. This distinction has important consequences for weak recovery.

\section{Online SGD of tied attention}
\label{sec:tied}
We now turn to a question that cannot be answered from the loss landscape alone: does the parameterization of the attention matrix change the way SGD learns features?

In this section, we first derive the macroscopic dynamics of tied attention and show that, despite the finite-dimensional population loss, the factorized flow requires an infinite hierarchy of matrix moments. We then use these dynamics to characterize weak recovery and contrast it with direct optimization over \(S\).

\subsection{Macroscopic dynamics}
For tied attention, the trace of $S_k=W_kW_k^\top $ is typically of order \(d\). The pre-activation \(d^{-1/2}x_i^\top S_kx_j\) therefore contains a diverging contribution. Following the centered formulation of Appendix \ref{app:cor-symmetric}, we consider the regularized loss
\begin{equation*}
\mathcal{L}\left(\left\{\frac{\Tr[W_kW_k^T(x_jx_i^T-\mathcal{C}_{ij}I_d)]}{\sqrt{d}}\right\}_{i,j,k=1}^{L,L,K}\right)+\frac{\gamma}{2d}\sum_{k=1}^K||W_k||_F^2,
\end{equation*}
where the input data $x_1, \dots, x_L \sim \mathcal{N}(0, \mathcal{C} \otimes I_d)$ and $\gamma=\Theta(1)>0$ denotes the weight decay strength. 

For \(k=1,\ldots,K\), let $W_k\in\mathbb R^{d\times d_k}$ and $S_k=W_kW_k^\top\succeq0$. As before, let \(\mathcal K\subseteq\{1,\ldots,K\}\) denote the trainable indices. For \(k\in\mathcal K\), \(W_k\) evolves during training, whereas for \(k\notin\mathcal K\) it remains fixed and may, for example, represent a teacher matrix.

At iteration \(n\), online SGD uses a fresh sample $x^{(n)}\sim\mathcal N(0,C\otimes I_d)$ and updates, for \(k\in\mathcal K\),
\begin{equation}
W_k^{(n+1)} = W_k^{(n)} - \alpha_d \nabla_{W_k}\mathcal{L}(G^{(n)}) - \frac{\alpha_d \gamma}{d} W_k^{(n)},
\label{eq:SGD}
\end{equation}
where we denote $G^{(n)}_{ijk} := \frac{1}{\sqrt{d}} \Tr[W_k W_k^T (x^{(n)}_j (x^{(n)}_i)^T - \mathcal{C}_{ij}I_d)]$ and $\alpha_d$ is the step size.

Define $S_k^{(n)} = W_k^{(n)}(W_k^{(n)})^\top$ and, for any multi-index \(\alpha=(k_1,\ldots,k_w)\),
\begin{equation}
\mu_\alpha^{(n)} = \frac1d \operatorname{Tr} \left[ S_{k_1}^{(n)} S_{k_2}^{(n)} \cdots S_{k_w}^{(n)} \right].
\end{equation}
In particular, $q_{kl}=\mu_{(k,l)}$ denotes the second-order moment matrix. We introduce the continuous time $t=n\tau_d$, $\tau_d=\frac{4\alpha_d}{d}$ and let \(\tilde\mu^{(d)}(t)\) denote the piecewise-constant interpolation of these moments:
\begin{equation}
\tilde{\mu}^{(d)}(t) = \mu^{(n)}, \quad \text{for } t \in [n \tau_d, (n+1)\tau_d).
\label{eq:interpolation-tied}
\end{equation}
Define the potential as the limiting population loss in Theorem \ref{theo:general}:
\begin{equation}
\Phi(q):=\mathbb E_{G}\!\left[\mathcal L(G)\right],\qquad q\in\mathbb S_+^K,
\label{eq:potential-tied}
\end{equation}
where $G\in\mathbb R^{L\times L\times K}$ is a zero-mean Gaussian tensor with covariance $\operatorname{Cov}\left((G)_{ijk},(G)_{i'j'l}\right)=(\mathcal C_{ii'}\mathcal C_{jj'}+\mathcal C_{ij'}\mathcal C_{ji'})q_{kl}$. We define its matrix derivative as $\nabla\Phi(q)$\footnote{More rigorously speaking, it should be defined as $\lim_{\epsilon\to0^+}\nabla\Phi(q+\epsilon I)$.} for $q\succeq0$.

\begin{assumption}
\label{assum:tied}
\begin{itemize}
\item[1.] The loss function $\mathcal{L}: \mathbb{R}^{L\times L\times K} \to \mathbb{R}$ is four times continuously differentiable. $\mathcal{L}$ and its derivatives up to the fourth order have at most polynomial growth. 
\item[2.] At initialization, the weights $W_k^{(0)}$ satisfy $\|W_k^{(0)}\|_F = \Theta(\sqrt{d})$.
Additionally, there exists a constant $C_0 > 0$ independent of $d$ such that $\sup_{1\leq k\leq K} \|W_k^{(0)}\|_{\text{op}} \le C_0$ almost surely.
\item[3.] $\lim_{d \to \infty} |\tilde\mu_\alpha^{(d)}(0) - \bar{\mu}_\alpha(0)| = 0$ in probability for every multi-index $\alpha$. 
\item[4.] The regularization is sufficiently large, i.e., $\gamma>\gamma_*(C_0,\mathcal{C},\mathcal{L},K,L)$ for some $\gamma_*$ independent of $d$.
\end{itemize}
\end{assumption}
\begin{remark}
Assumption \ref{assum:tied}.4 is used only to guarantee uniform spectral control and prevent finite-time blow-up. Equivalently, one may impose an a priori uniform bound on the operator norms of the weights; numerically, our trajectories remain bounded even without weight decay.
\end{remark}
The key point is that the dimensionality of the loss and that of the dynamics are different. The potential \(\Phi\) depends only on the finite-dimensional matrix \(q\). However, the evolution of \(q\) is not closed: second moments depend on third moments, third moments on fourth moments, and so on. The tied factorization therefore converts a finite-dimensional loss landscape into an infinite-dimensional dynamical hierarchy. The following theorem shows that this hierarchy nevertheless has a deterministic and well-posed high-dimensional limit.

\begin{theorem}
\label{theo:tied-SGD}
Under Assumption \ref{assum:tied}, the online SGD trajectory is governed by an infinite-dimensional ODE driven by the gradients of $\Phi$:
\begin{equation}
\! \! \frac{d \bar\mu_{(k_1, \dots, k_w)}}{dt} \! = \!- \!\sum_{\substack{i=1 \\ k_i\in\mathcal{K}}}^w \! \sum_l [\nabla\Phi(\bar{q})]_{k_i l}
\left( \bar\mu_{(k_1, \dots, k_{i-1}, l, k_i, \dots, k_w)}\! + \bar\mu_{(k_1, \dots, k_i, l, k_{i+1}, \dots, k_w)} \right)\! -\! \frac{\gamma}{2} |\alpha|_{\mathcal{K}} \bar\mu_{(k_1, \dots, k_w)},
\label{eq:tied_macroscopic_equation}
\end{equation}
where $|\alpha|_{\mathcal{K}} \le w$ is the number of trainable indices in the multi-index $\alpha$ of length $w$. \eqref{eq:tied_macroscopic_equation} has a unique admissible solution and the empirical trajectory uniformly converges to it in probability.

Specifically, if the width scales such that $\max_{1\le k\le K} d_k\le d^{c_w}$ for some fixed $c_w<+\infty$ and the learning rate scales such that $\alpha_d \leq c_0(d\log d)^{-1}$ and $\alpha_d = \Omega(d^{-\iota})$ for some constants $c_0>0$ small enough and $\iota > 0$, then the empirical SGD trajectory satisfies:
\begin{equation}
\lim_{d \to \infty} \mathbb{P} \left( \sup_{t \in [0, T]} \left| \tilde\mu^{(d)}_\alpha(t) - \bar{\mu}_\alpha(t) \right| > \epsilon \right) = 0,\ \forall\epsilon>0
\end{equation}
for every multi-index $\alpha$ and for any fixed $T>0$.
\end{theorem}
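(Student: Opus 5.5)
The plan is to work at the level of the empirical moments $\mu_\alpha^{(n)}$ and use a standard stochastic-approximation argument: Doob decomposition, concentration of the martingale part, identification of the drift with the right-hand side of \eqref{eq:tied_macroscopic_equation}, and a Grönwall-type comparison. That comparison has to be carried out in a weighted norm over all multi-indices, because the hierarchy never closes.

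\textbf{Step 1: one-step expansion.} Write $\nabla_{W_k}\mathcal L = \frac{2}{\sqrt d}\sum_{ij}\partial_{ijk}\mathcal L(G)\,\mathrm{sym}(x_jx_i^\top-\mathcal C_{ij}I)W_k$, and abbreviate $A_k^{(n)}:=\frac{1}{\sqrt d}\sum_{ij}\partial_{ijk}\mathcal L\,(x_jx_i^\top-\mathcal C_{ij}I)$. Then $S_k^{(n+1)}-S_k^{(n)}$ is, to first order in $\alpha_d$, equal to $-\alpha_d(\mathrm{sym}A_kS_k+S_k\mathrm{sym}A_k)-\frac{2\alpha_d\gamma}{d}S_k$, plus second-order terms $\alpha_d^2\,\mathrm{sym}A_kS_k\,\mathrm{sym}A_k$ and so on. Expanding $\frac1d\Tr[S_{k_1}\cdots S_{k_w}]$ gives a drift, a martingale increment, and higher-order remainders.

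\textbf{Step 2: identifying the drift.} The conditional expectation of $\frac1d\Tr[\cdots S_{k_{i-1}}(A S_{k_i}+S_{k_i}A)S_{k_{i+1}}\cdots]$ must be computed. The key tool is Gaussian integration by parts (Stein's lemma) in $x$. Applied to $\mathbb E[\partial\mathcal L(G)\,(x_jx_i^\top-\mathcal C_{ij})]/\sqrt d$, it yields $\frac{1}{d}\sum_l \mathbb E[\partial^2\mathcal L]\,(\text{covariance factors})\,S_l$ plus terms of order $\|S\|_{op}/\sqrt d$. Combined with the centered Gaussian limit (Theorem \ref{theo:general} and its centered corollary in Appendix \ref{app:cor-symmetric}), this identifies the expectation as $\frac{1}{d}\cdot\frac{d}{4}\cdot 2[\nabla\Phi(q)]_{k_il}$ times $S_l$ inserted next to $S_{k_i}$. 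Equivalently, $\partial_q\Phi$ arises from differentiating the Gaussian expectation with respect to its covariance. The resulting drift over time $\tau_d=4\alpha_d/d$ is exactly \eqref{eq:tied_macroscopic_equation}, up to an error $o(1)$ uniformly on bounded sets. Polynomial growth of $\mathcal L$ up to fourth derivatives controls the Stein remainders.

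\textbf{Step 3: a priori spectral control (main obstacle).} Every estimate above requires $\sup_{n\le T/\tau_d}\|W_k^{(n)}\|_{op}\le C$ with high probability, and this is where Assumption \ref{assum:tied}.4 enters. I would bound the operator norm along the trajectory using the multiplicative form $W^{(n+1)}=(I-\alpha_d\,\mathrm{sym}A-\frac{\alpha_d\gamma}{d})W^{(n)}$. The increment $\alpha_d\|A\|_{op}$ is $O(\alpha_d\sqrt{\log d})$ with high probability, via Hanson–Wright and a union bound over $n\le d^{1+\iota}$ and over $d_k\le d^{c_w}$ columns. Its mean part $\mathbb E[A]$ has operator norm $O(1/d)\cdot\|S\|_{op}$-controlled, which weight decay with $\gamma>\gamma_*$ dominates. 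The fluctuations accumulate like a matrix martingale of size $\alpha_d\sqrt{n\log d}=O(\sqrt{\alpha_d d\log d})\le\sqrt{c_0}$. This is why $\alpha_d\le c_0/(d\log d)$ is needed: matrix Freedman gives a uniform bound, and a stopping-time argument closes it.

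\textbf{Step 4: fluctuations and remainders.} Given this control, each moment's martingale increments have variance $O(\alpha_d^2 d^{-1})\cdot$poly, so the total martingale part is $O(\sqrt{\alpha_d})\to0$ by Doob. The second-order $\alpha_d^2$ terms contribute $O(\alpha_d)\to0$ over $T/\tau_d$ steps.

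\textbf{Step 5: well-posedness and convergence.} On the hierarchy, spectral control gives $|\mu_\alpha|\le C^{|\alpha|}$. I would therefore work in the space of sequences with $\sup_\alpha R^{-|\alpha|}|\mu_\alpha|<\infty$, where an admissible solution means the moments of some family of PSD operators bounded by $C$. Uniqueness follows from a Grönwall estimate in that weighted norm. A shift from $\alpha$ to $\alpha$ plus one index costs a factor $R$ in the weight but is bounded by $C$ times $|\nabla\Phi|$, so the estimate is taken over a short time $t<1/(2C|\nabla\Phi|\cdot w)$; alternatively one uses a Cauchy–Kovalevskaya/analytic-scale argument with shrinking radius, iterating over $[0,T]$. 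Existence comes from tightness of $\tilde\mu^{(d)}$, obtained via Steps 2–4 and Arzelà–Ascoli for each coordinate with a diagonal argument. Any limit point solves the ODE and is admissible as a limit of spectral moments, so uniqueness yields convergence in probability, uniformly on $[0,T]$.
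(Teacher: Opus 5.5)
Your plan is correct in outline and follows the paper's proof essentially step for step: Stein expansion of the conditional mean gradient into $\frac1dH_kW_k$ plus an $O(d^{-3/2})$ error, a stopping-time argument with a discounted matrix Freedman bound in which weight decay absorbs the drift, vanishing martingale and remainder terms, tightness, and uniqueness via the Ovsyannikov/Cauchy--Kovalevskaya scale argument. Your first uniqueness option, a fixed-weight Gr\"onwall bound on a window of length $O(1/w)$, does not close because the generator carries a factor $w$; it closes only if you exploit the damping $-\frac{\gamma}{2}|\alpha|_{\mathcal K}\mu_\alpha$, as the paper does in its truncation corollary.

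Fix three quantitative slips. First, the per-step increment is $\alpha_d\|g_k^{(n)}\|_{\mathrm{op}}=O(\alpha_d\sqrt d\,(\log d)^r)$, not $O(\alpha_d\sqrt{\log d})$. This is still $o(1/\log d)$, which is all Freedman needs. Second, you must unroll $W_k^{(m)}=a_d^mW_k^{(0)}-\alpha_d\sum_{n<m}a_d^{m-1-n}g_k^{(n)}$ additively, with $a_d=1-\alpha_d\gamma/d$, rather than bound the multiplicative product in operator norm. Third, the second-order remainders sum to $O(d\alpha_d)$ over $T/\tau_d$ steps, not $O(\alpha_d)$; this is precisely where $\alpha_d=o(1/d)$ is needed.
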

\begin{remark}
We regard \eqref{eq:tied_macroscopic_equation} as an ODE on the moment space $\bigcup_{s>0}X_s^0$, where $X_s^0:=\left\{\mu:\lim_{w\to\infty}
\sup_{|\alpha|=w}\frac{|\mu_\alpha|}{s^w}=0
\right\}$.
A trajectory $\mu$ on $[0,T]$ is called an admissible solution if $q(\mu(t))\succeq0$ for every $t\in[0,T]$ and there exist $0<s_T<s_T'<\infty$ such that $\mu\in C([0,T];X_{s_T}^0)$ and $\mu(t)=\mu(0)+\int_0^tV(\mu(r))\,dr$
holds as an identity in $X_{s_T'}^0$, where $V$ refers to the right side of \eqref{eq:tied_macroscopic_equation}.
\end{remark}

Theorem \ref{theo:tied-SGD} gives the macroscopic dynamical counterpart of the static reduction in Section \ref{sec:models}. The same finite-dimensional potential \(\Phi(q)\) determines the coefficients of the flow, but it does not determine a closed ODE for \(q\). Instead, the geometry of the factorization \(S=WW^\top\) propagates information through the entire hierarchy of joint moments.

This infinite-dimensionality is not merely an artifact of our choice of coordinates. Appendix \ref{app:solvable} gives an exactly solvable quadratic example. In that example, no finite-dimensional ODE with a continuous initialization map can reproduce all possible trajectories.

The proof, given in Appendix \ref{app:proof-tied-SGD}, combines four ingredients: local well-posedness in a scale of weighted moment spaces, uniform spectral bounds obtained through a stopping-time argument and matrix Freedman inequalities, tightness and convergence of the empirical moment process, and global well-posedness of the resulting admissible infinite-dimensional trajectory.

The infinite hierarchy is required for an exact description, but this does not make the theory numerically intractable. Higher-order moments can be truncated with exponentially small error. For $M\ge2$, we define the degree-$M$ truncated trajectory $\mu^{(M)}(t)$ by
\begin{equation}
\frac{d}{dt}\mu_\alpha^{(M)}(t)=[V(\mu^{(M)}(t))]_\alpha,
\qquad |\alpha|\le M,
\label{eq:truncated-tied-system}
\end{equation}
with the initial condition $\mu_\alpha^{(M)}(0)=\bar\mu_\alpha(0)$ for $|\alpha|\le M$ and the boundary condition $\mu_\alpha^{(M)}(t)\equiv0$ for $|\alpha|>M$.
Here $V$ denotes the right side of \eqref{eq:tied_macroscopic_equation}\footnote{More specifically, $V$ refers to the extended vector field defined in
\eqref{eq:V-definition} because the truncated
moment sequence might not satisfy $q\succeq0$.}. The following corollary guarantees that $\mu^{(M)}(t)$ can approximate the true dynamics $\bar{\mu}(t)$ to arbitrary precision.

\begin{corollary}
\label{cor:truncation-tied}
Under the conditions of Theorem \ref{theo:tied-SGD}, there exist constants $s_0>C_*^2$ and $c<\infty$ independent of $M$, $\alpha$ and $T$ such that, for all sufficiently
large $M$, the truncated ODE \eqref{eq:truncated-tied-system}
has a unique solution and for every multi-index
$\alpha$,
\begin{equation}
\sup_{t\in[0,T]}\left|\mu_\alpha^{(M)}(t)-\bar\mu_\alpha(t)\right|\le c\left(\frac{C_*^2}{s_0}\right)^{M+1}
s_0^{|\alpha|}.
\label{eq:truncation-coordinate-bound}
\end{equation}
\end{corollary}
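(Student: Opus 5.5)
The argument compares the truncated system \eqref{eq:truncated-tied-system} with the true admissible solution $\bar\mu$ of Theorem \ref{theo:tied-SGD} in a weighted sup-norm on moment space. The model for the constants is the bound $|\bar\mu_\alpha(t)|\le C_*^{2|\alpha|}$. This should follow from the uniform spectral control used in Theorem \ref{theo:tied-SGD}: $\|W_k\|_{op}\le C_*$ uniformly on $[0,T]$, so $\|S_k\|_{op}\le C_*^2$, and this passes to the limit. For $s>C_*^2$ introduce the norm
\begin{equation*}
\|\mu\|_s:=\sup_\alpha |\mu_\alpha|\,s^{-|\alpha|}.
\end{equation*}

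The first step is to fix the structure of the vector field $V$. By \eqref{eq:tied_macroscopic_equation} it has the form $V(\mu)_\alpha=-\sum_i\sum_l [\nabla\Phi(q)]_{k_il}(\text{shift}_{i,l}\mu)_\alpha-\frac{\gamma}{2}|\alpha|_{\mathcal K}\mu_\alpha$. Only the coefficients $\nabla\Phi(q)$ are nonlinear, and they depend on degree-2 moments alone. The remaining part is linear in $\mu$ and raises the degree by one, with at most $2|\alpha|K$ terms. I will therefore treat the system as a linear "degree-raising" operator $A(t)$ with coefficients $a(t)=\nabla\Phi(q(t))$, plus a diagonal damping term. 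For $M\ge2$, the truncated trajectory $\mu^{(M)}$ contains $q^{(M)}$ exactly. Hence the error $e=\mu^{(M)}-\bar\mu$ satisfies, for $|\alpha|\le M$,
\begin{equation*}
\dot e_\alpha=[A(a^{(M)})e]_\alpha+[(A(a^{(M)})-A(\bar a))\bar\mu]_\alpha-\frac{\gamma}{2}|\alpha|_{\mathcal K}e_\alpha+\text{(boundary term)}.
\end{equation*}
The boundary term is $-[A(a^{(M)})\bar\mu]_\alpha$ restricted to the components of $\bar\mu$ of degree $M+1$, and it appears only when $|\alpha|=M$. Its size is at most $2MK\|a^{(M)}\|\,C_*^{2(M+1)}$.

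The second step is a Cauchy–Kovalevskaya/Nirenberg–Nishida type estimate in the scale of spaces $X_s$, which is the same tool that gave local well-posedness in Theorem \ref{theo:tied-SGD}. Since $A$ raises the degree by one and carries a factor $|\alpha|$, it satisfies $\|A\mu\|_{s}\le \frac{cK\|a\|}{s'-s}\,\|\mu\|_{s'}$ for $s<s'$; this uses $|\alpha|(s/s')^{|\alpha|}\lesssim s/(s'-s)$. This loss of scale is where I expect the real difficulty, because a plain Grönwall argument fails for unbounded $A$. I would first try to exploit the sign structure instead. When $\gamma>\gamma_*$, the damping $-\frac{\gamma}{2}|\alpha|$ dominates the degree-raising term at matching weights, since both scale like $|\alpha|$. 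Concretely, in $\|\cdot\|_s$ the raising term costs at most $2|\alpha|K\|a\|s$ relative to $|\alpha|$. Choosing $s_0$ with $2K\sup_t\|a\|\,s_0<\gamma/2$, or absorbing the cost via Assumption \ref{assum:tied}.4, would make $A-\frac{\gamma}{2}|\cdot|$ dissipative in $\|\cdot\|_{s_0}$, and the weighted error would then obey a closed Grönwall inequality. If this pairing is not available, the fallback is the abstract Cauchy–Kovalevskaya theorem on shrinking weights $s(t)=s_1-\kappa t$, iterated over subintervals, with $T$-independence obtained from the dissipation.

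The third step handles existence for the truncated system and the nonlinearity. The truncated system is finite-dimensional with locally Lipschitz $V$, so it is solvable locally. A bootstrap shows $\|\mu^{(M)}\|_{s_0}$ and $q^{(M)}$ stay close to the true trajectory, which keeps $\nabla\Phi$ Lipschitz on a neighborhood. Here the extended field of \eqref{eq:V-definition} is used when $q^{(M)}\not\succeq0$. This gives global existence and uniqueness for large $M$. The term $(A(a^{(M)})-A(\bar a))\bar\mu$ is bounded by $\mathrm{Lip}(\nabla\Phi)\,|e_q|\,|\alpha|C_*^{2(|\alpha|+1)}$, and $|e_q|\le s_0^2\|e\|_{s_0}$, so it is absorbed into the Grönwall term. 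The only source is the boundary term, of weighted size $\lesssim M C_*^{2(M+1)}s_0^{-M}$. The dissipation removes the factor $M$ and the constant depends only on $\gamma$. This yields $\|e\|_{s_0}\le c\,(C_*^2/s_0)^{M+1}$, uniformly in $T$. For $|\alpha|>M$ the error is $|\bar\mu_\alpha|\le C_*^{2|\alpha|}\le (C_*^2/s_0)^{M+1}s_0^{|\alpha|}$. Unwinding the norm then gives \eqref{eq:truncation-coordinate-bound}.
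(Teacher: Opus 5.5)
Your proposal is correct and is essentially the paper's argument. The weight decay $\frac{\gamma}{2}|\alpha|_{\mathcal K}$ dominates the degree-raising part in $\|\cdot\|_{s_0}$ once $\gamma$ is large (the paper's condition $\kappa<1$), a Duhamel estimate closes the weighted error bound uniformly in $T$, and a bootstrap with the extended field $\widetilde\Gamma$ keeps $q^{(M)}$ in a bounded set where $\widetilde\Gamma$ is bounded and Lipschitz.

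The only difference is bookkeeping. The paper observes that $Y_M$ is invariant under $V$, so $\mu^{(M)}$ solves the full extended ODE from $\Pi_M\bar\mu(0)$, and the error is driven purely by the initial discrepancy $\|\Pi_M\bar\mu(0)-\bar\mu(0)\|_{s_0}\le (C_*^2/s_0)^{M+1}$ rather than by a boundary forcing at $|\alpha|=M$. Separately, your fallback scale estimate is reversed: with this norm the raising operator maps $X_s\to X_{s'}$ for $s<s'$, so the weights must grow, not shrink.
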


Corollary \ref{cor:truncation-tied} suggests that while the limiting trajectory is intrinsically infinite-dimensional, any fixed collection of low-order observables can be approximated to arbitrary accuracy by increasing the truncation degree. In the exactly solvable quadratic setting (quadratic networks \cite{bodin2023gradient,martin2024impact,martin2026high}) of Appendix \ref{app:solvable}, these finite-order systems coincide with the Taylor expansion of the exact matrix flow.

Corollary \ref{cor:truncation-tied} is proven in Appendix \ref{app:proof-cor-truncation-tied}. As a side remark, Assumption \ref{assum:tied}.4 is used in its proof to obtain a uniform-in-time truncation error bound; without this condition, the constants in Corollary \ref{cor:truncation-tied} would generally depend on $T$.

Figure \ref{fig:tied-theory} verifies the same phenomenon beyond the exactly solvable setting. For \(L=2\), softmax activation, and MSE loss, increasing the truncation degree systematically improves agreement between the deterministic theory and online SGD.

\begin{figure}
\centering
\includegraphics[width=0.8\linewidth]{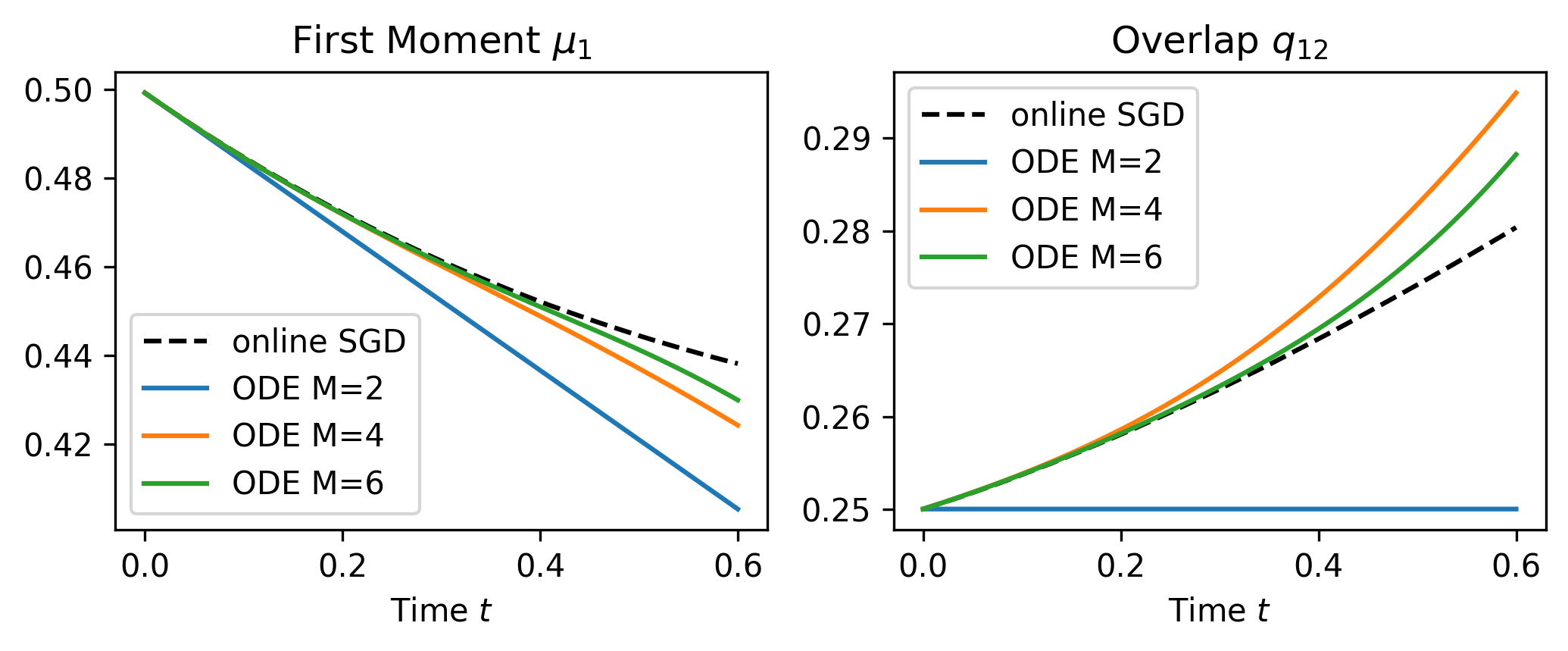}
\vspace{-4mm}
\caption{Finite truncations accurately approximate the infinite moment dynamics. Comparison of online SGD with degree-\(M\) truncations of \eqref{eq:truncated-tied-system} for \(L=2\), softmax activation, and MSE loss. Increasing \(M\) improves the agreement with the empirical trajectory. We use \(d=400\), learning rate \(0.01\), batch size \(512\), and no weight decay.}
\label{fig:tied-theory}
\vspace{-4mm}
\end{figure}

\subsection{Weak recovery and its sample complexity}
\label{sec:weak_recovery}
We now use the macroscopic dynamics to address the main learning question of this section: can the tied factorization escape an initially uninformative state, and does its behavior differ from direct optimization over \(S\)?

The relevant notion of information is not the raw overlap \(\frac1d\operatorname{Tr}(S_1S_2)\). Because tied matrices have a nonzero trace, this quantity can be \(\mathcal O(1)\) even when the student contains no information about the teacher. We therefore remove the isotropic components and measure learning through the overlap of the traceless parts.

For clarity, consider one trainable student \(S_1=W_1W_1^T\) and one fixed teacher \(S_2\). Define $t_k^{(d)}:=\frac1d\Tr[S_k]$, $\mathring S_k:=S_k-t_k^{(d)}I_d$ and the structural overlaps
\begin{equation}
p_{kl}^{(d)}:=\frac1d\Tr[\mathring S_k\mathring S_l]=q_{kl}^{(d)}-t_k^{(d)}t_l^{(d)}.
\label{eq:structural-overlap}
\end{equation}
Their deterministic limits are denoted by
\(\bar t_k,\bar q_{kl}\), and \(\bar p_{kl}\). Thus \(\bar p_{12}\), rather than \(\bar q_{12}\), measures alignment between the student and teacher.

We impose the following uninformative initialization.
\begin{assumption}
\label{assum:tied_initialization}
As \(d\to\infty\), the initial student \(S_1(0)\) and the teacher \(S_2\) satisfy
\begin{equation}
\bar p_{12}(0)=0,\qquad
\lim_{d\to\infty}\frac1d\Tr[\mathring S_1(0)^2\mathring S_2]=0,
\qquad
\lim_{d\to\infty}\frac1d\Tr[\mathring S_1(0)\mathring S_2^2]=0.
\label{eq:free-centered-moments}
\end{equation}
Moreover, we assume that $\bar t_1(0)>0$ and $\bar p_{22}>0$.
\end{assumption}
Assumption \ref{assum:tied_initialization} places the student on an uninformative manifold: its centered overlap with the teacher, as well as the mixed third-order centered moments vanish asymptotically at initialization. An independently initialized Gaussian \(W_1(0)\) satisfies these conditions.

The tied parameterization nevertheless contains one additional macroscopic quantity that does not vanish: $\bar t_1(0)>0$ as a direct consequence of \(S_1=W_1W_1^\top\succeq0\). Evaluating the moment dynamics \eqref{eq:tied_macroscopic_equation} at the uninformative initialization gives
\begin{equation}
\left. \frac{d\bar p_{12}}{dt} \right|_{t=0} = -2[\nabla\Phi(\bar q(0))]_{12} \bar t_1(0)\bar p_{22}.
\end{equation}
Hence, whenever \([\nabla\Phi(\bar q(0))]_{12}\) is nonzero, the uninformative state has a nonzero velocity in the informative direction. This is the symmetry-breaking effect of the tied factorization.

Corollary \ref{cor:tied_weak_recovery_sgd} turns this heuristic into a sample-complexity statement, proven in Appendix \ref{app:proof_tied_weak_recovery_sgd}. For a fixed recovery threshold \(c>0\), define the weak-recovery sample complexity by
\begin{equation}
N_{\mathrm{wr}}^{(d)}(c):=\inf\left\{n\ge0:|p_{12}^{(d,n)}|\ge c
\right\},
\label{eq:weak-recovery-hitting-time}
\end{equation}
where \(p_{12}^{(d,n)}\) denotes the structural overlap after \(n\) steps of online SGD \eqref{eq:SGD}.
\begin{corollary}
\label{cor:tied_weak_recovery_sgd}
Under Assumption \ref{assum:tied_initialization} and the conditions of Theorem \ref{theo:tied-SGD}, suppose additionally that
\begin{equation}
[\nabla\Phi(\bar q(0))]_{12}\neq0,
\label{eq:non-degeneracy}
\end{equation}
Then there exist constants \(c>0\) and \(0<T_-<T_+<\infty\) independent of \(d\), such that
\begin{equation}
\lim_{d\to\infty}\mathbb P\left(
\left\lfloor\frac{T_-d}{4\alpha_d}\right\rfloor<N_{\mathrm{wr}}^{(d)}(c)
\le\left\lceil\frac{T_+d}{4\alpha_d}\right\rceil
\right)=1.
\label{eq:weak-recovery-hitting-bound}
\end{equation}
In particular, choosing the largest learning-rate scaling $\alpha_d=\Theta\left(\frac1{d\log d}\right)$ covered by Theorem \ref{theo:tied-SGD} gives the weak recovery sample complexity
\begin{equation}
N_{\mathrm{wr}}^{(d)}(c)=\Theta_{\mathbb P}(d^2\log d).
\label{eq:weak-recovery-d2logd}
\end{equation}
\end{corollary}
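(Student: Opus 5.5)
We will reduce the statement to the deterministic limit of Theorem \ref{theo:tied-SGD}, combined with an elementary analysis of $\bar p_{12}(t)$ near $t=0$.

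\emph{Step 1: the limit of $p_{12}$.} Since $t_k^{(d)}=\mu_{(k)}$ and $q_{12}^{(d)}=\mu_{(1,2)}$, we have $p_{12}^{(d)}=\mu_{(1,2)}-\mu_{(1)}\mu_{(2)}$, a continuous function of finitely many moments. Theorem \ref{theo:tied-SGD} then gives
\begin{equation*}
\sup_{t\in[0,T]}\bigl|\tilde p_{12}^{(d)}(t)-\bar p_{12}(t)\bigr|\to0
\end{equation*}
in probability for every fixed $T$, where $\bar p_{12}=\bar\mu_{(1,2)}-\bar\mu_{(1)}\bar\mu_{(2)}$ and $\bar\mu_{(2)}=\bar t_2$ is constant because index $2$ is frozen.

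\emph{Step 2: the initial velocity.} Using \eqref{eq:tied_macroscopic_equation} with $\mathcal K=\{1\}$, I would compute $\dot{\bar\mu}_{(1,2)}$ and $\dot{\bar\mu}_{(1)}$. Each involves $\nabla\Phi_{11}$ and $\nabla\Phi_{12}$, multiplied by third- and second-order moments, together with the decay terms $-\tfrac{\gamma}{2}\bar\mu$. I would then expand every moment in centered quantities, writing $S_k=t_kI+\mathring S_k$. At $t=0$, Assumption \ref{assum:tied_initialization} kills $\bar p_{12}(0)$ and the two mixed centered third moments. The $\nabla\Phi_{11}$ and $\gamma$ contributions to $\dot{\bar p}_{12}$ are proportional to $\bar p_{12}(0)$ or to those third moments, so they vanish. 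What survives is
\begin{equation*}
\dot{\bar p}_{12}(0)=-2[\nabla\Phi(\bar q(0))]_{12}\,\bar t_1(0)\,\bar p_{22},
\end{equation*}
which is nonzero by \eqref{eq:non-degeneracy}, $\bar t_1(0)>0$ and $\bar p_{22}>0$. Set $v:=|\dot{\bar p}_{12}(0)|>0$.

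\emph{Step 3: $C^1$ regularity and choice of constants.} I need $\bar p_{12}$ to be $C^1$ with a uniformly continuous derivative near $0$. The admissible solution satisfies $\mu=\mu(0)+\int V(\mu)$ in $X^0_{s'_T}$, and finitely many coordinates of $V(\mu(r))$ are continuous in $r$. Continuity of $\nabla\Phi$ along $\bar q(t)$ follows from the smoothness in Assumption \ref{assum:tied}.1 and from $\bar q(t)\succeq0$; the footnote's limit handles the PSD boundary. Hence there is $T_1>0$ with $|\dot{\bar p}_{12}(t)|\ge v/2$ and of fixed sign on $[0,T_1]$. I would take $T_+:=T_1$ and $c:=vT_1/8$, so that $|\bar p_{12}(T_+)|\ge vT_1/2=4c$. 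I would choose $T_-\in(0,T_+)$ with $\sup_{[0,T_-]}|\bar p_{12}|\le c/2$, which is possible by continuity since $\bar p_{12}(0)=0$.

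\emph{Step 4: hitting-time bounds.} With probability tending to one, $\sup_{[0,T_+]}|\tilde p_{12}-\bar p_{12}|<c/2$. On this event:
\begin{itemize}
\item for $n\le\lfloor T_-/\tau_d\rfloor$ we get $|p^{(d,n)}_{12}|<c$, hence $N_{\rm wr}>\lfloor T_-d/(4\alpha_d)\rfloor$;
\item the iterate with $n\tau_d\le T_+<(n+1)\tau_d$ has $|p_{12}|\ge 4c-c/2>c$, hence $N_{\rm wr}\le\lceil T_+d/(4\alpha_d)\rceil$.
\end{itemize}
The $d^2\log d$ statement follows by substituting $\alpha_d=\Theta(1/(d\log d))$ into these bounds.

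\emph{Main obstacle.} I expect Step 2 to be the hardest. The rest is bookkeeping, but verifying the cancellation requires tracking how the insertions in \eqref{eq:tied_macroscopic_equation} act on both orderings and on the subtraction $\bar t_1\bar t_2$. Concretely, one has to check that the $\nabla\Phi_{11}$ terms in $\dot{\bar\mu}_{(1,2)}$ and in $\bar t_2\dot{\bar\mu}_{(1)}$ cancel exactly, up to centered mixed moments. A lesser point in Step 3 is justifying continuity of $\nabla\Phi$ at the boundary of $\mathbb S_+^K$.
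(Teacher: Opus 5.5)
Your proposal is correct and takes essentially the same route as the paper's proof. Both arguments use four steps: uniform-in-probability convergence of $\tilde p_{12}^{(d)}$ from Theorem \ref{theo:tied-SGD}; the expansion $S_k=t_kI+\mathring S_k$, which shows that only $-2[\nabla\Phi(\bar q(0))]_{12}\,\bar t_1(0)\,\bar p_{22}$ survives in $\dot{\bar p}_{12}(0)$; continuity of $\dot{\bar p}_{12}$ near $0$ to fix $T_\pm$ and $c$; and the transfer to hitting-time bounds. The differences are cosmetic: your constant $c=vT_1/8$, and your phrasing of the continuity step as $|\dot{\bar p}_{12}|\ge v/2$ with fixed sign, which is the form the paper's argument actually needs.
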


\begin{remark}
The condition \eqref{eq:non-degeneracy} is mild in a broad class of teacher-student losses. To see this, consider the MSE loss
$\mathcal L(G_1,G_2) = \frac12\|F(G_1)-F(G_2)\|^2$ with the same activation \(F\) for student and teacher. Then
\begin{equation}
[\nabla\Phi(\bar q)]_{12}=-\frac12\mathbb E\left[\left\langle \nabla F(G_1)B^{1/2},\nabla F(G_2)B^{1/2}\right\rangle_F\right],
\end{equation}
where $B_{(ij),(i'j')}=\mathcal C_{ii'}\mathcal C_{jj'}+\mathcal C_{ij'}\mathcal C_{ji'}$. Suppose additionally that the initialization scale is chosen such that
$q_{11}(0)=q_{22}>0$. A Gaussian Hermite expansion then gives
\begin{equation}
-[\nabla\Phi(\bar q(0))]_{12}=\frac12\sum_\nu\rho^{|\nu|}\nu!\|H_\nu\|_F^2,
\end{equation}
where $\{H_\nu\}$ are the Hermite coefficients of
$\nabla F(\sqrt{q_{22}}B^{1/2}z)B^{1/2}$ and $\rho=\frac{\bar t_1(0)\bar t_2}{\sqrt{\bar q_{11}(0)\bar q_{22}}}>0$. Hence $[\nabla\Phi(\bar q(0))]_{12}<0$ whenever $\nabla F(\sqrt{q_{22}}B^{1/2}Z)B^{1/2}$ is not constantly zero.
\end{remark}

The significance of Corollary \ref{cor:tied_weak_recovery_sgd} is not only the \(d^2\log d\) sample scale, but the mechanism that produces it. The student begins with zero structural overlap with the teacher, yet the tied factorization makes this uninformative state non-invariant: the nonzero isotropic component \(\bar t_1(0)\) couples to the teacher variance \(\bar p_{22}\) through the cross-gradient of the population loss and immediately generates informative overlap.

This behavior should be contrasted with direct optimization over \(S\). As shown in Appendix \ref{app:sgd-over-S}, the direct \(S\)-flow closes at the level of finitely many first- and second-order parameters. More importantly, when the relevant cross-gradients vanish throughout the uninformative manifold, that manifold is invariant under the limiting \(S\)-flow, and weak recovery does not occur on the \(d^2\log d\) sample scale. The tied factorization therefore acts as an architectural implicit bias: it changes the geometry of the gradient flow so that an initialization that is structurally uninformative can nevertheless have a nonzero escape direction. 

Figure \ref{fig:tied} illustrates this distinction. For a linear activation, both parameterizations can develop overlap. When we do online SGD directly over \(S\) on higher-order activations (\(h_2,h_3\)), however, it remains uninformative on the displayed sample scale, whereas the tied flow escapes and learns the teacher.

Corollary \ref{cor:tied_weak_recovery_sgd} identifies the mechanism governing weak recovery. Under an additional Polyak–Łojasiewicz condition, Appendix \ref{app:strong-recovery} also gives a convergence rate for the subsequent strong-recovery phase.

\begin{figure}
\centering
\includegraphics[width=0.8\linewidth]{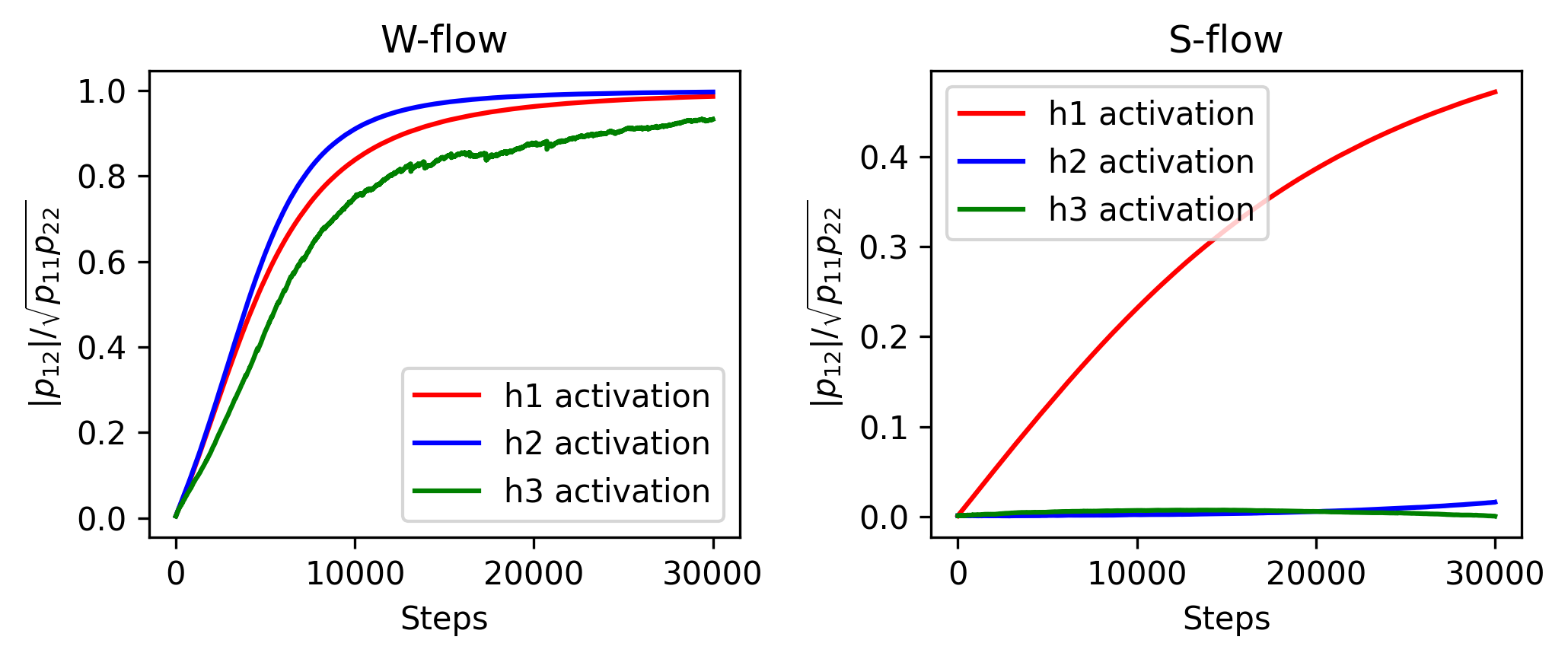}
\vspace{-1em}
\caption{The parameterization of the attention matrix qualitatively changes weak recovery. We compare online SGD on the tied factor \(W\), with \(S=WW^\top\) (\textbf{Left}), and direct online SGD on \(S\) (\textbf{Right}), starting from uninformative initializations in the same teacher–student setting. For the linear activation both parameterizations recover the teacher on the displayed scale, whereas for the higher-order activations the direct \(S\)-flow remains close to the uninformative state while the tied \(W\)-flow develops substantial overlap. We use \(d=400\), learning rate \(0.005\), batch size \(512\), and no weight decay. The activations are chosen from: $h_1(x)=x$, $h_2(x)=\frac{x^2-1}{\sqrt{2}}$ and $h_3(x)=\frac{x^3-3x}{\sqrt{6}}$, respectively ($L=1$). The input data is $\mathcal{N}(0, I_d)$.} 
    \label{fig:tied}
\vspace{-1em}
\end{figure}

\section{Online SGD of untied attention}
The symmetry breaking described in Section \ref{sec:tied} relies crucially on the PSD geometry \(S=WW^\top\). This raises the natural next question: what remains of the mechanism when $S$ is not PSD? This section answers this question for the untied parameterization $S=UV^T$.

The mechanism is qualitatively different from the tied case. For untied attention, the pre-activation has a nontrivial mean $m_k=\frac{1}{\sqrt d}\operatorname{Tr}(U_kV_k^\top)$, and this mean evolves faster than the matrix moments. As a result, training separates into two stages. On the fast timescale, the mean variables evolve while the moments remain asymptotically frozen. The fast flow drives the trainable means toward a critical manifold of the population loss. Only afterwards, on a slower timescale, do the matrix moments undergo \(\mathcal O(1)\) evolution, with the means remaining asymptotically slaved to this manifold.

This separation has a direct consequence for feature learning. The fast phase does not itself create an \(\mathcal O(1)\) teacher overlap, since the overlaps remain frozen on that timescale. Instead, it selects the effective state from which slow feature learning begins. Weak recovery is therefore controlled by the loss gradients evaluated after the fast relaxation. The fast flow can break a student-side symmetry by shifting the pre-activation mean and then the slow dynamics escapes the uninformative manifold. If the relevant symmetry survives the fast relaxation, this recovery mechanism is absent.

The remainder of this section makes this fast–slow picture rigorous. We first derive the two-timescale macroscopic limit and then use it to characterize a corresponding dichotomy for weak recovery.

\label{sec:untied}
\subsection{Macroscopic dynamics}
For each index \(k=1,\ldots,K\), let $U_k,V_k\in\mathbb R^{d\times d_k}$ and define the associated matrices
$M_k:=U_kV_k^\top$, $A_k:=U_kU_k^\top$ and $B_k:=V_kV_k^\top$. We consider the regularized loss
\begin{equation}
\mathcal{L}\left(\left\{\frac{1}{\sqrt d}x_i^TU_kV_k^Tx_j\right\}_{i,j,k=1}^{L,L,K}\right)+\frac{\gamma}{2d}
\sum_{k\in\mathcal K}\left(\|U_k\|_F^2+\|V_k\|_F^2\right),
\label{eq:untied-objective}
\end{equation}
where $x_1,\ldots,x_L\sim\mathcal N(0,\mathcal C\otimes I_d)$, and $\mathcal K\subseteq\{1,\ldots,K\}$ denotes the set of trainable indices. 

At iteration \(n\), a fresh sample \(x^{(n)}\sim\mathcal N(0,\mathcal C\otimes I_d)\) is drawn, and for \(k\in\mathcal K\) the online SGD updates are
\begin{equation}
\begin{aligned}
U_k^{(n+1)}&=U_k^{(n)}-\alpha_d\nabla_{U_k}\mathcal L(G^{(n)})-\frac{\alpha_d\gamma}{d}U_k^{(n)},\\
V_k^{(n+1)}&=V_k^{(n)}-\alpha_d\nabla_{V_k}\mathcal L(G^{(n)})-\frac{\alpha_d\gamma}{d}V_k^{(n)},
\end{aligned}
\label{eq:untied-SGD}
\end{equation}
where $G_{ijk}^{(n)}:=\frac{1}{\sqrt d}(x_i^{(n)})^TU_k^{(n)}(V_k^{(n)})^Tx_j^{(n)}$ and $\alpha_d$ is the learning rate.

\paragraph{Order parameters.}
Unlike the centered tied model of Section \ref{sec:tied}, the untied pre-activations can have an \(\mathcal O(1)\) limiting mean. We therefore need to track separately $m_k^{(d)}:=\frac{1}{\sqrt d}\Tr(M_k)$.

The remaining macroscopic state is described by normalized joint moments of the matrices \(M_k,M_k^\top,A_k,B_k\). Introduce the alphabet $\mathscr Z:=\{M_k,M_k^T,A_k,B_k:1\le k\le K\}$ and for a word \(\alpha=(Z_1,\ldots,Z_w)\in\mathscr Z^w\), define
\begin{equation}
\mu_\alpha^{(d)}:=\frac1d\Tr[Z_1\cdots Z_w].
\end{equation}
In particular, we define the traces $t_k^A:=\mu_{(A_k)}$, $t_k^B:=\mu_{(B_k)}$ and the overlaps $q_{kl}^{(1)}:=\mu_{(M_k,M_l^T)}$, $q_{kl}^{(2)}:=\mu_{(M_k,M_l)}$.

The potential is defined as the limiting population loss in Theorem \ref{theo:general}: $\Phi(m,q^{(1)},q^{(2)}):=\mathbb E\bigl[\mathcal L(G)\bigr]$, where \(G\in\mathbb R^{L\times L\times K}\) is Gaussian with $\mathbb E[G_{ijk}]=\mathcal C_{ij}m_k$ and $\operatorname{Cov}(G_{ijk},G_{i'j'l})=\mathcal C_{ii'}\mathcal C_{jj'}q_{kl}^{(1)}+\mathcal C_{ij'}\mathcal C_{ji'}q_{kl}^{(2)}$. We denote by $\nabla_m\Phi$ and $\nabla_{q^{(r)}}\Phi$, $r=1,2$, the vector and matrix gradients of the potential, respectively.

For later use, let $M_l^{(1)}:=M_l$ and $M_l^{(2)}:=M_l^T$ for $\bar 1:=2,\quad \bar 2:=1$.
Define
\begin{equation}
\psi_k(m,\mu):=2\sum_{l=1}^K\sum_{r=1}^2\left[\nabla_{q^{(r)}}\Phi(m,q^{(1)},q^{(2)})\right]_{kl}\left[\mu_{(M_l^{(r)},B_k)}+\mu_{(A_k,M_l^{(r)})}\right].
\label{eq:untied-psi}
\end{equation}

The crucial new point is that \(m\) and \(\mu\) evolve on different scales. A single SGD step changes \(m\) at order \(\alpha_d\), whereas the normalized moments change only at order \(\alpha_d/d\). This produces the two-timescale limit below.

\paragraph{Fast learning phase.}
On the fast timescale $t=n\alpha_d$, the normalized moments remain asymptotically frozen, while the mean vector evolves at order one. Denoting the limiting fast trajectory by \(\widetilde m(t)\), the trainable components satisfy
\begin{equation}
\frac{d\tilde m_k}{dt}=-\left(\bar t_k^A(0)+\bar t_k^B(0)\right)\frac{\partial\Phi}{\partial m_k}\left(\tilde m(t),\bar q^{(1)}(0),\bar q^{(2)}(0)\right),\qquad k\in\mathcal K.
\label{eq:m-fast}
\end{equation}
The non-trainable components remain fixed.

Thus, to leading order, the fast learning phase performs gradient flow in the mean variables while seeing the covariance structure as frozen. Under the local strong-convexity condition introduced below, this flow converges toward the unique critical point $m^\star_{\mathcal K} = m^\star_{\mathcal K}(q^{(1)},q^{(2)})$ satisfying $\nabla_{m_{\mathcal K}} \Phi \left( m^\star(q^{(1)},q^{(2)}), q^{(1)},q^{(2)} \right) = 0$.
We denote the corresponding critical manifold by $
\mathcal M_0:=\left\{(m,\mu):\nabla_{m_{\mathcal K}}\Phi(m,q^{(1)},q^{(2)})=0\right\}$.

\paragraph{Slow learning phase.}
We now introduce the slow timescale $\tau=\frac{t}{d}=\frac{n\alpha_d}{d}$. After the initial boundary layer, the trainable means are asymptotically slaved to the critical manifold:
\begin{equation}
\bar m_{\mathcal K}(\tau)=m_{\mathcal K}^\star\left(\bar q^{(1)}(\tau),\bar q^{(2)}(\tau)\right),\qquad(\bar m(\tau),\bar\mu(\tau))\in\mathcal M_0.
\label{eq:untied-slaving}
\end{equation}
The small residual of the fast variables contributes during the slow timescale. This produces the effective bias
\begin{equation}
\bar\beta_k(\bar m,\bar\mu):=-\frac{\psi_k(\bar m,\bar\mu)}{\bar t_k^A+\bar t_k^B}.
\label{eq:untied-effective-bias}
\end{equation}
with \(\psi_k\) defined in \eqref{eq:untied-psi}.

For a word \(\alpha=(Z_1,\ldots,Z_w)\), let \(k_i\) denote the index carried by the letter \(Z_i\). For each trainable position \(i\), define the substitution operators
\begin{equation}
\mathcal T_i^{\mathrm{bias}}(\mu)=
\begin{cases}
\mu_{(\ldots,A_k+B_k,\ldots)},&Z_i\in\{M_k,M_k^T\},
\\[1mm]
\mu_{(\ldots,M_k+M_k^T,\ldots)},&Z_i\in\{A_k,B_k\},
\end{cases}\qquad
\mathcal T_{i,l,r}^{\mathrm{diff}}(\mu)=
\begin{cases}
\mu_{(\ldots,M_l^{(r)}B_k+A_kM_l^{(r)},\ldots)},&Z_i=M_k,
\\[1mm]
\mu_{(\ldots,B_kM_l^{(\bar r)}+M_l^{(\bar r)}A_k,\ldots)},&Z_i=M_k^T,
\\[1mm]
\mu_{(\ldots,M_l^{(r)}M_k^T+M_kM_l^{(\bar r)},\ldots)},&Z_i=A_k,
\\[1mm]
\mu_{(\ldots,M_l^{(\bar r)}M_k+M_k^TM_l^{(r)},\ldots)},&Z_i=B_k.
\end{cases}
\label{eq:untied-substitution}
\end{equation}
The notation is understood linearly; for example, $\mu_{(\ldots,A_k+B_k,\ldots)}=\mu_{(\ldots,A_k,\ldots)}+\mu_{(\ldots,B_k,\ldots)}$.

With these operators, the slow evolution of all normalized matrix moments closes as an infinite-dimensional hierarchy
\begin{align}
\frac{d\bar\mu_\alpha}{d\tau}={}&-\sum_{\substack{i=1\\ k_i\in\mathcal K}}^w
\bar\beta_{k_i}(\bar m,\bar\mu)\mathcal T_i^{\mathrm{bias}}(\bar\mu)-2\sum_{\substack{i=1\\ k_i\in\mathcal K}}^w
\sum_{l=1}^K\sum_{r=1}^2\frac{\partial\Phi}{\partial q_{k_il}^{(r)}}\left(\bar m,\bar q^{(1)},\bar q^{(2)}\right)\mathcal T_{i,l,r}^{\mathrm{diff}}(\bar\mu)-2\gamma|\alpha|_{\mathcal K}\bar\mu_\alpha
\label{eq:slow_tau_mu}
\end{align}
for $0\leq\tau\leq T$. Together with \eqref{eq:untied-slaving}, this determines the limiting trajectory in the slow learning phase.

Equations \eqref{eq:m-fast}–\eqref{eq:slow_tau_mu} summarize the fast–slow structure. The mean variables are no longer independent coordinates on the slow scale: they are determined by the current second moments through \(m^\star(q)\). The remaining matrix moments evolve through an infinite hierarchy, as in the tied case, but with the additional effective bias inherited from the fast learning phase.

We now state conditions ensuring that this formal separation of timescales is valid and that both phases remain well posed.
\begin{assumption}
\label{assum:untied}
The following conditions hold.
\begin{itemize}
\item[1.]
The loss \(\mathcal L:\mathbb R^{L\times L\times K}\to\mathbb R\) is six times continuously differentiable, and \(\mathcal L\) together with its derivatives up to sixth order has at most polynomial growth.
\item[2.]
For every \(k\), $\|U_k^{(0)}\|_F+\|V_k^{(0)}\|_F=\Theta(\sqrt d)$. There exists a constant \(C_0>0\), independent of \(d\), such that $\max_{k}\left\{\|U_k^{(0)}\|_{\mathrm{op}},\|V_k^{(0)}\|_{\mathrm{op}}\right\}\le C_0$
almost surely.
\item[3.]
There exist deterministic limits \(\tilde m_k(0)\) and \(\bar\mu_\alpha(0)\) such that $m_k^{(d)}(0)\to\tilde m_k(0)$ and $\mu_\alpha^{(d)}(0)\to\bar\mu_\alpha(0)$ in probability for every \(k\) and every \(\alpha\).
\item[4.]
For every trainable index \(k\in\mathcal K\), $\bar t_k^A(0)>0$, $\bar t_k^B(0)>0$ and $\left|\bar t_k^A(0)-\bar t_k^B(0)\right|\ge c_{\mathrm{bal}}$ for some \(c_{\mathrm{bal}}>0\).
\item[5.] The regularization is sufficiently large, i.e., $\gamma>\gamma_*(C_0,\mathcal{C},\mathcal{L},K,L,T)$ for some $\gamma_*$ independent of $d$.
\end{itemize}
\end{assumption}
Assumption \ref{assum:untied}.4 is a technical non-degeneracy condition used to control the denominator \(t_k^A+t_k^B\) of \eqref{eq:untied-effective-bias} and obtain uniform bounds along the slow trajectory. It is satisfied, for example, by independent Gaussian initializations with slightly different variances for \(U_k\) and \(V_k\). Assumption \ref{assum:untied}.5 plays the same spectral-control role as in the tied case.

\begin{assumption}[Local strong convexity]
\label{assum:untied_convex}
Fix a time horizon \(T>0\). Let \(m_{\mathcal K}\) denote the trainable mean components, while \(m_{\setminus\mathcal K}\) is fixed at \(m_{\setminus\mathcal K}(0)\). Assume that there exist an open convex set \(\mathcal M\subset\mathbb R^{|\mathcal K|}\), an open set \(\mathcal Q\) of covariance parameters \(q=(q^{(1)},q^{(2)})\), and a constant \(\lambda_0>0\) such that:

\begin{itemize}
\item[1.]
For every \((m_{\mathcal K},q)\in\mathcal M\times\mathcal Q\), $\nabla_{m_{\mathcal K}}^2\Phi(m,q)\succeq\lambda_0 I$.

\item[2.]
Writing \(q_0=(\bar q^{(1)}(0),\bar q^{(2)}(0))\), we have \(\tilde m_{\mathcal K}(0)\in\mathcal M\), \(q_0\in\mathcal Q\), and
\begin{equation}
\left\{m_{\mathcal K}\in\mathcal M:\Phi(m_{\mathcal K},m_{\setminus\mathcal K}(0),q_0)\le\Phi(\tilde m(0),q_0)\right\}\Subset\mathcal M.
\end{equation}

\item[3.]
For every \(q\in\mathcal Q\), the equation $\nabla_{m_{\mathcal K}}\Phi(m,q)=0$ has a solution \(m_{\mathcal K}^\star(q)\in\mathcal M\).
The solution $ \bigl(m_{\mathcal K}^\star(\bar q(\tau)),\bar q(\tau)\bigr)$ of \eqref{eq:untied-slaving}-\eqref{eq:slow_tau_mu} remains in a compact subset of \(\mathcal M\times\mathcal Q\) for $0\le\tau\le T$\footnote{More precisely, on every interval of local existence contained in $[0,T]$.} and sufficiently large $\gamma$.
\end{itemize}
\end{assumption}
Assumption \ref{assum:untied_convex} ensures that, for each nearby covariance state, the fast dynamics has a locally unique attracting critical point \(m^\star(q)\). This property allows the fast mean variables to be eliminated on the slow timescale.

We now define the empirical interpolations. On the fast timescale,
\begin{equation}
\tilde m^{(d)}(t):=m^{(n)},\qquad t\in[n\alpha_d,(n+1)\alpha_d),
\label{eq:untied-fast-interpolation}
\end{equation}
while on the slow timescale,
\begin{equation}
\tilde\mu^{(d)}(\tau)=\mu^{(n)},\qquad\tau\in\left[\frac{n\alpha_d}{d},\frac{(n+1)\alpha_d}{d}\right).
\label{eq:untied-slow-interpolation}
\end{equation}

The following theorem rigorously justifies the separation into a fast boundary layer and a slow infinite-dimensional flow.
\begin{theorem}
\label{theo:untied-SGD}
Suppose that Assumptions \ref{assum:untied} and \ref{assum:untied_convex} hold for a fixed slow-time horizon \(T>0\). Assume furthermore that $\max_{1\le k\le K}d_k\le d^{c_w}$ for some fixed \(c_w<\infty\), and that the learning rate satisfies $\alpha_d\le\frac{c_0}{d\log d}$ and $\alpha_d=\Omega(d^{-\iota})$ for some constants \(\iota>0\) and sufficiently small \(c_0>0\) independent of \(d\).

Then the online SGD trajectory exhibits the following two-timescale limit. Fix any $\epsilon>0$. For every finite fast-time horizon \(\widetilde T>0\),
\begin{equation}
\lim_{d\to\infty}\mathbb P\left(\sup_{t\in[0,\widetilde T]}\left\|\tilde m^{(d)}(t)-\tilde m(t)\right\|_\infty>\epsilon\right)=0,
\label{eq:untied-fast-convergence}
\end{equation}
where \(\tilde m\) is the unique solution of \eqref{eq:m-fast}.

On the slow timescale, let \((\bar m(\tau),\bar\mu(\tau))\) denote the unique admissible solution
of \eqref{eq:untied-slaving}--\eqref{eq:slow_tau_mu}. Then for every \(0<T_1<T_2\le T\),
\begin{equation}
\lim_{d\to\infty}\mathbb P\left(\sup_{\tau\in[T_1,T_2]}\left\|\tilde m^{(d)}(\tau d)-\bar m(\tau)\right\|_\infty>\epsilon\right)=0
\label{eq:untied-slow-m-convergence}
\end{equation}
and, for every word \(\alpha\),
\begin{equation}
\lim_{d\to\infty}\mathbb P\left(\sup_{\tau\in[0,T_2]}\left|\tilde\mu_\alpha^{(d)}(\tau)-\bar\mu_\alpha(\tau)\right|>\epsilon\right)=0.
\label{eq:untied-slow-moment-convergence}
\end{equation}
\end{theorem}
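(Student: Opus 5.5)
The argument reuses the architecture of the proof of Theorem \ref{theo:tied-SGD}, adapted to the coupled pair $(m^{(n)},\mu^{(n)})$, and then adds a singular-perturbation step that removes the fast variable on the slow scale.

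\textbf{Step 1: one-step SGD expansion.} Fix a word $\alpha$. Expand one step of \eqref{eq:untied-SGD} in the increments $\Delta U_k=-\alpha_d(\partial_{G}\mathcal L)\cdot\nabla_{U_k}G-\alpha_d\gamma d^{-1}U_k$, and similarly for $\Delta V_k$. Here $\nabla_{U_k}G_{ijk}=d^{-1/2}x_ix_j^\top V_k$.
\begin{itemize}
\item Mean variable. Since $m_k=d^{-1/2}\Tr(U_kV_k^\top)$, the change in $m_k$ is $-\alpha_d\sum_{ij}\partial_{ijk}\mathcal L\, d^{-1}\bigl(x_j^\top(A_k+B_k)x_i\bigr)+\mathcal O(\alpha_d/d)$. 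Its conditional expectation is computed with Theorem \ref{theo:general}, extended by Gaussian integration by parts to include the quadratic forms $x_j^\top(A_k+B_k)x_i/d$. These forms concentrate at $\mathcal C_{ij}(t^A_k+t^B_k)$. The leading term is $-\alpha_d(t_k^A+t_k^B)\partial_{m_k}\Phi$. The next-order terms, of order $\alpha_d d^{-1/2}\cdot d^{-1/2}$, come from the fluctuation of these quadratic forms correlating with $G$; they give exactly $-\alpha_d d^{-1}\psi_k$ through \eqref{eq:untied-psi}.
\item Moments. For $\mu_\alpha$ the drift is of order $\alpha_d/d$ and produces the $\mathcal T^{\mathrm{diff}}$ terms. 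The mean of $G$ contributes $\alpha_d d^{-1/2}\,\partial_m\mathcal L\,\mathcal T^{\mathrm{bias}}$, which is of size $\alpha_d/\sqrt d$ and so is \emph{larger} than the slow drift. This term is harmless only because $\partial_{m}\Phi$ is small near $\mathcal M_0$.
\item Fluctuations. Martingale increments are of order $\alpha_d d^{-1/2}$ for $m$ and $\alpha_d d^{-1}$ for $\mu$. Second-order terms are controlled by $\alpha_d\le c_0/(d\log d)$.
\end{itemize}
Spectral control of $\|U_k\|_{\mathrm{op}},\|V_k\|_{\mathrm{op}}$ comes from the same stopping-time and matrix-Freedman argument as in the tied case, using Assumption \ref{assum:untied}.5.

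\textbf{Step 2: fast phase.} On $t=n\alpha_d$ over a finite horizon $\widetilde T$, the moments move by $\mathcal O(\widetilde T/d)$, so they stay frozen. The $m$-martingale has quadratic variation $\mathcal O(\alpha_d\widetilde T)\to0$. A standard stochastic-approximation (Gronwall) argument then gives \eqref{eq:untied-fast-convergence}. By Assumption \ref{assum:untied_convex}.1--2, the limit \eqref{eq:m-fast} is a gradient flow, up to positive rescaling, of a $\lambda_0$-strongly convex function on a sublevel set compactly contained in $\mathcal M$. Hence $\tilde m$ stays in $\mathcal M$ and converges exponentially to $m^\star(q_0)$.

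\textbf{Step 3: slow phase, the main obstacle.} On the slow time $\tau$, I would introduce the corrected fast variable
\[
\xi:=m_{\mathcal K}-m^\star(q)-d^{-1}\eta(m,\mu).
\]
Here $\eta$ is chosen so that the $\mathcal O(\alpha_d/d)$ drift of $m^\star(q)$ and the $\psi$ correction are absorbed. The resulting first-order condition is
\[
(t^A+t^B)\,\nabla^2_m\Phi\,\eta=-\psi+\ldots,
\]
which is well defined by strong convexity.

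By strong convexity, $\xi$ satisfies a contracting recursion with rate $\alpha_d\lambda_0\min_k(t^A_k+t^B_k)$, and $t^A_k+t^B_k$ stays bounded below along the trajectory by Assumption \ref{assum:untied}.4. It is driven by noise of variance $\alpha_d^2/d$ per step. A discrete Lyapunov/BDG estimate, combined with a union bound over $\mathcal O(d^{1+\iota})$ steps, then gives
\[
\sup_{\tau\in[T_1,T]}|m-m^\star(q)-d^{-1}\eta|=o_{\mathbb P}(d^{-1/2})
\]
after the $\mathcal O(\log d)$ fast boundary layer. The boundary layer itself takes slow time $\mathcal O(d^{-1}\log d)$ and therefore vanishes. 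This is the delicate point. Substituting $m-m^\star\approx d^{-1}\eta$, which is in fact $\bar\beta$ after rescaling, into the bias term of the moment drift turns it into exactly the $\bar\beta\,\mathcal T^{\mathrm{bias}}$ term of \eqref{eq:slow_tau_mu}. The precision needed is only $o(d^{-1/2})$ for the residual beyond the $\eta$ correction; Assumption \ref{assum:untied}.1 (six derivatives) is used in the Taylor expansions of this step.

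\textbf{Step 4: identification of the limit.} Tightness of $\tilde\mu^{(d)}$ in $C([0,T_2])$ for each word follows from the bounded drift and vanishing martingales. Any subsequential limit is an admissible solution of \eqref{eq:untied-slaving}--\eqref{eq:slow_tau_mu}, with moments bounded by $C_*^{|\alpha|}$ from spectral control. Uniqueness then gives \eqref{eq:untied-slow-moment-convergence}. Uniqueness follows from the weighted-moment-space well-posedness of the tied proof, using the smoothness of $m^\star$ given by the implicit function theorem on $\mathcal Q$. Finally, \eqref{eq:untied-slow-m-convergence} follows from Step 3 and the continuity of $m^\star$. Assumption \ref{assum:untied_convex}.3 keeps the slow trajectory inside the region where strong convexity holds, which closes the bootstrap.
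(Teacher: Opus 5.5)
Your overall architecture (Stein expansion of the conditional gradients, freezing of the moments on the fast scale, tightness in $X_\rho^0$ with Ovsyannikov uniqueness, implicit-function smoothness of $m^\star$) matches the paper's. However, Step 3, which you rightly call the crux, fails as stated.

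\textbf{The orders in Steps 1 and 3 are off.} The second-order Stein term gives $\mathbb E_n[\Delta m_k]=-\alpha_d D_k b_k-\alpha_d d^{-1/2}\psi_k+\ldots$ with $D_k=t_k^A+t_k^B$. So $\psi$ enters at order $\alpha_d d^{-1/2}$, not $\alpha_d d^{-1}$. The quasi-equilibrium offset is therefore $m-m^\star(q)\approx d^{-1/2}(\nabla_m^2\Phi)^{-1}\beta$, not $d^{-1}\eta$. With your $d^{-1}\eta$, the bias contribution $\alpha_d d^{-1/2}b_k\mathcal T^{\mathrm{bias}}$ would be $\mathcal O(\alpha_d d^{-3/2})$ per step, and no $\bar\beta$ term would survive on the slow scale.

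\textbf{The noise in $m$ is larger than you assume.} The martingale part of $\Delta m_k$ is of order $\alpha_d$, not $\alpha_d d^{-1/2}$. Indeed, $\Delta m_k\approx-\alpha_d d^{-1}\sum_{ij}\partial_{ijk}\mathcal L(G)\,x_j^\top(A_k+B_k)x_i\approx-\alpha_d D_k\sum_{ij}\mathcal C_{ij}\partial_{ijk}\mathcal L(G)$, and $\partial\mathcal L(G)$ has $\mathcal O(1)$ fluctuations. Your own Step 2 quadratic variation $\mathcal O(\alpha_d\widetilde T)$ already reflects this. A contraction at rate $\asymp\alpha_d$ driven by noise of variance $\asymp\alpha_d^2$ per step has stationary fluctuations of size $\sqrt{\alpha_d}$. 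The supremum over the $\asymp d$ decorrelation windows in $[T_1,T]$ is then of order $\sqrt{\alpha_d\log d}\asymp\sqrt{c_0/d}$ when $\alpha_d\asymp c_0/(d\log d)$. Hence the uniform bound $\sup|m-m^\star-d^{-1/2}\eta|=o_{\mathbb P}(d^{-1/2})$ is generically false in the admissible range. Uniformly in time one only gets $\sqrt d\,\|b\|=\mathcal O_{\mathbb P}(1)$ after the boundary layer; this is Lemma \ref{lemma:local-mean-gradient-refinement}, which needs $c_0$ small.

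\textbf{The missing idea is time averaging.} You need $\sqrt d\,b_n\approx\beta_n$ only after averaging in time, and in that form it is true. Let $\tau_d=\alpha_d/d$ and $h_n:=\mathcal T_i^{\mathrm{bias}}(\mu^{(n)})$. The exact drift of $m_k$ gives $\tau_d h_n(\sqrt d\,b_n-\beta_n)=-d^{-1/2}(h_n/D_n)\bigl(\Delta m_k^{(n)}-\zeta^{(n+1)}-r^{(n)}\bigr)$, where $\zeta$ is a martingale increment and $r$ is negligible. Now sum and apply summation by parts. Here $m_k$ is bounded, and $h_n/D_n$ is a semimartingale whose drift increments sum to $\mathcal O(1)$ and whose martingale part has vanishing quadratic variation. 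The accumulated bias error is therefore $o_{\mathbb P}(1)$, which is Lemma \ref{lemma:untied-effective-bias}. The fluctuations you are trying to beat pointwise simply telescope.

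\textbf{Two further steps are not ``as in the tied case''.}
\begin{itemize}
\item[(a)] The conditional mean of $\nabla_{U_k}\mathcal L$ contains $d^{-1/2}b_kV_k$. Unrolled over $d/\alpha_d$ steps, this contributes $\mathcal O(\sqrt d\,\|b\|/\gamma)$ to $\|U_k\|_{\mathrm{op}}$. The spectral bound must therefore be proved jointly with $\|b^{(n)}\|\le Ce^{-c\alpha_d n}+Cd^{-1/2}$, under a common stopping time.
\item[(b)] Assumption \ref{assum:untied}.4 only concerns $\tau=0$. The lower bound on $t_k^A+t_k^B$ along the whole trajectory comes from the balancing identity $\langle\nabla_{U_k}\mathcal L,U_k\rangle_F=\langle\nabla_{V_k}\mathcal L,V_k\rangle_F$. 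This identity makes $t_k^A-t_k^B$ change only through weight decay and $\mathcal O(\alpha_d^2)$ terms.
\end{itemize}
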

Theorem \ref{theo:untied-SGD} shows that the untied factorization introduces a dynamical structure that has no analogue in the tied model. On the fast scale \(t=n\alpha_d\), only the pre-activation means undergo \(\mathcal O(1)\) evolution. On the slow scale \(\tau=n\alpha_d/d\), these means have already equilibrated and follow the evolving covariance quasi-statically, while the matrix moments undergo their \(\mathcal O(1)\) evolution. This distinction will be essential for weak recovery.

The proof is given in Appendix \ref{app:untied}. Its main additional difficulty relative to the tied case is the singular separation of timescales. We first establish convergence of the fast mean dynamics while the normalized moments remain frozen. We then obtain a refined \(\mathcal O(d^{-1/2})\) control of the residual mean gradient after the boundary layer; this residual accumulates into the effective bias \(\beta_k\) on the slow scale. Uniform spectral bounds, tightness of the moment process, and uniqueness then yield the limiting slow trajectory.

As in the tied case, the slow infinite-dimensional hierarchy admits finite-order approximations. Appendix \ref{app:untied-truncation} proves an exponentially decaying truncation error.

\subsection{Weak recovery and its sample complexity}
\label{sec:untied-weak-recovery}
We now ask what the fast–slow structure implies for feature learning. To make this mechanism explicit, consider one trainable student \(M_1=U_1V_1^\top\) and one fixed teacher \(M_2\). Since
$\frac1d\operatorname{Tr}(M_k) = \frac{m_k^{(d)}}{\sqrt d} \to0$, feature learning is naturally measured through the matrix overlaps
\begin{equation}
q_{12}^{(1,d)}:=\frac1d\Tr[M_1M_2^T],
\qquad
q_{12}^{(2,d)}:=\frac1d\Tr[M_1M_2].
\label{eq:untied-structural-overlaps}
\end{equation}
We write
\begin{equation}
\mathbf q_{12}^{(d)}:=\begin{pmatrix}
q_{12}^{(1,d)}\\
q_{12}^{(2,d)}
\end{pmatrix},
\qquad
\bar{\mathbf q}_{12}:=
\begin{pmatrix}
\bar q_{12}^{(1)}\\
\bar q_{12}^{(2)}
\end{pmatrix}.
\end{equation}
We impose the following uninformative initialization.

\begin{assumption}
\label{assum:untied_initialization}
At the initial time, the matrix algebra generated by
$\{M_1,M_1^T,A_1,B_1\}$ is asymptotically free from the algebra generated by $\{M_2,M_2^T,A_2,B_2\}$.
\end{assumption}
Assumption \ref{assum:untied_initialization} is the untied analogue of an uninformative initialization. Asymptotic freeness removes not only the pairwise overlaps \(q_{12}^{(1)}\) and \(q_{12}^{(2)}\), but also the mixed student–teacher moments.

Define the teacher covariance as $
\mathsf Q_2:=
\begin{pmatrix}
\bar q_{22}^{(1)} & \bar q_{22}^{(2)}
\\
\bar q_{22}^{(2)} & \bar q_{22}^{(1)}
\end{pmatrix}$. For \(c>0\), define the weak-recovery sample complexity by
\begin{equation}
N_{\rm wr}^{(d)}(c):=\inf\left\{
n\ge0:\left\|\mathbf q_{12}^{(d,n)}
\right\|_\infty\ge c\right\}.
\label{eq:untied-weak-recovery-hitting-time}
\end{equation}
Let $D_1(0):=\bar t_1^A(0)+\bar t_1^B(0)>0$,
and recall that the fast dynamics drive the trainable mean toward $\bar m_1^\star:=m_1^\star\left(\bar q^{(1)}(0),\bar q^{(2)}(0)\right)$.
Define the gradient at the slow-time initial state by $g_{12}^\star:=
\begin{pmatrix}
\frac{\partial\Phi}{\partial q_{12}^{(1)}},
\frac{\partial\Phi}{\partial q_{12}^{(2)}}
\end{pmatrix}^T$ evaluated at $\left(\bar m^\star,\bar q^{(1)}(0),\bar q^{(2)}(0)\right)$.

The role of the fast phase becomes transparent by evaluating the overlap dynamics at its initial state according to \eqref{eq:slow_tau_mu}:
\begin{equation}
\left. \frac{d\bar q_{12}}{d\tau} \right|_{\tau=0} = -2D_1(0)Q_2g_{12}^\star.
\end{equation}
Thus the fast dynamics affects feature learning by changing the point at which the gradient $g_{12}^\star$ is evaluated. If \(Q_2g_{12}^\star\neq0\), the slow trajectory leaves the uninformative state with nonzero velocity. The following corollary formalizes this intuition, proven in Appendix \ref{app:proof_untied_weak_recovery}, and Appendix \ref{app:untied-strong} gives a convergence rate for the subsequent strong-recovery phase under an additional Polyak–Łojasiewicz condition.

\begin{corollary}
\label{cor:untied_weak_recovery}
Suppose that Assumption \ref{assum:untied_initialization} and the
conditions of Theorem \ref{theo:untied-SGD} hold. 
\begin{itemize}
\item[\textnormal{(i)}] If
\begin{equation}
\mathsf Q_2g_{12}^\star\neq0,
\label{eq:untied-weak-recovery-condition}
\end{equation}
then there exist constants \(c>0\) and \(0<T_-<T_+<\infty\), independent of \(d\), such that
\begin{equation}
\lim_{d\to\infty}
\mathbb P\left(\left\lfloor\frac{T_-d}{\alpha_d}\right\rfloor<N_{\rm wr}^{(d)}(c)\le
\left\lceil\frac{T_+d}{\alpha_d}\right\rceil\right)=1.
\label{eq:untied-weak-recovery-hitting-bound}
\end{equation}
Consequently, choosing the largest learning-rate scaling $\alpha_d=\Theta\left(\frac1{d\log d}\right)$ covered by Theorem \ref{theo:untied-SGD}
gives
\begin{equation}
N_{\rm wr}^{(d)}(c)=\Theta_{\mathbb P}(d^2\log d).
\label{eq:untied-weak-recovery-d2logd}
\end{equation}

\item[\textnormal{(ii)}] Let \(\mathcal M_{\mathrm{free}}\) denote the set of moment sequences for which the algebra generated by \(\{M_1,M_1^T,A_1,B_1\}\) is free from the algebra generated by \(\{M_2,M_2^T,A_2,B_2\}\). Suppose that throughout \(\mathcal M_{\mathrm{free}}\),
\begin{equation}
\left[\nabla_{q^{(r)}}\Phi\left(m^\star(q),q^{(1)},q^{(2)}\right)\right]_{12}=0,
\qquad r=1,2.
\label{eq:untied-free-trap-condition}
\end{equation}
Then for every \(c>0\) and every fixed \(T>0\),
\begin{equation}
\lim_{d\to\infty}\mathbb P\left(N_{\mathrm{wr}}^{(d)}(c)\le\left\lfloor\frac{Td}{\alpha_d}\right\rfloor\right)=0.
\label{eq:untied-no-recovery-fixed-T}
\end{equation}
Thus, for \(\alpha_d=\Theta((d\log d)^{-1})\), weak recovery does not occur on the \(d^2\log d\) sample scale.
\end{itemize}
\end{corollary}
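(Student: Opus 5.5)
Both parts reduce to properties of the limiting slow trajectory $(\bar m(\tau),\bar\mu(\tau))$, transferred to SGD by Theorem \ref{theo:untied-SGD}. For any $T_2>0$, \eqref{eq:untied-slow-moment-convergence} applied to the two words $(M_1,M_2^T)$ and $(M_1,M_2)$ gives $\sup_{\tau\in[0,T_2]}\|\mathbf q_{12}^{(d)}(\tau)-\bar{\mathbf q}_{12}(\tau)\|_\infty\to0$ in probability. Since slow time $\tau$ corresponds to $n=\tau d/\alpha_d$ steps, each hitting-time statement follows once we have a deterministic statement about when $\|\bar{\mathbf q}_{12}(\tau)\|_\infty$ reaches the threshold. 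The fast boundary layer (of length $O(\widetilde T/\alpha_d)=o(d/\alpha_d)$ steps) does not interfere: the moments are frozen there and are covered by the uniform convergence on $[0,T_2]$, which includes $\tau=0$.

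\textbf{Part (i).} First I would compute $\frac{d}{d\tau}\bar{\mathbf q}_{12}$ at $\tau=0$ from \eqref{eq:slow_tau_mu}. Only position $i=1$ is trainable, since the teacher index $2\notin\mathcal K$.
\begin{itemize}
\item The bias term produces words such as $\mu_{(A_1,M_2^T)}$. By freeness these factor as $\bar t_1^A\cdot\frac1d\Tr M_2=0$, because the traces of $M_2$ vanish at scale $1/d$.
\item In the diffusion terms, by freeness only $l=2$ contributes. The words $\mu_{(M_2^{(r)}B_1+A_1M_2^{(r)},M_2^{T})}$ factor into $(\bar t_1^A+\bar t_1^B)$ times teacher second moments, producing exactly $-2D_1(0)\mathsf Q_2 g_{12}^\star$.
\end{itemize}
Here $\bar m(0)=\bar m^\star$ by \eqref{eq:untied-slaving}, with the slow solution continuous on $[0,T]$. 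So the velocity $v:=-2D_1(0)\mathsf Q_2g_{12}^\star$ is nonzero. Continuity of $\tau\mapsto\frac{d}{d\tau}\bar{\mathbf q}_{12}$ (from admissibility, with the moments in $C([0,T];X_s^0)$ and $\Phi$ smooth on the compact set of Assumption \ref{assum:untied_convex}) gives $\tau_0>0$ such that
$$\|\bar{\mathbf q}_{12}(\tau)\|_\infty\ge\tfrac12\|v\|_\infty\tau\quad\text{on }[0,\tau_0].$$
Set $T_+=\tau_0$ and $c=\frac14\|v\|_\infty\tau_0$, and choose $T_-<T_+$ with $\sup_{[0,T_-]}\|\bar{\mathbf q}_{12}\|_\infty<c/2$, which is possible since $\bar{\mathbf q}_{12}(0)=0$. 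Uniform convergence then shows that with probability tending to one the empirical overlap stays below $c$ up to step $\lfloor T_-d/\alpha_d\rfloor$ and exceeds $c$ by step $\lceil T_+d/\alpha_d\rceil$. Substituting $\alpha_d=\Theta(1/(d\log d))$ gives $\Theta_{\mathbb P}(d^2\log d)$.

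\textbf{Part (ii).} The claim is that $\mathcal M_{\mathrm{free}}$, intersected with the slaving manifold, is invariant under the slow flow. I would construct a candidate solution that preserves freeness. Under \eqref{eq:untied-free-trap-condition} the student's dynamics never reference teacher letters with $l=2$ in the diffusion terms; for $l=1$ the substitutions only insert student letters. The bias $\bar\beta_1$ involves $\psi_1$, which on $\mathcal M_{\mathrm{free}}$ also has no $l=2$ contribution.

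The construction proceeds as follows.
\begin{itemize}
\item Solve the closed student-only hierarchy, i.e. the words in student letters together with $m^\star$. The teacher moments are constant, since $2\notin\mathcal K$.
\item Define every mixed moment by the free-product rule from these two marginal families.
\item Verify that this free extension satisfies \eqref{eq:slow_tau_mu} for mixed words. This holds because free cumulant factorization commutes with derivations acting letter-by-letter on student letters that insert only student letters.
\item Check that $m^\star(q)$ depends only on the marginals, because the mixed overlaps are zero.
\end{itemize}
By the uniqueness of admissible solutions in Theorem \ref{theo:untied-SGD}, this candidate is the solution, so $\bar{\mathbf q}_{12}\equiv0$ on $[0,T]$. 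Uniform convergence then gives $\sup_{n\le Td/\alpha_d}\|\mathbf q_{12}^{(d,n)}\|_\infty<c$ with probability tending to one.

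\textbf{Main obstacle.} I expect the hardest step to be the algebraic verification that the free extension solves the mixed-word equations, i.e. that the derivation preserves freeness. I would first try the moment–cumulant formulation: show that the vector field acts on mixed free cumulants as a linear map sending zero to zero. If that is too cumbersome, the fallback is a random-rotation argument: conjugate the teacher by a Haar orthogonal matrix and note that the finite-$d$ dynamics is equivariant. This second route requires more care because \eqref{eq:untied-free-trap-condition} is used only in the limit.
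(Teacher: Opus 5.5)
Your proposal is correct and follows the paper's proof. Part (i) is the same freeness computation of the initial slow velocity $-2D_1(0)\mathsf Q_2g_{12}^\star$ (the bias and $l=1$ terms are killed by $\frac1d\Tr M_2\to0$), followed by continuity and the uniform moment convergence of Theorem \ref{theo:untied-SGD}. Part (ii) is the same ``$\mathcal M_{\mathrm{free}}$ is invariant, then uniqueness'' argument, and your free-extension construction supplies the justification that the paper only asserts: a derivation sending student letters to student polynomials and killing teacher letters preserves free products.

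Two small adjustments are needed. First, the student-only hierarchy is only locally solvable by the Ovsyannikov argument, so run your construction as a continuation argument on the closed set $\{\tau\in[0,T]:\bar\mu(\tau)\in\mathcal M_{\mathrm{free}}\}$ using local uniqueness, rather than presupposing a global candidate on $[0,T]$. Second, discard the Haar-rotation fallback: it cannot work on its own, because the same equivariance also holds in case (i), where freeness is destroyed.
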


Corollary \ref{cor:untied_weak_recovery} identifies a dichotomy. Part (i) states that if the state selected by the fast dynamics exposes a nonzero gradient, weak recovery occurs after \(\Theta(d^2\log d)\) samples. Part (ii) gives the complementary invariant-manifold mechanism: if the relevant cross-gradients vanish throughout the uninformative manifold, then weak recovery does not occur on the same sample scale.

The contrast with tied attention is therefore sharp. In the tied case, the PSD geometry itself generates a nonzero escape direction from the original initialization. In the untied case, the fast phase first selects a new mean state, and it is the geometry of the loss at this selected state that determines whether the subsequent slow dynamics can escape.

The mechanism is particularly transparent for \(L=1\) and the MSE loss $\mathcal L(z_1,z_2) = \frac12\big(\sigma(z_1)-\sigma_\star(z_2)\big)^2$. At an uninformative state \(q_{12}=0\), the student and teacher pre-activations
$Z_1\sim \mathcal N \left( \bar m_1^\star, \bar q_{11}^{(1)}+\bar q_{11}^{(2)} \right)$ and $Z_2\sim \mathcal N \left( \bar m_2, \bar q_{22}^{(1)}+\bar q_{22}^{(2)} \right)$ are independent. Price's theorem gives
\begin{align}
\frac{\partial\Phi}{\partial q_{12}^{(1)}}=\frac{\partial\Phi}{\partial q_{12}^{(2)}}=\mathbb E\left[\frac{\partial^2\mathcal L}{\partial z_1\partial z_2}\right]=-\mathbb E[\sigma'(Z_1)]\mathbb E[\sigma_*'(Z_2)].
\label{eq:untied-weak-MSE-gradient}
\end{align}
Hence a sufficient condition for weak recovery is
\begin{equation}
\mathbb E[\sigma'(Z_1)]\mathbb E[\sigma_*'(Z_2)]\neq0.
\label{eq:untied-weak-activation-condition}
\end{equation}
The two factors play different roles. The teacher-side term
$\mathbb E[\sigma_\star'(Z_2)]$ determines whether a first-order teacher signal is available at all. The fast dynamics cannot change this factor because the teacher is fixed. By contrast, the student-side term $\mathbb E[\sigma'(Z_1)]$ is evaluated at the shifted mean \(\bar m_1^\star\) selected by the fast phase. The fast learning phase can therefore repair a student-side symmetry: even if \(\mathbb E[\sigma'(Z_1)]\) vanishes initially, it may become nonzero after the mean shifts.
When this occurs and the teacher-side first-order signal is nonzero, weak recovery takes \(\Theta(d/\alpha_d)\) samples. If $\bar m_1^*$ fails to break this symmetry, the network stalls.

\begin{wrapfigure}{r}{0.45\textwidth}
\vspace{-2mm}
    \centering
    \includegraphics[width=\linewidth]{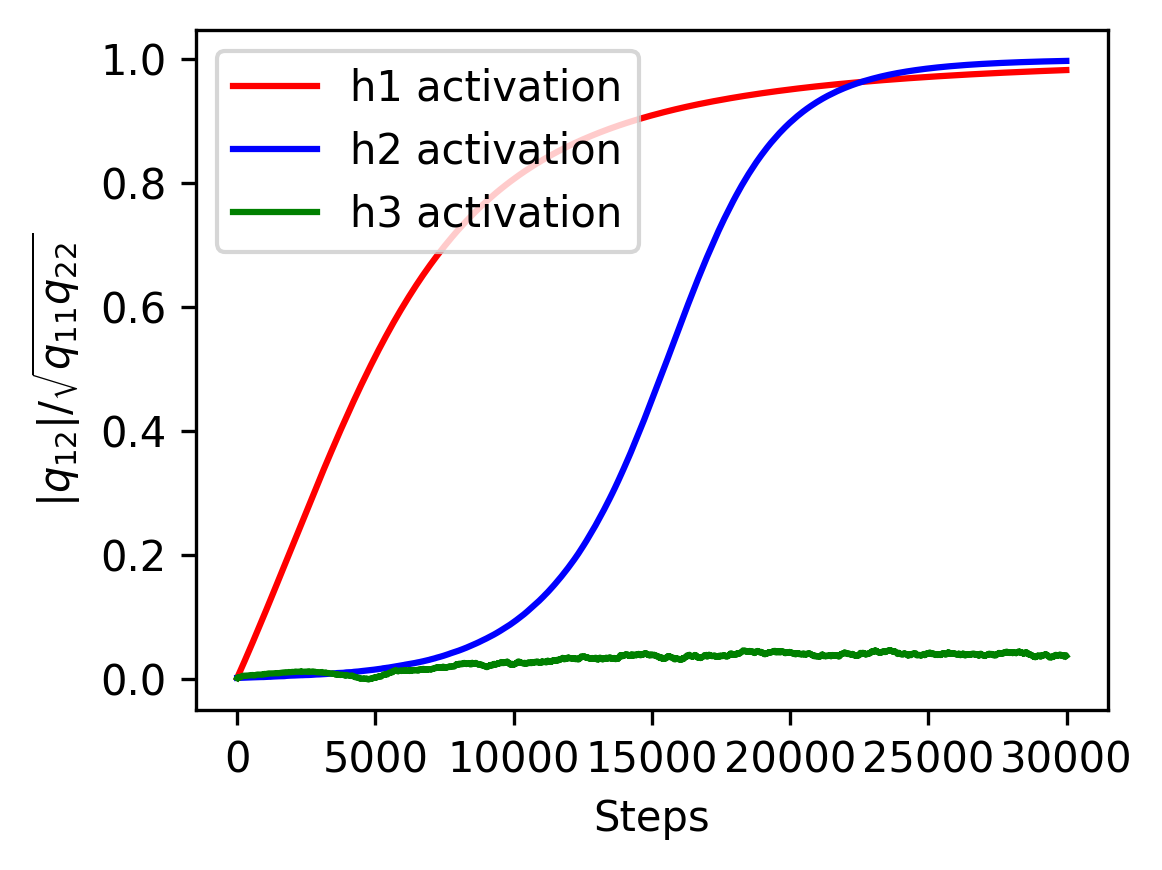}
    \caption{Weak recovery becomes delayed for higher-order activations. Evolution of the symmetric student–teacher overlap (the overlap between $M_1+M_1^T$ and $M_2+M_2^T$) for untied attention in a single-token teacher–student model with MSE loss. The linear activation \(h_1(x)=x\) develops overlap rapidly, whereas the higher-order activations \(h_2(x)=\frac{x^2-1}{\sqrt{2}}\) and \(h_3(x)=\frac{x^3-3x}{\sqrt{6}}\) exhibit increasingly slow escape from the uninformative state, whose precise sample complexity is left for future work. We use \(d=800\), learning rate \(0.01\), batch size \(4096\), standard Gaussian initialization, and no weight decay. The input data is $\mathcal{N}(0, I_d)$.}
    \label{fig:SGD_untied}
\vspace{-10mm}
\end{wrapfigure}

A numerical illustration is presented in Figure \ref{fig:SGD_untied}, which verifies that when $\mathbb{E}_{z_2}\big[ \sigma_*'(Z_2) \big]=0$, weak recovery takes substantially more samples. Figure \ref{fig:SGD_untied} also suggests that the weak recovery time depends on the information exponent of the activation functions, which is left for the future work.

\section{Conclusion and future work}
\vspace{-0.5em}
This work has two outcomes. First, we develop a high-dimensional framework for attention-indexed models with extensive-rank matrices. In the large-\(d\) limit, the population-loss landscape is characterized by a finite collection of trace order parameters. The corresponding training dynamics are substantially richer: online SGD generates an infinite hierarchy of matrix moments. We derive deterministic limits for these dynamics and show that the infinite hierarchy can nevertheless be approximated by finite-order truncations with exponentially decaying error.

Second, this framework reveals that the parameterization of the attention matrix can qualitatively change feature learning. For tied attention $S=WW^\top$, the PSD factorization provides an automatic symmetry-breaking mechanism, yielding weak recovery in \(\Theta(d^2\log d)\) samples under a mild condition. This behavior contrasts with direct optimization over \(S\), for which the uninformative manifold can remain invariant. For untied attention $S=UV^\top$, the mechanism is different. Training separates into a fast and a slower learning phase. The fast phase selects the state from which feature learning begins, and weak recovery on the \(\Theta(d^2\log d)\) scale depends on whether this selected state breaks the relevant symmetry. Thus, tied and untied factorizations do not merely provide different coordinates for the same optimization problem; they induce different routes by which SGD can escape an uninformative state.

While the attention-indexed model class is broad enough to represent attention-only architectures, the detailed training dynamics established here, however, only concern tied and untied parameterizations. We do not derive the full dynamics of all query, key, and value matrices in a general deep transformer. Extending the present approach to such parameter dynamics is a natural next step.

A second open direction concerns learning on a longer timescale. Corollary \ref{cor:untied_weak_recovery} characterizes the \(\Theta(d^2\log d)\) sample scale. However, the numerical results suggest that recovery can occur on longer scales controlled by the information exponents. Analyzing these regimes requires tracking finite-\(d\) corrections over polynomially long training horizons, since higher-order terms that vanish at $\Theta(1)$ time can accumulate over such scales. Developing such a theory would provide a more complete characterization of the parameterization-dependent sample complexity.

Several other extensions remain open. The present work studies online SGD, whereas empirical risk minimization, empirical gradient flow, and other finite-sample optimization procedures can exhibit additional correlations and require different dynamical tools; existing results cover only special cases of attention-indexed models \cite{erba2025nuclear,boncoragliosingle,martin2026high}. It would also be interesting to seek alternative representations of the infinite moment hierarchy. In particular, a resolvent-based formulation \cite{paquette20244+} may provide a more direct connection between the training dynamics and the evolving spectral distributions \cite{martin2021implicit,defilippis2025scaling} of the attention matrices.

More broadly, the results suggest that the parameterization of attention should be regarded as part of the learning mechanism itself. The same loss landscape can support qualitatively different training trajectories depending on the parameterization. Understanding this interaction between architecture, parameterization, and high-dimensional learning dynamics is, in our view, an important step toward a theory of more realistic attention models.

\section*{Acknowledgements}

We acknowledge funding from the Swiss National Science Foundation grants SNSF SMArtNet (grant number 212049), OperaGOST (grant number 200021 200390), and DSGIANGO (grant number 225837). This work was supported by the Simons Collaboration on the Physics of Learning and Neural Computation via the Simons Foundation grant (\#1257412 (FK) and \#1257413 (LZ)). This work was also supported by the EPFL AI Center PhD Fellowship Program 2026 (YX).

\bibliographystyle{plain}
\bibliography{main}
\appendix

\newpage

\tableofcontents

\section{Notation}

We write $[K]:=\{1,\ldots,K\}$ and let $\mathcal K\subseteq[K]$ denote the set of trainable indices. For a multi-index or word $\alpha$, $|\alpha|$ denotes its length,
while $|\alpha|_{\mathcal K}$ denotes the number of positions whose associated index belongs to $\mathcal K$. The underlying finite alphabet of $\alpha$ will be clear from context.

For matrices, $\|\cdot\|_{\rm op}$ and $\|\cdot\|_{\rm F}$ denote the
operator and Frobenius norms, respectively, and $\langle A,B\rangle_F:=\Tr(A^\top B)$.
We write $\mathbb S^K$ for the space of real symmetric $K\times K$
matrices and $\mathbb S_+^K$ for the space of positive-semidefinite matrices. For symmetric matrices, $A\succeq B$ means that $A-B$ is
positive semidefinite. For finite-dimensional vectors, $\|\cdot\|_\infty$ and $\|\cdot\|$ denote the $\ell_\infty$ and
Euclidean norms.

We write $\mathbf 1_E$ for the indicator of an event $E$, $a\wedge b:=\min\{a,b\}$, and $[a]_+:=\max\{a,0\}$. For sets $A\subseteq B$, the notation $A\Subset B$ means that $\overline A$ is a compact subset of $B$.

We use the standard asymptotic notation $O(\cdot)$, $o(\cdot)$,
$\Omega(\cdot)$, and $\Theta(\cdot)$ as $d\to\infty$.
For random variables, $X_d=O_{\mathbb P}(a_d)$ means that
$X_d/a_d$ is bounded in probability,
$X_d=o_{\mathbb P}(a_d)$ means that
$X_d/a_d\xrightarrow{\mathbb P}0$, and
$X_d=\Theta_{\mathbb P}(a_d)$ means that both
$X_d=O_{\mathbb P}(a_d)$ and $a_d=O_{\mathbb P}(X_d)$.
We write $X_d\xrightarrow{\mathbb P}X$ for convergence in probability.

For $s>0$, let $X_s$ denote the Banach space of moment sequences $\mu=(\mu_\alpha)_{|\alpha|\ge1}$ equipped with the norm $\|\mu\|_s:=\sup_{w\ge1}\sup_{|\alpha|=w}\frac{|\mu_\alpha|}{s^w}$ and define $X_s^0:=\left\{\mu\in X_s:\lim_{w\to\infty}\sup_{|\alpha|=w}\frac{|\mu_\alpha|}{s^w}=0\right\}$. Equipped with the norm inherited from $X_s$, $X_s^0$ is a separable
Banach space. $C([0, T]; X_s^0)$ denotes the space of continuous functions from $[0, T]$ to $X_s^0$. $D([0, T]; X_s^0)$ represents the space of all right-continuous functions with left limits mapping from $[0, T]$ into $X_s^0$, equipped with the Skorokhod $J_1$ topology.

Throughout the paper, a superscript $(d)$ denotes a finite-dimensional quantity, a bar denotes its deterministic high-dimensional limit, and a tilde denotes a piecewise-constant time interpolation, unless specified otherwise.

\section{Examples of attention-indexed models}
\label{app:examples}
\subsection{Example 1: Multi-layer multi-head attention}
In this example, we demonstrate that the standard multi-layer multi-head attention network (attention-only transformer) falls within our general theoretical framework.

Let the input to the model be Gaussian $X_0 \sim \mathcal{N}(0, \mathcal{C} \otimes I_d)$. For layer $m = 1, \dots, M$ with $H$ attention heads, the pre-softmax attention matrix for the $h$-th head is given by
\begin{equation}
A_m^{(h)}(X_{m-1}) = \frac{1}{\sqrt{d}} X_{m-1} S_{m-1}^{(h)} X_{m-1}^T \in \mathbb{R}^{L \times L}    
\end{equation}
where $S_{m-1}^{(h)} := K_{m-1}^{(h)}(Q_{m-1}^{(h)})^T \in \mathbb{R}^{d \times d}$ is the combined key-query weight matrix. The intermediate representations follow the standard multi-head attention mechanism:
\begin{equation}
X_m = X_{m-1} + \sum_{h=1}^H \text{softmax}\left(A_m^{(h)}(X_{m-1})\right) X_{m-1} V_{m-1}^{(h)} \in \mathbb{R}^{L \times d},
\end{equation}
where $V_{m-1}^{(h)} \in \mathbb{R}^{d \times d}$ represents the combined value and output projection matrix for head $h$. Unrolling the recurrence relation reveals that this network can be expressed as a composite function of the quadratic forms $G_k := \frac{1}{\sqrt{d}}X_0 \tilde{S}_k X_0^T$.

For the first layer ($m=1$), the pre-softmax matrix constitutes a bilinear form: 
\begin{equation}
A_1^{(h)}(X_0) = \frac{1}{\sqrt{d}} X_0 S_0^{(h)} X_0^T := G_0^{(h)}.    
\end{equation}
The output of the first layer is thus $X_1 = X_0 + \sum_{h=1}^H \text{softmax}(G_0^{(h)}) X_0 V_0^{(h)}$.

For the second layer ($m=2$), substituting $X_1$ into the pre-softmax attention matrix $A_2^{(h)}(X_1) = \frac{1}{\sqrt{d}} X_1 S_1^{(h)} X_1^T$ yields a sum of four types of cross terms:
\begin{equation}
\begin{aligned}
A_2^{(h)}(X_1) =& \frac{1}{\sqrt{d}} X_0 S_1^{(h)} X_0^T + \sum_{i=1}^H \text{softmax}(G_0^{(i)}) \left( \frac{1}{\sqrt{d}} X_0 \left( V_0^{(i)} S_1^{(h)} \right) X_0^T \right) \\
&+ \sum_{j=1}^H \left( \frac{1}{\sqrt{d}} X_0 \left( S_1^{(h)} V_0^{(j)T} \right) X_0^T \right) \text{softmax}(G_0^{(j)})^T\\
&+ \sum_{i=1}^H \sum_{j=1}^H \text{softmax}(G_0^{(i)}) \left( \frac{1}{\sqrt{d}} X_0 \left( V_0^{(i)} S_1^{(h)} V_0^{(j)T} \right) X_0^T \right) \sigma(G_0^{(j)})^T.
\end{aligned}    
\end{equation}
Every inner term encapsulated by parentheses in the expansion above takes the form $\frac{1}{\sqrt{d}}X_0 \tilde{S} X_0^T$. For instance, the attention outputs cross term forms a new bilinear form defined by the effective weight matrix $\tilde{S} := V_0^{(i)} S_1^{(h)} V_0^{(j)T}$.

By induction, any attention matrix $A_m$, and consequently the final output representation, can be written as a finite composite function of a set of base bilinear forms $\{G_k\}_{k=1}^K$ generated by equivalent effective weight matrices $\tilde{S}_k$. These effective matrices $\tilde{S}_k$ are finite products of the original network weights ($S$, $V$), which falls within the scope of attention-indexed models.

Furthermore, because the softmax function is continuous and bounded, the composite function remains continuous and exhibits at most polynomial growth. Therefore, when we assume that both the teacher and the student models are the multi-layer multi-head attention architecture, the final loss satisfies Assumption \ref{assum:attention-index}. Note that we do not require the teacher weights to be Gaussian, making the existence of such a teacher model a weak assumption.

\subsection{Example 2: Autoregressive sequence model}
This example considers the most fundamental pretraining task: next-token prediction. We aim to predict the next token $x_L$ using the context $x_1, \dots, x_{L-1}$ via a simplified single-layer multi-head attention model.

The student is a standard attention block, defined as:
\begin{equation}
\hat{x}_L = \sum_{h=1}^H \sum_{j=1}^{L-1} \alpha_{j,h} V_h x_j, \quad \text{where } \alpha_{j,h} = \text{Softmax}_j\left(\{ \frac{1}{\sqrt{d}} x_{L-1}^T W^h x_k \}_{k=1}^{L-1} \right).
\end{equation}
Here, we absorb the query-key matrices into a single weight matrix $W^h$.

We assume the data is generated by a sparse contextual dependency process. The next token $x_L$ is a composition of specific past tokens that satisfy a relevance criterion
\begin{equation}
x_L = \sum_{h=1}^{H} \sum_{j=1}^{L-1} a_{j,h} V_h^* x_j,
\end{equation}
where $a_{j,h} \in \{0, 1\}$ is a gating variable determining if token $j$ is relevant for generating $L$ under head $h$:
\begin{equation}
a_{j,h} \sim \text{Bernoulli}\left( g\left( \left\{ \frac{1}{\sqrt{d}} x_i^T M^h x_j \right\}_{i,j=1}^{L-1} \right) \right).
\end{equation}
This models a scenario where the ground truth dependency structure is latent and dynamic, determined by the interaction between tokens.

The MSE loss can be expanded as:
\begin{equation}
\mathcal{L} := \| \hat{x}_L - x_L \|^2 = \sum_{j,j'} x_{j}^T \Omega_{jj'} x_{j'},
\end{equation}
where $\Omega_{jj'}:=\sum_{h,h'}(a_{j,h} V_h^* - \alpha_{j,h} V_h)^T (a_{j',h'} V_{h'}^* - \alpha_{j',h'} V_{h'})$. It is thus an attention-indexed model.

We can similarly analyze the standard next-token prediction MSE loss
\begin{equation}
\mathcal{L} :=\mathbb{E}_{L,x_1,\cdots,x_L}\| \hat{x}_L - x_L \|^2=\mathbb{E}_L\mathbb{E}_{\mathcal{C}|L}\mathbb{E}_{(x_1,\cdots,x_{L-1})\sim\mathcal N(0,\mathcal{C})}\sum_{j,j'}^{L-1} x_{j}^T \Omega_{jj'} x_{j'},
\end{equation}
which fits the general framework.

\subsection{Example 3: Multiple-location regression}
This example considers a classification task (e.g., sentiment analysis), where the goal is to predict the label $y \in \{+1, -1\}$ based on the interaction between tokens (e.g., detecting negation or intensification). We model the student predictor as a one-layer attention:
\begin{equation}
\hat{y}=\text{sign}\left(\sum_{i,j} \alpha_{ij} \sigma\left(\{x_i^T W^h x_j\}_{ij}\right)\right),
\end{equation}
where the model learns $\alpha_{ij}$ to weight different token pairs. Moreover, we assume that the true label is generated by a "bag-of-interactions" model:
\begin{equation}y = \text{sign}\left( \sum_{i,j} S_{ij} \cdot \epsilon_{ij} \right),
\end{equation}
where $S_{ij} \in \{+1, -1\}$ represents the inherent sentiment polarity of the word pair $(i, j)$ (e.g., $+1$ for "very good", $-1$ for "not good"), and $\epsilon_{ij}$ indicates whether a meaningful grammatical connection exists between them:\begin{equation}\epsilon_{ij} = \begin{cases}1, & \text{w.p. } g(\{x_i^T W^* x_j\}_{ij}) \\
0, & \text{otherwise}.
\end{cases}
\end{equation}
In this setting, $\epsilon_{ij}$ can be interpreted as the ground-truth dependency graph (e.g., $i$ modifies $j$), which is activated probabilistically by the underlying semantic matching $W^*$.

\subsection{Example 4: Multi-step reasoning}
This example models complex tasks requiring multi-step reasoning, such as in-context retrieval and composition (e.g., "Context: A:1, B:2, C:3. Query: Calculate A+B").

We model the ground truth $y$ as a hierarchical interaction between two latent tokens. The teacher first identifies two relevant tokens from the context based on the query $x_L$, and then computes their quadratic interaction:
\begin{equation}
y = x_{\epsilon_1}^T U^* x_{\epsilon_2},
\end{equation}
where the indices $\epsilon_1, \epsilon_2 \in \{1, \dots, L-1\}$ are latent pointers. These pointers follow a probabilistic selection mechanism (e.g., a "ground truth" attention):
\begin{equation}
P(\epsilon_h=k) \propto \exp\left(\frac{1}{\sqrt{d}}x_L^T W_h^* x_k\right), \quad \text{for } h \in \{1, 2\}.
\end{equation}
This represents the "reasoning logic" where $W_h^*$ retrieves the necessary operands and $U^*$ defines the operation performed on them.

To solve this, the student must implement a two-stage process: first filtering/aggregating the context into latent representations, and then combining them. We model the student as a two-head attention structure followed by a bilinear composition layer. The student first generates two latent vectors $z_1$ and $z_2$ by aggregating the context tokens $\{x_k\}$:
\begin{equation}
z_h = \sum_{j=1}^{L-1} \alpha_{j,h} x_j, \quad \text{where } \alpha_{j,h} = \text{Softmax}_j \left( \left\{ \frac{1}{\sqrt{d}} x_L^T W^h x_k \right\}_{k=1}^{L-1} \right).\end{equation}
Here, $z_h$ serves as the "retrieved operand" for head $h$. The final prediction is computed as the interaction between these aggregated representations:
\begin{equation}
\hat{y} =z_1^T \tilde{W} z_2,
\end{equation}
which takes the form of an attention-indexed model.

\subsection{Example 5: Extensive-rank matrix denoising}
Consider the observation model:
\begin{equation}
Y = S^* + Z \in \mathbb{R}^{d \times d},
\end{equation}
where $S^*$ is an unknown deterministic symmetric matrix with an extensive rank (i.e., $\lim_{d\to\infty} \|S^*\|_{\text{op}} / \|S^*\|_F = 0$), and $Z$ is the noise matrix drawn from the Gaussian Orthogonal Ensemble (GOE), satisfying $\mathbb{E}[Z_{ab}] = 0$ and $\mathbb{E}[Z_{ab}Z_{cd}] = \frac{1}{d}(\delta_{ac}\delta_{bd} + \delta_{ad}\delta_{bc})$. While prior work has extensively studied rotationally invariant priors \cite{troiani2022optimal,pourkamali2024matrix,pourkamali2025rectangular}, the general extensive-rank setting remains largely open \cite{barbier2025phase}.

A standard approach for inferring $S^*$ from $Y$ is via:
\begin{equation*}
\min_{S} \mathcal{L}\left(\frac{1}{d}\Tr[YS]\right) + \lambda f(S),
\end{equation*}
where $\mathcal{L}: \mathbb{R} \to \mathbb{R}$ is the loss function and $f(S)$ represents a regularizer.
The population loss can be expressed as an expectation over a standard scalar Gaussian variable $g \sim \mathcal{N}(0, 1)$:
\begin{equation}
\mathbb{E}_Z \mathcal{L}\left(\frac{1}{d}\Tr[YS]\right) = \mathbb{E}_{g \sim \mathcal{N}(0,1)} \left[ \mathcal{L}\left( \frac{1}{d}\Tr[S^*S] + \frac{\sqrt{2}}{d^{3/2}}\|S\|_F g \right) \right],
\end{equation}
which can be regarded as a single-token single-index attention-indexed model as a special case of Theorem \ref{theo:general} and Corollary \ref{cor:symmetric}.

\section{Theory in Section \ref{sec:models}}
\subsection{Proof of Theorem \ref{theo:general}}
\label{app:proof_theo_general}
The proof relies on the method of joint cumulants. If all joint cumulants of order $m \ge 3$ for a set of random variables vanish in the limit, the variables converge in distribution to a multivariate Gaussian (satisfying Carleman's condition).

Let $Z_{ijk} = \frac{1}{\sqrt{d}}x_i^TS^k_{ij}x_j$. We concatenate all data vectors into a single vector $X = [x_1^T, x_2^T, \dots, x_L^T]^T \in \mathbb{R}^{dL}$. Given the covariance structure $\mathbb{E}[x_{ia}x_{jb}] =\mathcal{C}_{ij}\delta_{ab}$, we have $X \sim \mathcal{N}(0, \Sigma)$, where the covariance matrix is the Kronecker product $\Sigma = \mathcal{C} \otimes I_d$. We can rewrite $Z_{ijk}$ as a quadratic form:
\begin{equation}
Z_{ijk} = X^T A^{(ijk)} X,
\end{equation}
where $A^{(ijk)}$ is a $dL \times dL$ symmetric block matrix. Its $(i, j)$-th block is $\frac{1}{2\sqrt{d}}S^k_{ij}$, its $(j, i)$-th block is $\frac{1}{2\sqrt{d}}(S^k_{ij})^T$, and all other blocks are zero.

By the standard expectation of Gaussian quadratic forms, the first moment is:
\begin{equation}
\mathbb{E}[Z_{ijk}] = \Tr(A^{(ijk)}\Sigma) = \frac{1}{\sqrt{d}}\sum_{a=1}^d \mathbb{E}[x_{ia}x_{ja}](S^k_{ij})_{aa} = \frac{\mathcal{C}_{ij}}{\sqrt{d}}\Tr(S^k_{ij}).
\end{equation}
By Assumption \ref{assum:attention-index}.2, as $d \to \infty$, this converges to $\mathbb{E}[G_{ijk}] = \mathcal{C}_{ij}\mu_{ijk}$.

To calculate the covariance, we use Wick's Theorem:
\begin{equation}
\text{Cov}(Z_{ijk}, Z_{i'j'k'}) = 2\Tr(A^{(ijk)}\Sigma A^{(i'j'k')}\Sigma).    
\end{equation}
Expanding this into block components yields:
\begin{equation}
\begin{aligned} \text{Cov}(Z_{ijk}, Z_{i'j'k'}) &= \frac{1}{d}\sum_{a,b,c,e}(S^k_{ij})_{ab}(S^{k'}_{i'j'})_{ce}\left(\mathcal{C}_{ii'}\mathcal{C}_{jj'}\delta_{ac}\delta_{be} + \mathcal{C}_{ij'}\mathcal{C}_{ji'}\delta_{ae}\delta_{bc}\right)\\ &= \mathcal{C}_{ii'}\mathcal{C}_{jj'}\frac{\Tr(S^k_{ij}(S^{k'}_{i'j'})^T)}{d} + \mathcal{C}_{ij'}\mathcal{C}_{ji'}\frac{\Tr(S^k_{ij}S^{k'}_{i'j'})}{d}. \end{aligned}    
\end{equation}
By Assumption \ref{assum:attention-index}.2, as $d \to \infty$, this converges to the finite constant $\mathcal{C}_{ii'}\mathcal{C}_{jj'}\Omega_{ijk,i'j'k'} + \mathcal{C}_{ij'}\mathcal{C}_{ji'}\Psi_{ijk,i'j'k'}$.

Now we calculate the joint cumulant $\kappa_m:=\kappa(Q_1,\cdots,Q_m)$ for $Q_r = X^T A_r X$, where we denote $A_r := A^{(i_r j_r k_r)}$ for $r \in \{1, \dots, m\}$. The joint cumulant generating function reads
\begin{equation}
K(t_1, \dots, t_m) = \log  \mathbb{E}\left[ \exp\left( \sum_{r=1}^m t_r Q_r \right) \right] = \log  \mathbb{E}\left[\exp(X^T\tilde{A}X)\right],
\end{equation}
where $\tilde{A} = \sum_{r=1}^m t_r A_r$. Using the standard Gaussian integral for quadratic forms in the neighborhood of $(t_1, \dots, t_m) = 0$, the expectation can be calculated as
\begin{equation}
K(t_1, \dots, t_m) = -\frac{1}{2} \log  \det(I - 2 \Sigma \tilde{A}).
\end{equation}
Using the identity $\log \det(I-M) = -\sum_{n=1}^\infty \frac{1}{n}\Tr(M^n)$, the right side can be expanded into a series of traces:
\begin{equation}
K(t_1, \dots, t_m) = \sum_{n=1}^\infty \frac{2^{n-1}}{n} \Tr\left( (\Sigma \tilde{A})^n \right) = \sum_{n=1}^\infty \frac{2^{n-1}}{n} \Tr\left( \left( \sum_{r=1}^m t_r \Sigma A_r \right)^n \right).    
\end{equation}
Then the joint cumulant is given by $\kappa_m := \left. \frac{\partial^m K}{\partial t_1 \cdots \partial t_m} \right|_{t_1=\dots=t_m=0}$. The only term in the series that contributes to this cross-derivative is $n=m$. Therefore, after expanding the $m$-th power, $\kappa_m$ is a linear combination of traces over all permutations $\pi$ in the symmetric group $\mathcal{S}_m$:
\begin{equation}
\kappa_m = c_m \sum_{\pi \in \mathcal{S}_m} \Tr\left(A_{\pi(1)}\Sigma A_{\pi(2)}\Sigma \dots A_{\pi(m)}\Sigma\right),    
\end{equation}
where $c_m$ is a constant depending only on $m$. Since $m$ is fixed, the sum consists of a finite number of terms. We can bound the absolute value of each individual trace in the sum using the matrix norm inequality $|\Tr(B_1B_2\dots B_m)| \le \left(\prod_{r=1}^{m-2}||B_r||_{op}\right) ||B_{m-1}||_F ||B_m||_F$.

Let $B_r = A^{(i_r j_r k_r)}\Sigma$ and let us analyze the asymptotic scales of these norms:  $||\Sigma||_{op} = \lambda_{\max}(\mathcal{C} \otimes I_d) = \lambda_{\max}(\mathcal{C}) = \Theta(1)$. $||A^{(ijk)}||_F = \frac{1}{\sqrt{2d}}||S^k_{ij}||_F = \mathcal O(1)$, based on Assumption \ref{assum:attention-index}.2. From Assumption \ref{assum:attention-index}.1, $||S^k_{ij}||_{op} = o(||S^k_{ij}||_F) = o(\sqrt{d})$. Therefore, $||A^{(ijk)}||_{op} = \frac{1}{2\sqrt{d}}||S^k_{ij}||_{op} = o(1)$.

Thus, for $B_r$, the spectral norm is bounded by $||B_r||_{op} \le ||A^{(i_r j_r k_r)}||_{op}||\Sigma||_{op} = o(1) \cdot \mathcal O(1) = o(1)$, and the Frobenius norm is $\mathcal{O}(1)$. Applying this to the trace inequality for any $m \ge 3$:
\begin{equation}
\kappa_m = O\left( [o(1)]^{m-2} \cdot \mathcal{O}(1) \cdot \mathcal{O}(1) \right) = o(1).    
\end{equation}
Therefore, all joint cumulants of order 3 and higher vanish as $d \to \infty$.

Because all higher-order cumulants vanish while the first two cumulants converge to finite limits, the vector of quadratic forms $\{Z_{ijk}\}_{i,j,k=1}^{L,L,K}$ converges weakly in distribution to the multivariate Gaussian vector $\{G_{ijk}\}_{i,j,k=1}^{L,L,K}$.

Finally, to pass the limit inside the expectation, we need to show that the sequence $\mathcal{L}(\{Z_{ijk}\}_{i,j,k=1}^{L,L,K})$ is uniformly integrable. By Assumption \ref{assum:attention-index}.3, $\mathcal{L}$ is a continuous function with at most polynomial growth, meaning there exist constants $c > 0$ and $p > 0$ such that $|\mathcal{L}(Z)| \le c(1 + ||Z||^p)$. Therefore, it suffices to show that the random tensor $Z = \{Z_{ijk}\}_{i,j,k=1}^{L,L,K}$ has bounded joint moments of all orders (up to $p+1$) as $d \to \infty$.

Recall that any joint moment of a set of random variables can be expressed as a finite polynomial of their joint cumulants. From our previous derivations, the joint cumulants $\kappa_m$ converge to finite limits (specifically, bounded constants for $m \le 2$ and $0$ for $m \ge 3$). Because all joint cumulants of $Z$ are uniformly bounded with respect to $d$, it immediately follows that all joint moments of $Z$ (up to $p+1$) are also uniformly bounded.

This uniform boundedness of all moments guarantees that $\mathcal{L}(\{Z_{ijk}\}_{i,j,k=1}^{L,L,K})$ is uniformly integrable. Thus, we can exchange the limit and the expectation, yielding:
\begin{equation}
\lim_{d\to\infty} \mathbb{E}_x[\mathcal{L}(\{Z_{ijk}\}_{i,j,k=1}^{L,L,K})] = \mathbb{E}[\mathcal{L}(\{G_{ijk}\}_{i,j,k=1}^{L,L,K})].    
\end{equation}
This completes the proof.

\subsection{Tied attention}
\label{app:cor-symmetric}
The assumption $\lim_{d\to\infty} \frac{1}{\sqrt{d}}\text{Tr}(S^k_{ij}) = \mu_{ijk}$ in Theorem \ref{theo:general} might not be satisfied for certain architectures, particularly those with tied weights. For instance, if the attention mechanism uses symmetric tied weights (e.g., $S = WW^T$), its trace scales proportionally to the dimension $d$, causing the term $\frac{1}{\sqrt{d}}\text{Tr}(S)$ to diverge as $d \to \infty$. To accommodate such cases, we introduce a centered variant of the attention-indexed model.
\begin{corollary}
Under the Assumption \ref{assum:attention-index} except that the condition $\lim_{d\to\infty}\frac{1}{\sqrt{d}}\Tr(S^k_{ij})=\mu_{ijk}$ is dropped, we have
\begin{equation}
\lim_{d\to\infty} \mathbb{E}_x\left[\mathcal{L}\left(\left\{\frac{\Tr[S^k_{ij}(x_jx_i^T-\mathcal{C}_{ij}I_d)]}{\sqrt{d}}\right\}_{i,j,k=1}^{L,L,K}\right)\right] = \mathbb{E}\left[\mathcal{L}(\{G_{ijk}\}_{i,j,k=1}^{L,L,K})\right]
\end{equation}
where $\{G_{ijk}\}_{i,j,k=1}^{L,L,K}$ is a multivariate Gaussian vector whose mean and covariance are given by:
\begin{equation}
\mathbb{E}[G_{ijk}] = 0,\
\text{Cov}(G_{ijk}, G_{i'j'k'}) = \mathcal{C}_{ii'}\mathcal{C}_{jj'}\Omega_{ijk,i'j'k'} + \mathcal{C}_{ij'}\mathcal{C}_{ji'}\Psi_{ijk,i'j'k'}.
\end{equation}
\label{cor:symmetric}
\end{corollary}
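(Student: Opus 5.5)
The plan is to reduce the centered statement to the proof of Theorem \ref{theo:general}, which used the trace condition only to identify the limiting mean. As in that proof, write each quadratic form as $Z_{ijk}=X^TA^{(ijk)}X$, with $X\sim\mathcal N(0,\Sigma)$, $\Sigma=\mathcal C\otimes I_d$, and $A^{(ijk)}$ the symmetric block matrix built from $\frac{1}{2\sqrt d}S^k_{ij}$. The centered variable is then
\begin{equation*}
\tilde Z_{ijk}:=\frac{\Tr[S^k_{ij}(x_jx_i^T-\mathcal C_{ij}I_d)]}{\sqrt d}=X^TA^{(ijk)}X-\frac{\mathcal C_{ij}}{\sqrt d}\Tr(S^k_{ij}).
\end{equation*}
The subtracted quantity is a deterministic constant, and it equals exactly $\mathbb E[X^TA^{(ijk)}X]=\Tr(A^{(ijk)}\Sigma)$. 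Hence $\tilde Z_{ijk}=Z_{ijk}-\mathbb E Z_{ijk}$, so every $\tilde Z_{ijk}$ has mean exactly zero for every $d$, no matter how large $\frac{1}{\sqrt d}\Tr(S^k_{ij})$ is.

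Next I will use the fact that joint cumulants of order $m\ge2$ are invariant under deterministic shifts. Therefore the covariance of $(\tilde Z_{ijk})$ is given by the same Wick computation as before. That computation involves only $\frac1d\Tr(S^k_{ij}(S^{k'}_{i'j'})^T)$ and $\frac1d\Tr(S^k_{ij}S^{k'}_{i'j'})$, which still converge by the retained part of Assumption \ref{assum:attention-index}.2. It gives the limit $\mathcal C_{ii'}\mathcal C_{jj'}\Omega+\mathcal C_{ij'}\mathcal C_{ji'}\Psi$. For $m\ge3$ I will reuse the bound $|\kappa_m|\lesssim\|A\Sigma\|_{op}^{m-2}\|A\Sigma\|_F^2=o(1)$, which needs only Assumption \ref{assum:attention-index}.1 and the Frobenius bound $\|S^k_{ij}\|_F=O(\sqrt d)$. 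That bound follows from convergence of $\frac1d\Tr(S^k_{ij}(S^k_{ij})^T)=\frac1d\|S^k_{ij}\|_F^2$, which is among the retained assumptions.

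The point to check carefully is that nothing in these estimates uses the first moment. The cumulant generating function of the shifted vector is $K(t)-\sum_r t_r\Tr(A_r\Sigma)$. This removes only the $n=1$ term of the trace series, and the mixed derivatives of order $\ge2$ are unchanged. With the first cumulant identically zero, the second convergent, and higher cumulants vanishing, the method of cumulants (Gaussian limits satisfy Carleman's condition) gives weak convergence of $(\tilde Z_{ijk})$ to the centered Gaussian $G$ with the stated covariance.

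Finally, I will pass the limit inside the expectation exactly as in Theorem \ref{theo:general}. All joint cumulants of $\tilde Z$ are uniformly bounded in $d$, since the mean is zero and the others are bounded as above. Hence all moments are uniformly bounded, and the polynomial growth in Assumption \ref{assum:attention-index}.3 makes $\mathcal L(\tilde Z)$ uniformly integrable. There is no genuine obstacle. The only delicate step is confirming that the possibly divergent traces $\frac{1}{\sqrt d}\Tr(S^k_{ij})$, for instance of order $\sqrt d$ when $S=WW^T$, enter solely through the first cumulant and cancel exactly rather than asymptotically.
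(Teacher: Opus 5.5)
Your proposal is correct and follows essentially the same route as the paper. Both arguments write the centered form as $Z_{ijk}-\mathbb E Z_{ijk}$, note that joint cumulants of order $\ge 2$ are unchanged by the deterministic shift (so the covariance limit and the vanishing of higher cumulants from Theorem \ref{theo:general} carry over), and conclude by the method of cumulants plus uniform integrability. Your extra checks make explicit what the paper leaves implicit: that $\|S^k_{ij}\|_F=O(\sqrt d)$ follows from the retained $\Omega$-limit, and that the shift removes exactly the $n=1$ term of the log-determinant series.
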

\begin{proof}
Let $Y_{ijk} = \frac{1}{\sqrt{d}}\Tr[S^k_{ij}(x_jx_i^T-\mathcal{C}_{ij}I_d)]$. We can rewrite this as a centered quadratic form:
\begin{equation}
Y_{ijk} = \frac{1}{\sqrt{d}}(x_i^TS^k_{ij}x_j - \mathcal{C}_{ij}\Tr(S^k_{ij})) = Z_{ijk} - \mathbb{E}[Z_{ijk}],
\end{equation}
where $Z_{ijk} = \frac{1}{\sqrt{d}}x_i^TS^k_{ij}x_j$ as defined in the proof of Theorem \ref{theo:general}.

By construction, the first moment of $Y_{ijk}$ is zero for any $d$: $\mathbb{E}[Y_{ijk}] = 0$. Because $Y_{ijk}$ is a deterministic shift of $Z_{ijk}$, all joint cumulants of order $m \ge 2$ are identical to those of $Z_{ijk}$. Specifically, the covariance structure remains:
\begin{equation}\lim_{d\to\infty}\text{Cov}(Y_{ijk}, Y_{i'j'k'}) = \lim_{d\to\infty}\text{Cov}(Z_{ijk}, Z_{i'j'k'}) = \mathcal{C}_{ii'}\mathcal{C}_{jj'}\Omega_{ijk,i'j'k'} + \mathcal{C}_{ij'}\mathcal{C}_{ji'}\Psi_{ijk,i'j'k'}.
\end{equation}
Furthermore, for any order $m \ge 3$, the joint cumulants of $\{Y_{ijk}\}$ match those of $\{Z_{ijk}\}$ and thus vanish as $o(1)$, following Theorem \ref{theo:general}.

Since the first two cumulants converge to finite limits and all higher-order cumulants vanish, the sequence of random vectors $\{Y_{ijk}\}_{i,j,k=1}^{L,L,K}$ converges weakly in distribution to $\{G_{ijk}\}_{i,j,k=1}^{L,L,K}$, which is a multivariate Gaussian vector with mean zero and the asymptotic covariance structure derived above.

Because the loss function $\mathcal{L}$ is continuous with at most polynomial growth, and $\{Y_{ijk}\}$ has bounded moments (due to the bounds on the spectral and Frobenius norms of $S^k_{ij}$ by Assumption \ref{assum:attention-index}), the sequence $\mathcal{L}(\{Y_{ijk}\}_{i,j,k=1}^{L,L,K})$ is uniformly integrable. This allows us to pass the limit inside the expectation:
\begin{equation}
\lim_{d\to\infty} \mathbb{E}_x\left[\mathcal{L}\left(\left\{Y_{ijk}\right\}_{i,j,k=1}^{L,L,K}\right)\right] = \mathbb{E}\left[\mathcal{L}(\{G_{ijk}\}_{i,j,k=1}^{L,L,K})\right].
\end{equation}
This completes the proof.
\end{proof}

\subsection{Proof of Corollary \ref{cor:minimum}}
\label{app:proof-cor-minimum}
Corollary \ref{cor:minimum} is proven under the following regularity assumption.
\begin{assumption}
\label{assum:cor-minimum}
For a given dimension $d$, let $\hat{S}^{(d)} = \{\hat{S}_{ij}^{k, (d)}\}_{i,j=1, k\in\mathcal{K}}^{L,L}$ be a sequence of minimizers of the population loss with respect to the learnable weights. Assume that as $d \to \infty$:
\begin{itemize}
\item[1.] The minimizers satisfy the spectral norm condition uniformly: $\lim_{d\to\infty} \|\hat{S}_{ij}^{k, (d)}\|_{\text{op}} / \|\hat{S}_{ij}^{k, (d)}\|_F = 0$ for all $k \in \mathcal{K}$.
\item[2.] The order parameters associated with both the minimizers and the fixed weights, denoted as $Q(\hat{S}^{(d)})$, converge to a valid point $\hat{Q} \in \mathcal{Q}$.
\end{itemize}
\end{assumption}

Now we prove Corollary \ref{cor:minimum}.

Let $S_{\mathcal{K}}$ denote the collection of learnable weights $\{S^k_{ij}\}_{k \in \mathcal{K}}$. Let the population loss in dimension $d$ be denoted by:
\begin{equation}
F_d(S_{\mathcal{K}}) := \mathbb{E}_x\left[\mathcal{L}\left(\left\{\frac{x_i^TS^k_{ij}x_j}{\sqrt{d}}\right\}_{i,j,k=1}^{L,L,K}\right)\right].
\end{equation}
We aim to prove that $\lim_{d\to\infty} \inf_{S_{\mathcal{K}}} F_d(S_{\mathcal{K}}) = \inf_{Q \in \mathcal{Q}} \mathcal{R}(Q)$ by establishing matching upper and lower bounds.

For any arbitrary $\epsilon > 0$, by the definition of the infimum, there exists $Q_\epsilon \in \mathcal{Q}$ such that:
\begin{equation}
\mathcal{R}(Q_\epsilon) \le \inf_{Q \in \mathcal{Q}} \mathcal{R}(Q) + \epsilon.
\end{equation}

Because $Q_\epsilon$ lies within the feasible domain $\mathcal{Q}$, there exists a  sequence of learnable matrices $\tilde{S}_{\mathcal{K}}^{(d)}$ such that, combined with the fixed matrices, the joint order parameters satisfy $\lim_{d\to\infty} Q(\tilde{S}_{\mathcal{K}}^{(d)}) = Q_\epsilon$, and the learnable matrices satisfy the norm condition $\lim_{d\to\infty} \|\tilde{S}_{ij}^{k, (d)}\|_{\text{op}} / \|\tilde{S}_{ij}^{k, (d)}\|_F = 0$.

Since the infimum over all possible learnable weights $S_{\mathcal{K}}$ must be less than or equal to the loss evaluated at this specific sequence $\tilde{S}_{\mathcal{K}}^{(d)}$, we have:
\begin{equation}
\inf_{S_{\mathcal{K}}} F_d(S_{\mathcal{K}}) \le F_d(\tilde{S}_{\mathcal{K}}^{(d)}).
\end{equation}
Taking the limit superior as $d \to \infty$ on both sides, and noting that the complete set of matrices (both $\tilde{S}_{\mathcal{K}}^{(d)}$ and the fixed matrices) satisfies all conditions of Theorem \ref{theo:general}, we can apply Theorem \ref{theo:general} to the right-hand side:
\begin{equation}
\limsup_{d\to\infty} \inf_{S_{\mathcal{K}}} F_d(S_{\mathcal{K}}) \le \lim_{d\to\infty} F_d(\tilde{S}_{\mathcal{K}}^{(d)}) = \mathcal{R}(Q_\epsilon) \le \inf_{Q \in \mathcal{Q}} \mathcal{R}(Q) + \epsilon.
\end{equation}
Taking the limit as $\epsilon \to 0$ yields the upper bound:
\begin{equation}
\limsup_{d\to\infty} \inf_{S_{\mathcal{K}}} F_d(S_{\mathcal{K}}) \le \inf_{Q \in \mathcal{Q}} \mathcal{R}(Q).
\label{eq:cor1-upper}
\end{equation}
Let $\hat{S}_{\mathcal{K}}^{(d)}$ be the sequence of minimizers such that $\inf_{S_{\mathcal{K}}} F_d(S_{\mathcal{K}}) = F_d(\hat{S}_{\mathcal{K}}^{(d)})$. According to Assumption \ref{assum:cor-minimum}.1, the sequence $\hat{S}_{\mathcal{K}}^{(d)}$ uniformly satisfies the norm ratio condition, and the combined order parameters converge to some state $\hat{Q} \in \mathcal{Q}$. We can then apply Theorem \ref{theo:general} to this joint sequence of minimizers and fixed weights:
\begin{equation}
\lim_{d\to\infty} F_d(\hat{S}_{\mathcal{K}}^{(d)}) = \mathcal{R}(\hat{Q}).
\end{equation}
Furthermore, since $\hat{Q}$ is an element of the feasible domain $\mathcal{Q}$, we have $\mathcal{R}(\hat{Q}) \ge \inf_{Q \in \mathcal{Q}} \mathcal{R}(Q)$. Therefore, taking the limit inferior, we obtain:
\begin{equation}
\liminf_{d\to\infty} \inf_{S_{\mathcal{K}}} F_d(S_{\mathcal{K}}) = \lim_{d\to\infty} F_d(\hat{S}_{\mathcal{K}}^{(d)}) = \mathcal{R}(\hat{Q}) \ge \inf_{Q \in \mathcal{Q}} \mathcal{R}(Q).
\label{eq:cor1-lower}
\end{equation}
Combining the upper bound \eqref{eq:cor1-upper} and the lower bound \eqref{eq:cor1-lower}, we get:
\begin{equation}
\lim_{d\to\infty} \inf_{S_{\mathcal{K}}} F_d(S_{\mathcal{K}}) = \inf_{Q \in \mathcal{Q}} \mathcal{R}(Q).
\end{equation}
This concludes the proof.

\section{Theory in Section \ref{sec:tied}}
\subsection{Proof of Theorem \ref{theo:tied-SGD}}
\label{app:proof-tied-SGD}
\subsubsection{Local well-posedness}
We first define a scale of the Banach spaces $X_s$ for $s > 0$, consisting of infinite-dimensional moment sequences $\mu = (\mu_\alpha)_{|\alpha| \ge 1}$ equipped with the norm $\|\mu\|_s := \sup_{n \ge 1} \sup_{|\alpha|=n} |\mu_\alpha|/s^n$.

By Price's theorem, the matrix derivative of the potential can be written as $\nabla\Phi(q)=\Gamma_{kl}(q)$ with
\begin{equation}
\Gamma_{kl}(q):=\frac12
\sum_{i,j,i',j'=1}^L(\mathcal C_{ii'}\mathcal C_{jj'}+\mathcal C_{ij'}\mathcal C_{ji'})
\,\mathbb E\left[\partial^2_{(ijk),(i'j'l)}\mathcal L(G)\right],
\qquad1\le k,l\le K.
\label{eq:Gamma-definition}
\end{equation}
See Lemma \ref{lemma:bound-gradient} for more detailed derivation.
A technical issue is that the potential \eqref{eq:potential-tied}, and hence the gradient $\Gamma$ \eqref{eq:Gamma-definition}, are well-defined only for $q\succeq0$, whereas an arbitrary element of $X_s$ need not
have a PSD second-moment matrix. To use the Ovsyannikov argument, we extend $\Gamma$ outside the PSD cone. Define
\begin{equation}
\mathscr P(q):=\Pi_{\mathbb S_+^K}
\left(\frac{q+q^T}{2}\right),
\end{equation}
where $\Pi_{\mathbb S_+^K}$ denotes the orthogonal projection onto the closed convex cone of PSD matrices, and set
\begin{equation}
\widetilde\Gamma(q):=\Gamma(\mathscr P(q)),
\qquad q\in\mathbb R^{K\times K}.
\label{eq:Gamma-extension}
\end{equation}
For $q\succeq0$, this extension agrees with the generalized gradient: $\widetilde\Gamma(q)=\Gamma(q)$.

Now we denote the flow as
\begin{equation}
\frac{d}{dt}\bar{\mu}(t) = V(\bar{\mu}(t)):= F(\bar{\mu}(t)) + G(\bar{\mu}(t)),
\label{eq:V-definition}
\end{equation}
where
\begin{equation}
[F(\mu)]_{(k_1, \dots, k_w)} := - \sum_{\substack{i=1 \\ k_i \in \mathcal{K}}}^w \sum_{l=1}^K \widetilde\Gamma(q)_{k_i l}
\left( \mu_{(k_1, \dots, k_{i-1}, l, k_i, \dots, k_w)} + \mu_{(k_1, \dots, k_i, l, k_{i+1}, \dots, k_w)} \right).
\end{equation}
and
\begin{equation}
[G(\bar\mu)]_{\alpha} := -\frac{\gamma}{2} |\alpha|_{\mathcal{K}} \bar\mu_{\alpha}
\end{equation}
are two unbounded operators in the Banach space $X_s$.

\begin{lemma}
\label{lemma:local_existence}
Given an initial condition $\bar\mu(0)\in X_{s_0}$ for some $s_0>0$, there exist $T_{\rm local}>0$ and a strictly increasing continuous function $s(t)$ with $s(0)=s_0$ such that \eqref{eq:V-definition} admits a unique solution $\bar\mu(t)\in X_{s(t)}$ for $t\in[0,T_{\rm local})$.
\end{lemma}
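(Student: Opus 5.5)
The plan is an abstract Cauchy--Kovalevskaya / Ovsyannikov argument in the scale of Banach spaces $\{X_s\}_{s>0}$. The basic structural fact is that the vector field $V=F+G$ maps $X_s$ into $X_{s'}$ for every $s'>s$, losing one power of the weight per application. For $|\alpha|=w$, the term $F$ involves moments of length $w+1$, so $|[F(\mu)]_\alpha|\le 2wK\|\widetilde\Gamma(q)\|_\infty \|\mu\|_s s^{w+1}$. The term $G$ gives $|[G(\mu)]_\alpha|\le \frac{\gamma}{2}w\|\mu\|_s s^w$. Using $w(s/s')^w\le \frac{1}{e\log(s'/s)}\le \frac{s'}{e(s'-s)}$, this yields a Nirenberg--Nishida type bound
\begin{equation*}
\|V(\mu)\|_{s'}\le \frac{C(\|\widetilde\Gamma\|)\,(1+s)\,s'}{s'-s}\,\|\mu\|_s .
\end{equation*}

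The first step is to handle the nonlinearity, which enters only through the finite-dimensional matrix $q=(\mu_{(k,l)})$. By Lemma~\ref{lemma:bound-gradient}, $\Gamma$ is locally Lipschitz and bounded on bounded subsets of $\mathbb S^K_+$; this uses Assumption~\ref{assum:tied}.1 with polynomial growth of second and third derivatives, together with Price's theorem. The projection $\mathscr P$ is $1$-Lipschitz, so $\widetilde\Gamma$ is locally Lipschitz on all of $\mathbb R^{K\times K}$. Moreover $|q_{kl}|\le s^2\|\mu\|_s$. Hence on the ball $\|\mu\|_s\le R$ we obtain the Lipschitz estimate
\begin{equation*}
\|V(\mu)-V(\nu)\|_{s'}\le \frac{C_R\, s'}{s'-s}\|\mu-\nu\|_s .
\end{equation*}
The difference of the $F$ terms splits as $\widetilde\Gamma(q_\mu)(\mu-\nu)$ plus $(\widetilde\Gamma(q_\mu)-\widetilde\Gamma(q_\nu))\nu$. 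The second piece is controlled by the Lipschitz constant times $|q_\mu-q_\nu|$ times $\|\nu\|_{s}$ with the same derivative loss.

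Next, I would run Picard iteration in the weighted norm $\sup_{t<T}\sup_{s_0\le s<s_1}(s_1-s)\cdots$. Equivalently, I would use the Nishida/Ovsyannikov scheme with a time-dependent radius: choose $s(t)=s_0+at$, or more precisely $s(t)=s_0e^{at}$, which is strictly increasing. Let $\mu^{(n+1)}(t)=\bar\mu(0)+\int_0^t V(\mu^{(n)}(r))dr$. To control the integral I would use the standard trick of estimating $\|\mu^{(n+1)}-\mu^{(n)}\|_{s}$ for $s<s_1-a t$ (here growing instead of shrinking). This gives inductive bounds of the form $M^n t^n/(\text{gap})^n \cdot n^n/n!$, which are summable for $t<T_{\rm local}\sim (eC_R)^{-1}$. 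One must also verify a priori that the iterates stay in the ball $\|\cdot\|\le R$ where $C_R$ is valid. This follows by taking $R=2\|\bar\mu(0)\|_{s_0}$ and shrinking $T_{\rm local}$.

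Uniqueness follows from the same contraction estimate applied to the difference of two solutions in $X_{s(t)}$. Continuity in time follows from the integral identity. I expect the main obstacle to be the bookkeeping in the Ovsyannikov estimate, specifically obtaining a loss of derivative that is exactly of order $1/(s'-s)$, uniformly in $w$, including the unbounded diagonal part $G$. If the simple Picard scheme fails to close because $G$ is diagonal but unbounded, I would instead absorb $G$ exactly. Since $G$ is diagonal with nonpositive entries, I would use the semigroup $e^{tG}$, which is a contraction on every $X_s$, and work with the mild formulation $\mu(t)=e^{tG}\mu(0)+\int_0^te^{(t-r)G}F(\mu(r))dr$. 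Only $F$ then needs the Ovsyannikov treatment.
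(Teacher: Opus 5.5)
Your proposal is essentially the paper's proof. Both use the same $X_s\to X_{s'}$ Lipschitz bounds for $F$ and $G$, with the loss controlled by $\sup_{w\ge1}w(s/s')^w\le s'/(s'-s)$, and local Lipschitz continuity of $\widetilde\Gamma=\Gamma\circ\mathscr P$ via the non-expansive PSD projection, followed by the Ovsyannikov theorem; you unpack that theorem as an explicit Picard scheme on a growing scale $s(t)$ where the paper simply cites it. The semigroup fallback for $G$ is unnecessary, since its loss is already of the $1/(s'-s)$ type. One small correction: Lipschitz continuity of $\Gamma$ up to the boundary of $\mathbb S_+^K$ does not follow from Lemma~\ref{lemma:bound-gradient} or from second and third derivatives alone. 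It comes from differentiating \eqref{eq:Gamma-definition} in $q$ via Price's theorem, which uses the fourth-order derivatives in Assumption~\ref{assum:tied}.1.
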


\begin{proof}
For every bounded subset of $\mathbb S_+^K$, $\Gamma(q)\in\mathbb S^K$ is bounded and Lipschitz continuous by definition \eqref{eq:Gamma-definition} and Assumption \ref{assum:tied}.1 (using Price's theorem). Therefore, for every $R>0$ there exist constants $C_\Gamma(R)$ and $L_\Gamma(R)$ such that
\begin{equation}
\sup_{\substack{q\succeq0,\|q\|_F\le R}}
\|\Gamma(q)\|_F\le C_\Gamma(R),
\end{equation}
and
\begin{equation}
\|\Gamma(q)-\Gamma(q')\|_F
\le L_\Gamma(R)\|q-q'\|_F
\end{equation}
whenever $q,q'\succeq0$ and $\|q\|_F,\|q'\|_F\le R$. Since the orthogonal projection onto the closed convex cone $\mathbb S_+^K$ satisfies $
\|\mathscr P(q)-\mathscr P(q')\|_F\le\|q-q'\|_F$,
$\widetilde\Gamma(q)=\Gamma(\mathscr P(q))$
is locally bounded and locally Lipschitz on $\mathbb R^{K\times K}$.

Now fix $0<s<s'$ and $R>0$, and let $B_R(X_s):=\{\mu\in X_s:\|\mu\|_s\le R\}$. For $\mu,\nu\in B_R(X_s)$,
\begin{equation}
|q_{kl}(\mu)|\le Rs^2,\qquad|q_{kl}(\mu)-q_{kl}(\nu)|\le s^2\|\mu-\nu\|_s,
\end{equation}
where we denote $q_{kl}(\mu)=\mu_{(k,l)}$ Hence, because $K$ is fixed, we have
\begin{equation}
|\widetilde\Gamma(q(\mu))|_F
\le C_{R,s},
\end{equation}
and
\begin{equation}
\|\widetilde\Gamma(q(\mu))
-\widetilde\Gamma(q(\nu))\|_F
\le
L_{R,s}\|\mu-\nu\|_s.
\end{equation}
for constants $C_{R,s}$ and $L_{R,s}$.

Now we prove that the vector field $V: X_s \to X_{s'}$ is bounded and locally Lipschitz.
For the linear term $G$, we have $|[G(\mu)]_\alpha-[G(\nu)]_\alpha| \le \frac{\gamma}{2}w|\mu_\alpha-\nu_\alpha|$, and thus
\begin{equation}
\|G(\mu)-G(\nu)\|_{s'} = \sup_{|\alpha|=w \ge 1} \frac{\gamma w |\mu_\alpha-\nu_\alpha|}{2(s')^w} \le \frac{\gamma}{2} \|\mu-\nu\|_s \sup_{w \ge 1} \left[w \left(\frac{s}{s'}\right)^w \right].
\end{equation}
Using the inequality $\sup_{w\ge 1} w x^w \le \frac{1}{e|\log  x|} \le \frac{1}{1-x}$ for $x \in (0, 1)$, we obtain:
\begin{equation}
\sup_{w\ge 1} w\left(\frac{s}{s'}\right)^w \le \frac{s'}{s' - s}.
\end{equation}
Thus, $\|G(\mu)-G(\nu)\|_{s'} \le \frac{\gamma s'}{2(s'-s)} \|\mu-\nu\|_s$.

For the non-linear term $F(\mu)$, the difference $[F(\mu)]_\alpha - [F(\nu)]_\alpha$ for a multi-index $\alpha$ of length $w$ is bounded by
\begin{align}
&|[F(\mu)]_\alpha - [F(\nu)]_\alpha| \\&\le \sum_{\substack{i=1 \\ k_i\in\mathcal{K}}}^w \sum_l \Bigg( \left|\widetilde{\Gamma}(q(\mu))_{k_il} - \widetilde{\Gamma}(q(\nu))_{k_il}\right| \big|\mu_{w+1}^{(1)} + \mu_{w+1}^{(2)}\big| + \left|\widetilde{\Gamma}(q(\nu))_{k_il}\right| \big|\big(\mu_{w+1}^{(1)} + \mu_{w+1}^{(2)}\big) - \big(\nu_{w+1}^{(1)} + \nu_{w+1}^{(2)}\big)\big| \Bigg).
\end{align}
Since the moments of length $w+1$ are bounded by $\|\mu\|_s s^{w+1} \le R s^{w+1}$ and their differences are bounded by $\|\mu - \nu\|_s s^{w+1}$, we can obtain:
\begin{equation}
|[F(\mu)]_\alpha - [F(\nu)]_\alpha| \le w K \left[ L_{R,s} \|\mu - \nu\|_s \left( 2R s^{w+1} \right) + C_{R,s}\left( 2 \|\mu - \nu\|_s s^{w+1} \right) \right].
\end{equation}
Computing the norm in $X_{s'}$, we obtain:
\begin{align}
\|F(\mu) - F(\nu)\|_{s'} &= \sup_{|\alpha|=w \ge 1} \frac{|[F(\mu)]_\alpha - [F(\nu)]_\alpha|}{(s')^w} \le 2K \left( R L_{R,s} + C_{R,s} \right) s \|\mu - \nu\|_s \sup_{w \ge 1} \left[ w \left(\frac{s}{s'}\right)^w \right].
\end{align}
Using the previously established inequality $\sup_{w\ge 1} w\left(\frac{s}{s'}\right)^w \le \frac{s'}{s' - s}$, we consequently arrive at:
\begin{equation}
\|F(\mu) - F(\nu)\|_{s'} \le \frac{2K \left( R L_{R,s} + C_{R,s} \right) s s'}{s' - s} \|\mu - \nu\|_s.
\end{equation}
Combining $F$ and $G$, the vector field $V:X_s\to X_{s'}$ is locally Lipschitz and bounded (using $V(0)=0$). Then Lemma \ref{lemma:local_existence} is proven by the Ovsyannikov theorem \cite{treves1968}.
\end{proof}

\subsubsection{Uniform spectral bounds}
Let $\mathcal{F}_n = \sigma(x^{(0)}, x^{(1)}, \dots, x^{(n-1)})$ be the filtration generated by the data samples up to step $n-1$. The weights $W_k^{(n)}$ are $\mathcal{F}_n$-measurable. In the online SGD, the update at step $n$ relies on a fresh sample $x^{(n)}$. Consider the stochastic gradient:
\begin{equation}
g_k^{(n)}(x^{(n)}, W) := \nabla_{W_k} \mathcal{L}(G^{(n)}(x^{(n)}, W)),
\end{equation}
and the expected gradient
$\bar{g}_k^{(n)}=\mathbb{E}_x \left[ \sum_{i,j=1}^L \partial_{(i,j,k)}\mathcal{L}(G^{(n)}) X_{ij} W_k\right]$ conditioned on $\mathcal{F}_n$.

\begin{lemma}
\label{lemma:bound-gradient}
Fix a constant $C_*>0$. Suppose that, at step $n$,
\begin{equation}
\max_{1\le l\le K}\|W_l^{(n)}\|_{\mathrm{op}}\le C_*.
\label{eq:bounded-spectrum}
\end{equation}
The conditional expected gradient $\bar g_k^{(n)}$ satisfies $\bar g_k^{(n)}=\frac{1}{d}H_k^{(n)}W_k^{(n)}+\mathcal E_{\mathrm{Stein},k}^{(n)}$, where
\begin{equation}
\|H_k^{(n)}\|_{\mathrm{op}}\le C_H(C_*),
\qquad
\|\mathcal E_{\mathrm{Stein},k}^{(n)}\|_{\mathrm{op}}
\le C_E(C_*)\,d^{-3/2},
\end{equation}
for constants $C_H(C_*)$ and $C_E(C_*)$ independent of $d$ and $n$.
\end{lemma}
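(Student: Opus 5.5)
The plan is to compute the conditional expectation by Gaussian integration by parts (Stein's lemma) applied to the fresh sample $x=x^{(n)}\sim\mathcal N(0,\mathcal C\otimes I_d)$, with the weights frozen at their $\mathcal F_n$-measurable values. Write $X_{ij}:=\frac{1}{\sqrt d}\bigl(x_jx_i^T+x_ix_j^T\bigr)/2-\frac{\mathcal C_{ij}}{\sqrt d}I_d$ (up to the symmetrization convention used for $\nabla_{W_k}$). Then
\[
g_k^{(n)}=\sum_{i,j}\partial_{(ijk)}\mathcal L(G)\,\frac{1}{\sqrt d}\bigl(x_jx_i^T+x_ix_j^T-2\mathcal C_{ij}I_d\bigr)W_k .
\]
The goal is to take $\mathbb E_x$ of a term of the form $f(x)\,(x_jx_i^T-\mathcal C_{ij}I_d)$ with $f=\partial_{(ijk)}\mathcal L(G(x))$, and to apply Stein's lemma twice. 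A second-order identity holds for Gaussian vectors:
\[
\mathbb E[f\,(x_{ja}x_{ib}-\mathcal C_{ij}\delta_{ab})]=\sum_{i',j'}\mathcal C_{ji'}\mathcal C_{ij'}\,\mathbb E[\partial_{x_{i'a}}\partial_{x_{j'b}}f].
\]
The first derivative of $f$ is $\partial_{x_{i'a}}f=\sum\partial^2\mathcal L\cdot\frac{1}{\sqrt d}(S_{k'}x)_{\cdot}$, which produces factors $\frac{1}{\sqrt d}S_{k'}x$. Differentiating again gives two contributions:
\begin{itemize}
\item[(a)] terms with the second derivative of $\mathcal L$ times $\frac{1}{\sqrt d}(S_{k'})_{ab}$, which after the outer $\frac{1}{\sqrt d}$ yield $\frac1d\,\mathbb E[\partial^2\mathcal L]\,\mathcal C\mathcal C\,S_{k'}$;
\item[(b)] terms with the third derivative of $\mathcal L$ times $\frac1d(S_{k'}x)_a(S_{k''}x)_b$, carrying an overall $d^{-3/2}$.
\end{itemize}

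Contribution (a) defines $H_k^{(n)}:=\sum_{l}\Gamma^{(d)}_{kl}\,S_l^{(n)}$ (up to the constant). Here $\Gamma^{(d)}$ is the finite-$d$ analogue of \eqref{eq:Gamma-definition}, with coefficients $\mathbb E[\partial^2\mathcal L(G^{(n)})]$. These expectations are bounded uniformly in $d$ by the polynomial growth in Assumption \ref{assum:tied}.1, together with uniform moment bounds on $G^{(n)}$. The moment bounds follow from the cumulant estimates in the proof of Theorem \ref{theo:general} and Corollary \ref{cor:symmetric}, since $\frac1d\|S_l\|_F^2\le\|W_l\|_{\rm op}^4\le C_*^4$ and $\|S_l\|_{\rm op}\le C_*^2$. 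Hence $\|H_k\|_{\rm op}\le K\max|\Gamma^{(d)}_{kl}|\,C_*^2=:C_H(C_*)$. Incidentally, this also derives the Price formula \eqref{eq:Gamma-definition} for $\nabla\Phi$ in the limit.

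Contribution (b) is the error $\mathcal E_{\rm Stein}$. It is a matrix of the form
\[
d^{-3/2}\,\mathbb E\bigl[\partial^3\mathcal L(G)\,(S_{k'}x_{\cdot})(S_{k''}x_{\cdot})^T\bigr]\,W_k .
\]
To bound its operator norm, I test against unit vectors $u,v$:
\[
|u^T\mathbb E[h(x)\,S x\,x^TS']v|\le \mathbb E[h^2]^{1/2}\,\mathbb E[(u^TSx)^2(x^TS'v)^2]^{1/2}.
\]
The second factor is $O(\|S\|_{\rm op}^2\|S'\|_{\rm op}^2)=O(C_*^8)$, because $u^TSx$ is Gaussian with variance at most $\|\mathcal C\|_{\rm op}\|S\|_{\rm op}^2$. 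The first factor is bounded by polynomial growth. Multiplying by $\|W_k\|_{\rm op}\le C_*$ gives $C_E(C_*)d^{-3/2}$. I would also check for any first-order Stein term with no factor of $\frac1{\sqrt d}S$. There is none, because the centering $-\mathcal C_{ij}I_d$ exactly cancels the zeroth-order contribution $\mathbb E[f]\,\mathcal C_{ij}I_d$.

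The main obstacle is bookkeeping in the double Stein expansion. The goal is to make sure that every leftover term either is exactly the $\frac1d S_lW_k$ structure or carries at least $d^{-3/2}$ with only operator-norm (not Frobenius) dependence on the weights. In particular, one must avoid a term like $\frac1d\mathbb E[\partial^2\mathcal L\cdot(S x)(Sx)^T]$ at order $d^{-1}$. That term is absent because each $x$-derivative of $G$ carries a $d^{-1/2}$.
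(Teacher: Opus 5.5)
Your proposal is correct and follows essentially the same route as the paper. Both apply the second-order Gaussian Stein identity to $f_{ijk}(x)=\partial_{(ijk)}\mathcal L(G(x))$ and split the Hessian by the chain rule: the $\partial^2\mathcal L\cdot\nabla^2 G$ part gives $\frac1d H_kW_k$, with $H_k$ a bounded combination of the $S_l$. The $\partial^3\mathcal L\cdot\nabla G(\nabla G)^T$ remainder is bounded in operator norm at order $d^{-3/2}$ by testing against unit vectors, using H\"older's inequality and uniform moment bounds on $G$.

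One minor slip that changes nothing: after the square root, your second Cauchy--Schwarz factor is $O(\|S\|_{\rm op}\|S'\|_{\rm op})$, not $O(\|S\|_{\rm op}^2\|S'\|_{\rm op}^2)$. Either way it is a constant depending only on $C_*$.
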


\begin{proof}
Throughout the proof we condition on $\mathcal F_n$ and suppress the superscript $(n)$ whenever no confusion can arise. Conditional on $\mathcal F_n$, the matrices $\{W_l,S_l\}_{l=1}^K$ are deterministic, while $x=(x_1,\ldots,x_L)\sim\mathcal N(0,\mathcal C\otimes I_d).$

Recall that
\begin{equation}
G_{ijk}:=\frac{1}{\sqrt d}
\Tr\!\left[S_k(x_jx_i^T-\mathcal C_{ij}I_d)\right].
\label{eq:G-definition}
\end{equation}
Since $S_k=S_k^T$, we have $\nabla_{W_k}G_{ijk}=X_{ij}W_k$ with $X_{ij}:=\frac{1}{\sqrt d}
\left(x_jx_i^T+x_ix_j^T-2\mathcal C_{ij}I_d\right)$.
Consequently,
\begin{equation}
g_k=\sum_{i,j=1}^L
f_{ijk}(x)X_{ij}W_k,
\qquad
f_{ijk}(x):=\partial_{(ijk)}\mathcal L(G(x)),
\end{equation}
and hence
\begin{equation}
\bar g_k=\sum_{i,j=1}^L\mathbb E_x[f_{ijk}(x)X_{ij}]W_k.
\label{eq:mean-gradient-start}
\end{equation}
We evaluate $\mathbb E_x[f_{ijk}(x)X_{ij}]$ using the second-order Stein identity. For any $f\in C^2(\mathbb{R}^{dL})$ whose derivatives up to second order have at most polynomial growth\footnote{This is satisfied under Assumption \ref{assum:tied}.1.}, and for a centered Gaussian vector $x$ with covariance matrix $\Sigma$,
\begin{equation}
\mathbb E\!\left[f(x)(xx^T-\Sigma)\right]=\Sigma\,
\mathbb E[\nabla_x^2f(x)]\,\Sigma.
\end{equation}
Taking the $(j,i)$ block of this identity for
$\Sigma=\mathcal C\otimes I_d$ gives
\begin{equation}
\mathbb E_x\left[f_{ijk}(x)\left(x_jx_i^T-\mathcal C_{ji}I_d\right)\right]=\sum_{u,v=1}^L\mathcal C_{ju}\mathcal C_{vi}\,\mathbb E_x\left[\nabla_{x_u}\nabla_{x_v}f_{ijk}(x)\right].
\end{equation}
Using the definition of $X_{ij}$:
\begin{equation}
\label{eq:stein-expanded}
\mathbb E_x[f_{ijk}(x)X_{ij}]=\frac{1}{\sqrt d}
\sum_{u,v=1}^L\left(\mathcal C_{ju}\mathcal C_{vi}+\mathcal C_{iu}\mathcal C_{vj}\right)
\mathbb E_x\left[\nabla_{x_u}\nabla_{x_v}f_{ijk}(x)
\right].
\end{equation}
We next compute the Hessian of $f_{ijk}$. Introduce the shorthand
$a=(i,j,k)$ and let $b=(i',j',l)$ and $c=(i'',j'',m)$ as coordinates of the tensor $G$. By the chain rule,
\begin{equation}
\label{eq:f-hessian-chain}
\nabla_{x_u}\nabla_{x_v}f_a=\sum_b\partial^2_{a,b}\mathcal L(G)\,\nabla_{x_u}\nabla_{x_v}G_b+R_{uv,a},
\end{equation}
where
\begin{equation}
\label{eq:Ruv-definition}
R_{uv,a}:=\sum_{b,c}\partial^3_{a,b,c}\mathcal L(G)\,
\nabla_{x_u}G_b\left(\nabla_{x_v}G_c\right)^T.
\end{equation}
For $b=(i',j',l)$,
\begin{equation}
\nabla_{x_u}G_{i'j'l}=\frac{1}{\sqrt d}
\left(\delta_{ui'}S_lx_{j'}+\delta_{uj'}S_lx_{i'}
\right),
\label{eq:G-first-derivative}
\end{equation}
and
\begin{equation}
\nabla_{x_u}\nabla_{x_v}G_{i'j'l}=\frac{1}{\sqrt d}S_l\left(
\delta_{ui'}\delta_{vj'}+\delta_{uj'}\delta_{vi'}\right).
\label{eq:G-second-derivative}
\end{equation}
Substituting the first term in \eqref{eq:f-hessian-chain} into
\eqref{eq:stein-expanded}, and then using
\eqref{eq:G-second-derivative}, gives
\begin{align}
&\frac{1}{\sqrt d}
\sum_{u,v=1}^L
\left(\mathcal C_{ju}\mathcal C_{vi}+\mathcal C_{iu}\mathcal C_{vj}\right)\sum_{i',j',l}
\mathbb E_x
\left[\partial^2_{(ijk),(i'j'l)}\mathcal L(G)\right]
\nabla_{x_u}\nabla_{x_v}G_{i'j'l}
\nonumber\\
&\qquad =\frac{2}{d}
\sum_{i',j',l}
\left(\mathcal C_{ii'}\mathcal C_{jj'}+\mathcal C_{ij'}\mathcal C_{ji'}\right)
\mathbb E_x\left[\partial^2_{(ijk),(i'j'l)}\mathcal L(G)\right]S_l.
\label{eq:stein-main-term}
\end{align}
The factor $2$ comes from the two Kronecker-delta contributions in
\eqref{eq:G-second-derivative}.

It remains to control the remainder $R_{uv,a}$. Under the spectral bound $\max_l\|W_l\|_{\mathrm{op}}\le C_*$, we have
\begin{equation}
\|S_l\|_{\mathrm{op}}\le C_*^2,\qquad\|S_l\|_F\le \sqrt d\,C_*^2.
\label{eq:S-bounds}
\end{equation}
Consequently, each $G_{ijk}$ satisfies
\begin{equation}
\sup_d\max_{i,j,k}\mathbb E_x|G_{ijk}|^p\le C_p(C_*)
\label{eq:G-moment-bound}
\end{equation}
for every fixed $p<\infty$. Since $L$ and $K$ are fixed, the same is true for any polynomial function of the finite-dimensional tensor $G$. By Assumption \ref{assum:tied}.1, the third derivatives of $\mathcal L$ have at most polynomial growth. Hence, for every collection of indices,
\begin{equation}
\mathbb E_x\left|\partial^3_{a,b,c}\mathcal L(G)\right|^2\le C(C_*).
\label{eq:L3-moment-bound}
\end{equation}
Now fix arbitrary unit vectors $p,q\in\mathbb R^d$. Using
\eqref{eq:Ruv-definition} and \eqref{eq:G-first-derivative}, every term in
$p^T\mathbb E_x[R_{uv,a}]q$ is, up to constants depending only on $L,K$, and $\mathcal C$, bounded by $\frac{1}{d}\mathbb E_x\left[\left|\partial^3_{a,b,c}\mathcal L(G)\right|\left|p^TS_lx_r\right|\left|q^TS_mx_s\right|\right]$ for some $r,s,l,m$. By Hölder's inequality,
\begin{align}
\mathbb E_x\left[\left|\partial^3_{a,b,c}\mathcal L(G)
\right|\left|p^TS_lx_r\right|
\left|q^TS_mx_s\right|\right]\le\left(\mathbb E_x\left|\partial^3_{a,b,c}\mathcal L(G)\right|^2\right)^{1/2}
\left(\mathbb E_x|p^TS_lx_r|^4\right)^{1/4}\left(\mathbb E_x|q^TS_mx_s|^4\right)^{1/4}.
\end{align}
The first factor is bounded by \eqref{eq:L3-moment-bound}. Since $x_r$ and
$x_s$ are Gaussian and $\|S_l\|_{\mathrm{op}},
\|S_m\|_{\mathrm{op}}\le C_*^2$, the remaining two factors are also uniformly bounded. Therefore,
\begin{equation}
\left\|\mathbb E_x[R_{uv,a}]\right\|_{\mathrm{op}}\le\frac{C(C_*)}{d}.
\label{eq:Ruv-bound}
\end{equation}
Combining \eqref{eq:stein-expanded}, \eqref{eq:stein-main-term}, and \eqref{eq:Ruv-bound}, we obtain
\begin{equation}
\label{eq:fX-expansion}
\mathbb E_x[f_{ijk}(x)X_{ij}]=\frac{2}{d}
\sum_{i',j',l}\left(\mathcal C_{ii'}\mathcal C_{jj'}+\mathcal C_{ij'}\mathcal C_{ji'}\right)\mathbb E_x\left[\partial^2_{(ijk),(i'j'l)}
\mathcal L(G)\right]S_l+E_{ijk},\end{equation}
where, uniformly over \eqref{eq:bounded-spectrum},
\begin{equation}
\|E_{ijk}\|_{\mathrm{op}}\le C(C_*)d^{-3/2}.
\label{eq:Eijk-bound}
\end{equation}
Substituting \eqref{eq:fX-expansion} into
\eqref{eq:mean-gradient-start}, and defining $H_k$ as 
\begin{equation}
\label{eq:H-definition}
H_k^{(n)}:=2\sum_{i,j,i',j'=1}^L\sum_{l=1}^K\left(\mathcal C_{ii'}\mathcal C_{jj'}+\mathcal C_{ij'}\mathcal C_{ji'}\right)\mathbb E_x\left[\partial^2_{(ijk),(i'j'l)}\mathcal L(G^{(n)})\right]S_l^{(n)},
\end{equation}
gives
\begin{equation}
\bar g_k=\frac{1}{d}H_kW_k+\mathcal E_{\mathrm{Stein},k},
\end{equation}
where $\mathcal E_{\mathrm{Stein},k}:=\sum_{i,j=1}^L E_{ijk}W_k$. By \eqref{eq:bounded-spectrum} and
\eqref{eq:Eijk-bound} we have
\begin{equation}
\|\mathcal E_{\mathrm{Stein},k}\|_{\mathrm{op}}
\le C_E(C_*)d^{-3/2}.
\end{equation}
Finally, Assumption \ref{assum:tied}.1 and the uniform moment bound \eqref{eq:G-moment-bound} imply
\begin{equation}
\sup_{i,j,k,i',j',l}
\mathbb E_x\left|\partial^2_{(ijk),(i'j'l)}\mathcal L(G)\right|\le C(C_*).
\end{equation}
Using \eqref{eq:S-bounds}, together with the fact that $K$ and $L$ are fixed, we therefore obtain
\begin{equation}
\|H_k\|_{\mathrm{op}}\le C_H(C_*).\end{equation}
Restoring the superscript $(n)$ completes the proof.
\end{proof}

To ensure the trajectory does not blow up, we must prove that the spectral norm of the weights remains uniformly bounded across all training steps.

\begin{lemma}
\label{lemma:bound-tied-SGD}
Under the conditions of Theorem \ref{theo:tied-SGD}, there exist constants
$C_*>C_0$ independent of $d$ and $T$, such that for every fixed $T>0$,
\begin{equation}
\lim_{d\to\infty}\mathbb P\left(\sup_{0\le n\le N_T}\max_{1\le k\le K}\|W_k^{(n)}\|_{\mathrm{op}}\le C_*\right)
=1,
\label{eq:spectral-bound-lemma}
\end{equation}
where
\begin{equation}
N_T:=
\left\lfloor\frac{T}{\tau_d}\right\rfloor
=
\left\lfloor\frac{Td}{4\alpha_d}\right\rfloor .
\end{equation}
\end{lemma}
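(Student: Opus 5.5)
Our plan is a stopping-time argument built on Lemma \ref{lemma:bound-gradient}. Let $\sigma:=\inf\{n:\max_k\|W_k^{(n)}\|_{\mathrm{op}}>C_*\}$ and work with the stopped process $W_k^{(n\wedge\sigma)}$. Up to $\sigma$ the hypothesis \eqref{eq:bounded-spectrum} holds, so Lemma \ref{lemma:bound-gradient} applies. It suffices to show $\mathbb P(\sigma\le N_T)\to0$. Write one SGD step as
\begin{equation*}
W_k^{(n+1)}=\Bigl(I-\tfrac{\alpha_d}{d}(H_k^{(n)}+\gamma I)\Bigr)W_k^{(n)}-\alpha_d\,\mathcal E^{(n)}_{\mathrm{Stein},k}-\alpha_d\,\xi_k^{(n)},
\end{equation*}
where $\xi_k^{(n)}:=g_k^{(n)}-\bar g_k^{(n)}$ is a martingale difference with respect to $\mathcal F_n$.

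\textbf{Drift step.} Choose $\gamma_*>C_H(C_*)$, using Assumption \ref{assum:tied}.4. Then the deterministic part is a contraction:
\begin{equation*}
\|I-\tfrac{\alpha_d}{d}(H_k+\gamma I)\|_{\mathrm{op}}\le 1-\tfrac{\alpha_d}{d}(\gamma-C_H)\le 1.
\end{equation*}
Unrolling the recursion gives
\begin{equation*}
W^{(n)}=P_{n,0}W^{(0)}-\alpha_d\sum_{m<n}P_{n,m+1}\bigl(\mathcal E^{(m)}+\xi^{(m)}\bigr),
\end{equation*}
where $P_{n,m}$ are products of contractions. Since $P_{n,m}$ depends on the past, we control the stochastic term by a Gronwall-type bound on the stopped process; alternatively, we apply the martingale inequality to $\sum_m\xi^{(m)}$ directly and sum by parts. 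The first term has norm at most $C_0$. The Stein errors contribute at most $\alpha_d N_T C_E d^{-3/2}=O(Td^{-1/2})\to0$.

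\textbf{Martingale step (main obstacle).} We need $\alpha_d\|\sum_{m<n\wedge\sigma}\xi^{(m)}\|_{\mathrm{op}}\le 1$ uniformly in $n\le N_T$ with high probability. Each $\xi^{(m)}$ is a $d\times d_k$ matrix of the form $\sum_{ij}f_{ijk}X_{ij}W_k$ minus its mean. Since $X_{ij}$ is $d^{-1/2}$ times a rank-two matrix plus $\mathcal C_{ij}I/\sqrt d$, the plan is as follows. Truncate on the high-probability event that $|G|$, $\|x_i\|^2/d$ and $|f_{ijk}|$ are polylogarithmically bounded. On that event $\|\xi^{(m)}\|_{\mathrm{op}}\lesssim\mathrm{polylog}(d)\sqrt d$. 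The predictable quadratic variations satisfy
\begin{equation*}
\|\mathbb E[\xi\xi^T\mid\mathcal F_m]\|_{\mathrm{op}},\ \|\mathbb E[\xi^T\xi\mid\mathcal F_m]\|_{\mathrm{op}}\lesssim C(C_*),
\end{equation*}
because $\mathbb E[(x_jx_i^T)(x_ix_j^T)]/d$ has operator norm $O(1)$ and $W$ is bounded. The matrix Freedman inequality (Tropp) for the rectangular dilation, in dimension $d+d_k\le 2d^{c_w}$, then gives with probability $1-d^{-10}$
\begin{equation*}
\Bigl\|\sum_{m<n}\xi^{(m)}\Bigr\|_{\mathrm{op}}\lesssim\sqrt{N_T\log d}+\sqrt d\,\mathrm{polylog}(d)\log d.
\end{equation*}
Multiplying by $\alpha_d$ gives
\begin{equation*}
\alpha_d\sqrt{N_T\log d}=\sqrt{T\alpha_d d\log d/4}\le\sqrt{Tc_0/4},
\end{equation*}
which is small for $c_0$ small; this is exactly where $\alpha_d\le c_0(d\log d)^{-1}$ enters. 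The second term is $o(1)$. A union bound over $n\le N_T$ is harmless because $N_T$ is polynomial in $d$, using $\alpha_d=\Omega(d^{-\iota})$. The truncation error is handled by the polynomial-moment bounds of Lemma \ref{lemma:bound-gradient} together with Gaussian concentration.

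\textbf{Closing and uniformity in $T$.} To make $C_*$ independent of $T$, we exploit the contraction $1-\frac{\alpha_d}{d}(\gamma-C_H)$ more carefully. Break time into blocks of unit length $t$. Within each block the noise adds at most $\sqrt{c_0}$ after discounting, while the factor $e^{-(\gamma-C_H)/4}$ per block damps earlier contributions, so the accumulated bound is a convergent geometric series. Setting $C_*:=C_0+1$ and choosing $\gamma_*$ large enough that the series is below $1$, we get $\|W^{(n\wedge\sigma)}\|_{\mathrm{op}}<C_*$ for all $n\le N_T$ with probability tending to one. By continuity of the stopped process this forces $\sigma>N_T$, proving \eqref{eq:spectral-bound-lemma}.

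The delicate points are the rectangular Freedman bound with unbounded, truncated increments and the handling of the random products $P_{n,m}$ inside the martingale sum; we expect the latter to be easiest via the blockwise argument above.
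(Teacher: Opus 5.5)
Your proposal is correct, but it handles the drift differently from the paper.

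\textbf{Your route.} You put $H_k^{(n)}$ into the propagator and get a genuine contraction, $\|I-\frac{\alpha_d}{d}(H_k+\gamma I)\|_{\mathrm{op}}\le 1-\frac{\alpha_d}{d}(\gamma-C_H)$. The price is that the products $P_{n,m+1}$ are random and depend on noise after step $m$, so $\sum_m P_{n,m+1}\xi^{(m)}$ is not a martingale. Summation by parts does repair this. Since $P_{n,m}-P_{n,m+1}=-\frac{\alpha_d}{d}P_{n,m+1}(H_k^{(m)}+\gamma I)$, the stochastic convolution is bounded by $\bigl(1+\frac{\gamma+C_H}{\gamma-C_H}\bigr)\alpha_d\max_m\|S_m\|_{\mathrm{op}}$, where $S_m$ is the undiscounted stopped martingale. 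You then need the blocking step, because otherwise the $\sqrt{Tc_0}$ growth of the undiscounted Freedman bound would force $c_0$ or $C_*$ to depend on $T$. With both steps your argument closes.

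\textbf{The paper's route.} The paper keeps only the deterministic scalar factor $a_d=1-\alpha_d\gamma/d$ in the propagator. It treats $\frac{\alpha_d}{d}H_kW_k$ as a bounded forcing term, which contributes at most $\alpha_d\sum_r a_d^r\cdot C_HC_*/d=C_H(C_*)C_*/\gamma$ using the a priori bound before the stopping time. For each terminal $m$, $Y_{k,m}=-\alpha_d\sum_{n<m}a_d^{m-1-n}\widehat\xi_k^{(n)}$ is then an honest matrix martingale with deterministic weights. Its variance proxy $C_g\alpha_d^2/(1-a_d^2)\lesssim d\alpha_d/\gamma\lesssim c_0/(\gamma\log d)$ is automatically independent of $T$. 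One matrix Freedman inequality plus a union bound over $m\le N_T$ finishes the proof, with no summation by parts and no blocking.

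\textbf{Comparison.} Your route uses more structure, namely that $\gamma$ dominates $H_k$ in the propagator. It yields a slightly better constant, $C_*=C_0+1$, under $\gamma>C_H(C_*)$ plus your geometric-series condition. The paper needs $C_*=2C_0+1$ and $\gamma\ge 4C_H(C_*)C_*/(C_*-C_0)$, but its argument is shorter and avoids the adaptedness problem entirely.

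\textbf{Two small fixes.}
\begin{itemize}
\item Commit to summation by parts rather than leaving an unspecified ``Gronwall-type'' bound as an alternative.
\item The final contradiction does not come from ``continuity'' of a discrete process. It comes from the fact that the unrolled bound at $n=\sigma$ itself involves only $H^{(j)}$ and truncated noise with $j<\sigma$, where the spectral bound still holds.
\end{itemize}
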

\begin{proof}
The weights with $k\notin\mathcal K$ remain fixed throughout training and satisfy
$\|W_k^{(n)}\|_{\mathrm{op}}\le C_0$. It therefore suffices to control
$k\in\mathcal K$.

Fix a constant $C_*>C_0$, to be chosen later, and define the stopping time
\begin{equation}
J:=\inf\left\{n\ge0:\max_{1\le k\le K}\|W_k^{(n)}\|_{\mathrm{op}}>C_*\right\}.
\label{eq:stopping-time-J}
\end{equation}
Since $\max_k\|W_k^{(0)}\|_{\mathrm{op}}\le C_0<C_*$ almost surely, $J\ge1$. On the event $\{n<J\}$,
\begin{equation}
\max_k\|W_k^{(n)}\|_{\mathrm{op}}\le C_*,
\qquad
\max_k\|S_k^{(n)}\|_{\mathrm{op}}\le C_*^2.
\label{eq:pre-stopping-bound}
\end{equation}

\paragraph{Step 1: Truncation of the fresh samples.} For every step $n$, define
\begin{equation}
\mathcal A_n:=\left\{\|x^{(n)}\|^2\le c_xd\right\}\cap\left\{\max_{i,j,k}|G_{ijk}^{(n)}|\le c_G\log d\right\},
\label{eq:data-truncation-event}
\end{equation}
where $c_x$ and $c_G$ are sufficiently large constants independent of $d$ and $n$.

Conditional on $\mathcal F_n$, the state $W^{(n)}$ is deterministic. On $\{n<J\}$, \eqref{eq:pre-stopping-bound} holds. Standard Gaussian concentration gives
\begin{equation}
\mathbf 1_{\{n<J\}}\mathbb P\left(\|x^{(n)}\|^2>c_xd\,\middle|\,\mathcal F_n\right)\le e^{-c_1d}.
\end{equation}
Moreover, conditional on $\mathcal F_n$, each $G_{ijk}^{(n)}$ is a centered quadratic form of $x^{(n)}$. To
make it explicit, let
$x^{(n)}=(\mathcal C^{1/2}\otimes I_d)z^{(n)}$, where
$z^{(n)}\sim\mathcal N(0,I_{Ld})$ is independent of $\mathcal F_n$. For
$i,j\in\{1,\ldots,L\}$, define $B_{ij}:=\frac12(e_ie_j^T+e_je_i^T)$,
where $\{e_i\}_{i=1}^L$ is the canonical basis of $\mathbb R^L$. Then from \eqref{eq:G-definition} we can write
\begin{equation}
G_{ijk}^{(n)}=(z^{(n)})^T A_{ijk}^{(n)}z^{(n)}-\Tr(A_{ijk}^{(n)}),
\label{eq:G-quadratic-form}
\end{equation}
where
\begin{equation}
A_{ijk}^{(n)}:=\frac{1}{\sqrt d}
(\mathcal C^{1/2}\otimes I_d)
(B_{ij}\otimes S_k^{(n)})
(\mathcal C^{1/2}\otimes I_d).
\label{eq:Aijk-definition}
\end{equation}
Notice that $A_{ijk}^{(n)}$ is an $Ld\times Ld$ symmetric matrix.
On the event $\{n<J\}$,
\begin{equation}
\|S_k^{(n)}\|_{\mathrm{op}}\le C_*^2,
\qquad
\|S_k^{(n)}\|_F
\le \sqrt d\,C_*^2.
\end{equation}
Since $L=\Theta(1)$ and
$\|B_{ij}\|_{\mathrm{op}}\le1$,
$\|B_{ij}\|_F\le1$, it follows that
\begin{align}
\|A_{ijk}^{(n)}\|_{\mathrm{op}}
&\le
\frac{\|\mathcal C\|_{\mathrm{op}}}{\sqrt d}
\|B_{ij}\|_{\mathrm{op}}
\|S_k^{(n)}\|_{\mathrm{op}}
\le
\frac{\|\mathcal C\|_{\mathrm{op}}C_*^2}{\sqrt d},
\\
\|A_{ijk}^{(n)}\|_F
&\le
\frac{\|\mathcal C\|_{\mathrm{op}}}{\sqrt d}
\|B_{ij}\otimes S_k^{(n)}\|_F=
\frac{\|\mathcal C\|_{\mathrm{op}}}{\sqrt d}
\|B_{ij}\|_F\|S_k^{(n)}\|_F
\le
\|\mathcal C\|_{\mathrm{op}}C_*^2.
\end{align}
Therefore, uniformly on $\{n<J\}$,
\begin{equation}
\|A_{ijk}^{(n)}\|_F\le C,
\qquad
\|A_{ijk}^{(n)}\|_{\mathrm{op}}\le\frac{C}{\sqrt d},
\label{eq:Aijk-norm-bounds}
\end{equation}
where $C$ depends only on $C_*$ and $\mathcal C$.

Applying the Hanson--Wright inequality to
\eqref{eq:G-quadratic-form}, we obtain
\begin{equation}
\mathbf 1_{\{n<J\}}
\mathbb P\left(|G_{ijk}^{(n)}|>t\,\middle|\,\mathcal F_n\right)
\le2\exp\left[-c\min\left\{\frac{t^2}{\|A_{ijk}^{(n)}\|_F^2},\frac{t}{\|A_{ijk}^{(n)}\|_{\mathrm{op}}}\right\}\right].
\end{equation}
Hence, taking $t=c_G\log d$ and using
\eqref{eq:Aijk-norm-bounds},
\begin{equation}
\mathbf 1_{\{n<J\}}\mathbb P\left(
|G_{ijk}^{(n)}|>c_G\log d
\,\middle|\,\mathcal F_n
\right)\le2\exp\left[-c\min\left\{
(\log d)^2,\sqrt d\,\log d\right\}\right]\le2e^{-c'(\log d)^2}.
\end{equation}
Since $K$ and $L$ are fixed, a union bound over $(i,j,k)$ gives
\begin{equation}
\mathbf 1_{\{n<J\}}\mathbb P\left(\max_{i,j,k}|G_{ijk}^{(n)}|>c_G\log d\,\middle|\,\mathcal F_n\right)\le2L^2K\,e^{-c'(\log d)^2}.
\end{equation}
Therefore there exists $p_d\le C\left(e^{-c_1d}+e^{-c_2(\log d)^2}\right)$
such that
\begin{equation}
\mathbf 1_{\{n<J\}}\mathbb P(\mathcal A_n^c\mid\mathcal F_n)\le p_d.
\label{eq:conditional-truncation-bound}
\end{equation}

Define the event that no truncation failure occurs before the stopping
time,
\begin{equation}
\mathcal A_{\mathrm{data}}:=\bigcap_{n=0}^{N_T-1}
\left(\{n\ge J\}\cup\mathcal A_n\right).
\end{equation}
Using \eqref{eq:conditional-truncation-bound},
\begin{align}
\mathbb P(\mathcal A_{\mathrm{data}}^c)\le\sum_{n=0}^{N_T-1}\mathbb E\left[\mathbf 1_{\{n<J\}}\mathbb P(\mathcal A_n^c\mid\mathcal F_n)\right]\le N_Tp_d.
\label{eq:data-event-union}
\end{align}
Since $\alpha_d=\Omega(d^{-\iota})$, $N_T\le C_Td^{1+\iota}$, and hence
\begin{equation}
\mathbb P(\mathcal A_{\mathrm{data}}^c)\to0.
\label{eq:data-event-high-prob}
\end{equation}

\paragraph{Step 2: Stopped martingale decomposition and its bounds.}
For $k\in\mathcal K$, define the stopped and truncated stochastic gradient
\begin{equation}
\widehat g_k^{(n)}:=\mathbf 1_{\{n<J\}}
g_k^{(n)}\mathbf 1_{\mathcal A_n},
\end{equation}
its conditional mean
\begin{equation}
\widehat{\bar g}_k^{(n)}:=\mathbb E[\widehat g_k^{(n)}\mid\mathcal F_n],
\end{equation}
and the martingale difference
\begin{equation}
\widehat\xi_k^{(n)}:=\widehat g_k^{(n)}-\widehat{\bar g}_k^{(n)}.
\end{equation}
Then $\mathbb E[\widehat\xi_k^{(n)}\mid\mathcal F_n]=0$.

Let
\begin{equation}
\bar g_k^{(n)}:=\mathbb E[g_k^{(n)}\mid\mathcal F_n].
\end{equation}
Since $\{n<J\}\in\mathcal F_n$,
\begin{equation}
\widehat{\bar g}_k^{(n)}=\mathbf 1_{\{n<J\}}\left(\bar g_k^{(n)}-\mathcal T_k^{(n)}\right),
\label{eq:truncated-mean}
\end{equation}
where
\begin{equation}
\mathcal T_k^{(n)}:=\mathbb E\left[
g_k^{(n)}\mathbf 1_{\mathcal A_n^c}\,\middle|\,\mathcal F_n\right].
\label{eq:tail-error-definition}
\end{equation}
We next record a uniform second-moment estimate for the stochastic gradient. On $\{n<J\}$,
\begin{equation}
\max\left\{
\left\|
\mathbb E[
g_k^{(n)}(g_k^{(n)})^T
\mid\mathcal F_n]
\right\|_{\mathrm{op}},
\left\|
\mathbb E[
(g_k^{(n)})^Tg_k^{(n)}
\mid\mathcal F_n]
\right\|_{\mathrm{op}}
\right\}
\le C_g(C_*).
\label{eq:gradient-second-moment}
\end{equation}
Indeed, for any unit vectors $u\in\mathbb R^d$ and
$v\in\mathbb R^{d_k}$, the two quadratic forms in
\eqref{eq:gradient-second-moment} are respectively bounded by finite sums of expectations of the form
\begin{equation*}
\mathbb E\left[|\partial_{(ijk)}\mathcal L(G^{(n)})|^2
\,\|(W_k^{(n)})^T(X_{ij}^{(n)})^Tu\|^2
\,\middle|\,\mathcal F_n
\right]
\end{equation*}
and
\begin{equation*}
\mathbb E\left[
|\partial_{(ijk)}\mathcal L(G^{(n)})|^2
\,
\|X_{ij}^{(n)}W_k^{(n)}v\|^2
\,\middle|\,
\mathcal F_n
\right].
\end{equation*}
Similarly to the estimates used in the proof of Lemma
\ref{lemma:bound-gradient}, Assumption \ref{assum:tied}.1 together with
the spectral bound \eqref{eq:pre-stopping-bound} implies uniform bounds on all fixed moments of $G^{(n)}$ and hence on the corresponding
polynomially growing derivatives of $\mathcal L$. In particular, by
Hölder's inequality, for every unit vector $u\in\mathbb R^d$,
\begin{align}
&\mathbb E\left[|\partial_{(ijk)}\mathcal L(G^{(n)})|^2
\,\|(W_k^{(n)})^T(X_{ij}^{(n)})^Tu\|^2\,\middle|\,\mathcal F_n\right]\nonumber\\
&\qquad\le
\left(\mathbb E\left[|\partial_{(ijk)}\mathcal L(G^{(n)})|^4
\,\middle|\,\mathcal F_n\right]\right)^{1/2}\left(\mathbb E\left[\|(W_k^{(n)})^T(X_{ij}^{(n)})^Tu\|^4
\,\middle|\,\mathcal F_n\right]\right)^{1/2}=\mathcal O(1),
\end{align}
uniformly on $\{n<J\}$. The same argument applies to the other quadratic variation in
\eqref{eq:gradient-second-moment}.

Consequently,
\begin{equation}
\max\left\{\left\|\mathbb E[
\widehat\xi_k^{(n)}(\widehat\xi_k^{(n)})^T
\mid\mathcal F_n]\right\|_{\mathrm{op}},\left\|\mathbb E[(\widehat\xi_k^{(n)})^T\widehat\xi_k^{(n)}\mid\mathcal F_n]\right\|_{\mathrm{op}}\right\}
\le C_g(C_*).
\label{eq:truncated-gradient-variance}
\end{equation}
On $\mathcal A_n\cap\{n<J\}$, $\|X_{ij}^{(n)}\|_{\mathrm{op}}\le C\sqrt d$, the first derivatives of $\mathcal L$ have polynomial growth and
$\max_{i,j,k}|G_{ijk}^{(n)}|\le c_G\log d$. Therefore, for some fixed integer $r$,
\begin{equation}
\|\widehat g_k^{(n)}\|_{\mathrm{op}}\le C\sqrt d\,(1+(\log d)^r)=:B_d.
\end{equation}
The same bound holds for
$\|\widehat{\bar g}_k^{(n)}\|_{\mathrm{op}}$, and hence
\begin{equation}
\|\widehat\xi_k^{(n)}\|_{\mathrm{op}}
\le 2B_d.
\label{eq:xi-increment-bound}
\end{equation}

\paragraph{Step 3: Uniform control of the martingale noise.}
Set
\begin{equation}
a_d:=1-\frac{\alpha_d\gamma}{d}=1-\frac{\tau_d\gamma}{4}.
\label{eq:ad-definition}
\end{equation}
For sufficiently large $d$, $a_d\in(0,1)$. Define
\begin{equation}
D_{k,n}:=-\alpha_d\widehat\xi_k^{(n)}.
\end{equation}
For every terminal time $m\ge1$, consider the discounted martingale sum
\begin{equation}
Y_{k,m}:=\sum_{n=0}^{m-1}
a_d^{m-1-n}D_{k,n}.
\label{eq:discounted-noise}
\end{equation}
For each fixed $m$, the summands in \eqref{eq:discounted-noise} form a
rectangular matrix martingale difference sequence.

By \eqref{eq:xi-increment-bound},
\begin{equation}
\|a_d^{m-1-n}D_{k,n}\|_{\mathrm{op}}
\le2\alpha_d B_d=:R_d.
\end{equation}
Moreover, by \eqref{eq:truncated-gradient-variance}, the predictable
quadratic variation in both rectangular directions is bounded by
\begin{align}
\sigma_d^2\le C_g\alpha_d^2
\sum_{r=0}^{\infty}a_d^{2r}
=\frac{C_g\alpha_d^2}{1-a_d^2}\le C\frac{d\alpha_d}{\gamma}.
\label{eq:discounted-variance}
\end{align}
Under $\alpha_d\le\frac{c_0}{d\log d}$,
we have
\begin{equation}
\sigma_d^2\le\frac{Cc_0}{\gamma\log d},
\qquad R_d=o((\log d)^{-1}).
\label{eq:freedman-scales}
\end{equation}
The Matrix Freedman inequality therefore gives for every fixed $\delta>0$,
\begin{equation}
\mathbb P(\|Y_{k,m}\|_{\mathrm{op}}\ge\delta)
\le(d+d_k)\exp\left[-\frac{\delta^2/2}
{\sigma_d^2+R_d\delta/3}\right].
\label{eq:freedman-discounted}
\end{equation}
For sufficiently large $d$, \eqref{eq:freedman-scales} implies
\begin{equation}
\mathbb P(\|Y_{k,m}\|_{\mathrm{op}}\ge\delta)
\le(d+d_k)d^{-c\delta^2/c_0}
\label{eq:freedman-polynomial}
\end{equation}
for some constant $c>0$ independent of $m$, $T$, and $d$. Choose $\delta:=\frac{C_*-C_0}{4}>0$ and set $\beta:=\max\{1,c_w\}$.
Since $\max_k d_k\le d^{c_w}$, for all sufficiently large $d$, $d+d_k\le 2d^\beta$. Moreover, since $\alpha_d=\Omega(d^{-\iota})$, we have $N_T\le C_Td^{1+\iota}$. Therefore, by \eqref{eq:freedman-polynomial} and a union bound over
$k\in\mathcal K$ and $1\le m\le N_T$,
\begin{align}
\mathbb P\left(\max_{k\in\mathcal K}\max_{1\le m\le N_T}\|Y_{k,m}\|_{\mathrm{op}}>\delta\right)\le2KC_T
d^{1+\iota+\beta-c\delta^2/c_0}.
\label{eq:noise-union-bound}
\end{align}
Choose $c_0>0$ sufficiently small so that
\begin{equation}
\frac{c\delta^2}{c_0}>1+\iota+\beta.
\label{eq:c0-condition}
\end{equation}
Then the exponent in \eqref{eq:noise-union-bound} is negative, and hence
\begin{equation}
\mathbb P\left(\max_{k\in\mathcal K}\max_{1\le m\le N_T}\|Y_{k,m}\|_{\mathrm{op}}>\delta\right)\to0.
\label{eq:noise-event-high-prob}
\end{equation}
Denote the complementary high-probability event by
$\mathcal A_{\mathrm{noise}}$.

We also control the truncation tail
\eqref{eq:tail-error-definition}. On $\{n<J\}$, the stochastic gradient $g_k^{(n)}$ has at most polynomial growth in
$x^{(n)}$. Hence, for some finite constant $q$,
\begin{equation}
\mathbb E\left[\|g_k^{(n)}\|_{\mathrm{op}}^2
\,\middle|\,\mathcal F_n\right]\le d^q.
\end{equation}
Combining Cauchy--Schwarz with
\eqref{eq:conditional-truncation-bound},
\begin{align}
\mathbf 1_{\{n<J\}}\|\mathcal T_k^{(n)}\|_{\mathrm{op}}\le\mathbf 1_{\{n<J\}}
\left(\mathbb E[\|g_k^{(n)}\|_{\mathrm{op}}^2
\mid\mathcal F_n]\right)^{1/2}\left(\mathbb P(\mathcal A_n^c\mid\mathcal F_n)\right)^{1/2}\le
r_d,
\label{eq:tail-error-bound}
\end{align}
where
\begin{equation}
r_d\le d^{q/2}p_d^{1/2}=o(d^{-M})
\end{equation}
for every fixed $M>0$.

\paragraph{Step 4: Closing the stopping-time argument.}
On $\mathcal A_{\mathrm{data}}$, for every $m\le J$ and every
$n<m$, we have $n<J$ and $\mathcal A_n$ occurs. Hence
\begin{equation}
g_k^{(n)}=\widehat g_k^{(n)}=\bar g_k^{(n)}-\mathcal T_k^{(n)}+\widehat\xi_k^{(n)}.
\end{equation}
The SGD \eqref{eq:SGD} therefore gives, for $m\le J$,
\begin{equation}
W_k^{(m)}=a_d^mW_k^{(0)}-\alpha_d
\sum_{n=0}^{m-1}
a_d^{m-1-n}\bar g_k^{(n)}+\alpha_d
\sum_{n=0}^{m-1}a_d^{m-1-n}\mathcal T_k^{(n)}+Y_{k,m}.
\label{eq:stopped-unrolled-W}
\end{equation}
By Lemma \ref{lemma:bound-gradient}, uniformly on $\{n<J\}$,
\begin{equation}
\bar g_k^{(n)}=\frac1dH_k^{(n)}W_k^{(n)}+\mathcal E_{\mathrm{Stein},k}^{(n)},
\end{equation}
with $\|H_k^{(n)}\|_{\mathrm{op}}
\le C_H(C_*)$ and
$\|\mathcal E_{\mathrm{Stein},k}^{(n)}\|_{\mathrm{op}}\le C_E(C_*)d^{-3/2}$.
Therefore,
\begin{equation}
\|\bar g_k^{(n)}\|_{\mathrm{op}}
\le\frac{C_H(C_*)C_*}{d}
+C_E(C_*)d^{-3/2}.
\label{eq:mean-gradient-bound-stopped}
\end{equation}
Since $1-a_d=\frac{\alpha_d\gamma}{d}$,
we have
\begin{equation}
\alpha_d\sum_{r=0}^{\infty}a_d^r=\frac{d}{\gamma}.
\label{eq:discount-geometric-sum}
\end{equation}
Combining
\eqref{eq:stopped-unrolled-W}--\eqref{eq:discount-geometric-sum},
on
$\mathcal A_{\mathrm{data}}\cap\mathcal A_{\mathrm{noise}}$ and for
every $m\leq J\wedge N_T$,
\begin{align}
\|W_k^{(m)}\|_{\mathrm{op}}
\le\|W_k^{(0)}\|_{\mathrm{op}}
+\frac{C_H(C_*)C_*}{\gamma}+\frac{C_E(C_*)}{\gamma\sqrt d}+\frac{dr_d}{\gamma}+\delta
\le
C_0+\frac{C_H(C_*)C_*}{\gamma}+\delta+o(1).
\label{eq:closing-bound}
\end{align}
We now fix $C_*:=2C_0+1$ and choose $\gamma$ sufficiently
large (by Assumption \ref{assum:tied}.4) so that
\begin{equation}
\frac{C_H(C_*)C_*}{\gamma}
\le
\frac{C_*-C_0}{4}.
\label{eq:gamma-choice}
\end{equation}
For all sufficiently large $d$, \eqref{eq:closing-bound} then gives
\begin{equation}
\max_k\|W_k^{(m)}\|_{\mathrm{op}}<C_*
\qquad
\text{for every }m\leq J\wedge N_T
\end{equation}
on $\mathcal A_{\mathrm{data}}\cap\mathcal A_{\mathrm{noise}}$.
Suppose, on this event, that $J\le N_T$. Taking $m=J$ in
\eqref{eq:closing-bound} yields
\begin{equation}
\max_k\|W_k^{(J)}\|_{\mathrm{op}}<C_*,
\end{equation}
contradicting the definition \eqref{eq:stopping-time-J}. Therefore, $J>N_T$ on $\mathcal A_{\mathrm{data}}\cap\mathcal A_{\mathrm{noise}}$ for all
sufficiently large $d$.

Finally,
\begin{equation}
\mathbb P(J\le N_T)\le
\mathbb P(\mathcal A_{\mathrm{data}}^c)
+\mathbb P(\mathcal A_{\mathrm{noise}}^c)\to0
\end{equation}
by \eqref{eq:data-event-high-prob} and
\eqref{eq:noise-event-high-prob}. Hence
\begin{equation}
\mathbb P\left(\sup_{0\le n\le N_T}
\max_k\|W_k^{(n)}\|_{\mathrm{op}}\le C_*
\right)
\longrightarrow1,
\end{equation}
which proves the lemma.
\end{proof}

\subsubsection{Moment expansion}
Let $C_*$ be the constant in Lemma \ref{lemma:bound-tied-SGD}, and fix $T>0$ and $\rho>C_*^2$. Denote
\begin{equation}
\mathcal E_d(T):=\left\{\sup_{0\le n\le N_T}
\max_{1\le k\le K}\|W_k^{(n)}\|_{\mathrm{op}}
\le C_*\right\},
\qquad N_T:=\left\lfloor\frac{T}{\tau_d}\right\rfloor .
\end{equation}
Then $\mathbb P(\mathcal E_d(T))\to1$ by
Lemma \ref{lemma:bound-tied-SGD}.

For the compactness argument later on, it is convenient to introduce the separable closed subspace
\begin{equation}
X_\rho^0:=\left\{\mu\in X_\rho:\lim_{w\to\infty}
\sup_{|\alpha|=w}\frac{|\mu_\alpha|}{\rho^w}=0
\right\}.
\end{equation}
Equipped with the norm inherited from $X_\rho$, \(X_\rho^0\) is a separable Banach space.

Since the empirical moment sequence need not belong to $X_\rho^0$ on the exceptional event $\mathcal E_d(T)^c$, we introduce
an auxiliary spectrally bounded process. For the fixed time horizon $T$, define
\begin{equation}
\check W_k^{(n)}:=\begin{cases}
W_k^{(n)}, & \text{on }\mathcal E_d(T),\\
W_k^{(0)}, & \text{on }\mathcal E_d(T)^c,
\end{cases}
\qquad
0\le n\le N_T,
\label{eq:auxiliary-W}
\end{equation}
and let
\begin{equation}
\check S_k^{(n)}:=\check W_k^{(n)}(\check W_k^{(n)})^T.
\end{equation}
For a multi-index $\alpha=(k_1,\ldots,k_w)$, define
\begin{equation}
\check\mu_\alpha^{(n)}:=\frac1d
\Tr\left[\check S_{k_1}^{(n)}\cdots\check S_{k_w}^{(n)}\right],
\end{equation}
and let $\check\mu^{(d)}(t)$ denote its piecewise-constant interpolation,
\begin{equation}
\check\mu^{(d)}(t)=\check\mu^{(n)},
\qquad
t\in[n\tau_d,(n+1)\tau_d).
\label{eq:auxiliary-moment-process}
\end{equation}
Since $C_*>C_0$, we have almost surely
\begin{equation}
\sup_{0\le n\le N_T}\max_{1\le k\le K}\|\check W_k^{(n)}\|_{\mathrm{op}}
\le C_*.
\label{eq:auxiliary-uniform-spectral-bound}
\end{equation}
Consequently, for every $|\alpha|=w$,
\begin{equation}
|\check\mu_\alpha^{(d)}(t)|\le C_*^{2w},\qquad t\in[0,T],
\label{eq:auxiliary-moment-bound}
\end{equation}
and hence, almost surely (w.r.t. the initialization),
\begin{equation}
\check\mu^{(d)}(t)\in X_\rho^0
\qquad\text{for all }t\in[0,T].
\end{equation}
Moreover,
\begin{equation}
\mathbb P\left(
\check\mu^{(d)}(t)=\tilde\mu^{(d)}(t)
\text{ for all }t\in[0,T]\right)\ge\mathbb P(\mathcal E_d(T))\longrightarrow1.
\label{eq:auxiliary-equals-original}
\end{equation}
All compactness arguments below will be carried out for $\check\mu^{(d)}$, and the resulting convergence will subsequently be
transferred to the original empirical trajectory $\tilde\mu^{(d)}$ using \eqref{eq:auxiliary-equals-original}.

We define empirical vector field $V_{d}(\tilde\mu^{(d)}(t))$ as 
\begin{equation}
[V_d(\tilde{\mu}^{(d)})]_\alpha =\frac{1}{d}\sum_{\substack{r=1 \\ k_r \in \mathcal{K}}}^w\Tr\left[S_{k_1}^{(n)}\ldots A_{k_r}^{(n)}\ldots S^{(n)}_{k_w}\right]
    \label{eq:V_d-definition}
\end{equation}
where $A_k^{(n)} := - \frac{1}{4}\left( M_k^{(n)}  S_k^{(n)} +  S_k^{(n)} (M_k^{(n)})^T \right)$ and $M_k^{(n)} := H_k^{(n)} + \gamma I_d$ with $H_k^{(n)}$ defined in Lemma \ref{lemma:bound-gradient}. 

The following lemma establishes the integral equation for the empirical trajectory \eqref{eq:interpolation-tied}.
\begin{lemma}
\label{lemma:macro-tied-SGD}
For every $t\in[0,T]$, the
auxiliary process satisfies
\begin{equation}
\check\mu^{(d)}(t)=\check\mu^{(d)}(0)+\mathbf 1_{\mathcal E_d(T)}
\int_0^tV_d(\tilde\mu^{(d)}(s))\,ds+\check E_{\mathrm{total}}^{(d)}(t).
\label{eq:macro-integral-expansion}
\end{equation}
For every $\rho>C_*^2$, all terms in
\eqref{eq:macro-integral-expansion} belong to $X_\rho^0$ almost surely, and
\begin{equation}
\sup_{t\in[0,T]}\left\|\check E_{\mathrm{total}}^{(d)}(t)\right\|_\rho
\xrightarrow{\mathbb P}0.
\label{eq:macro-total-error}
\end{equation}
\end{lemma}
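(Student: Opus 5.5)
On $\mathcal E_d(T)^c$ we have $\check\mu^{(n)}\equiv\check\mu^{(0)}$ and the statement holds trivially with $\check E_{\mathrm{total}}^{(d)}\equiv0$. So we work on $\mathcal E_d(T)$ throughout, and all bounds use $\max_k\|W_k^{(n)}\|_{\rm op}\le C_*$ for $n\le N_T$.

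\textbf{Exact one-step expansion.} For a trainable $k$, write the update as $W_k^{(n+1)}=W_k^{(n)}+\Delta_k^{(n)}$ with
\[
\Delta_k^{(n)}=-\alpha_d g_k^{(n)}-\tfrac{\alpha_d\gamma}{d}W_k^{(n)},
\]
and decompose $g_k^{(n)}=\bar g_k^{(n)}+\xi_k^{(n)}$. Then
\[
S_k^{(n+1)}-S_k^{(n)}=\Delta_kW_k^T+W_k\Delta_k^T+\Delta_k\Delta_k^T .
\]
Inserting this into $\frac1d\Tr[S_{k_1}\cdots S_{k_w}]$ and expanding the product gives three kinds of terms:
\begin{itemize}
\item[(a)] terms linear in one increment, coming from the mean part $\bar g$;
\item[(b)] terms linear in one increment, coming from the noise part $\xi$;
\item[(c)] all terms of order at least two in the increments.
\end{itemize}

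\textbf{Drift.} By Lemma \ref{lemma:bound-gradient}, $\bar g_k=\frac1dH_kW_k+\mathcal E_{\rm Stein}$. The mean linear contribution is therefore
\[
-\frac{\alpha_d}{d}\bigl(M_kS_k+S_kM_k^T\bigr)=\frac{4\alpha_d}{d}A_k=\tau_dA_k,
\]
up to the Stein error. Summing over positions reproduces exactly $\tau_d[V_d]_\alpha$, so the sum over steps is the Riemann sum $\int_0^tV_d\,ds$, with a boundary error of size $O(\tau_d\,w\,C_*^{2w})$. The Stein error contributes at most $N_T\alpha_d\,w\,C_*^{2w}d^{-3/2}=O(w\,C_*^{2w}d^{-1/2})$. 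After dividing by $\rho^w$ with $\rho>C_*^2$, the factor $w(C_*^2/\rho)^w$ is bounded uniformly in $w$, so both errors vanish in $\|\cdot\|_\rho$ uniformly in $\alpha$.

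\textbf{Quadratic terms.} A second-order term is bounded by $\frac1d$ times the trace norm of a product containing two increments. Using $\|\Delta\|_{\rm op}\lesssim\alpha_d\sqrt d\,\mathrm{polylog}(d)$ on $\mathcal A_n$ is too crude. Instead take the conditional expectation, where Lemma-\ref{lemma:bound-gradient}-type Hölder bounds give $\frac1d\mathbb E|\Tr[\cdots\xi\cdots\xi^T\cdots]|\le w^2C_*^{2w}\alpha_d^2\,O(1)$: the matrix $\xi W^T$ has Frobenius norm of order $\sqrt d$, so its normalized trace is $O(1)$. Summing over $N_T$ steps gives $O(\alpha_d d)\cdot w^2C_*^{2w}\to0$, since $\alpha_dd\le c_0/\log d$. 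The expected second-order term is deterministic-like, and its fluctuation is handled together with the martingale below. Cubic and higher terms are smaller still. The factor $w^2$ (or $w^j$ for order $j$) is again absorbed by $\rho>C_*^2$; in higher orders there are $\binom{w}{j}$ choices of positions, which is controlled because each extra increment brings a factor $\alpha_d\sqrt d\,\mathrm{polylog}\to0$.

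\textbf{Martingale term (main obstacle).} The sum of (b) is $\sum_n\tau_d\,\frac{d}{4}\,\frac1d\Tr[\cdots(\xi_kW_k^T+W_k\xi_k^T)\cdots]$. For fixed $\alpha$, the increment variance is at most $\alpha_d^2d^{-2}\mathbb E\|\xi\|_F^2\,C^{2w}\lesssim\alpha_d^2d^{-1}C^{2w}$: the scalar $\frac1d\Tr[B\xi]$ with $\|B\|_{\rm op}$ bounded has variance $O(1/d)$, by a Stein/Poincaré argument on $x$. Summing over $N_T\sim d/\alpha_d$ steps gives total variance $O(\alpha_d C^{2w})\to0$. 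Doob's inequality then yields $\sup_t$ control for each fixed $\alpha$. Uniformity in $\|\cdot\|_\rho$ is the key difficulty. I would use the truncation and stopping from Lemma \ref{lemma:bound-tied-SGD}, with Freedman's inequality giving tails $\exp(-c\epsilon^2\rho^{2w}/(\alpha_dC_*^{4w}))$, and take a union bound over the $K^w$ words of length $w\le w_0$. For long words with $w>w_0$, the crude bound $|\check\mu_\alpha|\le C_*^{2w}$ gives $(C_*^2/\rho)^w$ directly, which is small for $w_0$ large. Letting $d\to\infty$ and then $w_0\to\infty$ gives \eqref{eq:macro-total-error}. Membership of all terms in $X^0_\rho$ follows from these same geometric bounds.
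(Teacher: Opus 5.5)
Your proposal is correct and follows essentially the paper's argument. It uses the exact one-step expansion into the $\tau_d A_k$ drift (giving the Riemann sum of $V_d$ plus an $O(\tau_d)$ cell error), a first-order martingale term controlled by Doob's inequality, an $O(\alpha_d^2+\alpha_d\tau_d+\tau_d^2+\alpha_d d^{-3/2})$ remainder summed in $L^1$, and the upgrade to $\|\cdot\|_\rho$ by cutting at a degree $w_\star$ and using the a priori bound $(2+TBw)C_*^{2w}$ on the error for longer words.

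Two minor points. The conditional estimates should be stated on the $\mathcal F_n$-measurable event $\{n<J\}$ rather than on $\mathcal E_d(T)$ itself (note $\mathcal E_d(T)=\{J>N_T\}$). After the cut only finitely many words of degree at most $w_\star$ remain, so plain Doob $L^2$ bounds suffice there and the Freedman/truncation step is unnecessary.
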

\begin{proof}
Define
\begin{equation}
J:=\inf\left\{n\ge0:\max_{1\le k\le K}
\|W_k^{(n)}\|_{\mathrm{op}}>C_*\right\}.
\end{equation}
By Lemma \ref{lemma:bound-tied-SGD},
\begin{equation}
\mathbb P(J>N_T)\to1.
\label{eq:macro-stopping-high-prob}
\end{equation}
All estimates below are uniform on $\{n<J\}$.

\paragraph{Step 1: One-step expansion of the attention matrices.}
For $k\in\mathcal K$, Lemma \ref{lemma:bound-gradient} gives
\begin{equation}
\bar g_k^{(n)}=\frac1d H_k^{(n)}W_k^{(n)}+\mathcal E_{\mathrm{Stein},k}^{(n)},
\end{equation}
where $\|H_k^{(n)}\|_{\mathrm{op}}
\le C_H$ and $\|\mathcal E_{\mathrm{Stein},k}^{(n)}\|_{\mathrm{op}}
\le C_Ed^{-3/2}$, uniformly on $\{n<J\}$. Define
\begin{equation}
\xi_k^{(n)}:=g_k^{(n)}-\bar g_k^{(n)},
\qquad
M_k^{(n)}:=H_k^{(n)}+\gamma I_d.
\end{equation}
Then $\mathbb E[\xi_k^{(n)}
\mid\mathcal F_n]=0$,
and the SGD \eqref{eq:SGD} can be written as
\begin{equation}
W_k^{(n+1)}=W_k^{(n)}-\frac{\tau_d}{4}M_k^{(n)}W_k^{(n)}-\alpha_d\xi_k^{(n)}
-\alpha_d\mathcal E_{\mathrm{Stein},k}^{(n)}.
\label{eq:W-update-macro}
\end{equation}
For $k\notin\mathcal K$, $W_k^{(n+1)}=W_k^{(n)}$. Let
\begin{equation}
\delta W_k^{(n)}:=W_k^{(n+1)}-W_k^{(n)}.
\end{equation}
Expanding $S_k^{(n+1)}=\left(W_k^{(n)}+\delta W_k^{(n)}\right)
\left(W_k^{(n)}+\delta W_k^{(n)}\right)^T$ gives
\begin{equation}
S_k^{(n+1)}=S_k^{(n)}+\tau_d A_k^{(n)}+\alpha_d B_k^{(n)}+C_k^{(n)},
\label{eq:S-one-step-expansion}
\end{equation}
where, for $k\in\mathcal K$,
\begin{equation}
A_k^{(n)}:=-\frac14
\left(M_k^{(n)}S_k^{(n)}+S_k^{(n)}(M_k^{(n)})^T\right),
\label{eq:A-definition}
\end{equation}
\begin{equation}
B_k^{(n)}:=-\left(\xi_k^{(n)}(W_k^{(n)})^T+W_k^{(n)}(\xi_k^{(n)})^T\right),
\label{eq:B-definition}
\end{equation}
and
\begin{equation}
C_k^{(n)}:=-\alpha_d
\left(\mathcal E_{\mathrm{Stein},k}^{(n)}
(W_k^{(n)})^T+W_k^{(n)}(\mathcal E_{\mathrm{Stein},k}^{(n)})^T\right)+
\delta W_k^{(n)}(\delta W_k^{(n)})^T.
\label{eq:C-definition}
\end{equation}
For $k\notin\mathcal K$, we set
\begin{equation}
A_k^{(n)}=B_k^{(n)}=C_k^{(n)}=0.
\end{equation}
On $\{n<J\}$,
\begin{equation}
\|S_k^{(n)}\|_{\mathrm{op}}\le C_*^2,
\qquad
\|M_k^{(n)}\|_{\mathrm{op}}\le C_M
\label{eq:macro-spectral-bounds}
\end{equation}
for a constant $C_M$ independent of $d$ and $n$. In particular,
\begin{equation}
\|A_k^{(n)}\|_{\mathrm{op}}\le C_A.
\label{eq:A-bound}
\end{equation}
We shall also use moment bounds for the stochastic gradient noise. Similarly to the estimates in Lemma \ref{lemma:bound-tied-SGD} we can write
\begin{equation}
\mathbf 1_{\{n<J\}}
\mathbb E\left[\|\xi_k^{(n)}\|_F^p
\,\middle|\,\mathcal F_n
\right]\le
C_p d^{p/2}.
\label{eq:xi-F-moment}
\end{equation}

Fix a multi-index $\alpha=(k_1,\ldots,k_w)$.
Using \eqref{eq:S-one-step-expansion} in every factor of $\mu_\alpha^{(n+1)}=\frac1d\Tr\left[S_{k_1}^{(n+1)}\cdots S_{k_w}^{(n+1)}\right]$,
we separate the expansion into four classes.

The zeroth-order term is $\mu_\alpha^{(n)}$. The terms containing exactly one factor $\tau_d A_{k_r}^{(n)}$ define the empirical drift
\begin{equation}
[V_d(\mu^{(n)})]_\alpha:=\sum_{\substack{1\le r\le w\\ k_r\in\mathcal K}}
\frac1d\Tr\left[S_{k_1}^{(n)}\cdots A_{k_r}^{(n)}\cdots S_{k_w}^{(n)}\right].
\label{eq:Vd-definition-macro}
\end{equation}
The terms containing exactly one factor $\alpha_d B_{k_r}^{(n)}$ define the martingale
\begin{equation}
\eta_\alpha^{(n)}:=\sum_{\substack{1\le r\le w\\ k_r\in\mathcal K}}\frac1d\Tr\left[S_{k_1}^{(n)}\cdots B_{k_r}^{(n)}\cdots
S_{k_w}^{(n)}\right].
\label{eq:eta-definition}
\end{equation}
Since all $S_k^{(n)}$ are $\mathcal F_n$-measurable and
$B_k^{(n)}$ is linear in $\xi_k^{(n)}$, $\mathbb E[\eta_\alpha^{(n)}\mid\mathcal F_n]=0$.

Finally, let $R_\alpha^{(n)}$ denote the sum of all remaining terms in the expansion. Thus $R_\alpha^{(n)}$
contains, in particular,
\begin{itemize}
\item every term involving at least one $C_{k_r}^{(n)}$;
\item every term involving at least two first-order increments chosen from $\{\tau A_{k_r}^{(n)},\alpha B_{k_r}^{(n)}\}$.
\end{itemize}
With this definition we obtain the expansion:
\begin{equation}
\mu_\alpha^{(n+1)}-\mu_\alpha^{(n)}=\tau_d[V_d(\mu^{(n)})]_\alpha
+\alpha_d\eta_\alpha^{(n)}+R_\alpha^{(n)}.
\label{eq:moment-one-step}
\end{equation}

\paragraph{Step 2: Vanishing of the remainder.}
We next bound the remainder. For every fixed multi-index
$\alpha$, there exists a constant $C_\alpha<\infty$ such that,
uniformly on $\{n<J\}$,
\begin{equation}
\mathbb E\left[|R_\alpha^{(n)}|\,\middle|\,\mathcal F_n
\right]\le C_\alpha\left(\alpha_d^2+\alpha_d\tau_d+\tau_d^2+\alpha_d d^{-3/2}\right).
\label{eq:remainder-one-step-bound}
\end{equation}
To see this, first bound terms containing at least two $B$-factors, let $P_0,P_1,\cdots$ denote arbitrary products of the remaining $S$- and $A$-matrices. On $\{n<J\}$, their operator norms are bounded by a constant depending only on $\alpha$. 
We obtain, for fixed $r_B\ge2$,
\begin{equation}
\frac1d\mathbb E\left[
\left|\Tr(P_0B_1P_1\cdots B_{r_B}P_{r_B})\right|\,\middle|\,\mathcal F_n\right]\le C_\alpha d^{(r_B-2)/2}.
\label{eq:multiple-B-bound}
\end{equation}
Hence the corresponding contribution is bounded by
\begin{equation}
C_\alpha \alpha_d^{r_B}\tau_d^{r_A}
d^{(r_B-2)/2}\le C_\alpha\alpha_d^2
\end{equation}
for all sufficiently large $d$, because $\alpha_d\sqrt d\to0$.
A term containing exactly one $\alpha_d B$ and at least one $\tau_d A$ is bounded by $C_\alpha\alpha_d\tau_d$, while a term containing at least two $\tau_d A$ factors is bounded by $C_\alpha\tau_d^2$.

It remains to consider terms containing at least one $C$-factor.
By \eqref{eq:C-definition}, each $C_k^{(n)}$ is a finite sum of terms
of the following types:
\begin{equation*}
O_{\mathrm{op}}(\alpha_d d^{-3/2}),
\qquad\tau_d^2\,U_1U_2^T,
\qquad\alpha_d\tau_d\,U_1\xi_k^T,
\qquad\alpha_d^2\,\xi_k\xi_k^T,
\end{equation*}
together with terms involving
$\mathcal E_{\mathrm{Stein},k}^{(n)}$ that are of smaller order.
Here the $U_i$ have uniformly bounded operator norm and
$\mathcal O(\sqrt d)$ Frobenius norm. Applying the same trace inequalities, Hölder's inequality, and \eqref{eq:xi-F-moment}, every product containing at least one $C$-factor is bounded in conditional expectation by $C_\alpha
\left(\alpha_d^2+\alpha_d\tau_d+\tau_d^2+\alpha_d d^{-3/2}\right)$, with any additional stochastic increment producing only an additional
factor of order $\alpha_d\sqrt d=o(1)$. This proves
\eqref{eq:remainder-one-step-bound}.

Define
\begin{equation}\label{eq:E_rem}
E_{\mathrm{rem},\alpha}^{(d)}(t):=\sum_{n=0}^{\lfloor t/\tau\rfloor-1}R_\alpha^{(n)}.
\end{equation}
Replace $R_\alpha^{(n)}$ temporarily by $R_\alpha^{(n)}\mathbf 1_{\{n<J\}}$. By \eqref{eq:remainder-one-step-bound},
\begin{align}
\mathbb E\left[\sum_{n=0}^{N_T-1}\mathbf 1_{\{n<J\}}
|R_\alpha^{(n)}|\right]\le N_T C_\alpha\left(\alpha_d^2+\alpha_d\tau_d+\tau_d^2+\alpha_d d^{-3/2}
\right)\le C_{\alpha,T}\left(d\alpha_d+\alpha_d+\frac{\alpha_d}{d}+d^{-1/2}\right).
\label{eq:cumulative-remainder-L1}
\end{align}
Since $\alpha_d\le\frac{c_0}{d\log d}$, the right-hand side tends to zero. Hence, by Markov's inequality,
\begin{equation}
\sup_{t\in[0,T]}\left|E_{\mathrm{rem},\alpha}^{(d)}(t)\right|
\xrightarrow{\mathbb P}0
\label{eq:fixed-remainder-vanish}
\end{equation}
on $\{J>N_T\}$. Together with
\eqref{eq:macro-stopping-high-prob}, this proves
\eqref{eq:fixed-remainder-vanish} for the original trajectory.

\paragraph{Step 3: Vanishing of the first-order martingale noise.}
Define
\begin{equation}\label{eq:E_noise}
E_{\mathrm{noise},\alpha}^{(d)}(t):=\sum_{n=0}^{\lfloor t/\tau_d\rfloor-1}\alpha_d\eta_\alpha^{(n)}.
\end{equation}
We claim that, for every fixed $\alpha$,
\begin{equation}
\mathbf 1_{\{n<J\}}\mathbb E\left[
(\eta_\alpha^{(n)})^2\,\middle|\,\mathcal F_n\right]\le\frac{C_\alpha}{d^2}.
\label{eq:eta-variance}
\end{equation}
Indeed, every constituent of
\eqref{eq:eta-definition} is a linear functional of $\xi_k^{(n)}$.
Since $\xi_k^{(n)}=g_k^{(n)}-\mathbb E[g_k^{(n)}\mid\mathcal F_n]$, it suffices to bound the second moment of the corresponding linear functional of $g_k^{(n)}$. After inserting
$g_k^{(n)}=\sum_{i,j}\partial_{(ijk)}\mathcal L(G^{(n)})
X_{ij}^{(n)}W_k^{(n)}$,
each term takes the form
\begin{equation}
\partial_{(ijk)}\mathcal L(G^{(n)})Z_{ij}(A),
\qquad
Z_{ij}(A):=\frac1d\Tr\left[A X_{ij}^{(n)}
\right],
\label{eq:Zij-definition}
\end{equation}
where, for fixed $\alpha$, $\|A\|_{\mathrm{op}}\le C_\alpha$.
Since $X_{ij}^{(n)}$ is a centered Gaussian quadratic-form matrix, standard Gaussian moment estimates give
\begin{equation}
\mathbb E\left[|Z_{ij}(A)|^4
\,\middle|\,\mathcal F_n\right]\le\frac{C_\alpha}{d^4}.
\label{eq:Z-fourth-moment}
\end{equation}
On the other hand, Assumption \ref{assum:tied}.1 and the uniform
moment bounds for $G^{(n)}$ imply
\begin{equation}
\mathbb E\left[\left|\partial_{(ijk)}\mathcal L(G^{(n)})\right|^4\,\middle|\,
\mathcal F_n\right]\le C_\alpha.
\end{equation}
Hence, by Cauchy--Schwarz,
\begin{equation}
\mathbb E\left[\left|\partial_{(ijk)}\mathcal L(G^{(n)})Z_{ij}(A)\right|^2\,\middle|\,\mathcal F_n\right]\le
\frac{C_\alpha}{d^2}.
\end{equation}
Since $K,L$, and the length of $\alpha$ are fixed, this proves
\eqref{eq:eta-variance}.

Therefore the stopped process
\begin{equation}
M_{\alpha,m}:=\sum_{n=0}^{m-1}\alpha_d\eta_\alpha^{(n)}\mathbf 1_{\{n<J\}}
\end{equation}
is a square-integrable martingale with predictable quadratic variation bounded by
\begin{align}
\mathbb E[\langle M_\alpha\rangle_{N_T}]\le
N_T\alpha_d^2\frac{C_\alpha}{d^2}\le
C_{\alpha,T}\frac{\alpha_d}{d}.
\end{align}
Doob's maximal inequality yields, for every $\varepsilon>0$,
\begin{equation}
\mathbb P\left(\max_{m\le N_T}|M_{\alpha,m}|>\varepsilon\right)\le
\frac{C_{\alpha,T}\alpha_d}{\varepsilon^2d}\to0
\end{equation}
Using again $\mathbb P(J>N_T)\to1$, we conclude that
\begin{equation}
\sup_{t\in[0,T]}
\left|E_{\mathrm{noise},\alpha}^{(d)}(t)\right|
\xrightarrow{\mathbb P}0.
\label{eq:fixed-noise-vanish}
\end{equation}

\paragraph{Step 4: From the discrete recursion to the integral equation.}
For $m(t):=\left\lfloor\frac{t}{\tau}\right\rfloor$,
summing \eqref{eq:moment-one-step} from $n=0$ to $m(t)-1$ gives
\begin{align}
\tilde\mu_\alpha^{(d)}(t)=\tilde\mu_\alpha^{(d)}(0)+\tau_d\sum_{n=0}^{m(t)-1}
[V_d(\mu^{(n)})]_\alpha+E_{\mathrm{noise},\alpha}^{(d)}(t)+E_{\mathrm{rem},\alpha}^{(d)}(t).
\end{align}
Since the interpolation \eqref{eq:interpolation-tied} is piecewise constant,
\begin{align}
\int_0^t[V_d(\tilde\mu^{(d)}(s))]_\alpha\,ds
=\tau_d\sum_{n=0}^{m(t)-1}
[V_d(\mu^{(n)})]_\alpha+(t-m(t)\tau_d)
[V_d(\mu^{(m(t))})]_\alpha.
\end{align}
Define the discretization error
\begin{equation}
E_{\mathrm{disc},\alpha}^{(d)}(t):=-(t-m(t)\tau_d)[V_d(\mu^{(m(t))})]_\alpha.
\end{equation}
Thus we define $\check E_{\mathrm{total},\alpha}^{(d)}:=\mathbf 1_{\mathcal E_d(T)}E_{\mathrm{total},\alpha}^{(d)}$ with
\begin{equation}
E_{\mathrm{total},\alpha}^{(d)}=E_{\mathrm{noise},\alpha}^{(d)}+E_{\mathrm{rem},\alpha}^{(d)}+E_{\mathrm{disc},\alpha}^{(d)},
\end{equation}
and \eqref{eq:macro-integral-expansion} follows coordinatewise.

On $\{n<J\}$, \eqref{eq:A-bound} implies that for $|\alpha|=w$,
\begin{equation}
|[V_d(\mu^{(n)})]_\alpha|\le B w C_*^{2w}
\label{eq:Vd-coordinate-bound}
\end{equation}
for a constant $B$ independent of $d,n$, and $w$. Consequently, for every fixed $\alpha$,
\begin{equation}
\sup_{t\in[0,T]}
|E_{\mathrm{disc},\alpha}^{(d)}(t)|\le\tau B|\alpha|C_*^{2|\alpha|}\to0.
\end{equation}
Combining this with
\eqref{eq:fixed-remainder-vanish} and
\eqref{eq:fixed-noise-vanish}, we have, for every fixed multi-index
$\alpha$,
\begin{equation}
\sup_{t\in[0,T]}
|E_{\mathrm{total},\alpha}^{(d)}(t)|
\xrightarrow{\mathbb P}0.
\label{eq:coordinate-total-error}
\end{equation}

\paragraph{Step 5: Convergence in $X_\rho^0$.}
We now upgrade the coordinatewise convergence to convergence in $X_\rho^0$ on $\mathcal E_d(T)$.
On $\mathcal E_d(T)$,
\begin{equation}
\sup_{t\in[0,T]}\sup_{|\alpha|=w}|\tilde\mu_\alpha^{(d)}(t)|\le C_*^{2w}.
\label{eq:moment-tail-bound-macro}
\end{equation}
Together with \eqref{eq:Vd-coordinate-bound} and
\begin{equation}
E_{\mathrm{total}}^{(d)}(t)=\tilde\mu^{(d)}(t)-\tilde\mu^{(d)}(0)
-\int_0^tV_d(\tilde\mu^{(d)}(s))\,ds,
\label{eq:E-total-integral}
\end{equation}
this yields, on $\mathcal E_d(T)$,
\begin{equation}
\sup_{t\in[0,T]}
\sup_{|\alpha|=w}
|E_{\mathrm{total},\alpha}^{(d)}(t)|
\le
(2+TBw)C_*^{2w}.
\label{eq:Etotal-tail-bound}
\end{equation}
Since $\rho>C_*^2$,
\begin{equation}
(2+TBw)
\left(\frac{C_*^2}{\rho}\right)^w
\longrightarrow0
\qquad\text{as }w\to\infty.
\end{equation}

Fix $\varepsilon>0$. Choose $w_\star$ sufficiently large that
\begin{equation}
\sup_{w>w_\star}(2+TBw)\left(\frac{C_*^2}{\rho}\right)^w<\frac{\varepsilon}{2}.
\label{eq:tail-choice-W0}
\end{equation}
For degrees $w\le w_\star$, there are only finitely many multi-indices,
since $K$ is fixed. Hence
\eqref{eq:coordinate-total-error} and a finite union bound imply
\begin{equation}
\max_{|\alpha|\le w_\star}
\sup_{t\in[0,T]}\frac{|E_{\mathrm{total},\alpha}^{(d)}(t)|
}{\rho^{|\alpha|}}\xrightarrow{\mathbb P}0
\end{equation}
on $\mathcal E_d(T)$.

By $\check E_{\mathrm{total}}^{(d)}(t):=\mathbf 1_{\mathcal E_d(T)}
E_{\mathrm{total}}^{(d)}(t)$,
$\check E_{\mathrm{total}}^{(d)}(t)=0$ on
$\mathcal E_d(T)^c$. On $\mathcal E_d(T)$,
\eqref{eq:Etotal-tail-bound}--\eqref{eq:tail-choice-W0}
show that
$\check E_{\mathrm{total}}^{(d)}(t)\in X_\rho^0$ and that
\begin{equation}
\sup_{t\in[0,T]}
\left\|\check E_{\mathrm{total}}^{(d)}(t)
\right\|_\rho
\xrightarrow{\mathbb P}0.
\label{eq:checked-total-error-vanish}
\end{equation}

Finally, on $\mathcal E_d(T)$ we have
$\check\mu^{(d)}=\tilde\mu^{(d)}$, and hence \eqref{eq:E-total-integral} gives
\begin{equation}
\check\mu^{(d)}(t)
=
\check\mu^{(d)}(0)
+
\mathbf 1_{\mathcal E_d(T)}
\int_0^t
V_d(\tilde\mu^{(d)}(s))\,ds
+
\check E_{\mathrm{total}}^{(d)}(t).
\end{equation}
On $\mathcal E_d(T)^c$, both the integral term and
$\check E_{\mathrm{total}}^{(d)}$ vanish, while by definition
$\check\mu^{(d)}(t)=\check\mu^{(d)}(0)$.
Therefore the same identity holds almost surely on the whole
probability space. This proves the lemma.
\end{proof}

\subsubsection{Compactness of the empirical trajectories}
\begin{lemma}
\label{lemma:compactness}
Under the conditions of Theorem \ref{theo:tied-SGD}, for any finite
$T>0$ and any $\rho>C_*^2$, the sequence of auxiliary moment processes $\{\check\mu^{(d)}(\cdot)\}_{d\ge1}$ is $C$-tight in $D([0,T];X_\rho^0)$\footnote{We say that a sequence of processes is $C$-tight in $D([0,T];X_\rho^0)$ if it is tight (relatively compact in distribution) and every subsequential weak limit is supported on $C([0,T];X_\rho^0)$.}.
\end{lemma}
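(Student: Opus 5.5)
The natural route is the standard criterion for $C$-tightness in $D([0,T];E)$ with $E$ a separable Banach space. It suffices to verify two things. First, compact containment: for each $\varepsilon>0$ there is a compact $\mathcal K_\varepsilon\subset X_\rho^0$ with $\mathbb P(\check\mu^{(d)}(t)\in\mathcal K_\varepsilon\ \forall t\in[0,T])\ge1-\varepsilon$. Second, an oscillation condition: for every $\eta>0$,
\[
\lim_{\delta\to0}\limsup_{d\to\infty}\mathbb P\Bigl(\sup_{|t-s|\le\delta}\|\check\mu^{(d)}(t)-\check\mu^{(d)}(s)\|_\rho>\eta\Bigr)=0 .
\]
This modulus involves arbitrary $s,t$ rather than the Skorokhod modulus. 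Together these give tightness with all limit points supported on $C([0,T];X_\rho^0)$, since the maximal jump of $\check\mu^{(d)}$ is controlled by this modulus as $\delta\to0$.

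For compact containment I would use the almost-sure bound \eqref{eq:auxiliary-moment-bound}. It gives $|\check\mu^{(d)}_\alpha(t)|\le C_*^{2|\alpha|}$ for all $t$ and $d$. Hence the whole trajectory lies in the set
\[
\mathcal K:=\Bigl\{\mu: |\mu_\alpha|\le C_*^{2|\alpha|}\ \text{for all }\alpha\Bigr\}.
\]
I would show $\mathcal K$ is compact in $X_\rho^0$. It is closed, and it is contained in $X_\rho^0$ because $(C_*^2/\rho)^w\to0$. For total boundedness, the tail is handled directly: given $\varepsilon$, choose $w_\star$ with $(C_*^2/\rho)^w<\varepsilon$ for $w>w_\star$, so every element of $\mathcal K$ has $\rho$-weighted tail below $\varepsilon$. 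The head consists of finitely many coordinates (since $K$ is fixed), each in a bounded interval, so it is covered by finitely many $\varepsilon$-balls. This step is therefore deterministic.

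For the oscillation condition I would use the integral representation \eqref{eq:macro-integral-expansion} from Lemma \ref{lemma:macro-tied-SGD}:
\[
\check\mu^{(d)}(t)-\check\mu^{(d)}(s)=\mathbf 1_{\mathcal E_d(T)}\int_s^tV_d(\tilde\mu^{(d)}(r))\,dr+\check E^{(d)}_{\rm total}(t)-\check E^{(d)}_{\rm total}(s).
\]
The error difference is at most $2\sup_t\|\check E^{(d)}_{\rm total}(t)\|_\rho$, which tends to $0$ in probability by \eqref{eq:macro-total-error}. For the drift, on $\mathcal E_d(T)$ the coordinate bound \eqref{eq:Vd-coordinate-bound} gives $|[V_d]_\alpha|\le Bw C_*^{2w}$. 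Consequently
\[
\Bigl\|\int_s^tV_d\,dr\Bigr\|_\rho\le|t-s|\,B\sup_w w\,(C_*^2/\rho)^w\le C_\rho\,\delta ,
\]
where $C_\rho<\infty$ because $\rho>C_*^2$. This deterministic Lipschitz bound plus the vanishing error yields the oscillation condition.

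The main point needing care is the topology. One must make sure the infinite-dimensional compactness uses the weighted norm with $\rho$ strictly larger than $C_*^2$, so that both the compact-set argument and the drift bound have summable tails; at $\rho=C_*^2$ neither works. One must also check that jumps of the piecewise-constant process are absorbed. Each jump has size at most $\tau_dC_\rho$ plus the error oscillation, so jumps vanish in probability and limit points are continuous. Measurability of $\check\mu^{(d)}$ as a $D([0,T];X_\rho^0)$-valued random variable follows from its having finitely many jumps and separability of $X_\rho^0$.
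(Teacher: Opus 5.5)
Your argument is correct and rests on the same estimates as the paper: the almost-sure moment bound for the auxiliary process, the drift bound \eqref{eq:Vd-coordinate-bound} giving the uniform Lipschitz constant $B\sup_{w}w(C_*^2/\rho)^w$ in $X_\rho^0$, and the uniform vanishing of $\check E^{(d)}_{\rm total}$ from Lemma \ref{lemma:macro-tied-SGD}. The only difference is packaging. The paper applies Arzel\`a--Ascoli to the continuous approximant $\hat\mu^{(d)}(t)=\check\mu^{(d)}(0)+\mathbf 1_{\mathcal E_d(T)}\int_0^tV_d(\tilde\mu^{(d)}(s))\,ds$, which needs the slightly larger compact set with bounds $(1+TBw)C_*^{2w}$, and then transfers tightness through the uniformly small difference; you instead verify compact containment and the uniform-modulus criterion for $\check\mu^{(d)}$ directly, which allows the sharper set $\{|\mu_\alpha|\le C_*^{2|\alpha|}\}$ at the cost of citing the $D$-space tightness and continuity-of-limits criteria.
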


\begin{proof}
Define the continuous approximation
\begin{equation}
\hat\mu^{(d)}(t):=\check\mu^{(d)}(0)+\mathbf 1_{\mathcal E_d(T)}
\int_0^tV_d(\tilde\mu^{(d)}(s))\,ds.
\label{eq:hat-mu-definition}
\end{equation}
By Lemma \ref{lemma:macro-tied-SGD},
\begin{equation}
\check\mu^{(d)}(t)-\hat\mu^{(d)}(t)=\check E_{\mathrm{total}}^{(d)}(t),
\end{equation}
and therefore
\begin{equation}
\sup_{t\in[0,T]}\left\|\check\mu^{(d)}(t)-\hat\mu^{(d)}(t)
\right\|_\rho\xrightarrow{\mathbb P}0.
\label{eq:check-hat-equivalence}
\end{equation}
We first establish compactness of the family
$\{\hat\mu^{(d)}\}$. By
\eqref{eq:auxiliary-uniform-spectral-bound} and
\eqref{eq:Vd-coordinate-bound}, for every multi-index
$\alpha$ of length $w$,
\begin{equation}
\sup_{t\in[0,T]}|\hat\mu_\alpha^{(d)}(t)|\le(1+TBw)C_*^{2w}=:a_w
\qquad\text{almost surely}.
\label{eq:hat-moment-bound}
\end{equation}
Indeed, on $\mathcal E_d(T)$ this follows from Lemma \ref{lemma:bound-tied-SGD}, while on $\mathcal E_d(T)^c$ the process $\hat\mu^{(d)}$ is constant.

Since
\begin{equation}
\frac{a_w}{\rho^w}=(1+TBw)\left(\frac{C_*^2}{\rho}\right)^w\longrightarrow0,
\end{equation}
the set
\begin{equation}
\mathcal K_{T,\rho}:=\left\{
\mu\in X_\rho^0:\sup_{|\alpha|=w}|\mu_\alpha|\le a_w\text{ for every }w\ge1\right\}
\label{eq:compact-moment-set}
\end{equation}
is compact in $X_\rho^0$.
This is because for every $\varepsilon>0$, the coordinates of degree larger than a sufficiently large $M$ contribute at most $\varepsilon$ uniformly over $\mathcal K_{T,\rho}$, whereas the coordinates of degree at most $M$ form a bounded subset of a finite-dimensional space.

Moreover, for $0\le s<t\le T$ and $|\alpha|=w$,
\begin{equation}
|\hat\mu_\alpha^{(d)}(t)-\hat\mu_\alpha^{(d)}(s)|
\le B|t-s|\,wC_*^{2w}.
\end{equation}
Consequently,
\begin{align}
\left\|\hat\mu^{(d)}(t)-\hat\mu^{(d)}(s)
\right\|_\rho\le B|t-s|
\sup_{w\ge1}w\left(\frac{C_*^2}{\rho}\right)^w
=:L_\rho|t-s|,
\label{eq:hat-equicontinuity}
\end{align}
where $L_\rho<\infty$ is independent of $d$.

Thus the trajectories $\hat\mu^{(d)}$ take values almost surely in
the compact set $\mathcal K_{T,\rho}$ and are uniformly
Lipschitz in $X_\rho^0$. By the Arzel\`a--Ascoli theorem, the family $\{\hat\mu^{(d)}\}$ is tight in
$C([0,T];X_\rho^0)$.

Finally, \eqref{eq:check-hat-equivalence} shows that
$\check\mu^{(d)}$ is asymptotically equivalent to the tight family
$\hat\mu^{(d)}$ in the uniform metric. It follows that
$\{\check\mu^{(d)}\}_{d\ge1}$ is tight in
$D([0,T];X_\rho^0)$, and every subsequential weak limit is supported on
$C([0,T];X_\rho^0)$. Hence the sequence is $C$-tight.
\end{proof}

\subsubsection{Global well-posedness}
We show below that $V_d$ converges uniformly to $V$ under the spectral bounds in Lemma \ref{lemma:bound-tied-SGD}.

\begin{lemma}
\label{lemma:drift_convergence}
Fix $C_*>0$ and $\rho>C_*^2$. Let
\begin{equation}
\mathcal A_d(C_*):=\left\{
(S_1,\ldots,S_K):S_k=W_kW_k^T,\ \max_{1\le k\le K}\|W_k\|_{\mathrm{op}}\le C_*\right\}.
\end{equation}
For $\mathbf S\in\mathcal A_d(C_*)$, let $\mu(\mathbf S)$ denote its associated moment sequence. Then
\begin{equation}
\sup_{\mathbf S\in\mathcal A_d(C_*)}
\left\|V_d(\mu(\mathbf S))-V(\mu(\mathbf S))
\right\|_\rho\longrightarrow0
\qquad
\text{as }d\to\infty.
\end{equation}
Consequently, on the spectral-bound event $\mathcal{E}_d(T)$,
\begin{equation}
\sup_{t\in[0,T]}
\left\|V_d(\tilde\mu^{(d)}(t))-V(\tilde\mu^{(d)}(t))\right\|_\rho
\xrightarrow{\mathbb P}0.
\end{equation}
\end{lemma}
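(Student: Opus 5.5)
Write the empirical drift coordinatewise. Since $A_k=-\tfrac14(M_kS_k+S_kM_k^T)$ with $M_k=H_k+\gamma I_d$, and $H_k=2\sum_{i,j,i',j'}\sum_l(\mathcal C_{ii'}\mathcal C_{jj'}+\mathcal C_{ij'}\mathcal C_{ji'})\mathbb E_x[\partial^2_{(ijk),(i'j'l)}\mathcal L(G)]S_l$ by \eqref{eq:H-definition}, we get $H_k=\sum_l 4\,\Gamma^{(d)}_{kl}S_l$. Here $\Gamma^{(d)}_{kl}$ is the finite-$d$ analogue of \eqref{eq:Gamma-definition}, with the Gaussian $G$ replaced by the true centered quadratic forms $G^{(d)}(x)$. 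Substituting gives
\[
[V_d(\mu)]_\alpha=-\sum_{r:k_r\in\mathcal K}\sum_l\Gamma^{(d)}_{k_rl}\bigl(\mu_{(\dots,l,k_r,\dots)}+\mu_{(\dots,k_r,l,\dots)}\bigr)-\frac{\gamma}{2}|\alpha|_{\mathcal K}\mu_\alpha .
\]
This is exactly $V$ with $\widetilde\Gamma(q(\mu))$ replaced by $\Gamma^{(d)}$. On $\mathcal A_d(C_*)$ we have $q\succeq0$, so $\widetilde\Gamma(q)=\Gamma(q)$. I would check the constant factors ($\tfrac14\cdot 2\cdot 2$ from the symmetrized trace, and the $\tfrac12$ in $\Gamma$) once carefully. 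Hence
\[
\bigl|[V_d(\mu)-V(\mu)]_\alpha\bigr|\le 2wK\max_{k,l}|\Gamma^{(d)}_{kl}-\Gamma_{kl}(q(\mu))|\,C_*^{2(w+1)} ,
\]
using $|\mu_\beta|\le C_*^{2|\beta|}$. Taking $\sup_w w(C_*^2/\rho)^w<\infty$, this yields $\|V_d-V\|_\rho\le C(\rho,C_*)\max_{kl}|\Gamma^{(d)}_{kl}-\Gamma_{kl}|$. The problem thus reduces to a uniform CLT for the finite family of expectations $\mathbb E\,\partial^2\mathcal L(G^{(d)})$.

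The key step is to show that
\[
\sup_{\mathbf S\in\mathcal A_d(C_*)}\bigl|\mathbb E_x\,\partial^2_{a,b}\mathcal L(G^{(d)})-\mathbb E\,\partial^2_{a,b}\mathcal L(G_{q^{(d)}})\bigr|\to0,
\]
where $G_q$ is the centered Gaussian with covariance $(\mathcal C_{ii'}\mathcal C_{jj'}+\mathcal C_{ij'}\mathcal C_{ji'})q_{kl}$. Two points matter here.

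\emph{The cumulant bounds are uniform.} In the proof of Theorem \ref{theo:general} and Corollary \ref{cor:symmetric}, the $m$-th cumulant ($m\ge3$) is bounded by a constant times $\|A\|_{op}^{m-2}\|A\|_F^2$. On $\mathcal A_d(C_*)$ this is $O((C_*^2/\sqrt d)^{m-2}C_*^4)$, uniformly in $\mathbf S$. The covariance of $G^{(d)}$ equals exactly that of $G_{q^{(d)}}$ at finite $d$, with no limit needed, by the Wick computation. All moments are also uniformly bounded.

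\emph{The cumulant bounds give a quantitative comparison.} Since $\partial^2\mathcal L$ is $C^2$ with polynomially growing derivatives by Assumption \ref{assum:tied}.1, I would prove the comparison by a Lindeberg/Stein interpolation argument, $G_s=\sqrt s\,G^{(d)}+\sqrt{1-s}\,G_q$, expanded to third order. The expected error involves third cumulants and is $O(d^{-1/2})$ uniformly. A more robust fallback is a compactness-by-contradiction argument. Suppose there is a sequence $\mathbf S^{(d)}$ violating uniform convergence. Pass to a subsequence along which $q^{(d)}\to q$, which is possible since $q$ is bounded by $C_*^4$. Theorem \ref{theo:general} then applies: Assumption \ref{assum:attention-index}.1 is needed only through $\|A\|_{op}\to0$, which holds uniformly here, while $\|S\|_F$ may be small, a degenerate case that only helps. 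This gives convergence of $\mathbb E\,\partial^2\mathcal L(G^{(d)})$ to $\mathbb E\,\partial^2\mathcal L(G_q)$. Continuity of $q\mapsto\mathbb E\,\partial^2\mathcal L(G_q)$, from polynomial growth and dominated convergence, closes the contradiction.

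The second statement then follows immediately. On $\mathcal E_d(T)$, every $\tilde\mu^{(d)}(t)$ is $\mu(\mathbf S)$ for some $\mathbf S\in\mathcal A_d(C_*)$, so the supremum over $t$ is dominated by the deterministic supremum over $\mathcal A_d(C_*)$, which tends to $0$. Combined with $\mathbb P(\mathcal E_d(T))\to1$ from Lemma \ref{lemma:bound-tied-SGD}, this gives convergence in probability.

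The main obstacle is the uniform CLT step. In the contradiction argument, one must make sure the proof of Theorem \ref{theo:general} tolerates $\|S_k\|_F/\sqrt d\to0$ for some $k$, where the ratio condition is ill-posed. I would therefore restate the argument directly in terms of the bounds $\|A\|_{op}\to0$ and $\|A\|_F=O(1)$, which are uniform on $\mathcal A_d(C_*)$.
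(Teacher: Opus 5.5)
Your reduction is the same as the paper's. Writing $H_k=4\sum_l\Gamma^{(d)}_{kl}S_l$ turns $V_d$ into $V$ with $\Gamma^{(d)}$ in place of $\Gamma(q)$. The coordinate bound $2Kw\,\delta_d\,C_*^{2(w+1)}$ together with $\sup_w w(C_*^2/\rho)^w<\infty$ reduces everything to $\delta_d\to0$. The second claim then follows deterministically on $\mathcal E_d(T)$.

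For $\delta_d\to0$, your fallback is the paper's own subsequence/contradiction argument, with one difference in how degenerate indices are handled.
\begin{itemize}
\item The paper splits indices into active ones ($q^*_{kk}>0$), where the norm-ratio condition holds and Corollary~\ref{cor:symmetric} applies, and inactive ones, where $G_{ijk}\to0$ in $L^2$.
\item You instead rerun the cumulant bound directly with the uniform estimates $\|\tilde A\|_{\mathrm{op}}=O(d^{-1/2})$ and $\|\tilde A\|_F=O(1)$ on the quadratic-form matrices. This avoids the case split altogether.
\end{itemize}

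Your primary route is genuinely different. A Stein interpolation gives the quantitative bound $\delta_d=O(d^{-1/2})$ uniformly over $\mathcal A_d(C_*)$, with no compactness. This is the content of the paper's Lemma~\ref{lemma:quantitative-gaussian-approximation}, which the paper proves only later for the untied model. There the rate is actually needed, because $\sqrt d\,b_k$ enters the effective bias. The compactness route is qualitative only, but it needs nothing beyond continuity and uniform integrability, which is all the tied theorem requires.

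One caveat on your primary route. Assumption~\ref{assum:tied}.1 only gives $\partial^2\mathcal L\in C^2$. So the interpolation should be closed by second-order Gaussian integration by parts on the quadratic forms. The error then comes from the fluctuation of $2z^\top\tilde A_a\tilde A_b z$ around its mean $\Xi_{ab}=2\,\mathrm{Tr}(\tilde A_a\tilde A_b)$, which is of size $\|\tilde A_a\|_{\mathrm{op}}\|\tilde A_b\|_F=O(d^{-1/2})$. A third-order Taylor/Lindeberg expansion in third cumulants would not work as stated, for two reasons:
\begin{itemize}
\item it would cost one more derivative of $\mathcal L$ than is assumed;
\item it does not apply directly, since $G^{(d)}$ is not a sum of independent summands.
\end{itemize}
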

\begin{proof}
Define
\begin{equation}
\Gamma_{kl}^{(d)}:=\frac12
\sum_{i,j,i',j'}
\left(\mathcal{C}_{ii'}\mathcal{C}_{jj'}+\mathcal{C}_{ij'}\mathcal{C}_{ji'}\right)
\mathbb E_x\left[\partial^2_{(ijk),(i'j'l)}\mathcal L(G^{(d)})\right].
\end{equation}
By the definition of $H_k$,
\begin{equation}
H_k=4\sum_{l=1}^K\Gamma_{kl}^{(d)}S_l.
\end{equation}
Hence the empirical vector field \eqref{eq:V_d-definition} can be written as
\begin{equation}
[V_d(\mu)]_\alpha
=-\sum_{\substack{r=1\\k_r\in\mathcal K}}^w
\sum_{l=1}^K\Gamma_{k_rl}^{(d)}
\left(\mu_{\alpha^{r,l,-}}+\mu_{\alpha^{r,l,+}}
\right)-\frac{\gamma}{2}|\alpha|_{\mathcal K}\mu_\alpha,
\end{equation}
where $\alpha^{r,l,-}$ and $\alpha^{r,l,+}$ denote the two multi-indices obtained by inserting $l$ before and after $k_r$, respectively.

The limiting vector field has the same form with $\Gamma_{kl}^{(d)}$
replaced by $\Gamma_{kl}(q)$ defined in \eqref{eq:Gamma-definition}.
Therefore, defining
\begin{equation}
\delta_d(C_*):=\sup_{\mathbf S\in\mathcal A_d(C_*)}\max_{k,l}\left|\Gamma_{kl}^{(d)}-\Gamma_{kl}(q(\mathbf S))\right|,
\end{equation}
we have, for every $|\alpha|=w$,
\begin{equation}
|[V_d(\mu)-V(\mu)]_\alpha|\le2Kw\,\delta_d(C_*)\,C_*^{2(w+1)}.
\end{equation}
Thus,
\begin{equation}
\|V_d(\mu)-V(\mu)\|_\rho\le2KC_*^2\delta_d(C_*)\sup_{w\ge1}w\left(\frac{C_*^2}{\rho}\right)^w.
\end{equation}
It remains to prove $\delta_d(C_*)\to0$.

Suppose otherwise. Then there exist $\epsilon>0$, a subsequence $d_m$, and
$\mathbf S^{(d_m)}\in\mathcal A_{d_m}(C_*)$ such that
\begin{equation}
\max_{k,l}\left|\Gamma_{kl}^{(d_m)}-\Gamma_{kl}(q^{(d_m)})
\right|\ge\epsilon.
\label{eq:gamma-bound-epsilon}
\end{equation}
Since $K$ is fixed and $|q_{kl}^{(d_m)}|\le C_*^4$,
after taking a further subsequence we may assume
$q^{(d_m)}\to q^*\in\mathbb S_+^K$.

Define the active index set
\begin{equation}
I:=\{k\in\{1,\ldots,K\}:q_{kk}^*>0\}.
\end{equation}
For every $k\in I$,
\begin{equation}
\frac1{d_m}\|S_k^{(d_m)}\|_F^2=q_{kk}^{(d_m)}\longrightarrow q_{kk}^*>0.
\end{equation}
Since $\|S_k^{(d_m)}\|_{\mathrm{op}}\le C_*^2$, it follows that
\begin{equation}
\frac{\|S_k^{(d_m)}\|_{\mathrm{op}}}
{\|S_k^{(d_m)}\|_F}
\longrightarrow0.
\end{equation}
Hence Corollary \ref{cor:symmetric} applies to
$k\in I$. For $k\notin I$, we instead have
$q_{kk}^{(d_m)}\to0$, which gives
\begin{equation}
G_{ijk}^{(d_m)}\longrightarrow0
\qquad\text{in }L^2.
\end{equation}
Moreover, all covariances involving an inactive index converge to zero. Combining them,
\begin{equation}
G^{(d_m)}\to G^*\sim\mathcal N(0,\Sigma(q^*))
\label{eq:G-subsequence-limit}
\end{equation}
in distribution. Therefore, by Assumption \ref{assum:tied}.1 and uniform integrability,
\begin{equation}
\mathbb E\left[\partial^2_{(ijk),(i'j'l)}
\mathcal L(G^{(d_m)})\right]\longrightarrow\mathbb E\left[\partial^2_{(ijk),(i'j'l)}\mathcal L(G^*)\right].
\end{equation}
Consequently, by the definitions of $\Gamma^{(d_m)}$ and
$\Gamma(q^*)$,
\begin{equation}
\Gamma_{kl}^{(d_m)}\longrightarrow\Gamma_{kl}(q^*)\qquad\text{for every }k,l.
\end{equation}
This contradicts \eqref{eq:gamma-bound-epsilon}, and hence
\begin{equation}
\delta_d(C_*)\to0.
\end{equation}
The claimed convergence in $X_\rho$ follows.
\end{proof}

The next lemma establishes the global well-posedness of \eqref{eq:tied_macroscopic_equation}.
\begin{lemma}
\label{lemma:global_existence}
Under the conditions of Theorem \ref{theo:tied-SGD}, let $C_*$ be the uniform spectral bound in Lemma \ref{lemma:bound-tied-SGD}. For every finite $T>0$ and every pair $C_*^2<\rho_-<\rho_+$, there exists a unique admissible trajectory $\bar\mu\in C([0,T];X_{\rho_-}^0)$ such that
\begin{equation}
\bar\mu(t)=\bar\mu(0)+\int_0^tV(\bar\mu(s))\,ds,
\qquad t\in[0,T],
\label{eq:limiting-integral-equation}
\end{equation}
where the identity is understood in \(X_{\rho_+}^0\). Moreover,
\begin{equation}
q(\bar\mu(t))\succeq0,
\qquad t\in[0,T],
\end{equation}
and $|\bar\mu_\alpha(t)|\le C_*^{2|\alpha|}$
for every multi-index $\alpha$.
\end{lemma}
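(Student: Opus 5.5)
Existence will come from the SGD trajectory itself, and uniqueness from a Gronwall estimate on the finite-dimensional coefficient $\widetilde\Gamma(q)$, which is the only nonlinearity. This way the Ovsyannikov argument of Lemma \ref{lemma:local_existence} is not needed globally.

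\textbf{Existence.} By Lemma \ref{lemma:compactness}, the auxiliary processes $\check\mu^{(d)}$ are $C$-tight in $D([0,T];X_{\rho_-}^0)$. Extract a subsequence converging weakly to a limit $\bar\mu$ supported on $C([0,T];X_{\rho_-}^0)$, and use Skorokhod representation to make the convergence almost sure and uniform in $\|\cdot\|_{\rho_-}$. The moment bound $|\check\mu^{(d)}_\alpha|\le C_*^{2|\alpha|}$ passes to the limit, and so does $q\succeq0$, because each $q(\check\mu^{(d)}(t))$ is a Gram matrix of the $S_k$ and the PSD cone is closed. To identify the limit, insert Lemma \ref{lemma:drift_convergence} into the identity \eqref{eq:macro-integral-expansion}; this replaces $V_d$ by $V$ up to an error vanishing in $\|\cdot\|_\rho$. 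The total error also vanishes by \eqref{eq:macro-total-error}, and $\mathbf 1_{\mathcal E_d(T)}\to1$ in probability. Next, $V:X_{\rho_-}\to X_{\rho_+}$ is Lipschitz on bounded sets by the estimates in the proof of Lemma \ref{lemma:local_existence} with $s=\rho_-$, $s'=\rho_+$. Hence $\int_0^tV(\check\mu^{(d)})\,ds\to\int_0^tV(\bar\mu)\,ds$ in $X_{\rho_+}^0$, and $\bar\mu$ solves \eqref{eq:limiting-integral-equation}. The initial condition is deterministic by Assumption \ref{assum:tied}.3.

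\textbf{Uniqueness.} This is the main obstacle, because $V$ loses one degree, so a naive Gronwall argument in a single $X_s$ fails, and Ovsyannikov alone gives uniqueness only up to a short time depending on the radius. I would exploit the structure of the equation: once the coefficient path $\Gamma(t):=\widetilde\Gamma(q(\mu(t)))$ is frozen, the hierarchy is linear and solvable. Given any admissible solution with $|\mu_\alpha|\le C^{|\alpha|}$, freezing $\Gamma(t)$ yields the linear system $\dot\mu=F_{\Gamma(t)}\mu+G\mu$. I would show that its solution in the class of geometrically bounded sequences is unique by realizing it as moments of an operator flow: with a fixed realization (e.g.\ operators $S_k$ in a tracial von Neumann algebra with the initial moments, or finite-$d$ approximants), solve $\dot S_k=-(\sum_l\Gamma_{kl}S_lS_k+S_k\sum_l\Gamma_{kl}S_l)/\ldots-\tfrac\gamma2 S_k$ for $k\in\mathcal K$. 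Alternatively, avoid realizations and use a power-series/analytic-semigroup argument. In that version, the generating function $\sum_\alpha\mu_\alpha z^\alpha$ satisfies a first-order linear PDE with bounded coefficients, so it is determined by characteristics on a polydisc of radius $<\rho_-^{-1}$.

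\textbf{Fixed point for uniqueness.} For two admissible solutions $\mu,\nu$, both with the a priori bound $C_*^{2|\alpha|}$, the linear uniqueness gives an explicit representation. The difference of the second moments then satisfies
\[
\|q(\mu(t))-q(\nu(t))\|\le C\int_0^t\|\Gamma^\mu(s)-\Gamma^\nu(s)\|\,ds\le C'\int_0^t\|q(\mu(s))-q(\nu(s))\|\,ds,
\]
using the local Lipschitz continuity of $\widetilde\Gamma$ and the uniform bound on the propagator in the weighted spaces. Gronwall gives $q(\mu)=q(\nu)$, hence $\Gamma^\mu=\Gamma^\nu$, hence $\mu=\nu$ by linear uniqueness. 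Uniqueness then upgrades subsequential convergence to convergence of the full sequence, which is what Theorem \ref{theo:tied-SGD} needs. The hardest part to make rigorous is the propagator bound controlling the degree loss; there I would first try the generating-function characteristics argument.
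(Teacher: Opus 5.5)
Your existence argument is the same as the paper's: $C$-tightness from Lemma~\ref{lemma:compactness}, Skorokhod representation, replacing $V_d$ by $V$ through Lemma~\ref{lemma:drift_convergence}, passing to the limit with the Cauchy estimate $V:X_{\rho_-}\to X_{\rho_+}$, and closedness of the PSD cone. The gap is in uniqueness. Your fixed-point argument rests on two things you do not establish: uniqueness of the frozen-coefficient linear hierarchy among geometrically bounded sequences, and a propagator bound strong enough for your Gronwall estimate on $q$. The tools you suggest would not provide them. Freezing $\Gamma$ makes the hierarchy linear but does not remove the degree loss, since $F_{\Gamma(t)}$ still maps $X_s$ into $X_{s'}$ only with a constant of order $1/(s'-s)$. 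Linear uniqueness is therefore no easier than nonlinear uniqueness, and an $X_s\to X_{s'}$ propagator bound over all of $[0,T]$ is exactly the Ovsyannikov-type estimate you were trying to avoid. An operator flow in a tracial algebra produces \emph{a} solution with the right initial moments. It does not show that every geometrically bounded solution equals its moments. The commutative generating function $\sum_\alpha\mu_\alpha z^\alpha$ does not satisfy a closed equation at all. The unknowns are indexed by ordered words in noncommuting letters, and the insertions $\alpha^{i,l,\pm}$ depend on letter positions rather than on the monomial $z^\alpha$. For example, $\mu_{(1,1,2,2)}$ and $\mu_{(1,2,1,2)}$ get merged.

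The detour is also unnecessary, because the short-time obstruction you cite is harmless here. Two admissible solutions $\mu_1,\mu_2$ are continuous into $X_{\rho_-}^0$ on the compact interval $[0,T]$, so they stay in a fixed ball of $X_{\rho_-}$. Since $\|\cdot\|_\sigma\le\|\cdot\|_{\rho_-}$ for $\sigma\ge\rho_-$, the estimates of Lemma~\ref{lemma:local_existence} give a single constant $L$ with $\|V(\mu_1(\tau))-V(\mu_2(\tau))\|_{\sigma'}\le\frac{L}{\sigma'-\sigma}\|\mu_1(\tau)-\mu_2(\tau)\|_\sigma$ for all $\rho_-\le\sigma<\sigma'\le\rho_+$ and all $\tau$. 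Let $w=\mu_1-\mu_2$ with $w(t_0)=0$. Iterating the integral identity over $n$ equally spaced intermediate scales gives
\[
\|w(t)\|_{\rho_+}\le\sup_{\tau\in[0,T]}\|w(\tau)\|_{\rho_-}\Bigl(\frac{eL(t-t_0)}{\rho_+-\rho_-}\Bigr)^{n}\longrightarrow0
\qquad\text{for } t-t_0<\frac{\rho_+-\rho_-}{eL}.
\]
The length of this window does not depend on $t_0$. Restarting loses no scale, because both solutions remain in $X_{\rho_-}$ on the whole interval. Local uniqueness from any coincidence time, combined with continuation (or finitely many restarts), therefore gives $\mu_1=\mu_2$ on $[0,T]$. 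This is the paper's uniqueness proof, and it makes the frozen-coefficient construction superfluous.
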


\begin{proof}
Fix $T>0$ and $C_*^2<\rho_-<\rho_+$. By Lemma
\ref{lemma:compactness}, the sequence
$\{\check\mu^{(d)}\}_{d\ge1}$ is $C$-tight in
$D([0,T];X_{\rho_-}^0)$. Hence, from any subsequence we may extract a further subsequence, still indexed by $d$, such that
\begin{equation}
\check\mu^{(d)}\to\mu^*
\qquad\text{in }D([0,T];X_{\rho_-}^0),
\label{eq:check-mu-subsequence-convergence}
\end{equation}
where $\mu^*\in C([0,T];X_{\rho_-}^0)$ almost surely.

We next rewrite the auxiliary equation in terms of the
limiting vector field. By Lemma \ref{lemma:macro-tied-SGD},
\begin{equation}
\check\mu^{(d)}(t)=\check\mu^{(d)}(0)+\mathbf 1_{\mathcal E_d(T)}
\int_0^tV_d(\tilde\mu^{(d)}(s))\,ds+\check E_{\mathrm{total}}^{(d)}(t),
\label{eq:check-empirical-integral-equation}
\end{equation}
with
\begin{equation}
\sup_{t\in[0,T]}\left\|\check E_{\mathrm{total}}^{(d)}(t)
\right\|_{\rho_+}\xrightarrow{\mathbb P}0.
\label{eq:check-total-error-rhoplus}
\end{equation}
Define
\begin{align}
R_d(t):=\check E_{\mathrm{total}}^{(d)}(t)+\int_0^t
\left[\mathbf 1_{\mathcal E_d(T)}V_d(\tilde\mu^{(d)}(s))-V(\check\mu^{(d)}(s))\right]ds.
\label{eq:global-Rd-definition}
\end{align}
Then
\begin{equation}
\check\mu^{(d)}(t)=\check\mu^{(d)}(0)+\int_0^t
V(\check\mu^{(d)}(s))\,ds+R_d(t).
\label{eq:approx-limit-integral-equation}
\end{equation}
We claim that
\begin{equation}
\sup_{t\in[0,T]}\|R_d(t)\|_{\rho_+}\xrightarrow{\mathbb P}0.
\label{eq:global-Rd-vanish}
\end{equation}
Indeed, on $\mathcal E_d(T)$ we have $\check\mu^{(d)}=\tilde\mu^{(d)}$, and therefore
\begin{align}
\mathbf 1_{\mathcal E_d(T)}\left\|V_d(\tilde\mu^{(d)}(s))-V(\check\mu^{(d)}(s))\right\|_{\rho_+}=\mathbf 1_{\mathcal E_d(T)}\left\|V_d(\tilde\mu^{(d)}(s))-V(\tilde\mu^{(d)}(s))
\right\|_{\rho_+}.
\end{align}
Lemma \ref{lemma:drift_convergence} implies
\begin{equation}
\mathbf 1_{\mathcal E_d(T)}\sup_{s\in[0,T]}\left\|V_d(\tilde\mu^{(d)}(s))-V(\tilde\mu^{(d)}(s))\right\|_{\rho_+}
\xrightarrow{\mathbb P}0.
\label{eq:drift-convergence-on-good-event}
\end{equation}
On the other hand, by
\eqref{eq:auxiliary-uniform-spectral-bound},
the auxiliary process always satisfies
\begin{equation}
|\check\mu_\alpha^{(d)}(s)|\le C_*^{2|\alpha|}.
\end{equation}
Since $\widetilde\Gamma$ is bounded on the corresponding bounded set of second moments, there exists a constant
$C_V=C_V(C_*,\rho_+)<\infty$ such that
\begin{equation}
\sup_{d\ge1}\sup_{s\in[0,T]}\|V(\check\mu^{(d)}(s))\|_{\rho_+}\le C_V\qquad\text{almost surely}.
\label{eq:V-auxiliary-uniform-bound}
\end{equation}
Consequently,
\begin{equation}
\mathbf 1_{\mathcal E_d(T)^c}\sup_{s\in[0,T]}\|V(\check\mu^{(d)}(s))\|_{\rho_+}
\xrightarrow{\mathbb P}0,
\end{equation}
because $\mathbb P(\mathcal E_d(T)^c)\to0$.
Together with
\eqref{eq:check-total-error-rhoplus} and
\eqref{eq:drift-convergence-on-good-event}, this proves
\eqref{eq:global-Rd-vanish}.

We also have $\check\mu^{(d)}(0)=\tilde\mu^{(d)}(0)$.
Assumption \ref{assum:tied}.3 gives convergence in probability of every coordinate, while Assumption \ref{assum:tied}.2 gives
$|\check\mu_\alpha^{(d)}(0)|\le C_0^{2|\alpha|}$.
Since $\rho_->C_*^2>C_0^2$, the contribution of sufficiently high
degrees is uniformly exponentially small, which yields
\begin{equation}
\check\mu^{(d)}(0)\xrightarrow{\mathbb P}\bar\mu(0)\qquad\text{in }X_{\rho_-}^0.
\label{eq:initial-Xrho-convergence}
\end{equation}

Because $X_{\rho_-}^0$ is separable, we may apply the Skorokhod representation theorem to the joint laws of $\left(\check\mu^{(d)},\check\mu^{(d)}(0),R_d\right)$.
After passing to a further subsequence and realizing these random
elements on a common probability space, we may assume that, almost
surely,
\begin{equation}
\check\mu^{(d)}\to\mu^*\qquad
\text{in }D([0,T];X_{\rho_-}^0),
\end{equation}
\begin{equation}
\check\mu^{(d)}(0)\longrightarrow\bar\mu(0)\qquad\text{in }X_{\rho_-}^0,
\end{equation}
and
\begin{equation}
\sup_{t\in[0,T]}\|R_d(t)\|_{\rho_+}
\longrightarrow0.
\end{equation}
Since $\mu^*$ is continuous, convergence in the Skorokhod topology implies
\begin{equation}
\sup_{t\in[0,T]}\left\|\check\mu^{(d)}(t)-\mu^*(t)\right\|_{\rho_-}\longrightarrow0
\qquad\text{almost surely}.
\label{eq:check-uniform-Skorokhod}
\end{equation}

The vector field $V$ satisfies the Cauchy estimate established in
Lemma \ref{lemma:local_existence}. Since the convergent trajectories
remain in a bounded subset of $X_{\rho_-}^0$, there exists
$L<\infty$ such that
\begin{equation}
\|V(\mu)-V(\nu)\|_{\rho_+}\le\frac{L}{\rho_+-\rho_-}
\|\mu-\nu\|_{\rho_-}
\end{equation}
for all states $\mu,\nu$ under consideration. Hence
\begin{align}
\sup_{t\in[0,T]}\left\|\int_0^t\left[V(\check\mu^{(d)}(s))-V(\mu^*(s))\right]ds\right\|_{\rho_+}\le
\frac{LT}{\rho_+-\rho_-}
\sup_{s\in[0,T]}\left\|\check\mu^{(d)}(s)-\mu^*(s)\right\|_{\rho_-}\longrightarrow0.
\end{align}
Passing to the limit in
\eqref{eq:approx-limit-integral-equation} yields
\begin{equation}
\mu^*(t)=\bar\mu(0)+\int_0^t
V(\mu^*(s))\,ds,\qquad t\in[0,T],
\label{eq:mu-star-limit-equation}
\end{equation}
as an identity in $X_{\rho_+}^0$.

It remains to verify that the limiting trajectory is admissible.
For every finite $d$ and every $t\in[0,T]$,
\begin{equation}
q_{kl}(\check\mu^{(d)}(t))=\frac1d\Tr\left[\check S_k^{(d)}(t)\check S_l^{(d)}(t)\right].
\end{equation}
Therefore
\begin{equation}
q(\check\mu^{(d)}(t))\succeq0
\qquad
\text{almost surely for every }d,t.
\end{equation}
Since the PSD cone is closed and convergence in
$X_{\rho_-}^0$ implies convergence of second-order coordinates,
\begin{equation}
q(\mu^*(t))\succeq0,\qquad t\in[0,T].
\end{equation}
Thus
\begin{equation}
\mathscr P(q(\mu^*(t)))=q(\mu^*(t)),
\end{equation}
and consequently
\begin{equation}
\widetilde\Gamma(q(\mu^*(t)))=\Gamma(q(\mu^*(t))).
\end{equation}
Therefore $\mu^*$ satisfies the original equation
\eqref{eq:tied_macroscopic_equation}.
Moreover, by
\eqref{eq:auxiliary-moment-bound},
\begin{equation}
|\check\mu_\alpha^{(d)}(t)|\le C_*^{2|\alpha|}
\qquad
\text{almost surely for every }d,t.
\end{equation}
Passing to the limit gives
\begin{equation}
|\mu_\alpha^*(t)|\le C_*^{2|\alpha|},\qquad
\alpha,\ t\in[0,T].
\label{eq:moment-bound}
\end{equation}

We finally establish uniqueness. Suppose that $\mu_1$ and $\mu_2$ are two admissible solutions of \eqref{eq:tied_macroscopic_equation}. Both are also solutions of \eqref{eq:V-definition}. By Lemma \ref{lemma:local_existence}, the solution of \eqref{eq:V-definition} is locally unique in the Banach scale. Hence, if $\mu_1(t_0)=\mu_2(t_0)$ at some $t_0<T$, the two solutions coincide on a finite interval to the right of $t_0$. Starting from $t_0=0$ and applying this local uniqueness with a contradiction argument shows that
\begin{equation}
\mu_1(t)=\mu_2(t),\qquad t\in[0,T].
\end{equation}
Thus the admissible solution is unique.

Since the initial condition $\bar{\mu}(0)$ is deterministic, uniqueness implies that the random limit $\mu^*$ is almost surely equal to a deterministic trajectory, which we denote by $\bar\mu$ as a unique admissible trajectory.
\end{proof}

\subsubsection{Final proof of Theorem \ref{theo:tied-SGD}}
Fix an arbitrary finite horizon $T>0$, and choose
\begin{equation}
C_*^2<\rho_-<\rho_+.
\end{equation}
By Lemma \ref{lemma:compactness}, the auxiliary trajectories
$\{\check\mu^{(d)}\}_{d\ge1}$ are $C$-tight in
$D([0,T];X_{\rho_-}^0)$.

By Lemma \ref{lemma:global_existence}, every subsequential weak limit is the same deterministic path $\bar\mu$, and therefore
\begin{equation}
\check\mu^{(d)}\to\bar\mu\qquad\text{in }
D([0,T];X_{\rho_-}^0).
\end{equation}
Since $\bar\mu$ is deterministic, convergence in distribution to
$\bar\mu$ is equivalent to convergence in probability:
\begin{equation}
d_{J_1}\left(\check\mu^{(d)},\bar\mu\right)\xrightarrow{\mathbb P}0,
\end{equation}
where $d_{J_1}$ denotes the Skorokhod $J_1$ metric.
Moreover, $\bar\mu$ is continuous. Convergence in the Skorokhod
topology to a continuous limit implies convergence in the uniform
topology. Hence
\begin{equation}
\sup_{t\in[0,T]}\left\|\check\mu^{(d)}(t)-\bar\mu(t)
\right\|_{\rho_-}\xrightarrow{\mathbb P}0.
\label{eq:auxiliary-uniform-convergence}
\end{equation}
For every fixed multi-index $\alpha$,
\begin{equation}
\sup_{t\in[0,T]}
\left|\check\mu_\alpha^{(d)}(t)-\bar\mu_\alpha(t)\right|\le\rho_-^{|\alpha|}\sup_{t\in[0,T]}\left\|\check\mu^{(d)}(t)-\bar\mu(t)
\right\|_{\rho_-},
\end{equation}
and therefore
\begin{equation}
\sup_{t\in[0,T]}
\left|\check\mu_\alpha^{(d)}(t)-\bar\mu_\alpha(t)
\right|\xrightarrow{\mathbb P}0.
\label{eq:auxiliary-coordinate-convergence}
\end{equation}
It remains only to transfer this convergence from the auxiliary
trajectory to the original trajectory. By
\eqref{eq:auxiliary-equals-original}, for every fixed multi-index $\alpha$ and every $\varepsilon>0$,
\begin{align}
\mathbb P\left(\sup_{t\in[0,T]}\left|\tilde\mu_\alpha^{(d)}(t)-\bar\mu_\alpha(t)\right|>\varepsilon
\right)\le\mathbb P(\mathcal E_d(T)^c)+\mathbb P\left(
\sup_{t\in[0,T]}\left|\check\mu_\alpha^{(d)}(t)-\bar\mu_\alpha(t)
\right|>\varepsilon\right).
\end{align}
The first term tends to zero by Lemma \ref{lemma:bound-tied-SGD}, while the second tends to zero by \eqref{eq:auxiliary-coordinate-convergence}. Consequently,
\begin{equation}
\lim_{d\to\infty}
\mathbb P\left(\sup_{t\in[0,T]}\left|\tilde\mu_\alpha^{(d)}(t)-\bar\mu_\alpha(t)\right|>\varepsilon\right)=0.
\end{equation}
This proves Theorem \ref{theo:tied-SGD}.

\subsection{Proof of Corollary \ref{cor:truncation-tied}}
\label{app:proof-cor-truncation-tied}
We define a projection operator $\Pi_M: X_s \to X_s$ for a $M \ge 2$, such that for any multi-index $\alpha$:
\begin{equation}
[\Pi_M \mu]_\alpha = \begin{cases} \mu_\alpha, & \text{if } |\alpha| \le M \\ 0, & \text{if } |\alpha| > M \end{cases}.
\end{equation}
Let $R:=C_*^2$. By Lemma \ref{lemma:global_existence},
\begin{equation}
|\bar\mu_\alpha(t)|\le R^{|\alpha|},
\qquad t\ge0.
\label{eq:true-moment-bound-truncation}
\end{equation}
In particular, $\|\bar q(t)\|_F\le K R^2$.

We first show that the finite-dimensional truncation is equivalently
the restriction of the extended infinite-dimensional ODE \eqref{eq:V-definition} to an
invariant subspace. Indeed, define
\begin{equation}
Y_M:=\left\{\mu:\mu_\alpha=0\text{ for every }|\alpha|>M
\right\}.
\end{equation}
If $\mu\in Y_M$ and $|\alpha|>M$, then every moment appearing in $[V(\mu)]_\alpha$ has degree either $|\alpha|$ or
$|\alpha|+1$, and hence vanishes. Therefore, $[V(\mu)]_\alpha=0$ for all $|\alpha|>M$.
Thus $Y_M$ is invariant under \eqref{eq:V-definition}, and $\mu^{(M)}$ is precisely the solution of
\begin{equation}
\frac{d}{dt}\mu^{(M)}(t)=V(\mu^{(M)}(t)),
\qquad
\mu^{(M)}(0)=\Pi_M\bar\mu(0).
\label{eq:truncated-full-equation}
\end{equation}
We next establish stability with respect to this truncation.
Let
\begin{equation}
\mathcal U:=\left\{q\in\mathbb R^{K\times K}:\|q\|_F\le KR^2+1\right\}.
\end{equation}
Since $\widetilde\Gamma$ is locally bounded and locally Lipschitz (see Lemma \ref{lemma:local_existence}), there exist finite constants $C_\Gamma,L_\Gamma$ such that
\begin{equation}
\max_{k,l}|\widetilde\Gamma_{kl}(q)|\le C_\Gamma,
\qquad q\in\mathcal U,
\label{eq:Gamma-uniform-truncation}
\end{equation}
and
\begin{equation}
\max_{k,l}
\left|\widetilde\Gamma_{kl}(q)-\widetilde\Gamma_{kl}(q')
\right|\le L_\Gamma\|q-q'\|_F,
\qquad q,q'\in\mathcal U.
\label{eq:Gamma-Lipschitz-truncation}
\end{equation}
Choose $s_0>R$ and $\gamma$ sufficiently large (Assumption \ref{assum:tied}.4) such that
\begin{equation}
\kappa:=\frac{4KC_\Gamma s_0}{\gamma}+\frac{4K^2L_\Gamma s_0^2R}{\gamma}<1.
\label{eq:truncation-kappa}
\end{equation}
Define the error
\begin{equation}
e^{(M)}(t):=\mu^{(M)}(t)-\bar\mu(t)
\end{equation}
and
\begin{equation}
E_M(t):=\|e^{(M)}(t)\|_{s_0}=\sup_{\alpha}\frac{|e_\alpha^{(M)}(t)|}{s_0^{|\alpha|}}.
\label{eq:EM-definition}
\end{equation}
Let $q^{(M)}(t):=q(\mu^{(M)}(t))$. As long as $q^{(M)}(t)\in\mathcal U$, we have
\begin{equation}
\|q^{(M)}(t)-\bar q(t)\|_F\le K s_0^2 E_M(t).
\label{eq:q-error-from-E}
\end{equation}
For a multi-index $\alpha=(k_1,\ldots,k_w)$, denote
\begin{equation}
r_\alpha:=|\alpha|_{\mathcal K}.
\end{equation}
Let $\alpha^{i,l,-}$ and $\alpha^{i,l,+}$ denote the
multi-indices obtained by inserting $l$ before and
after $k_i$, respectively. Subtracting the equations for
$\mu^{(M)}$ and $\bar\mu$, and using
\(\widetilde\Gamma(\bar q)=\Gamma(\bar q)\), gives
\begin{align}
\frac{d}{dt}e_\alpha^{(M)}=-\sum_{\substack{1\le i\le w\\k_i\in\mathcal K}}\sum_{l=1}^K\widetilde\Gamma_{k_il}(q^{(M)})\left(e_{\alpha^{i,l,-}}^{(M)}+e_{\alpha^{i,l,+}}^{(M)}\right)-\sum_{\substack{1\le i\le w\\k_i\in\mathcal K}}\sum_{l=1}^K\left[\widetilde\Gamma_{k_il}(q^{(M)})-\Gamma_{k_il}(\bar q)\right]\left(\bar\mu_{\alpha^{i,l,-}}+\bar\mu_{\alpha^{i,l,+}}\right)-\frac{\gamma}{2}r_\alpha e_\alpha^{(M)}.
\label{eq:error-equation-truncation}
\end{align}
If \(r_\alpha=0\), the right-hand side vanishes and hence
\begin{equation}
e_\alpha^{(M)}(t)=e_\alpha^{(M)}(0).
\end{equation}
Suppose now that \(r_\alpha\ge1\). Applying the variation-of-constants formula to \eqref{eq:error-equation-truncation} and using
\eqref{eq:Gamma-uniform-truncation} gives
\begin{align}
\frac{|e_\alpha^{(M)}(t)|}{s_0^w}
\le{}&\frac{|e_\alpha^{(M)}(0)|}{s_0^w}+2KC_\Gamma r_\alpha s_0\int_0^te^{-\frac{\gamma}{2}r_\alpha(t-u)}
E_M(u)\,du\nonumber\\
&+\frac{1}{s_0^w}\int_0^te^{-\frac{\gamma}{2}r_\alpha(t-u)}
\sum_{\substack{i:k_i\in\mathcal K}}
\sum_{l=1}^K\left|\widetilde\Gamma_{k_il}(q^{(M)}(u))
-\Gamma_{k_il}(\bar q(u))\right|\times
\left(|\bar\mu_{\alpha^{i,l,-}}(u)|+|\bar\mu_{\alpha^{i,l,+}}(u)|
\right)
\,du.
\label{eq:error-duhamel-first}
\end{align}
By \eqref{eq:Gamma-Lipschitz-truncation},
\eqref{eq:q-error-from-E}, and
\eqref{eq:true-moment-bound-truncation},
the second integral is bounded by
\begin{equation}
2K^2L_\Gamma s_0^2R\,r_\alpha\left(\frac{R}{s_0}\right)^w
\int_0^te^{-\frac{\gamma}{2}r_\alpha(t-u)}
E_M(u)\,du.
\end{equation}
Since
\begin{equation}
r_\alpha\int_0^te^{-\frac{\gamma}{2}r_\alpha(t-u)}\,du
\le\frac{2}{\gamma},
\end{equation}
and \(R/s_0<1\), we obtain
\begin{equation}
\frac{|e_\alpha^{(M)}(t)|}{s_0^w}\le\frac{|e_\alpha^{(M)}(0)|}{s_0^w}+\kappa\sup_{0\le u\le t}E_M(u),
\label{eq:error-coordinate-stability}
\end{equation}
with \(\kappa\) defined in \eqref{eq:truncation-kappa}. The same
inequality trivially holds when \(r_\alpha=0\).

Taking the supremum over \(\alpha\) and over \(0\le u\le t\), we
therefore obtain
\begin{equation}
\sup_{0\le u\le t}E_M(u)\le E_M(0)+\kappa\sup_{0\le u\le t}E_M(u).
\end{equation}
Since \(\kappa<1\),
\begin{equation}
\sup_{0\le u\le t}E_M(u)\le\frac{1}{1-\kappa}E_M(0).
\label{eq:error-stability-final}
\end{equation}
It remains to estimate the initial truncation error. By definition,
\begin{equation}
e_\alpha^{(M)}(0)=\begin{cases}
0,&|\alpha|\le M,\\
-\bar\mu_\alpha(0),&|\alpha|>M.
\end{cases}
\end{equation}
Using \eqref{eq:true-moment-bound-truncation},
\begin{align}
E_M(0)=\sup_{w>M}\sup_{|\alpha|=w}\frac{|\bar\mu_\alpha(0)|}{s_0^w}\le\sup_{w>M}\left(\frac{R}{s_0}\right)^w
=\left(\frac{R}{s_0}\right)^{M+1}.
\label{eq:initial-truncation-error}
\end{align}
Combining \eqref{eq:error-stability-final} and
\eqref{eq:initial-truncation-error} gives
\begin{equation}
\sup_{0\le t\le T}\|\mu^{(M)}(t)-\bar\mu(t)\|_{s_0}\le\frac{1}{1-\kappa}
\left(\frac{R}{s_0}\right)^{M+1},
\label{eq:truncation-error-prebootstrap}
\end{equation}
provided \(q^{(M)}(t)\in\mathcal U\).

We finally close this condition by a standard bootstrap argument.
Since
\begin{equation}
\|q^{(M)}(t)-\bar q(t)\|_F\le\frac{Ks_0^2}{1-\kappa}
\left(\frac{R}{s_0}\right)^{M+1},
\end{equation}
the right-hand side is smaller than \(1\) for all sufficiently
large \(M\). Because
\(\|\bar q(t)\|_F\le KR^2\), it follows that
\begin{equation}
\|q^{(M)}(t)\|_F<KR^2+1,
\end{equation}
so \(q^{(M)}(t)\) cannot leave \(\mathcal U\). Moreover,
\eqref{eq:truncation-error-prebootstrap} bounds every coordinate of
\eqref{eq:truncated-tied-system}, so its solution cannot undergo finite-time blow-up. Hence, for all sufficiently large \(M\), the solution of \eqref{eq:truncated-tied-system} exists on
\([0,T]\), and \eqref{eq:truncation-error-prebootstrap} holds
throughout the interval. This finishes the proof.

\subsection{An exactly solvable example}
\label{app:solvable}
Consider the squared loss $\mathcal{L}(G_1,G_2)=(G_1-G_2)^2$ without weight decay ($\gamma=0$). It is a simplified setting where $L=1$ and $K=2$. We assume only the first index is learnable ($\mathcal{K}=\{1\}$). The potential evaluates to:
\begin{equation}
\Phi(q)=\mathbb{E}_{z_1,z_2}||z_1-z_2||^2 = 2q_{11} + 2q_{22} - 4q_{12}.
\end{equation}
Because the partial derivatives of $\Phi(q)$ are constant, the dynamics in \eqref{eq:tied_macroscopic_equation} form an infinite-dimensional linear ODE hierarchy. Truncating this system at a finite degree $N$ yields a closed linear ODE with polynomial solutions. At truncation degree $N=1$, the solution is static: $\bar\mu_{1}(t) = \bar\mu_{1}(0)$. At $N=2$, the dynamics activate:
\begin{equation}
\begin{aligned}
&\bar q_{11}(t) = \bar q_{11}(0),\ \bar q_{12}(t) = \bar q_{12}(0),\\
&\bar\mu_{1}(t) = \bar\mu_{1}(0) - 4t [ \bar q_{11}(0) - \bar q_{12}(0) ].
\end{aligned}
\end{equation}
Extending to $N=3$, the solution captures higher-order curvature:
\begin{equation}
\begin{aligned}
&\bar q_{11}(t) = \bar q_{11}(0) - 8t [ \bar\mu_{111}(0) - \bar\mu_{112}(0) ],\ \bar q_{12}(t) = \bar q_{12}(0) - 4 t [ \bar\mu_{112}(0) - \bar\mu_{122}(0) ],\\
&\bar\mu_{1}(t) = \bar\mu_{1}(0) - 4t [ \bar q_{11}(0) - q_{12}(0) ] + 8 t^2 [ 2\bar\mu_{111}(0) - 3\bar\mu_{112}(0) + \bar\mu_{122}(0) ].
\end{aligned}
\end{equation}
Remarkably, these truncated polynomial solutions match the Taylor expansion of the original matrix flow evaluated around $t=0$. To see this, note that the gradient flow for the student forms a matrix Riccati differential equation:
\begin{equation}
\frac{d}{dt}S_1 = 2 (S_1S_2 + S_2S_1) - 4 S_1^2.
\label{eq:matrix-riccati}
\end{equation}
Assuming that $S_1(0)$ and $S_2$ are invertible, \eqref{eq:matrix-riccati} admits a closed-form solution:
\begin{equation}
S_1(t) = \left[ e^{-2 S_2 t} S_1(0)^{-1} e^{-2 S_2 t} + S_2^{-1} \left( I - e^{-4 S_2 t} \right) \right]^{-1},
\end{equation}
Taking the first-order Taylor expansion of the inner inverse matrix around $t=0$ yields:
\begin{equation}
\begin{aligned}
S_1(t)^{-1} &\approx (I - 2 S_2 t) S_1(0)^{-1} (I - 2 S_2 t) + S_2^{-1} ( 4 S_2 t )\\
&\approx S_1(0)^{-1} + \left( 4 I - 2 S_2 S_1(0)^{-1} - 2 S_1(0)^{-1} S_2 \right) t.
\end{aligned}
\end{equation}
Using the matrix identity $(A+tB)^{-1} \approx A^{-1} - A^{-1} B A^{-1} t$, we recover the first-order expansion for $S_1(t)$:
\begin{equation}
S_1(t)\approx S_1(0)+(2S_1(0) S_2 + 2 S_2 S_1(0) - 4 S_1(0)^2)t.
\end{equation}
Taking the normalized traces of this matrix expansion reproduces the linear terms of the truncated moments:
\begin{equation}
\begin{aligned}
&\bar q_{11}(t) = \bar q_{11}(0) - 8 t [ \bar\mu_{111}(0) - \bar\mu_{112}(0) ],\ \bar q_{12}(t) = \bar q_{12}(0) - 4 t [ \bar\mu_{112}(0) - \bar\mu_{122}(0) ],\\
&\bar\mu_{1}(t) = \bar\mu_{1}(0) - 4 t [ \bar q_{11}(0) - \bar q_{12}(0) ].
\end{aligned}
\end{equation}
Numerical results are shown in Figure \ref{fig:tied-theory-linear}.
\begin{figure}
\centering
\includegraphics[width=0.8\linewidth]{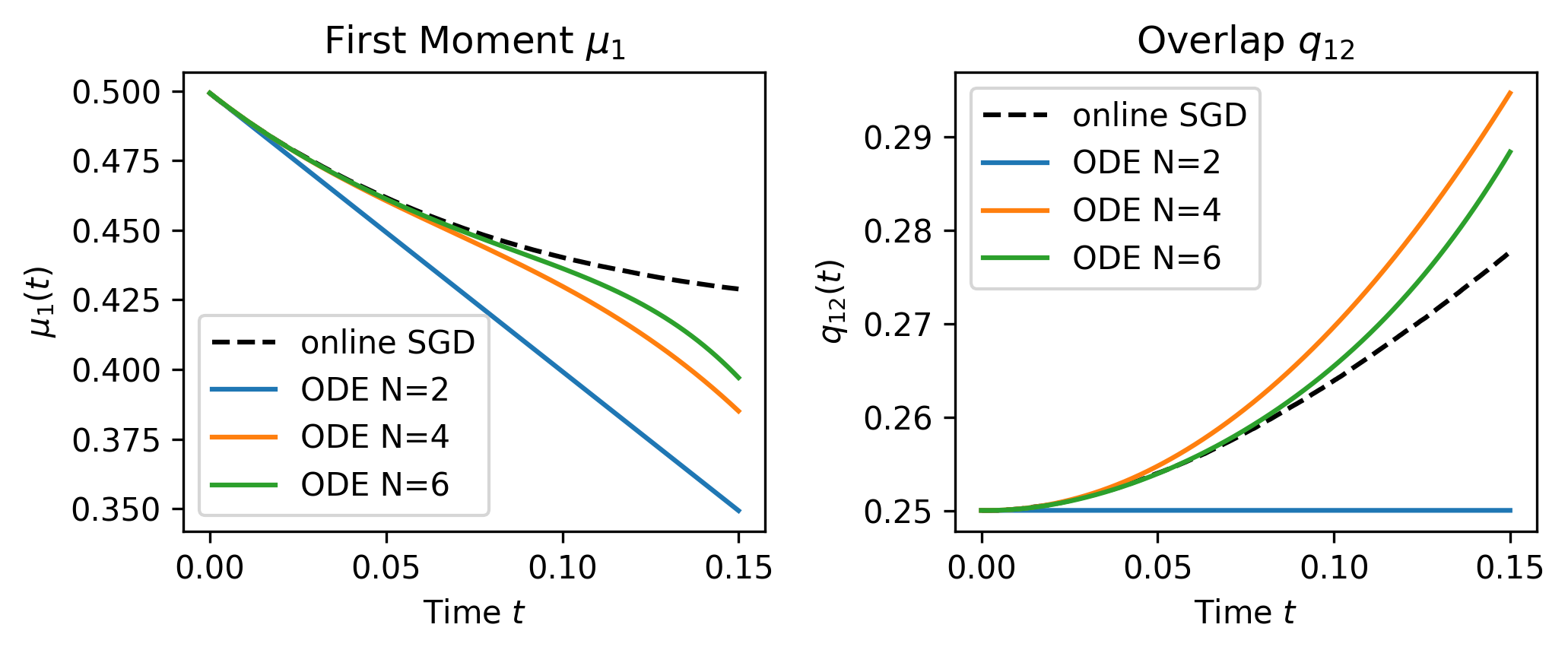}
\caption{Comparison between online SGD and the theory at different truncation orders $N$ for the linear activation, $L=1$ and the MSE loss. As $N$ grows larger, the theory is better aligned with the experiments. We choose $d=400$, learning rate $0.01$, batch size $512$ and no weight decay ($\gamma=0$).}
\label{fig:tied-theory-linear}
\end{figure}

The closed-form solution also provides a direct interpretation of why the moment dynamics is intrinsically infinite-dimensional. To see this, consider the special case $S_2=cI_d$ for some $c>0$. Then each eigenvalue $\lambda(t)$ of $S_1(t)$ evolves independently according to $\dot\lambda(t)=4c\lambda(t)-4\lambda(t)^2$,
and hence
\begin{equation}
\lambda(t)=\frac{c\lambda(0)}{\lambda(0)+(c-\lambda(0))e^{-4ct}}.
\end{equation}
Therefore, if $\rho_0$ denotes the limiting empirical spectral
distribution of $S_1(0)$, the first moment satisfies
\begin{equation}
\mu_1(t)=\int\frac{c\lambda}{\lambda+(c-\lambda)e^{-4ct}}
\,d\rho_0(\lambda).
\label{eq:solvable-spectral-transform}
\end{equation}
Restricting $\rho_0$ to finitely supported measures
$\rho_0=\sum_{j=0}^{M}p_j\delta_{\lambda_j}$ with distinct
$\lambda_j\neq c$, the functions
$t\longmapsto\frac{c\lambda_j}{\lambda_j+(c-\lambda_j)e^{-4ct}}$
are linearly independent. Consequently, the trajectory \(\{\mu_1(t):t\ge0\}\) uniquely determines the weights \((p_0,\ldots,p_M)\). 

Suppose, for contradiction, that these trajectories admitted a fixed \(r\)-dimensional ODE representation whose initial state depends continuously on the initial spectrum. For each \(M\), the map from the interior of the \(M\)-dimensional simplex \((p_0,\ldots,p_M)\) to the ODE initial state must then be injective, since identical ODE initial states generate identical trajectories, while the trajectory uniquely determines the weights. For \(M>r\), however, no continuous injective map from an \(M\)-dimensional open set into \(\mathbb R^r\) exists (as a consequence of Brouwer's invariance of domain theorem). Since \(M\) is arbitrary, no fixed finite-dimensional ODE with a continuous initialization map can reproduce this family of trajectories. This shows that the infinite-dimensionality of \eqref{eq:tied_macroscopic_equation} is intrinsic and cannot in general be removed by replacing the moment hierarchy with a fixed finite set of order parameters.

\subsection{Proof of Corollary \ref{cor:tied_weak_recovery_sgd}}
\label{app:proof_tied_weak_recovery_sgd}
Let
\begin{equation}
\bar p_{12}(t)=\bar q_{12}(t)-\bar t_1(t)\bar t_2,
\end{equation}
where \(\bar t_2\) is constant because the teacher is fixed. We first derive its limiting dynamics from \eqref{eq:tied_macroscopic_equation}.

For the order parameter
\(\bar q_{12}=\bar\mu_{(1,2)}\), only the student index \(1\) is
trainable. Therefore, from \eqref{eq:tied_macroscopic_equation} we have
\begin{align}
\frac{d}{dt}\bar q_{12}=-2\Gamma_{11}(\bar q)\bar\mu_{(1,1,2)}-2\Gamma_{12}(\bar q)\bar\mu_{(1,2,2)}-\frac{\gamma}{2}\bar q_{12}.
\label{eq:q12-dynamics-weak}
\end{align}
Similarly, for \(\bar t_1=\bar\mu_{(1)}\),
\begin{equation}
\frac{d}{dt}\bar t_1=-2\Gamma_{11}(\bar q)\bar q_{11}-2\Gamma_{12}(\bar q)\bar q_{12}-\frac{\gamma}{2}\bar t_1.
\label{eq:t1-dynamics-weak}
\end{equation}
Subtracting \(\bar t_2\) times \eqref{eq:t1-dynamics-weak} from
\eqref{eq:q12-dynamics-weak} gives
\begin{align}
\frac{d}{dt}\bar p_{12}=-2\Gamma_{11}(\bar q)\left(\bar\mu_{(1,1,2)}-\bar t_2\bar q_{11}\right)-2\Gamma_{12}(\bar q)\left(\bar\mu_{(1,2,2)}-\bar t_2\bar q_{12}\right)-\frac{\gamma}{2}\bar p_{12}.
\label{eq:p12-before-centered}
\end{align}
Define the centered third-order moments
\begin{equation}
\bar{\mathring\mu}_{(1,1,2)}:=\lim_{d\to\infty}
\frac1d\Tr[\mathring S_1^2\mathring S_2],
\qquad
\bar{\mathring\mu}_{(1,2,2)}:=\lim_{d\to\infty}\frac1d\Tr[\mathring S_1\mathring S_2^2].
\end{equation}
Expanding \(S_k=\mathring S_k+\bar t_kI\) yields
\begin{equation}
\bar\mu_{(1,1,2)}-\bar t_2\bar q_{11}=\bar{\mathring\mu}_{(1,1,2)}+2\bar t_1\bar p_{12},
\end{equation}
and
\begin{equation}
\bar\mu_{(1,2,2)}-\bar t_2\bar q_{12}=\bar{\mathring\mu}_{(1,2,2)}+\bar t_2\bar p_{12}+\bar t_1\bar p_{22}.
\end{equation}
Consequently,
\begin{align}
\frac{d}{dt}\bar p_{12}=-2\Gamma_{11}(\bar q)
\left(\bar{\mathring\mu}_{(1,1,2)}+2\bar t_1\bar p_{12}\right)-
2\Gamma_{12}(\bar q)\left(\bar{\mathring\mu}_{(1,2,2)}+\bar t_2\bar p_{12}+\bar t_1\bar p_{22}\right)-\frac{\gamma}{2}\bar p_{12}.
\label{eq:p12-centered-dynamics}
\end{align}
At \(t=0\), Assumption \ref{assum:tied_initialization} gives
\begin{equation}
\bar p_{12}(0)=\bar{\mathring\mu}_{(1,1,2)}(0)=\bar{\mathring\mu}_{(1,2,2)}(0)=0.
\end{equation}
Hence \eqref{eq:p12-centered-dynamics} reduces to
\begin{equation}
\left.\frac{d}{dt}\bar p_{12}(t)\right|_{t=0}=-2\Gamma_{12}(\bar q(0))
\bar t_1(0)\bar p_{22}.
\label{eq:p12-initial-drift}
\end{equation}
By Assumption \ref{assum:tied_initialization}, this quantity is nonzero. Set
\begin{equation}
v_0:=\left|\left.\frac{d}{dt}\bar p_{12}(t)\right|_{t=0}\right|=2|\Gamma_{12}(\bar q(0))|\bar t_1(0)\bar p_{22}>0.
\label{eq:v0-weak}
\end{equation}
Since the limiting trajectory satisfies \eqref{eq:tied_macroscopic_equation} and the relevant finite-order components of \(V(\bar\mu(t))\) are continuous in \(t\), \(\bar p_{12}\) is continuously differentiable. By continuity there exists \(T_0>0\), independent of \(d\), such that
\begin{equation}
\left| \frac{d}{dt}\bar p_{12}(t)\right| \le \frac{v_0}{2}, \qquad 0\le t\le T_0.
\end{equation}
In particular, by continuity \(\frac{d}{dt}\bar p_{12}(t)\) has the same sign as \(\frac{d}{dt}\bar p_{12}(0)\) throughout \([0,T_0]\).
Since \(\bar p_{12}(0)=0\),
\begin{equation}
|\bar p_{12}(T_0)|\ge\frac{v_0T_0}{2}.
\label{eq:positive-overlap-T0}
\end{equation}
Choose $c:=\frac{v_0T_0}{4}>0$ and $T_+:=T_0$. Then
\begin{equation}
|\bar p_{12}(T_+)|\ge2c.
\label{eq:upper-recovery-limit}
\end{equation}

On the other hand, because
\(\bar p_{12}(0)=0\) and \(\bar p_{12}\) is continuous, there exists
\(T_-\in(0,T_+)\) such that
\begin{equation}
\sup_{0\le t\le T_-}|\bar p_{12}(t)|<\frac c2.
\label{eq:lower-recovery-limit}
\end{equation}
Let \(\tilde p_{12}^{(d)}(t)\) denote the piecewise-constant interpolation
of the empirical structural overlap. Since
\begin{equation}
\tilde p_{12}^{(d)}=\tilde q_{12}^{(d)}-\tilde t_1^{(d)}\tilde t_2^{(d)},
\end{equation}
Theorem \ref{theo:tied-SGD} implies that,
for every fixed finite \(T\),
\begin{equation}
\sup_{t\in[0,T]}
\left|\tilde p_{12}^{(d)}(t)-\bar p_{12}(t)\right|\xrightarrow{\mathbb P}0.
\label{eq:p12-uniform-convergence}
\end{equation}
Applying \eqref{eq:p12-uniform-convergence} on \([0,T_+]\) and using
\eqref{eq:upper-recovery-limit}, we obtain
\begin{equation}
\mathbb P\left(|\tilde p_{12}^{(d)}(T_+)|\ge c\right)\longrightarrow1.
\end{equation}
Hence, with probability tending to one,
\begin{equation}
N_{\mathrm{wr}}^{(d)}(c)\le\left\lceil\frac{T_+}{\tau_d}\right\rceil=\left\lceil\frac{T_+d}{4\alpha_d}\right\rceil.
\label{eq:weak-recovery-upper-hit}
\end{equation}
Likewise, by \eqref{eq:lower-recovery-limit} and
\eqref{eq:p12-uniform-convergence},
\begin{equation}
\mathbb P\left(\sup_{0\le t\le T_-}|\tilde p_{12}^{(d)}(t)|<c\right)\longrightarrow1.
\end{equation}
Therefore, with probability tending to one,
\begin{equation}
N_{\mathrm{wr}}^{(d)}(c)>\left\lfloor\frac{T_-}{\tau_d}\right\rfloor=\left\lfloor\frac{T_-d}{4\alpha_d}\right\rfloor.
\label{eq:weak-recovery-lower-hit}
\end{equation}
Combining \eqref{eq:weak-recovery-upper-hit} and
\eqref{eq:weak-recovery-lower-hit} proves
\eqref{eq:weak-recovery-hitting-bound}, and hence
\begin{equation}
N_{\mathrm{wr}}^{(d)}(c)=\Theta_{\mathbb P}\left(\frac d{\alpha_d}\right).
\end{equation}
Finally, choosing
\begin{equation}
\alpha_d=\Theta\left(\frac1{d\log d}\right),
\end{equation}
which is the largest learning rate covered by
Theorem \ref{theo:tied-SGD}, gives
\begin{equation}
N_{\mathrm{wr}}^{(d)}(c)=\Theta_{\mathbb P}(d^2\log d).
\end{equation}
This completes the proof.

\subsection{Convergence rate of strong recovery}
\label{app:strong-recovery}
In this section we study the convergence rate after weak recovery. Because the weight decay depends on the first moments of the matrices, it is convenient to introduce
\begin{equation}
t_k:=\mu_{(k)}=\frac1d\Tr[S_k],
\qquad
q_{kl}:=\mu_{(k,l)}=\frac1d\Tr[S_kS_l].
\end{equation}
The limiting regularized population loss is defined as
\begin{equation}
\Psi(t,q):=\Phi(q)+\frac{\gamma}{2}
\sum_{k\in\mathcal K}t_k.
\label{eq:regularized-potential-strong}
\end{equation}
Let \(\mathcal Q_+^{(1,2)}\) denote the asymptotically realizable domain of the pair \((t,q)\), namely the closure of all limits generated by sequences of PSD matrices
\(S_k=W_kW_k^T\succeq0\), subject to the fixed teacher components when
\(k\notin\mathcal K\). We define
\begin{equation}
\Psi^*:=\inf_{(t,q)\in\mathcal Q_+^{(1,2)}}\Psi(t,q).
\label{eq:global-optimum-strong}
\end{equation}
We refer to the convergence to the global minimizers of \(\Psi\) as ``strong recovery''.

Let $\bar t_k(t):=\bar\mu_{(k)}(t)$ and $\bar q_{kl}(t):=\bar\mu_{(k,l)}(t)$ denote the corresponding components of the admissible solution of \eqref{eq:tied_macroscopic_equation}. We impose an effective
Polyak--\L ojasiewicz condition along the trajectory.
\begin{assumption}
\label{assum:effective_pl}
There exist constants \(T_0<\infty\) and \(c>0\) such that, for every \(t\ge T_0\),
\begin{equation}
-\frac{d}{dt}\Psi(\bar t(t),\bar q(t))\ge2c\left(\Psi(\bar t(t),\bar q(t))-\Psi^*\right).
\label{eq:effective_pl}
\end{equation}
\end{assumption}
For a finite \(d\), define the regularized population loss
\begin{equation}
\Psi_d(W):=\mathbb E_x\left[\mathcal L\left(\left\{\frac{1}{\sqrt d}\Tr\left[W_kW_k^T(x_jx_i^T-\mathcal C_{ij}I_d)\right]\right\}_{i,j,k=1}^{L,L,K}\right)\right]+\frac{\gamma}{2d}\sum_{k\in\mathcal K}\|W_k\|_F^2.
\label{eq:finite-d-population-risk}
\end{equation}
\begin{corollary}
\label{cor:strong_recovery}
Suppose that the conditions of Theorem \ref{theo:tied-SGD} and
Assumption \ref{assum:effective_pl} hold. Let $n_0:=\left\lceil\frac{T_0d}{4\alpha_d}\right\rceil$. Then, for every fixed target accuracy \(\epsilon>0\), there exists
\begin{equation}
n_\epsilon=n_0+O\left(\frac{d}{\alpha_d}\log_+\frac1\epsilon\right),
\label{eq:strong-recovery-sample-complexity}
\end{equation}
such that
\begin{equation}
\lim_{d\to\infty}\mathbb P\left(\Psi_d(W^{(n_\epsilon)})-\Psi^*>\epsilon\right)=0.
\label{eq:strong-recovery-risk-convergence}
\end{equation}
where $\log_+(x):=\max\{0,\log x\}$.
In particular, if \(T_0=\Theta(1)\), reaching an \(\epsilon\)-suboptimal population loss requires at most $O\left(\frac{d}{\alpha_d}\log\frac1\epsilon\right)$ additional samples.
\end{corollary}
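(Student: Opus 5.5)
The argument has two stages. First, use the effective PL condition to obtain exponential decay of the limiting regularized potential along the deterministic trajectory. Second, transfer this decay to the finite-$d$ population loss at a fixed macroscopic time, using Theorem \ref{theo:tied-SGD} together with Theorem \ref{theo:general} (in its centered form, Corollary \ref{cor:symmetric}).

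\textbf{Stage 1: decay of the limiting potential.} Set $f(t):=\Psi(\bar t(t),\bar q(t))-\Psi^*$. The map $t\mapsto f(t)$ is continuously differentiable, because $\bar t$ and $\bar q$ are finite-order coordinates of the admissible solution and $\Phi$ is $C^1$ on the PSD cone. Since $(\bar t(t),\bar q(t))$ is a limit of realizable PSD pairs, it lies in $\mathcal Q_+^{(1,2)}$, and therefore $f\ge0$. Assumption \ref{assum:effective_pl} gives $f'\le-2cf$ on $[T_0,\infty)$. Grönwall's inequality then yields
\[
f(t)\le f(T_0)\,e^{-2c(t-T_0)} .
\]
Here $f(T_0)$ is finite and independent of $d$, because Lemma \ref{lemma:global_existence} gives $|\bar\mu_\alpha|\le C_*^{2|\alpha|}$. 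Choose
\[
T_\epsilon:=T_0+\frac{1}{2c}\log_+\frac{2f(T_0)}{\epsilon}=T_0+O\!\left(\log_+\frac1\epsilon\right),
\]
so that $f(T_\epsilon)\le\epsilon/2$. Set $n_\epsilon:=\lceil T_\epsilon d/(4\alpha_d)\rceil$; this has exactly the form \eqref{eq:strong-recovery-sample-complexity}.

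\textbf{Stage 2: transfer to finite $d$.} Decompose
\[
\Psi_d(W^{(n_\epsilon)})=\Phi_d(q^{(d)})+\frac{\gamma}{2}\sum_{k\in\mathcal K} t_k^{(d)},
\]
where $\Phi_d$ is the finite-$d$ expectation of $\mathcal L$ and $t_k^{(d)}=\frac1d\|W_k\|_F^2$ exactly. By Theorem \ref{theo:tied-SGD}, applied on the fixed horizon $[0,T_\epsilon]$, $t^{(d)}(T_\epsilon)\to\bar t(T_\epsilon)$ and $q^{(d)}(T_\epsilon)\to\bar q(T_\epsilon)$ in probability. The interpolation convention means $n_\epsilon$ may fall one step off $T_\epsilon$, but the time difference is $\tau_d\to0$ and the trajectory is uniformly continuous, so this offset is harmless.

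The remaining term is $|\Phi_d(q^{(d)})-\Phi(q^{(d)})|$, which must be shown to be $o_{\mathbb P}(1)$. Lemma \ref{lemma:drift_convergence} already contains the right mechanism. It proves a uniform statement over the spectral-bound set $\mathcal A_d(C_*)$, and the same contradiction-plus-subsequence argument applies with $\mathbb E\,\partial^2\mathcal L$ replaced by $\mathbb E\,\mathcal L$:
\[
\sup_{\mathbf S\in\mathcal A_d(C_*)}\bigl|\Phi_d(\mathbf S)-\Phi(q(\mathbf S))\bigr|\to0 .
\]
The steps are:
\begin{itemize}
\item along a violating subsequence, extract a limit $q^*$ of the second moments;
\item apply Corollary \ref{cor:symmetric} to the active indices, and note that inactive indices give $G_{ijk}\to0$ in $L^2$;
\item use polynomial growth of $\mathcal L$ for uniform integrability.
\end{itemize}
Lemma \ref{lemma:bound-tied-SGD} puts $W^{(n_\epsilon)}$ in $\mathcal A_d(C_*)$ with probability tending to one. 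Finally, $\Phi$ is continuous, so $\Phi(q^{(d)})\to\Phi(\bar q(T_\epsilon))$.

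Combining the two stages gives $\Psi_d(W^{(n_\epsilon)})-\Psi^*\le f(T_\epsilon)+o_{\mathbb P}(1)\le\epsilon/2+o_{\mathbb P}(1)$, which is \eqref{eq:strong-recovery-risk-convergence}.

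\textbf{Main obstacle.} The main obstacle is the uniform transfer step, which needs two things. First, the spectral bound must hold up to time $T_\epsilon$. This is fine because Lemma \ref{lemma:bound-tied-SGD} gives $C_*$ independent of $T$, and $T_\epsilon$ is fixed as $d\to\infty$. Second, $\Psi^*$ must be compared to a limit point that is realizable. Since $f\ge0$ only needs $(\bar t,\bar q)\in\mathcal Q_+^{(1,2)}$, and that set is defined as a closure of limits, this holds. Constants independent of $d$ then give the stated $O\bigl(\frac{d}{\alpha_d}\log\frac1\epsilon\bigr)$ sample count when $T_0=\Theta(1)$.
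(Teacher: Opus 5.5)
Your proposal is correct and follows the paper's proof essentially step for step. You apply Gr\"onwall to the effective PL inequality to choose $T_\epsilon=T_0+\frac{1}{2c}\log_+(2\Delta_0/\epsilon)$ and $n_\epsilon=\lceil T_\epsilon d/(4\alpha_d)\rceil$. You then transfer to finite $d$ by writing the regularizer exactly as $\frac{\gamma}{2}\sum_k t_k$ and reusing the subsequence argument of Lemma~\ref{lemma:drift_convergence} (with Corollary~\ref{cor:symmetric} and uniform integrability) on the spectrally bounded class; your remarks on the one-step interpolation offset and on why $\Psi(\bar t,\bar q)\ge\Psi^*$ fill in details the paper leaves implicit.
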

\begin{proof}
We first analyze the limiting trajectory in Theorem \ref{theo:tied-SGD}. Define
\begin{equation}
\Delta(t):=\Psi(\bar t(t),\bar q(t))-\Psi^*\ge0.
\end{equation}
By Assumption \ref{assum:effective_pl}, for every \(t\ge T_0\),
\begin{equation}
\frac{d}{dt}\Delta(t)\le-2c\Delta(t).
\end{equation}
Hence Gr\"onwall's inequality gives
\begin{equation}
\Delta(t)\le\Delta(T_0)e^{-2c(t-T_0)},
\qquad t\ge T_0.
\label{eq:deterministic-exponential-decay}
\end{equation}
Let
\begin{equation}
\Delta_0:=\Psi(\bar t(T_0),\bar q(T_0))-\Psi^*.
\end{equation}
If \(\Delta_0=0\), there is nothing to prove. Otherwise, for any fixed \(\epsilon>0\), define
\begin{equation}
T_\epsilon:=T_0+\frac{1}{2c}\log_+\left(
\frac{2\Delta_0}{\epsilon}\right).
\label{eq:T-epsilon-strong}
\end{equation}
Then
\begin{equation}
\Psi(\bar t(T_\epsilon),\bar q(T_\epsilon))-\Psi^*\le\frac{\epsilon}{2}.
\label{eq:deterministic-epsilon-gap}
\end{equation}
We next transfer this bound to the finite-dimensional population loss. Recall that the discrete steps and continuous times are related by $t=n\tau_d$ with $\tau_d=\frac{4\alpha_d}{d}$. For \(t\ge0\), define
$W^{(d)}(t):=W^{(\lfloor t/\tau_d\rfloor)}$. For every fixed finite \(T\), Theorem \ref{theo:tied-SGD}, together with the uniform spectral bound of Lemma \ref{lemma:bound-tied-SGD}, implies
\begin{equation}
\sup_{t\in[0,T]}\left|\Psi_d(W^{(d)}(t))-\Psi(\bar t(t),\bar q(t))\right|\xrightarrow{\mathbb P}0.
\label{eq:population-risk-uniform-convergence}
\end{equation}
This is because on the spectral-bound event, the same argument used in Lemma \ref{lemma:drift_convergence}, together with Corollary
\ref{cor:symmetric} and uniform integrability, gives uniformly over the spectrally bounded class
\begin{equation}
\mathbb E_x[\mathcal L(G(x;W))]-\Phi(q(W))
\longrightarrow0.
\end{equation}
The regularization term is exactly
\begin{equation}
\frac{\gamma}{2d}\sum_{k\in\mathcal K}\|W_k\|_F^2=\frac{\gamma}{2}
\sum_{k\in\mathcal K}t_k.
\end{equation}
The convergence of the first- and second-order moments supplied by
Theorem \ref{theo:tied-SGD}, together with the continuity of
\(\Phi\), therefore yields
\eqref{eq:population-risk-uniform-convergence}.

For the fixed accuracy \(\epsilon>0\), the time
\(T_\epsilon\) in \eqref{eq:T-epsilon-strong} is finite and independent of \(d\). Thus \eqref{eq:population-risk-uniform-convergence}, evaluated on \([0,T_\epsilon]\), implies
\begin{equation}
\mathbb P\left(\left|\Psi_d(W^{(d)}(T_\epsilon))-\Psi(\bar t(T_\epsilon),\bar q(T_\epsilon))\right|>\frac{\epsilon}{2}\right)\longrightarrow0.
\end{equation}
Set $n_\epsilon:=\left\lceil\frac{T_\epsilon d}{4\alpha_d}\right\rceil$. Therefore, using
\eqref{eq:deterministic-epsilon-gap},
\begin{equation}
\mathbb P\left(\Psi_d(W^{(n_\epsilon)})-\Psi^*>\epsilon\right)\longrightarrow0.
\end{equation}
Finally, by \eqref{eq:T-epsilon-strong} and the definition of $n_\epsilon$,
\begin{align}
n_\epsilon-n_0&=O\left(\frac{d}{\alpha_d}\log_+\frac{1}{\epsilon}\right),
\end{align}
which proves the claimed sample complexity.
\end{proof}
\begin{example}
Consider the setting \(\gamma=0\) and $\mathcal L(G_1,G_2)=(G_1-G_2)^2$ with one learnable student \(S_1=W_1W_1^T\) and one fixed teacher \(S_2\). Then
\begin{equation}
\Phi(q)=2q_{11}+2q_{22}-4q_{12}=2\lim_{d\to\infty}\frac1d\Tr[(S_1-S_2)^2].
\end{equation}
If the teacher is normalized by \(q_{22}=1\), the global minimum \(\Phi^*=0\) is attained at perfect recovery,
\(q_{11}=q_{12}=1\).

At finite \(d\), up to the normalization induced by the time scaling, the corresponding population loss is proportional to $\frac1d\Tr[(S_1-S_2)^2]$. Its gradient with respect to \(W_1\) satisfies
\begin{equation}
\|\nabla_{W_1}\mathcal L\|_F^2
\propto\frac1{d^2}\Tr[S_1(S_1-S_2)^2].
\end{equation}
Hence the effective PL condition reduces, up to fixed normalization constants, to a bound of the form
\begin{equation}
\Tr[S_1(S_1-S_2)^2]\ge c_0\Tr[(S_1-S_2)^2]
\label{eq:quadratic-effective-pl}
\end{equation}
for some \(c_0>0\).
A sufficient condition for
\eqref{eq:quadratic-effective-pl} is 
\(S_1\succeq c_0I_d\). More generally, it requires non-degeneracy only along the directions relevant to \(S_1-S_2\).
\end{example}

\section{Online SGD over S}
\label{app:sgd-over-S}
In this section, we study online SGD directly over the attention matrices
$\{S_k\}_{k=1}^K$. At each step $n$, a fresh sample
$x^{(n)}\sim\mathcal N(0,\mathcal C\otimes I_d)$ is drawn, and for
$k\in\{1,\ldots,K\}$ we consider the update
\begin{equation}
S_k^{(n+1)}=S_k^{(n)}-\alpha_d\chi_k\left(\nabla_{S_k}\mathcal L(G^{(n)}(x^{(n)}))+\frac{\gamma}{d}S_k^{(n)}\right),
\label{eq:S-sgd}
\end{equation}
where $\chi_k:=\mathbf 1_{\{k\in\mathcal K\}}$ so that $S_k$ remains fixed whenever $k\notin\mathcal K$. The tensor $G$ in \eqref{eq:S-sgd} is defined as 
\begin{equation}
G_{ijk}^{(n)}:=\frac1{\sqrt d}\Tr\left[S_k^{(n)}\left(x_j^{(n)}(x_i^{(n)})^T-\mathcal C_{ij}I_d\right)\right].
\label{eq:G-S-sgd}
\end{equation}
We introduce the time as $\tau_d:=\frac{\alpha_d}{d}$ and $t:=n\tau_d$.

The order parameters are defined as:
\begin{equation}
t_k^{(d)}:=\frac1d\Tr[S_k],
\qquad\Psi_{kl}^{(d)}:=\frac1d\Tr[S_kS_l],
\qquad\Omega_{kl}^{(d)}:=\frac1d\Tr[S_kS_l^T].
\label{eq:S-order-parameters}
\end{equation}
Moreover, writing $S_k^{+}:=\frac{S_k+S_k^T}{2}$ and $S_k^{-}:=\frac{S_k-S_k^T}{2}$, we have $\frac{\Omega_{kl}+\Psi_{kl}}{2}=\frac1d
\langle S_k^{+},S_l^{+}\rangle_F$ and $\frac{\Omega_{kl}-\Psi_{kl}}{2}=\frac1d\langle S_k^{-},S_l^{-}\rangle_F$.
Consequently, the feasible domain of the order parameters is
\begin{equation}
\mathcal D:=\left\{(\Psi,\Omega)\in\mathbb S^K\times\mathbb S^K:
\frac{\Omega+\Psi}{2}\succeq0,\quad\frac{\Omega-\Psi}{2}\succeq0\right\}.
\label{eq:S-feasible-domain}
\end{equation}
For $(\Psi,\Omega)\in\mathcal D$, define the potential
\begin{equation}
\Phi(\Psi,\Omega):=\mathbb E_{G_{\Psi,\Omega}}\left[\mathcal L(G_{\Psi,\Omega})\right],
\label{eq:S-potential}
\end{equation}
where $G_{\Psi,\Omega}\in\mathbb R^{L\times L\times K}$ is a centered
Gaussian tensor with covariance $\operatorname{Cov}\left((G_{\Psi,\Omega})_{ijk},(G_{\Psi,\Omega})_{i'j'l}\right)=\mathcal C_{ii'}\mathcal C_{jj'}\Omega_{kl}+\mathcal C_{ij'}\mathcal C_{ji'}\Psi_{kl}$. We denote by $\nabla_\Psi\Phi$ and $\nabla_\Omega\Phi$ the corresponding
matrix derivatives w.r.t. $\Psi$ and $\Omega$.
\begin{assumption}
\label{assum:S-sgd}
The following conditions hold.
\begin{itemize}
\item[1.] The loss $\mathcal L:\mathbb R^{L\times L\times K}\to\mathbb R$
is four times continuously differentiable, and $\mathcal L$ together
with its derivatives up to fourth order have at most polynomial growth.
\item[2.] There exists a constant $C_0>0$, independent of $d$, such that
$\max_{1\le k\le K}\|S_k^{(0)}\|_{\mathrm{op}}\le C_0$ almost surely.
\item[3.] The initial order parameters converge in probability to
deterministic limits: $t_k^{(d)}(0)\to t_k(0)$, $\Psi_{kl}^{(d)}(0)\to\bar\Psi_{kl}(0)$ and $\Omega_{kl}^{(d)}(0)\to\bar\Omega_{kl}(0)$
for every $k,l$.
\item[4.] The regularization is sufficiently large, i.e.,
$\gamma>\gamma_*(C_0,\mathcal C,\mathcal L,K,L)$ for some
$\gamma_*$ independent of $d$.
\end{itemize}
\end{assumption}
Let $\tilde S_k^{(d)}(t)$ denote the piecewise-constant interpolation
\begin{equation}
\tilde S_k^{(d)}(t)=S_k^{(n)},
\qquad
t\in[n\tau_d,(n+1)\tau_d),
\end{equation}
and define $\tilde t^{(d)}$, $\tilde\Psi^{(d)}$, and
$\tilde\Omega^{(d)}$ in the same way according to \eqref{eq:S-order-parameters}.

The following theorem shows that, unlike Theorems \ref{theo:tied-SGD} and \ref{theo:untied-SGD}, direct optimization over $S$ closes at the level of finitely many first- and second-order parameters.
\begin{theorem}
\label{theo:S-sgd-dynamics}
Under Assumption \ref{assum:S-sgd}, suppose that the learning rate satisfies $\alpha_d\le\frac{c_0}{d\log d}$ and $\alpha_d=\Omega(d^{-\iota})$
for some constants $\iota>0$ and $c_0>0$ sufficiently small.
Then, for every finite $T>0$, the empirical order parameters converge uniformly in probability to a deterministic trajectory
$(\bar t(t),\bar\Psi(t),\bar\Omega(t))$.
More precisely, for every $k,l$ and every $\epsilon>0$,
\begin{align}
\lim_{d\to\infty}\mathbb P\left(\sup_{t\in[0,T]}|\tilde t_k^{(d)}(t)-\bar t_k(t)|+|\tilde\Psi_{kl}^{(d)}(t)-\bar\Psi_{kl}(t)|+|\tilde\Omega_{kl}^{(d)}(t)-\bar\Omega_{kl}(t)|>\epsilon\right)=0.
\end{align}
The limiting trajectory is the unique solution of
\begin{align}
\frac{d}{dt}\bar t_k&=-2\chi_k
\sum_{m=1}^K\left([\nabla_\Omega\Phi]_{km}+[\nabla_\Psi\Phi]_{km}
\right)\bar t_m-\gamma\chi_k\bar t_k,
\label{eq:S-t-dynamics}
\\
\frac{d}{dt}\bar\Psi_{kl}
&=-2\chi_k\sum_{m=1}^K\left([\nabla_\Omega\Phi]_{km}\bar\Psi_{ml}+[\nabla_\Psi\Phi]_{km}\bar\Omega_{ml}\right)
-2\chi_l\sum_{m=1}^K\left([\nabla_\Omega\Phi]_{lm}\bar\Psi_{km}+[\nabla_\Psi\Phi]_{lm}\bar\Omega_{km}
\right)
-\gamma(\chi_k+\chi_l)\bar\Psi_{kl},
\label{eq:S-Psi-dynamics}
\\
\frac{d}{dt}\bar\Omega_{kl}
&=
-2\chi_k
\sum_{m=1}^K\left([\nabla_\Omega\Phi]_{km}\bar\Omega_{ml}+[\nabla_\Psi\Phi]_{km}\bar\Psi_{ml}
\right)-2\chi_l
\sum_{m=1}^K
\left([\nabla_\Omega\Phi]_{lm}\bar\Omega_{km}+[\nabla_\Psi\Phi]_{lm}\bar\Psi_{km}\right)-\gamma(\chi_k+\chi_l)\bar\Omega_{kl},
\label{eq:SE-S-sgd}
\end{align}
where the gradients of $\Phi$ are evaluated at $(\bar\Psi(t),\bar\Omega(t))\in\mathcal D$ for $t\in[0,T]$.
\end{theorem}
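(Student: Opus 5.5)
The plan is to repeat the architecture of the proof of Theorem \ref{theo:tied-SGD}: spectral control, a one-step expansion of the order parameters, and a passage to the limit. The situation is simpler here, because the state is finite-dimensional and the update is linear in $S_k$. I would carry out the following steps in order.

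\textbf{Step 1 (mean gradient via Stein).} Conditional on $\mathcal F_n$, the stochastic gradient is
\[
\nabla_{S_k}\mathcal L(G)=\frac{1}{\sqrt d}\sum_{i,j}\partial_{(ijk)}\mathcal L(G)\,\bigl(x_ix_j^T-\mathcal C_{ij}I_d\bigr).
\]
Apply the second-order Gaussian Stein identity exactly as in Lemma \ref{lemma:bound-gradient}. Differentiating the quadratic forms $G_{i'j'l}$ twice in $x$ produces $S_l$ and $S_l^T$ with the two contractions $\mathcal C_{ii'}\mathcal C_{jj'}$ and $\mathcal C_{ij'}\mathcal C_{ji'}$. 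This gives
\[
\mathbb E[\nabla_{S_k}\mathcal L]=\frac{2}{d}\sum_m\Bigl(\Gamma^{\Omega,(d)}_{km}S_m+\Gamma^{\Psi,(d)}_{km}S_m^T\Bigr)+\mathcal E_k ,
\]
with $\|\mathcal E_k\|_{\rm op}=O(d^{-3/2})$. The $\mathcal E_k$ term comes from the third-derivative remainder and is controlled by Hölder's inequality as before. By Price's theorem, $\Gamma^{\Omega,(d)}$ and $\Gamma^{\Psi,(d)}$ are finite-$d$ versions of $\nabla_\Omega\Phi$ and $\nabla_\Psi\Phi$. I will check the constant $2$ against the target equations \eqref{eq:S-t-dynamics}--\eqref{eq:SE-S-sgd} once the time normalization $\tau_d=\alpha_d/d$ is fixed.

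\textbf{Step 2 (spectral bound).} Introduce the stopping time $J=\inf\{n:\max_k\|S_k^{(n)}\|_{\rm op}>C_*\}$ and truncate the data on events $\mathcal A_n$, using Gaussian norm concentration and Hanson--Wright. Unroll the linear recursion with discount factor $a_d=1-\alpha_d\gamma/d$. The drift then contributes at most $C_H(C_*)C_*/\gamma$. The discounted martingale noise is handled by the matrix Freedman inequality: its variance proxy is $O(d\alpha_d)=O(1/\log d)$, and a union bound over $N_T=O(d^{1+\iota})$ steps is available once $c_0$ is small. Taking $\gamma$ large (Assumption \ref{assum:S-sgd}.4) closes the bootstrap, exactly as in Lemma \ref{lemma:bound-tied-SGD}.

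\textbf{Step 3 (one-step expansion).} On $\{n<J\}$, expand $t_k$, $\Psi_{kl}$ and $\Omega_{kl}$ using $S^{(n+1)}=S^{(n)}+\delta S$. The linear part of $t_k$ involves $\frac1d\Tr[S_m]$ and $\frac1d\Tr[S_m^T]$, which reproduces \eqref{eq:S-t-dynamics}. The bilinear parts $\frac1d\Tr[\delta S_k S_l]$ and $\frac1d\Tr[\delta S_kS_l^T]$ close on $\Psi$ and $\Omega$, using $\Tr[S_m^TS_l]=\Tr[S_mS_l^T]$ and $\Tr[S_m^TS_l^T]=\Tr[S_lS_m]$. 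This explains the swap of $\Omega$ and $\Psi$ in \eqref{eq:S-Psi-dynamics}--\eqref{eq:SE-S-sgd}. There is no higher-moment hierarchy here, since every term is linear in a single $S_m$.

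The remaining pieces are the martingale terms and the quadratic term $\frac1d\Tr[\delta S_k\delta S_l]$. The martingale terms have conditional variance $O(d^{-2})$, so Doob's maximal inequality gives noise of size $O(\alpha_d/d)\to0$. The quadratic term is the only real risk. Its expectation is about
\[
\frac{\alpha_d^2}{d}\,\mathbb E\|\nabla\mathcal L\|_F^2\approx\frac{\alpha_d^2}{d}\cdot\frac{1}{d}\,\|x\|^4\sim\alpha_d^2 d .
\]
Summed over $N_T\sim d^2/\alpha_d$ steps, this gives $O(\alpha_d d^3)$, which naively diverges. This is the step I expect to be the main obstacle.

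To handle it, I would first check the normalization carefully. The gradient carries a factor $d^{-1/2}$, so $\|\nabla\mathcal L\|_F^2\sim d^{-1}\|x\|^4\sim d$. The per-step increment of $\Psi$ is then $\alpha_d^2\cdot d/d=\alpha_d^2$. Over $N_T=T/\tau_d=Td/\alpha_d$ steps, the total is $Td\alpha_d\le Tc_0/\log d\to0$. So the quadratic term should vanish precisely because of the $\alpha_d\le c_0/(d\log d)$ scaling, and I would verify this count along with the Itô-type correction from it.

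\textbf{Step 4 (limit and uniqueness).} Replace $\Gamma^{(d)}$ by $\nabla\Phi$ evaluated at the current $(\Psi,\Omega)$. Uniform convergence over the spectrally bounded class follows from the subsequence and active-index argument of Lemma \ref{lemma:drift_convergence}, using Corollary \ref{cor:symmetric}. Arzelà--Ascoli then gives tightness in finite dimension. The gradient field is locally Lipschitz on the closed convex set $\mathcal D$; extend it by projection onto $\mathcal D$ as in Lemma \ref{lemma:local_existence}. The limit ODE is then uniquely solvable on $[0,T]$, stays in $\mathcal D$ as a closed limit of feasible points, and satisfies the a priori bounds. Every subsequential limit therefore equals the deterministic solution, which yields uniform convergence in probability.
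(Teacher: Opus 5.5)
Your proposal is correct and follows essentially the same route as the paper:
\begin{itemize}
\item stopping-time spectral control via data truncation and the matrix Freedman inequality;
\item the Stein expansion $\bar g_k=\frac1d H_k+O(d^{-3/2})$ with $H_k=2\sum_m\bigl(\Gamma^{(d)}_{\Omega,km}S_m+\Gamma^{(d)}_{\Psi,km}S_m^T\bigr)$;
\item a one-step expansion in which the martingale part has conditional variance $O(d^{-2})$ and the quadratic remainder accumulates only to $O(d\alpha_d)\to0$;
\item uniform drift convergence by the subsequence argument of Lemma \ref{lemma:drift_convergence}.
\end{itemize}
The only difference is the final step. The paper skips tightness and compares the empirical order parameters directly, via Gr\"onwall's inequality, with the solution of a globally Lipschitz extension of the vector field (PSD projection of $Q^\pm$, then clipping to a ball). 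In finite dimensions this is shorter than your compactness-plus-uniqueness argument and reaches the same conclusion.
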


Building upon this uniform convergence, now we consider the weak recovery. First define the traceless
parts
\begin{equation}
\mathring S_k:=S_k-t_kI_d,
\end{equation}
and the structural overlaps
\begin{equation}
p_{\Psi,kl}^{(d)}:=\frac1d\Tr[\mathring S_k\mathring S_l]=\Psi_{kl}^{(d)}-t_k^{(d)}t_l^{(d)},
\end{equation}
\begin{equation}
p_{\Omega,kl}^{(d)}:=\frac1d\Tr[\mathring S_k\mathring S_l^T]=\Omega_{kl}^{(d)}-t_k^{(d)}t_l^{(d)}.
\label{eq:S-structural-overlaps}
\end{equation}
It is useful to resolve these overlaps into their symmetric and
skew-symmetric channels:
\begin{equation}
p_{+,12}:=\frac{p_{\Omega,12}+p_{\Psi,12}}2,
\qquad p_{-,12}:=\frac{p_{\Omega,12}-p_{\Psi,12}}2.
\label{eq:S-plus-minus-overlap}
\end{equation}
For the teacher, define the corresponding signal strengths
\begin{equation}
a_+:=\frac{p_{\Omega,22}+p_{\Psi,22}}2=\lim_{d\to\infty}\frac1d\left\|\frac{\mathring S_2+\mathring S_2^T}{2}\right\|_F^2,
\end{equation}
and
\begin{equation}
a_-:=\frac{p_{\Omega,22}-p_{\Psi,22}}2=\lim_{d\to\infty}\frac1d\left\|\frac{\mathring S_2-\mathring S_2^T}{2}\right\|_F^2.
\label{eq:S-teacher-channel-strength}
\end{equation}
Thus $a_+,a_-\ge0$. The following is the uninformative initialization and informative teacher assumption.
\begin{assumption}
\label{assum:S-initialization}
As $d\to\infty$, the initial student $S_1(0)$ and the fixed teacher
$S_2$ satisfy
\begin{equation}
\bar t_1(0)=0,\qquad\bar p_{\Psi,12}(0)=0,\qquad\bar p_{\Omega,12}(0)=0.
\label{eq:S-uninformative-initialization}
\end{equation}
Moreover, the teacher is informative in the sense that $a_++a_->0$.
\end{assumption}
For example, \eqref{eq:S-uninformative-initialization} is satisfied by the standard Gaussian initialization with centered
i.i.d.\ entries of variance $\Theta(d^{-1})$.

Define the effective gradients in the symmetric and
skew-symmetric parts by
\begin{equation}
g_+:=[\nabla_\Omega\Phi]_{12}+[\nabla_\Psi\Phi]_{12},
\qquad g_-:=[\nabla_\Omega\Phi]_{12}-[\nabla_\Psi\Phi]_{12}.
\label{eq:S-channel-gradients}
\end{equation}
For a fixed $c>0$, define the weak-recovery sample complexity
\begin{equation}
N_{\mathrm{wr},S}^{(d)}(c):=\inf\left\{
n\ge0:\max\left(|p_{\Psi,12}^{(d,n)}|,|p_{\Omega,12}^{(d,n)}|\right)\ge c\right\}.
\label{eq:S-weak-recovery-hitting-time}
\end{equation}
The following corollary gives the weak recovery conditions.

\begin{corollary}
\label{cor:weak-recovery-S-sgd}
Consider \eqref{eq:S-sgd} with
$\mathcal K=\{1\}$, where $S_1$ is the trainable student and $S_2$ is a fixed teacher. Suppose the
conditions of Theorem \ref{theo:S-sgd-dynamics} and Assumption \ref{assum:S-initialization} hold.

\begin{itemize}
\item[1.]If, at initialization,
\begin{equation}
a_+g_+(0)\neq0
\qquad\text{or}\qquad
a_-g_-(0)\neq0,
\label{eq:S-first-order-signal}
\end{equation}
then there exist constants
$c>0$ and $0<T_-<T_+<\infty$, independent of $d$, such that
\begin{equation}
\lim_{d\to\infty}\mathbb P\left(\left\lfloor\frac{T_-d}{\alpha_d}\right\rfloor<N_{\mathrm{wr},S}^{(d)}(c)
\le\left\lceil\frac{T_+d}{\alpha_d}\right\rceil\right)
=1.
\label{eq:S-successful-recovery}
\end{equation}
In particular, choosing the largest learning-rate scaling $\alpha_d=\Theta\left(\frac1{d\log d}\right)$ covered by Theorem \ref{theo:S-sgd-dynamics} gives the weak recovery sample complexity
\begin{equation}
N_{\mathrm{wr},S}^{(d)}(c)=\Theta_{\mathbb P}(d^2\log d).
\end{equation}

\item[2.] Let the uninformative manifold be $\mathcal U:=\left\{(t,\Psi,\Omega):t_1=0,\Psi_{12}=0,\Omega_{12}=0\right\}$.
Suppose that
\begin{equation}
[\nabla_\Omega\Phi(\Psi,\Omega)]_{12}=[\nabla_\Psi\Phi(\Psi,\Omega)]_{12}=0
\qquad
\text{for every state in }\mathcal U.
\label{eq:S-higher-order-condition}
\end{equation}
Then for every fixed $T<\infty$ and every $\epsilon>0$,
\begin{equation}
\lim_{d\to\infty}\mathbb P\left(\sup_{t\in[0,T]}
\max\left(|p_{\Psi,12}^{(d)}(t)|,|p_{\Omega,12}^{(d)}(t)|\right)>\epsilon\right)=0.
\label{eq:S-failure-recovery}
\end{equation}
In particular, for any learning-rates covered by
Theorem \ref{theo:S-sgd-dynamics}, $S$-SGD cannot achieve weak
recovery within $\mathcal O(d^2\log d)$ samples.
\end{itemize}
\end{corollary}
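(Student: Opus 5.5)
The argument mirrors the proof of Corollary \ref{cor:tied_weak_recovery_sgd}. Everything is reduced to the finite-dimensional limiting ODE of Theorem \ref{theo:S-sgd-dynamics}, and the result is then transferred to the empirical hitting time through uniform convergence in probability.

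\textbf{Step 1: limiting dynamics of the structural overlaps.} With $\mathcal K=\{1\}$, so $\chi_1=1$ and $\chi_2=0$, we use \eqref{eq:S-t-dynamics}--\eqref{eq:SE-S-sgd} together with the fact that $\bar t_2$ is constant. Differentiating $\bar p_{\Psi,12}=\bar\Psi_{12}-\bar t_1\bar t_2$ and $\bar p_{\Omega,12}=\bar\Omega_{12}-\bar t_1\bar t_2$ gives
\[
\dot{\bar p}_{\Psi,12}=-2\sum_m\Bigl([\nabla_\Omega\Phi]_{1m}(\bar\Psi_{m2}-\bar t_m\bar t_2)+[\nabla_\Psi\Phi]_{1m}(\bar\Omega_{m2}-\bar t_m\bar t_2)\Bigr)-\gamma\bar p_{\Psi,12},
\]
and the analogous formula holds for $\bar p_{\Omega,12}$ with $\Psi$ and $\Omega$ swapped. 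The key observation is that $\bar\Psi_{m2}-\bar t_m\bar t_2=\bar p_{\Psi,m2}$, so the weight-decay term on $\bar t_1$ cancels exactly. Writing everything in the $\pm$ channels \eqref{eq:S-plus-minus-overlap}, the system reads
\[
\dot{\bar p}_{\pm,12}=-2\bigl(g_{\pm}^{(11)}\bar p_{\pm,12}+g_\pm\,a_\pm\bigr)-\gamma\bar p_{\pm,12},
\]
where $g^{(11)}_\pm$ is the $(1,1)$ analogue of \eqref{eq:S-channel-gradients} and $a_\pm$ is evaluated at the teacher, with $\bar p_{\pm,22}=a_\pm$. This is a closed linear ODE in $(\bar p_{+,12},\bar p_{-,12})$ whose coefficients are continuous along the trajectory. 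For this step we also need $\bar t_1$: by \eqref{eq:S-t-dynamics}, $\dot{\bar t}_1=-2g_+\bar t_2-2(\cdots)\bar t_1-\gamma\bar t_1$, so $\bar t_1$ can leave $0$. The plan handles this by working with centered quantities, in which $\bar t_1$ enters only through the $p$-variables. I expect the main obstacle to be verifying these identities carefully, in particular that the $\bar t$ cross terms cancel, in the spirit of \eqref{eq:p12-centered-dynamics}.

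\textbf{Step 2: part 1.} At $t=0$, Assumption \ref{assum:S-initialization} gives $\bar p_{\pm,12}(0)=0$, so $\dot{\bar p}_{\pm,12}(0)=-2g_\pm(0)a_\pm$, and at least one of these is nonzero by \eqref{eq:S-first-order-signal}. The limiting trajectory is $C^1$, so exactly as in the proof of Corollary \ref{cor:tied_weak_recovery_sgd} there is a $T_0>0$ on which the nonzero derivative keeps its sign and magnitude at least $v_0/2$. This yields $|\bar p_{\pm,12}(T_0)|\ge v_0T_0/2$. Since $\max(|p_\Psi|,|p_\Omega|)\ge|p_\pm|$, we may take $c=v_0T_0/4$ and $T_+=T_0$. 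By continuity we then choose $T_-$ with $\sup_{[0,T_-]}\max(|\bar p_\Psi|,|\bar p_\Omega|)<c/2$. Theorem \ref{theo:S-sgd-dynamics} gives uniform convergence in probability of $\tilde p^{(d)}$, since these are continuous functions of $\tilde t,\tilde\Psi,\tilde\Omega$. With $\tau_d=\alpha_d/d$ this produces the two-sided bound \eqref{eq:S-successful-recovery}, and hence $\Theta_{\mathbb P}(d^2\log d)$ when $\alpha_d=\Theta((d\log d)^{-1})$.

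\textbf{Step 3: part 2.} The argument here is an invariant-manifold one. Consider the set where $\bar p_{\pm,12}=0$, together with $t_1=0$. We check that $\mathcal U$ is invariant under the limiting flow. On $\mathcal U$, condition \eqref{eq:S-higher-order-condition} makes $g_\pm=0$, so $\dot{\bar t}_1=-(\cdots)\bar t_1$ and $\dot{\bar p}_{\pm,12}$ are linear homogeneous in $(\bar t_1,\bar p_{\pm,12})$. The candidate trajectory is built by solving the ODE restricted to $\mathcal U$, i.e.\ keeping $\bar t_1=\bar\Psi_{12}=\bar\Omega_{12}=0$ while the remaining coordinates evolve by their own equations. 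It satisfies the full system because every term driving the $\mathcal U$ coordinates vanishes there. By the uniqueness part of Theorem \ref{theo:S-sgd-dynamics}, this is the limiting trajectory, so $\bar p_{\Psi,12}\equiv\bar p_{\Omega,12}\equiv0$ on $[0,T]$. Uniform convergence in probability then gives \eqref{eq:S-failure-recovery} and the stated failure at the $d^2\log d$ scale.
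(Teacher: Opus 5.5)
Your proposal is correct and follows essentially the same route as the paper. You derive the decoupled channel equations $\dot{\bar p}_{\pm,12}=-(2g^{(11)}_\pm+\gamma)\bar p_{\pm,12}-2a_\pm g_\pm$, use the nonzero initial velocity together with the uniform convergence of Theorem~\ref{theo:S-sgd-dynamics} for part~1, and use tangency of the limiting vector field to $\mathcal U$ plus uniqueness for part~2; your explicit construction of the solution restricted to $\mathcal U$ is a slightly more detailed version of the paper's tangency-and-uniqueness step.
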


\subsection{Proof of Theorem \ref{theo:S-sgd-dynamics}}
The proof combines uniform spectral bounds based on the Matrix Freedman inequality with a martingale expansion of the empirical order parameters, and concludes by applying Grönwall's inequality to the resulting approximate ODE.

\subsubsection{Uniform spectral bounds}
\begin{lemma}
\label{lemma:bound-S}
Under the conditions of Theorem \ref{theo:S-sgd-dynamics}, there exists
a constant $C_*>C_0$, independent of $d$ and $T$, such that for every
fixed $T>0$,
\begin{equation}
\lim_{d\to\infty}\mathbb P\left(\sup_{0\le n\le N_T}\max_{1\le k\le K}\|S_k^{(n)}\|_{\mathrm{op}}\le C_*\right)=1,
\label{eq:S-spectral-bound}
\end{equation}
where $N_T:=\left\lfloor\frac{T}{\tau_d}\right\rfloor=\left\lfloor\frac{Td}{\alpha_d}\right\rfloor$.
\end{lemma}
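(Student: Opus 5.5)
The plan is to mirror the stopping-time argument of Lemma \ref{lemma:bound-tied-SGD}, simplified because the update \eqref{eq:S-sgd} is affine in the stochastic gradient and carries no factorization. Fix $C_*>C_0$, to be chosen as $2C_0+1$, and define the stopping time $J:=\inf\{n:\max_k\|S_k^{(n)}\|_{\mathrm{op}}>C_*\}$. The stochastic gradient is
\[
\nabla_{S_k}\mathcal L(G^{(n)})=\sum_{i,j}\partial_{(ijk)}\mathcal L(G^{(n)})\,\frac{1}{\sqrt d}\bigl(x_j^{(n)}(x_i^{(n)})^T-\mathcal C_{ij}I_d\bigr).
\]
The first step is a Stein computation analogous to Lemma \ref{lemma:bound-gradient}. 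On $\{n<J\}$, a second-order Gaussian integration by parts gives the conditional mean gradient as $\frac1d\sum_l(\Gamma^{\Omega}_{kl}S_l+\Gamma^{\Psi}_{kl}S_l^T)+\mathcal E_k$, where the $\Gamma$'s are bounded expectations of second derivatives of $\mathcal L$ and $\|\mathcal E_k\|_{\mathrm{op}}\le C d^{-3/2}$. This yields $\|\bar g_k^{(n)}\|_{\mathrm{op}}\le C_H(C_*)/d$. The remainder is controlled exactly as in \eqref{eq:Ruv-bound}: third derivatives of $\mathcal L$ have polynomial growth, and $\|S_l\|_{\mathrm{op}}\le C_*$ gives the needed Gaussian moment bounds.

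The second step is truncation and martingale control. Introduce the events $\mathcal A_n=\{\|x^{(n)}\|^2\le c_xd,\ \max|G^{(n)}_{ijk}|\le c_G\log d\}$. On $\{n<J\}$, Hanson--Wright applies since $\|S_k\|_F\le\sqrt d C_*$ and $\|S_k\|_{\mathrm{op}}\le C_*$, so $\mathbb P(\mathcal A_n^c\mid\mathcal F_n)\le e^{-c(\log d)^2}$, and a union bound over $N_T\le Cd^{1+\iota}$ steps handles all of them. On $\mathcal A_n$, $\|\hat g_k^{(n)}\|_{\mathrm{op}}\le C\sqrt d(\log d)^r$. The conditional second moments $\mathbb E[gg^T\mid\mathcal F_n]$ and $\mathbb E[g^Tg\mid\mathcal F_n]$ are $O(1)$ in operator norm, by the same H\"older argument as in \eqref{eq:gradient-second-moment}. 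Next, unroll the recursion with discount $a_d=1-\alpha_d\gamma/d$:
\[
S_k^{(m)}=a_d^mS_k^{(0)}-\alpha_d\sum_{n<m}a_d^{m-1-n}\bigl(\bar g_k^{(n)}-\mathcal T_k^{(n)}\bigr)+Y_{k,m}.
\]
Here $Y_{k,m}$ is a discounted martingale sum. Its variance proxy is $\sigma_d^2\lesssim d\alpha_d/\gamma\le Cc_0/(\gamma\log d)$ and its increment bound is $R_d=o(1/\log d)$. The matrix Freedman inequality (square $d\times d$ case), together with a union bound over $m\le N_T$ and $k$, then gives $\max_{k,m}\|Y_{k,m}\|_{\mathrm{op}}\le\delta:=(C_*-C_0)/4$ with probability $\to1$, once $c_0$ is small enough that $c\delta^2/c_0>2+\iota$.

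The final step closes the bootstrap. Using $\alpha_d\sum_r a_d^r=d/\gamma$, on the good events and for every $m\le J\wedge N_T$ we get
\[
\|S_k^{(m)}\|_{\mathrm{op}}\le C_0+C_H(C_*)/\gamma+\delta+o(1).
\]
The tail term $\mathcal T_k$ is superpolynomially small, as in \eqref{eq:tail-error-bound}. Choosing $\gamma$ large via Assumption \ref{assum:S-sgd}.4 makes $C_H(C_*)/\gamma\le\delta$, so the right-hand side is $<C_*$. This contradicts $J\le N_T$, which proves \eqref{eq:S-spectral-bound}.

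The main obstacle is the Stein step, specifically getting a mean gradient of size $O(1/d)$ in operator norm rather than merely in Frobenius norm. The remainder bound relies on the spectral bound through moments of $p^TS_lx_r$, and this is exactly where the stopping time is essential. A secondary point is that $C_*$ does not depend on $T$; this comes from the discounting, since the bounds are uniform in $m$.
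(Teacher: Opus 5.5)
Your proposal is correct and follows essentially the same route as the paper's proof: the stopping time $J$, the Stein expansion $\bar g_k^{(n)}=\frac1d H_k^{(n)}+\mathcal E_{\mathrm{Stein},k}^{(n)}$ with $H_k$ involving both $S_l$ and $S_l^T$, the truncation events, and the discounted matrix Freedman bound on $Y_{k,m}$ with $c_0$ chosen so that $c\delta^2/c_0>2+\iota$. The bootstrap also closes the same way, via $\alpha_d\sum_r a_d^r=d/\gamma$ with $C_*=2C_0+1$ and $\gamma$ large. One cosmetic slip: since $G_{ijk}$ depends on $S_k$ through $\Tr[S_k(x_jx_i^T-\mathcal C_{ij}I_d)]$, the gradient factor is $\frac{1}{\sqrt d}(x_ix_j^T-\mathcal C_{ij}I_d)$, the transpose of what you wrote, and this does not affect any of the operator-norm bounds.
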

The proof follows essentially the same stopping-time and Matrix Freedman argument as Lemma \ref{lemma:bound-tied-SGD}. The main difference is that direct optimization over $S_k$ yields an expected gradient of the form $d^{-1}H_k$ rather than $d^{-1}H_kW_k$ (see Lemma \ref{lemma:bound-gradient}); moreover, since $S_k$ need not be symmetric, $H_k$ contains both $S_l$ and $S_l^T$.

\begin{proof}
The matrices with $k\notin\mathcal K$ remain fixed, and hence satisfy
$\|S_k^{(n)}\|_{\mathrm{op}}\le C_0$. It therefore suffices to control
the trainable indices $k\in\mathcal K$.

Fix $C_*>C_0$, to be chosen below, and define the stopping time
\begin{equation}
J:=\inf\left\{n\ge0:\max_{1\le k\le K}\|S_k^{(n)}\|_{\mathrm{op}}>C_*
\right\}.
\label{eq:S-stopping-time}
\end{equation}
On $\{n<J\}$,
\begin{equation}
\max_k\|S_k^{(n)}\|_{\mathrm{op}}\le C_*.
\label{eq:S-pre-stopping-bound}
\end{equation}

\paragraph{Step 1: Truncation of the fresh sample.}
As in the proof of Lemma \ref{lemma:bound-tied-SGD}, define
\begin{equation}
\mathcal A_n:=\left\{\|x^{(n)}\|^2\le c_xd\right\}
\cap\left\{\max_{i,j,k}|G_{ijk}^{(n)}|\le c_G\log d\right\},
\label{eq:S-data-truncation}
\end{equation}
where $c_x,c_G>0$ are sufficiently large constants. Similarly to Lemma \ref{lemma:bound-tied-SGD}, we have
\begin{equation}
\mathbf 1_{\{n<J\}}\mathbb P(\mathcal A_n^c\mid\mathcal F_n)\le p_d,
\label{eq:S-truncation-tail}
\end{equation}
where $p_d\le C(e^{-c_1d}+e^{-c_2(\log d)^2})$. Consequently, defining
\begin{equation}
\mathcal A_{\mathrm{data}}:=\bigcap_{n=0}^{N_T-1}\left(\{n\ge J\}\cup\mathcal A_n\right),
\end{equation}
we have
\begin{equation}
\mathbb P(\mathcal A_{\mathrm{data}}^c)\le N_Tp_d
\longrightarrow0,
\label{eq:S-data-high-prob}
\end{equation}
because $\alpha_d=\Omega(d^{-\iota})$.

\paragraph{Step 2: Conditional drift and stopped martingale noise.}
For $k\in\mathcal K$, define the gradient
\begin{equation}
g_k^{(n)}:=\nabla_{S_k}\mathcal L(G^{(n)})=
\sum_{i,j=1}^L\partial_{(ijk)}\mathcal L(G^{(n)})X_{ij}^{(n)},
\label{eq:S-gradient}
\end{equation}
where $X_{ij}^{(n)}:=\frac1{\sqrt d}\left(x_i^{(n)}(x_j^{(n)})^T-\mathcal C_{ij}I_d\right)$. Let $\bar g_k^{(n)}:=\mathbb E[g_k^{(n)}\mid\mathcal F_n]$. The same calculation as in Lemma
\ref{lemma:bound-gradient} gives
\begin{equation}
\bar g_k^{(n)}=\frac1dH_k^{(n)}+\mathcal E_{\mathrm{Stein},k}^{(n)},
\label{eq:S-mean-gradient-expansion}
\end{equation}
where
\begin{align}
H_k^{(n)}:=\sum_{i,j,i',j'=1}^L
\sum_{l=1}^K\mathbb E_x\left[\partial^2_{(ijk),(i'j'l)}\mathcal L(G^{(n)})\right]\left(
\mathcal C_{ii'}\mathcal C_{jj'}S_l^{(n)}+\mathcal C_{ij'}\mathcal C_{ji'}(S_l^{(n)})^T\right).
\label{eq:S-H-definition}
\end{align}
Using Assumption \ref{assum:S-sgd}.1 and the uniform moment bounds for $G^{(n)}$ under \eqref{eq:S-pre-stopping-bound}, the same estimates as
in Lemma \ref{lemma:bound-gradient} yield
\begin{equation}
\|H_k^{(n)}\|_{\mathrm{op}}\le C_H(C_*),
\qquad
\|\mathcal E_{\mathrm{Stein},k}^{(n)}\|_{\mathrm{op}}\le C_E(C_*)d^{-3/2},
\label{eq:S-mean-gradient-bounds}
\end{equation}
uniformly on $\{n<J\}$.

Define the stopped and truncated gradient
\begin{equation}
\widehat g_k^{(n)}:=\mathbf 1_{\{n<J\}}
g_k^{(n)}\mathbf 1_{\mathcal A_n},
\end{equation}
its conditional mean
\begin{equation}
\widehat{\bar g}_k^{(n)}:=\mathbb E[\widehat g_k^{(n)}\mid\mathcal F_n],
\end{equation}
and the martingale difference
\begin{equation}
\widehat\xi_k^{(n)}:=\widehat g_k^{(n)}-\widehat{\bar g}_k^{(n)}.
\end{equation}
Then
\begin{equation}
\widehat{\bar g}_k^{(n)}=\mathbf 1_{\{n<J\}}
\left(\bar g_k^{(n)}-\mathcal T_k^{(n)}\right),\qquad
\mathcal T_k^{(n)}:=\mathbb E\left[g_k^{(n)}\mathbf 1_{\mathcal A_n^c}
\mid\mathcal F_n
\right].
\label{eq:S-tail-definition}
\end{equation}
As in the proof of Lemma \ref{lemma:bound-tied-SGD}, Hölder's inequality
and moment bounds give
\begin{equation}
\mathbf 1_{\{n<J\}}\max\left\{\left\|\mathbb E[\widehat\xi_k^{(n)}
(\widehat\xi_k^{(n)})^T\mid\mathcal F_n]\right\|_{\mathrm{op}},\left\|\mathbb E[(\widehat\xi_k^{(n)})^T\widehat\xi_k^{(n)}\mid\mathcal F_n]\right\|_{\mathrm{op}}\right\}\le C_g(C_*).
\label{eq:S-gradient-variance}
\end{equation}
On $\mathcal A_n\cap\{n<J\}$, we also have
\begin{equation}
\|\widehat g_k^{(n)}\|_{\mathrm{op}}\le C\sqrt d\left(1+(\log d)^r\right)
=:B_d
\end{equation}
for some fixed integer $r$, and consequently
\begin{equation}
\|\widehat\xi_k^{(n)}\|_{\mathrm{op}}\le2B_d.
\label{eq:S-noise-increment}
\end{equation}

Finally, by Cauchy--Schwarz and \eqref{eq:S-truncation-tail},
the truncation error satisfies
\begin{equation}
\mathbf 1_{\{n<J\}}\|\mathcal T_k^{(n)}\|_{\mathrm{op}}
\le r_d,\qquad r_d=o(d^{-M})
\label{eq:S-tail-error}
\end{equation}
for every fixed $M>0$.

\paragraph{Step 3: Uniform control of the martingale noise.}
Set $a_d:=1-\frac{\alpha_d\gamma}{d}=1-\tau_d\gamma$.
For sufficiently large $d$, $a_d\in(0,1)$. Define $D_{k,n}:=-\alpha_d\widehat\xi_k^{(n)}$
and, for each terminal time $m\ge1$,
\begin{equation}
Y_{k,m}:=\sum_{n=0}^{m-1}
a_d^{m-1-n}D_{k,n}.
\label{eq:S-discounted-noise}
\end{equation}
For each fixed $m$, the summands form a square matrix martingale
difference sequence. By \eqref{eq:S-noise-increment},
\begin{equation}
\|a_d^{m-1-n}D_{k,n}\|_{\mathrm{op}}\le2\alpha_dB_d=:R_d,
\end{equation}
where $R_d=o((\log d)^{-1})$ under $\alpha_d\le c_0(d\log d)^{-1}$.
Moreover, by \eqref{eq:S-gradient-variance},
\begin{align}
\sigma_d^2\le C_g\alpha_d^2\sum_{r=0}^{\infty}a_d^{2r}=\frac{C_g\alpha_d^2}{1-a_d^2}
\le C\frac{d\alpha_d}{\gamma}
\le\frac{Cc_0}{\gamma\log d}.
\label{eq:S-discounted-variance}
\end{align}
Hence the Matrix Freedman inequality gives, for every fixed
$\delta>0$,
\begin{equation}
\mathbb P(\|Y_{k,m}\|_{\mathrm{op}}\ge\delta)\le2d\exp\left[
-\frac{\delta^2/2}{\sigma_d^2+R_d\delta/3}\right]\le2d^{1-c\delta^2/c_0}
\label{eq:S-Freedman}
\end{equation}
for all sufficiently large $d$. Since
$N_T\le C_Td^{1+\iota}$ and $K$ is fixed, a union bound gives
\begin{equation}
\mathbb P\left(\max_{k\in\mathcal K}
\max_{1\le m\le N_T}\|Y_{k,m}\|_{\mathrm{op}}>\delta\right)\le C_Td^{2+\iota-c\delta^2/c_0}.
\end{equation}
Choosing $c_0>0$ sufficiently small yields
\begin{equation}
\mathbb P(\mathcal A_{\mathrm{noise}}^c)\longrightarrow0,
\label{eq:S-noise-high-prob}
\end{equation}
where
\begin{equation}
\mathcal A_{\mathrm{noise}}:=\left\{\max_{k\in\mathcal K}\max_{1\le m\le N_T}\|Y_{k,m}\|_{\mathrm{op}}\le\delta\right\}.
\end{equation}

\paragraph{Step 4: Closing the stopping-time argument.}
On $\mathcal A_{\mathrm{data}}$, for every $m\le J$ and every $n<m$,
\begin{equation}
g_k^{(n)}=\bar g_k^{(n)}-\mathcal T_k^{(n)}+\widehat\xi_k^{(n)}.
\end{equation}
For $k\in\mathcal K$, unrolling \eqref{eq:S-sgd} therefore gives
\begin{equation}
S_k^{(m)}=a_d^mS_k^{(0)}-\alpha_d
\sum_{n=0}^{m-1}a_d^{m-1-n}\bar g_k^{(n)}+\alpha_d
\sum_{n=0}^{m-1}a_d^{m-1-n}\mathcal T_k^{(n)}+Y_{k,m}.
\label{eq:S-unrolled}
\end{equation}
Since $\alpha_d\sum_{r=0}^{\infty}a_d^r=\frac{d}{\gamma}$, \eqref{eq:S-mean-gradient-bounds} and \eqref{eq:S-tail-error} imply that, on
$\mathcal A_{\mathrm{data}}\cap\mathcal A_{\mathrm{noise}}$ and for
$m\leq J\wedge N_T$,
\begin{align}
\|S_k^{(m)}\|_{\mathrm{op}}\le C_0+\frac{C_H(C_*)}{\gamma}+\frac{C_E(C_*)}{\gamma\sqrt d}+\frac{dr_d}{\gamma}+\delta=C_0+\frac{C_H(C_*)}{\gamma}+\delta+o(1).
\label{eq:S-closing-bound}
\end{align}

Fix, for instance, $C_*:=2C_0+1$ and $\delta:=\frac{C_*-C_0}{4}$. By Assumption \ref{assum:S-sgd}.4, $\gamma$ can be chosen sufficiently
large so that
\begin{equation}
\frac{C_H(C_*)}{\gamma}\le\frac{C_*-C_0}{4}.
\end{equation}
Then, for all sufficiently large $d$,
\begin{equation}
\max_k\|S_k^{(m)}\|_{\mathrm{op}}<C_*
\qquad\text{for every }m\leq J\wedge N_T
\end{equation}
on
$\mathcal A_{\mathrm{data}}\cap\mathcal A_{\mathrm{noise}}$.

If $J\le N_T$, taking $m=J$ gives
$\max_k\|S_k^{(J)}\|_{\mathrm{op}}<C_*$, contradicting the definition
of $J$. Hence $J>N_T$ on $\mathcal A_{\mathrm{data}}\cap\mathcal A_{\mathrm{noise}}$ for all
sufficiently large $d$.

Finally,
\begin{equation}
\mathbb P(J\le N_T)\le\mathbb P(\mathcal A_{\mathrm{data}}^c)+\mathbb P(\mathcal A_{\mathrm{noise}}^c)
\longrightarrow0
\end{equation}
by \eqref{eq:S-data-high-prob} and
\eqref{eq:S-noise-high-prob}. This proves the lemma.
\end{proof}

\subsubsection{Moment expansion}
Let
\begin{equation}
\mathbf Q^{(d)}(t):=\left(
t^{(d)}(t),\Psi^{(d)}(t),\Omega^{(d)}(t)
\right)
\end{equation}
denote the piecewise-constant interpolation of the empirical order
parameters, and let $F=(F_t,F_\Psi,F_\Omega)$ denote the finite-dimensional vector field on the right-hand sides of
\eqref{eq:S-t-dynamics}--\eqref{eq:SE-S-sgd}.

\begin{lemma}
\label{lemma:macro-S-sgd}
Under the conditions of Theorem \ref{theo:S-sgd-dynamics}, for every
fixed $T>0$,
\begin{equation}
\mathbf Q^{(d)}(t)=\mathbf Q^{(d)}(0)+\int_0^t
F(\mathbf Q^{(d)}(s))\,ds+E_{\mathrm{total}}^{(d)}(t),
\qquad t\in[0,T],
\label{eq:S-macro-integral}
\end{equation}
where
\begin{equation}
\sup_{t\in[0,T]}\left\|E_{\mathrm{total}}^{(d)}(t)\right\|_\infty\xrightarrow{\mathbb P}0.
\label{eq:S-macro-error}
\end{equation}
\end{lemma}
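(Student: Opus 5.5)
The plan is to mirror the proof of Lemma \ref{lemma:macro-tied-SGD}, which is simpler here because the order parameters are finitely many and are linear or quadratic in $S$. We work on the stopping-time event $\{n<J\}$ with $J$ from Lemma \ref{lemma:bound-S}, so that $\max_k\|S_k^{(n)}\|_{\rm op}\le C_*$, and we use $\mathbb P(J>N_T)\to1$ at the end. Write $\xi_k^{(n)}:=g_k^{(n)}-\bar g_k^{(n)}$. Then the update \eqref{eq:S-sgd} reads
\[
S_k^{(n+1)}=S_k^{(n)}-\tau_d\chi_k\bigl(H_k^{(n)}+\gamma S_k^{(n)}\bigr)-\alpha_d\chi_k\xi_k^{(n)}-\alpha_d\chi_k\mathcal E^{(n)}_{\mathrm{Stein},k},
\]
with $H_k^{(n)}$ given by \eqref{eq:S-H-definition}. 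We substitute this into $t_k=\frac1d\Tr S_k$ (linear), and into $\Psi_{kl}=\frac1d\Tr S_kS_l$ and $\Omega_{kl}=\frac1d\Tr S_kS_l^T$ (bilinear). This gives the one-step decomposition
\[
\mathbf Q^{(n+1)}-\mathbf Q^{(n)}=\tau_d F_d(\mathbf Q^{(n)})+\alpha_d\eta^{(n)}+R^{(n)}.
\]
Here $F_d$ collects the terms linear in $H$ and in $\gamma S$, $\eta^{(n)}$ is linear in $\xi$ and has conditional mean zero, and $R^{(n)}$ contains all quadratic increments and the Stein error. A direct computation shows that $\frac1d\Tr H_k$, $\frac1d\Tr H_kS_l$ and $\frac1d\Tr H_kS_l^T$ reproduce the structure of \eqref{eq:S-t-dynamics}--\eqref{eq:SE-S-sgd}, with the coefficients $\Gamma^{(d)}$ built from $\mathbb E_x[\partial^2\mathcal L(G^{(n)})]$. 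These are the finite-$d$ analogues of $\nabla_\Omega\Phi$ and $\nabla_\Psi\Phi$ via Price's theorem. The identification of the constants, including the factor $2$, follows the computation in Lemma \ref{lemma:bound-gradient}.

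For the error terms we argue as in Steps 2--3 of Lemma \ref{lemma:macro-tied-SGD}.

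\emph{Remainder.} The quadratic increments give $\frac1d\Tr(\xi_k\xi_l^T)\alpha_d^2=O(\alpha_d^2)$ in conditional expectation, because $\mathbb E\|\xi\|_F^2\le Cd$. The cross terms are $O(\alpha_d\tau_d+\tau_d^2)$, and the Stein error contributes $O(\alpha_d d^{-3/2})$. Summing over $N_T=O(d/\alpha_d)$ steps gives $O(d\alpha_d+d^{-1/2})\to0$, so Markov's inequality controls the remainder.

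\emph{Martingale noise.} Each coordinate of $\eta^{(n)}$ has the form $\partial\mathcal L\cdot\frac1d\Tr[AX_{ij}]$ with $\|A\|_{\rm op}\le C$, so its conditional variance is $O(d^{-2})$. Doob's inequality then bounds the maximal noise by $O(\alpha_d/d)\to0$ in probability.

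\emph{Interpolation.} The discretization error from piecewise-constant interpolation is $O(\tau_d)$.

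The remaining step, which I expect to be the main obstacle, is to replace $F_d$ by $F$, i.e.\ to show
\[
\sup_{\|S\|_{\rm op}\le C_*}\bigl|\Gamma^{(d)}-\nabla\Phi(\Psi(S),\Omega(S))\bigr|\to0
\]
uniformly. I would use the same compactness-and-contradiction argument as in Lemma \ref{lemma:drift_convergence}. Extract a subsequence along which $(\Psi,\Omega)$ converges. Split indices into active ones ($\Omega_{kk}>0$), for which $\|S_k\|_{\rm op}/\|S_k\|_F\to0$, so that Corollary \ref{cor:symmetric} gives the Gaussian limit, and degenerate ones, for which $G_{ijk}\to0$ in $L^2$. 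Uniform integrability, from polynomial growth and the moment bounds, then gives convergence of $\mathbb E[\partial^2\mathcal L]$. Because the order parameters stay in a bounded subset of $\mathcal D$, on which $\nabla\Phi$ is bounded, this yields $\sup_t\|F_d(\mathbf Q^{(d)})-F(\mathbf Q^{(d)})\|_\infty\to0$.

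Collecting all error terms into $E^{(d)}_{\rm total}$ proves \eqref{eq:S-macro-integral}--\eqref{eq:S-macro-error}. Since only finitely many coordinates are involved, no weighted-norm tail argument is needed, and the final transfer from the stopped process to the original one uses $\mathbb P(J\le N_T)\to0$.
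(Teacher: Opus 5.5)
Your proposal is correct and follows essentially the same route as the paper. Both arguments work on the stopped event $\{n<J\}$ and use the same one-step decomposition into drift, linear martingale noise, quadratic and Stein remainder, and interpolation error, with the same bounds: conditional noise variance $O(d^{-2})$ handled by Doob, and cumulative remainder $O(d\alpha_d+d^{-1/2})$ handled by Markov. You also use the same subsequence and compactness argument from Lemma \ref{lemma:drift_convergence} to make the finite-$d$ coefficients converge uniformly to $\nabla_\Omega\Phi$ and $\nabla_\Psi\Phi$. The only cosmetic difference is that the paper keeps $F$ as the reference field and records the coefficient mismatch as a separate drift error $\tau_d\varepsilon_{\mathrm{drift}}^{(n)}$ of size $C\delta_d(C_*)$, which is exactly your $F_d$-to-$F$ replacement.
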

The proof follows the same martingale-decomposition and drift-convergence strategy as Lemma \ref{lemma:macro-tied-SGD}. The main simplification is that direct optimization over $S$ closes at the finite-dimensional level of $(t,\Psi,\Omega)$, so no higher-order moments or weighted moment-space estimates are required; the only additional feature is the simultaneous appearance of $S_l$ and $S_l^T$ due to the possible asymmetry of $S_l$.
\begin{proof}
Let $C_*$ be the constant in Lemma \ref{lemma:bound-S}, and define
\begin{equation}
J:=\inf\left\{n\ge0:\max_{1\le k\le K}\|S_k^{(n)}\|_{\mathrm{op}}>C_*\right\}.
\end{equation}
Then, for $N_T:=\left\lfloor\frac{T}{\tau_d}\right\rfloor=\left\lfloor\frac{Td}{\alpha_d}\right\rfloor$, Lemma \ref{lemma:bound-S} gives
\begin{equation}
\mathbb P(J>N_T)\longrightarrow1.
\label{eq:S-macro-stopping}
\end{equation}
All estimates below are uniform on $\{n<J\}$.

\paragraph{Step 1: Conditional drift.}
Recall from the proof of Lemma \ref{lemma:bound-S} that
\begin{equation}
\bar g_k^{(n)}=\frac1dH_k^{(n)}+\mathcal E_{\mathrm{Stein},k}^{(n)},
\qquad\|\mathcal E_{\mathrm{Stein},k}^{(n)}\|_{\mathrm{op}}\le C_Ed^{-3/2},
\label{eq:S-macro-mean-gradient}
\end{equation}
where
\begin{align}
H_k^{(n)}=\sum_{i,j,i',j',m}\mathbb E_x\left[\partial^2_{(ijk),(i'j'm)}\mathcal L(G^{(n)})\right]
\left(\mathcal C_{ii'}\mathcal C_{jj'}S_m^{(n)}+\mathcal C_{ij'}\mathcal C_{ji'}(S_m^{(n)})^T\right).
\end{align}
Define the finite-dimensional coefficients
\begin{align}
\Gamma_{\Omega,km}^{(d)}&:=\frac12\sum_{i,j,i',j'}\mathcal C_{ii'}\mathcal C_{jj'}\,\mathbb E_x\left[\partial^2_{(ijk),(i'j'm)}\mathcal L(G^{(n)})\right],
\label{eq:S-Gamma-Omega-d}
\\
\Gamma_{\Psi,km}^{(d)}&:=\frac12\sum_{i,j,i',j'}\mathcal C_{ij'}\mathcal C_{ji'}\,\mathbb E_x\left[\partial^2_{(ijk),(i'j'm)}\mathcal L(G^{(n)})\right].
\label{eq:S-Gamma-Psi-d}
\end{align}
Then
\begin{equation}
H_k^{(n)}=2\sum_{m=1}^K\left(\Gamma_{\Omega,km}^{(d)}S_m^{(n)}+\Gamma_{\Psi,km}^{(d)}(S_m^{(n)})^T\right).
\label{eq:S-H-Gamma}
\end{equation}
We claim that, uniformly over the spectrally bounded class,
\begin{align}
\delta_d(C_*):=\sup_{\max_m\|S_m\|_{\mathrm{op}}\le C_*}
\max_{k,m}\Big(&\left|\Gamma_{\Omega,km}^{(d)}-[\nabla_\Omega\Phi(\Psi,\Omega)]_{km}\right|+
\left|\Gamma_{\Psi,km}^{(d)}-[\nabla_\Psi\Phi(\Psi,\Omega)]_{km}\right|\Big)
\longrightarrow0.
\label{eq:S-uniform-drift-convergence}
\end{align}
The proof is the same subsequence argument as in Lemma
\ref{lemma:drift_convergence}.

\paragraph{Step 2: One-step expansion of the order parameters.}
Define
\begin{equation}
\xi_k^{(n)}:=g_k^{(n)}-\bar g_k^{(n)}.
\end{equation}
Using $\tau_d=\alpha_d/d$, the SGD update can be written as
\begin{equation}
\Delta S_k^{(n)}:=S_k^{(n+1)}-S_k^{(n)}=-\tau_d\chi_k\left(H_k^{(n)}+\gamma S_k^{(n)}\right)-\alpha_d\chi_k\xi_k^{(n)}-\alpha_d\chi_k\mathcal E_{\mathrm{Stein},k}^{(n)}.
\label{eq:S-macro-increment}
\end{equation}
For the normalized trace,
\begin{equation}
t_k^{(n+1)}-t_k^{(n)}=\frac1d\Tr[\Delta S_k^{(n)}].
\end{equation}
For the two second-order overlaps,
\begin{align}
\Psi_{kl}^{(n+1)}-\Psi_{kl}^{(n)}
&=\frac1d\Tr[
\Delta S_k^{(n)}S_l^{(n)}+S_k^{(n)}\Delta S_l^{(n)}]+\frac1d\Tr[\Delta S_k^{(n)}\Delta S_l^{(n)}],
\label{eq:S-Psi-one-step}
\\
\Omega_{kl}^{(n+1)}-\Omega_{kl}^{(n)}&=\frac1d\Tr[\Delta S_k^{(n)}(S_l^{(n)})^T+S_k^{(n)}(\Delta S_l^{(n)})^T]+
\frac1d\Tr[\Delta S_k^{(n)}(\Delta S_l^{(n)})^T].
\label{eq:S-Omega-one-step}
\end{align}
Substituting \eqref{eq:S-H-Gamma} gives, for example,
\begin{equation}
\frac1d\Tr[H_k^{(n)}]=2\sum_m\left(\Gamma_{\Omega,km}^{(d)}+\Gamma_{\Psi,km}^{(d)}\right)t_m^{(n)},
\end{equation}
\begin{equation}
\frac1d\Tr[H_k^{(n)}S_l^{(n)}]=2\sum_m\left(\Gamma_{\Omega,km}^{(d)}\Psi_{ml}^{(n)}+\Gamma_{\Psi,km}^{(d)}\Omega_{ml}^{(n)}\right),
\end{equation}
and
\begin{equation}
\frac1d\Tr[H_k^{(n)}(S_l^{(n)})^T]=2\sum_m\left(\Gamma_{\Omega,km}^{(d)}\Omega_{ml}^{(n)}+\Gamma_{\Psi,km}^{(d)}\Psi_{ml}^{(n)}\right).
\end{equation}
Therefore the first-order terms coincide with
$\tau_dF(\mathbf Q^{(n)})$, up to an error
$\tau_d\varepsilon_{\mathrm{drift}}^{(n)}$ satisfying
\begin{equation}
\mathbf 1_{\{n<J\}}\|\varepsilon_{\mathrm{drift}}^{(n)}\|_\infty
\le C(C_*)\delta_d(C_*).
\label{eq:S-drift-error}
\end{equation}
The first-order martingale terms are
\begin{align}
\eta_{t,k}^{(n)}&:=-\frac{\chi_k}{d}\Tr[\xi_k^{(n)}],
\\
\eta_{\Psi,kl}^{(n)}&:=-\frac1d\Tr\left[\chi_k\xi_k^{(n)}S_l^{(n)}+\chi_lS_k^{(n)}\xi_l^{(n)}\right],
\\
\eta_{\Omega,kl}^{(n)}&:=-\frac1d\Tr\left[\chi_k\xi_k^{(n)}(S_l^{(n)})^T+\chi_lS_k^{(n)}(\xi_l^{(n)})^T\right].
\end{align}
Collect them into $\eta^{(n)}$. The remaining terms in
\eqref{eq:S-Psi-one-step}--\eqref{eq:S-Omega-one-step}, are denoted
by $R^{(n)}$. Hence
\begin{equation}
\mathbf Q^{(n+1)}-\mathbf Q^{(n)}=\tau_dF(\mathbf Q^{(n)})+\alpha_d\eta^{(n)}+R^{(n)}+\tau_d\varepsilon_{\mathrm{drift}}^{(n)}.
\label{eq:S-Q-one-step}
\end{equation}

\paragraph{Step 3: Vanishing of the martingale noise.}
As in Step 3 of the proof of Lemma \ref{lemma:macro-tied-SGD}, every coordinate of $\eta^{(n)}$ is a finite sum of linear functionals of $\xi_k^{(n)}$. The same moment estimate therefore gives
\begin{equation}
\mathbf 1_{\{n<J\}}\mathbb E\left[
|\eta_a^{(n)}|^2\,\middle|\,\mathcal F_n\right]\le\frac{C}{d^2}
\label{eq:S-eta-variance}
\end{equation}
for every coordinate $a$ of $(t,\Psi,\Omega)$.

Thus the stopped process
\begin{equation}
M_a^{(m)}:=\sum_{n=0}^{m-1}\alpha_d\eta_a^{(n)}
\mathbf 1_{\{n<J\}}
\end{equation}
is a square-integrable martingale with
\begin{equation}
\mathbb E[\langle M_a\rangle_{N_T}]\le
N_T\alpha_d^2\frac{C}{d^2}\le C_T\frac{\alpha_d}{d}.
\end{equation}
By Doob's maximal inequality,
\begin{equation}
\max_a\sup_{m\le N_T}|M_a^{(m)}|\xrightarrow{\mathbb P}0.
\label{eq:S-martingale-vanish}
\end{equation}
Here the maximum is over only finitely many coordinates, since $K$ is fixed.

\paragraph{Step 4: Vanishing of the higher-order residuals.}
On $\{n<J\}$, the same moment estimates as in Lemma
\ref{lemma:macro-tied-SGD} give
\begin{equation}
\mathbb E\left[\|R^{(n)}\|_\infty\,\middle|\,\mathcal F_n
\right]\le C\left(\alpha_d^2+\alpha_d\tau_d+\tau_d^2+\alpha_dd^{-3/2}\right).
\label{eq:S-remainder-bound}
\end{equation}
The first three terms arise from the products of the
drift and stochastic increments in
\eqref{eq:S-Psi-one-step}--\eqref{eq:S-Omega-one-step}, while the last
term comes from the residual in
\eqref{eq:S-macro-increment}.

Since $N_T=O\left(\frac d{\alpha_d}\right)$ and $\tau_d=\frac{\alpha_d}{d}$, we obtain
\begin{align}
\mathbb E\left[\sum_{n=0}^{N_T-1}\mathbf 1_{\{n<J\}}\|R^{(n)}\|_\infty\right]&\le C_T\left(d\alpha_d+\alpha_d+\frac{\alpha_d}{d}+d^{-1/2}\right)\longrightarrow0,
\label{eq:S-cumulative-remainder}
\end{align}
because $\alpha_d\le c_0(d\log d)^{-1}$.
Hence, by Markov's inequality,
\begin{equation}
\sup_{m\le N_T}\left\|\sum_{n=0}^{m-1}R^{(n)}
\right\|_\infty\xrightarrow{\mathbb P}0.
\label{eq:S-remainder-vanish}
\end{equation}
Likewise, by \eqref{eq:S-drift-error},
\begin{equation}
\sup_{m\le N_T}\left\|\tau_d\sum_{n=0}^{m-1}\varepsilon_{\mathrm{drift}}^{(n)}
\right\|_\infty\le TC(C_*)\delta_d(C_*)
\longrightarrow0.
\label{eq:S-cumulative-drift-error}
\end{equation}

\paragraph{Step 5: Integral equation.}
For $m(t):=\left\lfloor\frac{t}{\tau_d}\right\rfloor$, summing \eqref{eq:S-Q-one-step} gives
\begin{align}
\mathbf Q^{(d)}(t)=\mathbf Q^{(d)}(0)+\tau_d\sum_{n=0}^{m(t)-1}F(\mathbf Q^{(n)})+E_{\mathrm{noise}}^{(d)}(t)+E_{\mathrm{rem}}^{(d)}(t)+E_{\mathrm{drift}}^{(d)}(t),
\end{align}
where $E_{\mathrm{noise}}^{(d)}(t):=\alpha_d\sum_{n=0}^{m(t)-1}\eta^{(n)}$, $E_{\mathrm{rem}}^{(d)}(t):=\sum_{n=0}^{m(t)-1}R^{(n)}$ and $E_{\mathrm{drift}}^{(d)}(t):=\tau_d\sum_{n=0}^{m(t)-1}\varepsilon_{\mathrm{drift}}^{(n)}$.

Because $\mathbf Q^{(d)}$ is piecewise constant,
\begin{align}
\int_0^tF(\mathbf Q^{(d)}(s))\,ds=\tau_d\sum_{n=0}^{m(t)-1}F(\mathbf Q^{(n)})+(t-m(t)\tau_d)F(\mathbf Q^{(m(t))}).
\end{align}
On $\{J>N_T\}$, the spectral bound implies that
$\mathbf Q^{(d)}(t)$ remains in a bounded subset, on which $F$ is bounded. Therefore the discretization error $E_{\mathrm{disc}}^{(d)}(t):=(t-m(t)\tau_d)F(\mathbf Q^{(m(t))})$ satisfies
\begin{equation}
\sup_{t\in[0,T]}\|E_{\mathrm{disc}}^{(d)}(t)\|_\infty\le C\tau_d
\longrightarrow0.
\label{eq:S-disc-error}
\end{equation}
Combining
\eqref{eq:S-martingale-vanish},
\eqref{eq:S-remainder-vanish},
\eqref{eq:S-cumulative-drift-error},
and \eqref{eq:S-disc-error}, we conclude that $\sup_{t\in[0,T]}\left\|E_{\mathrm{total}}^{(d)}(t)
\right\|_\infty\xrightarrow{\mathbb P}0$, where
\begin{equation}
E_{\mathrm{total}}^{(d)}=E_{\mathrm{noise}}^{(d)}+E_{\mathrm{rem}}^{(d)}+E_{\mathrm{drift}}^{(d)}+E_{\mathrm{disc}}^{(d)}.
\end{equation}
This proves the lemma.
\end{proof}

\subsubsection{Final proof of Theorem \ref{theo:S-sgd-dynamics}}
Fix an arbitrary finite time horizon $T>0$, and let $C_*$ be the
constant in Lemma \ref{lemma:bound-S}. Define
\begin{equation}
\mathcal E_d(T):=\left\{\sup_{0\le n\le N_T}
\max_{1\le k\le K}\|S_k^{(n)}\|_{\mathrm{op}}\le C_*\right\},
\qquad N_T=\left\lfloor\frac{Td}{\alpha_d}\right\rfloor .
\end{equation}
By Lemma \ref{lemma:bound-S},
\begin{equation}
\mathbb P(\mathcal E_d(T))\longrightarrow1.
\label{eq:S-final-spectral-event}
\end{equation}
On $\mathcal E_d(T)$, the empirical order parameters satisfy
\begin{equation}
|t_k^{(d)}(t)|\le C_*,
\qquad |\Psi_{kl}^{(d)}(t)|\le C_*^2,
\qquad |\Omega_{kl}^{(d)}(t)|\le C_*^2,
\label{eq:S-final-Q-bound}
\end{equation}
uniformly over $t\in[0,T]$.

We first address the fact that the vector field $F$ is naturally
defined only on the domain $\mathbb R^K\times\mathcal D$. For
$(\Psi,\Omega)\in\mathbb S^K\times\mathbb S^K$, write
\begin{equation}
Q^+:=\frac{\Omega+\Psi}{2},
\qquad
Q^-:=\frac{\Omega-\Psi}{2},
\end{equation}
and define
\begin{equation}
\widehat Q^\pm:=\Pi_{\mathbb S_+^K}(Q^\pm),
\qquad
\widehat\Omega:=\widehat Q^++\widehat Q^-,
\qquad
\widehat\Psi:=\widehat Q^+-\widehat Q^-.
\label{eq:S-projection-extension}
\end{equation}
Using
$\nabla_\Psi\Phi(\widehat\Psi,\widehat\Omega)$ and
$\nabla_\Omega\Phi(\widehat\Psi,\widehat\Omega)$ in
\eqref{eq:S-t-dynamics}--\eqref{eq:SE-S-sgd} defines an extension
$\widetilde F$ of $F$ to the finite-dimensional space
$\mathcal X:=\mathbb R^K\times\mathbb S^K\times\mathbb S^K.$
By Assumption \ref{assum:S-sgd}.1, Price's theorem, and the
non-expansiveness of the PSD projection, $\widetilde F$ is locally
Lipschitz on $\mathcal X$.

Choose a fixed radius $R$ larger than the bound in
\eqref{eq:S-final-Q-bound}. By composing $\widetilde F$ with the
projection onto the closed ball $B_R\subset\mathcal X$, we
obtain a globally Lipschitz vector field $\widehat F$ which agrees
with $F$ on all states satisfying
\eqref{eq:S-final-Q-bound}. Let $L_R<\infty$ denote a global
Lipschitz constant of $\widehat F$.

Consider the globally well-posed ODE
\begin{equation}
\frac{d}{dt}\bar{\mathbf Q}(t)=\widehat F(\bar{\mathbf Q}(t)),
\qquad
\bar{\mathbf Q}(0)=\left(\bar t(0),\bar\Psi(0),\bar\Omega(0)\right),
\label{eq:S-extended-ODE}
\end{equation}
where $\bar{\mathbf Q}=(\bar t,\bar\Psi,\bar\Omega)$.

By Assumption \ref{assum:S-sgd}.3 and the fact that $K$ is fixed,
\begin{equation}
\left\|\mathbf Q^{(d)}(0)-\bar{\mathbf Q}(0)\right\|_\infty
\xrightarrow{\mathbb P}0.
\label{eq:S-final-initial-convergence}
\end{equation}
On the other hand, Lemma \ref{lemma:macro-S-sgd} gives
\begin{equation}
\mathbf Q^{(d)}(t)=\mathbf Q^{(d)}(0)+\int_0^t
F(\mathbf Q^{(d)}(s))\,ds+E_{\mathrm{total}}^{(d)}(t),
\label{eq:S-final-empirical-integral}
\end{equation}
with
\begin{equation}
\sup_{t\in[0,T]}\left\|E_{\mathrm{total}}^{(d)}(t)\right\|_\infty\xrightarrow{\mathbb P}0.
\label{eq:S-final-error-convergence}
\end{equation}

On $\mathcal E_d(T)$, every empirical state satisfies \eqref{eq:S-final-Q-bound}. Hence \begin{equation}
F(\mathbf Q^{(d)}(t))=\widehat F(\mathbf Q^{(d)}(t)),
\qquad t\in[0,T].
\end{equation}
Subtracting the integral form of \eqref{eq:S-extended-ODE} from
\eqref{eq:S-final-empirical-integral}, we obtain on
$\mathcal E_d(T)$
\begin{align}
\left\|\mathbf Q^{(d)}(t)-\bar{\mathbf Q}(t)\right\|_\infty
&\le\left\|\mathbf Q^{(d)}(0)-\bar{\mathbf Q}(0)
\right\|_\infty+\sup_{s\le T}\left\|E_{\mathrm{total}}^{(d)}(s)\right\|_\infty+L_R\int_0^t\left\|\mathbf Q^{(d)}(s)-\bar{\mathbf Q}(s)\right\|_\infty ds.
\end{align}
Therefore, by Gr\"onwall's inequality,
\begin{align}
\sup_{t\in[0,T]}\left\|\mathbf Q^{(d)}(t)-\bar{\mathbf Q}(t)\right\|_\infty\le e^{L_RT}\left(\left\|\mathbf Q^{(d)}(0)-\bar{\mathbf Q}(0)\right\|_\infty+\sup_{t\in[0,T]}\left\|E_{\mathrm{total}}^{(d)}(t)\right\|_\infty\right)
\label{eq:S-final-Gronwall}
\end{align}
on $\mathcal E_d(T)$. Combining
\eqref{eq:S-final-spectral-event},
\eqref{eq:S-final-initial-convergence}, and
\eqref{eq:S-final-error-convergence} yields
\begin{equation}
\sup_{t\in[0,T]}\left\|\mathbf Q^{(d)}(t)-\bar{\mathbf Q}(t)
\right\|_\infty\xrightarrow{\mathbb P}0.
\label{eq:S-final-uniform-convergence}
\end{equation}

It remains to show that $\bar{\mathbf Q}$ is a solution of
the original ODE rather than merely a solution of its extension.
For every finite $d$ and every $t$, $\left(\Psi^{(d)}(t),\Omega^{(d)}(t)
\right)\in\mathcal D$. Since $\mathcal D$ is closed, \eqref{eq:S-final-uniform-convergence} implies $\left(\bar\Psi(t),\bar\Omega(t)\right)\in\mathcal D
$ for $t\in[0,T]$. Likewise, \eqref{eq:S-final-Q-bound} and
\eqref{eq:S-final-uniform-convergence} imply that
$\bar{\mathbf Q}(t)$ remains inside the region where
$\widehat F=F$. Consequently,
\begin{equation}
\frac{d}{dt}\bar{\mathbf Q}(t)=F(\bar{\mathbf Q}(t)),
\qquad t\in[0,T],
\end{equation}
and hence $\bar{\mathbf Q}$ satisfies
\eqref{eq:S-t-dynamics}--\eqref{eq:SE-S-sgd}.

Finally, the locally Lipschitz extension $\widetilde F$ implies local
uniqueness of the ODE. Thus the solution $\bar{\mathbf Q}$ is unique.
This proves Theorem \ref{theo:S-sgd-dynamics}.

\subsection{Proof of Corollary \ref{cor:weak-recovery-S-sgd}}
Recall that $S_1$ is trainable and $S_2$ is fixed. For brevity, write
\begin{equation}
A_{kl}(t):=[\nabla_\Omega\Phi(\bar\Psi(t),\bar\Omega(t))]_{kl},
\qquad
B_{kl}(t):=
[\nabla_\Psi\Phi(\bar\Psi(t),\bar\Omega(t))]_{kl}.
\end{equation}
Since $S_2$ is fixed, $\bar t_2,p_{\Psi,22},p_{\Omega,22},a_+,a_-$ are constant in time.

We first derive the limiting dynamics of the structural overlaps. From
\eqref{eq:S-t-dynamics}, with $\chi_1=1$ and $\chi_2=0$,
\begin{equation}
\frac{d}{dt}\bar t_1=-2(A_{11}+B_{11})\bar t_1-2(A_{12}+B_{12})\bar t_2-\gamma\bar t_1.
\label{eq:S-weak-t1}
\end{equation}
Similarly, \eqref{eq:S-Psi-dynamics} and
\eqref{eq:SE-S-sgd} give
\begin{align}
\frac{d}{dt}\bar\Psi_{12}
&=-2A_{11}\bar\Psi_{12}-2B_{11}\bar\Omega_{12}-2A_{12}\bar\Psi_{22}-2B_{12}\bar\Omega_{22}-\gamma\bar\Psi_{12},
\label{eq:S-weak-Psi12}
\\
\frac{d}{dt}\bar\Omega_{12}
&=-2A_{11}\bar\Omega_{12}-2B_{11}\bar\Psi_{12}-2A_{12}\bar\Omega_{22}-2B_{12}\bar\Psi_{22}-\gamma\bar\Omega_{12}.
\label{eq:S-weak-Omega12}
\end{align}
Recall
\begin{equation}
\bar p_{\Psi,12}=\bar\Psi_{12}-\bar t_1\bar t_2,
\qquad
\bar p_{\Omega,12}=\bar\Omega_{12}-\bar t_1\bar t_2.
\end{equation}
Subtracting $\bar t_2$ times \eqref{eq:S-weak-t1} from
\eqref{eq:S-weak-Psi12} and \eqref{eq:S-weak-Omega12}, respectively, we obtain
\begin{align}
\frac{d}{dt}\bar p_{\Psi,12}&=-2A_{11}\bar p_{\Psi,12}
-2B_{11}\bar p_{\Omega,12}-2A_{12}p_{\Psi,22}
-2B_{12}p_{\Omega,22}-\gamma\bar p_{\Psi,12},
\label{eq:S-pPsi-dynamics}
\\
\frac{d}{dt}\bar p_{\Omega,12}
&=-2A_{11}\bar p_{\Omega,12}-2B_{11}\bar p_{\Psi,12}-2A_{12}p_{\Omega,22}-2B_{12}p_{\Psi,22}-\gamma\bar p_{\Omega,12}.
\label{eq:S-pOmega-dynamics}
\end{align}
It is convenient to use
\begin{equation}
\bar p_+:=\frac{\bar p_{\Omega,12}+\bar p_{\Psi,12}}2,
\qquad
\bar p_-:=\frac{\bar p_{\Omega,12}-\bar p_{\Psi,12}}2.
\end{equation}
Define
\begin{equation}
h_+(t):=A_{11}(t)+B_{11}(t),
\qquad
h_-(t):=A_{11}(t)-B_{11}(t),
\end{equation}
and recall
\begin{equation}
g_+(t):=A_{12}(t)+B_{12}(t),
\qquad
g_-(t):=A_{12}(t)-B_{12}(t).
\end{equation}
Adding and subtracting
\eqref{eq:S-pPsi-dynamics}--\eqref{eq:S-pOmega-dynamics} yields the
decoupled equations
\begin{align}
\frac{d}{dt}\bar p_+&=-\left(2h_+(t)+\gamma\right)\bar p_+
-2a_+g_+(t),
\label{eq:S-p-plus-dynamics}
\\
\frac{d}{dt}\bar p_-&=-\left(2h_-(t)+\gamma\right)\bar p_-
-2a_-g_-(t).
\label{eq:S-p-minus-dynamics}
\end{align}
By Assumption \ref{assum:S-initialization}, $\bar p_+(0)=\bar p_-(0)=0$. Consequently,
\begin{equation}
\dot{\bar p}_+(0)=-2a_+g_+(0),
\qquad
\dot{\bar p}_-(0)=-2a_-g_-(0).
\label{eq:S-channel-initial-drift}
\end{equation}
We now prove the two claims separately.

\paragraph{Case 1.}
Suppose \eqref{eq:S-first-order-signal} holds. Then there exists $\sigma\in\{+,-\}$ such that
\begin{equation}
v_\sigma:=\dot{\bar p}_\sigma(0)\neq0.
\end{equation}
By Theorem \ref{theo:S-sgd-dynamics}, the limiting trajectory is continuously differentiable, and hence $\dot{\bar p}_\sigma(t)$ is
continuous. Therefore there exists $T_0>0$, independent of $d$, such that
\begin{equation}
\left|\dot{\bar p}_\sigma(t)-v_\sigma\right|\le\frac{|v_\sigma|}{2},
\qquad0\le t\le T_0.
\end{equation}
In particular, $\dot{\bar p}_\sigma(t)$ has the same sign as $v_\sigma$ throughout $[0,T_0]$ and
\begin{equation}
|\dot{\bar p}_\sigma(t)|\ge\frac{|v_\sigma|}{2}.
\end{equation}
Since $\bar p_\sigma(0)=0$,
\begin{equation}
|\bar p_\sigma(T_0)|\ge\frac{|v_\sigma|T_0}{2}.
\label{eq:S-positive-channel-overlap}
\end{equation}
Moreover,
\begin{equation}
\max\left(|\bar p_{\Psi,12}|,
|\bar p_{\Omega,12}|\right)\ge\max(|\bar p_+|,|\bar p_-|).
\label{eq:S-channel-to-overlap}
\end{equation}
Set $c:=\frac{|v_\sigma|T_0}{4}$ and $T_+:=T_0$. Then
\begin{equation}
\max\left(|\bar p_{\Psi,12}(T_+)|,|\bar p_{\Omega,12}(T_+)|\right)\ge2c.
\label{eq:S-upper-recovery-limit}
\end{equation}

On the other hand, because both structural overlaps vanish at initialization and are continuous, there exists
$T_-\in(0,T_+)$ such that
\begin{equation}
\sup_{0\le t\le T_-}\max\left(|\bar p_{\Psi,12}(t)|,
|\bar p_{\Omega,12}(t)|\right)<\frac c2.
\label{eq:S-lower-recovery-limit}
\end{equation}

Let
$\tilde p_{\Psi,12}^{(d)}(t)$ and
$\tilde p_{\Omega,12}^{(d)}(t)$ denote the corresponding empirical piecewise-constant interpolations. Since
\begin{equation}
\tilde p_{\Psi,12}^{(d)}=\tilde\Psi_{12}^{(d)}-\tilde t_1^{(d)}\tilde t_2^{(d)},
\qquad
\tilde p_{\Omega,12}^{(d)}=\tilde\Omega_{12}^{(d)}-\tilde t_1^{(d)}\tilde t_2^{(d)},
\end{equation}
Theorem \ref{theo:S-sgd-dynamics} implies
\begin{equation}
\sup_{t\in[0,T]}\max\left(\left|\tilde p_{\Psi,12}^{(d)}(t)-\bar p_{\Psi,12}(t)
\right|,
\left|\tilde p_{\Omega,12}^{(d)}(t)-\bar p_{\Omega,12}(t)\right|\right)
\xrightarrow{\mathbb P}0
\label{eq:S-structural-uniform-convergence}
\end{equation}
for every fixed $T<\infty$.

Applying \eqref{eq:S-structural-uniform-convergence} on $[0,T_+]$
and using \eqref{eq:S-upper-recovery-limit}, we obtain
\begin{equation}
\mathbb P\left(\max\left(|\tilde p_{\Psi,12}^{(d)}(T_+)|,
|\tilde p_{\Omega,12}^{(d)}(T_+)|\right)
\ge c\right)\longrightarrow1.
\end{equation}
Hence, with probability tending to one,
\begin{equation}
N_{\mathrm{wr},S}^{(d)}(c)\le\left\lceil
\frac{T_+}{\tau_d}\right\rceil=\left\lceil
\frac{T_+d}{\alpha_d}\right\rceil .
\label{eq:S-upper-hitting-time}
\end{equation}
Likewise, \eqref{eq:S-lower-recovery-limit} and
\eqref{eq:S-structural-uniform-convergence} imply
\begin{equation}
\mathbb P\left(\sup_{0\le t\le T_-}\max\left(|\tilde p_{\Psi,12}^{(d)}(t)|,|\tilde p_{\Omega,12}^{(d)}(t)|
\right)<c\right)
\longrightarrow1.
\end{equation}
Therefore, with probability tending to one,
\begin{equation}
N_{\mathrm{wr},S}^{(d)}(c)>\left\lfloor
\frac{T_-d}{\alpha_d}\right\rfloor .
\label{eq:S-lower-hitting-time}
\end{equation}
Combining \eqref{eq:S-upper-hitting-time} and
\eqref{eq:S-lower-hitting-time} proves
\eqref{eq:S-successful-recovery}. In particular, $N_{\mathrm{wr},S}^{(d)}(c)=\Theta_{\mathbb P}\left(\frac d{\alpha_d}\right)$. Choosing $\alpha_d=\Theta\left(\frac1{d\log d}\right)$ gives
\begin{equation}
N_{\mathrm{wr},S}^{(d)}(c)=\Theta_{\mathbb P}(d^2\log d).
\end{equation}

\paragraph{Case 2.}
Suppose now that \eqref{eq:S-higher-order-condition} holds. Consider the uninformative manifold
\begin{equation}
\mathcal U=\left\{(t,\Psi,\Omega):t_1=0,\quad\Psi_{12}=0,\quad\Omega_{12}=0
\right\}.
\end{equation}
On $\mathcal U$, we have
\begin{equation}
p_{\Psi,12}=p_{\Omega,12}=p_+=p_-=0.
\end{equation}
Moreover, by \eqref{eq:S-higher-order-condition}, $A_{12}=B_{12}=0$. Therefore
\eqref{eq:S-weak-t1} gives $\dot{\bar t}_1=0$ while
\eqref{eq:S-p-plus-dynamics}--\eqref{eq:S-p-minus-dynamics} give $\dot{\bar p}_+=\dot{\bar p}_-=0$.
Thus the limiting vector field is tangent to $\mathcal U$.

By uniqueness of the solution established in
Theorem \ref{theo:S-sgd-dynamics}, and since Assumption
\ref{assum:S-initialization} places the initial state on
$\mathcal U$, the limiting trajectory remains on this manifold:
\begin{equation}
\bar t_1(t)=0,\qquad
\bar\Psi_{12}(t)=\bar\Omega_{12}(t)=0,
\qquad t\ge0.
\end{equation}
Equivalently,
\begin{equation}
\bar p_{\Psi,12}(t)=\bar p_{\Omega,12}(t)=0,
\qquad t\ge0.
\label{eq:S-limiting-no-recovery}
\end{equation}
Combining \eqref{eq:S-limiting-no-recovery} with
\eqref{eq:S-structural-uniform-convergence}, for every fixed
$T<\infty$ and every $\epsilon>0$,
\begin{equation}
\mathbb P\left(
\sup_{t\in[0,T]}
\max\left(|\tilde p_{\Psi,12}^{(d)}(t)|,
|\tilde p_{\Omega,12}^{(d)}(t)|\right)>\epsilon\right)\longrightarrow0,
\end{equation}
which proves \eqref{eq:S-failure-recovery}.

Finally, let $C>0$ be arbitrary and consider $M_d:=\left\lfloor Cd^2\log d\right\rfloor$ SGD steps, satisfying
\begin{equation}
M_d\tau_d=M_d\frac{\alpha_d}{d}
\le Cd\log d\,\alpha_d\le Cc_0
\end{equation}
under the learning-rate condition of
Theorem \ref{theo:S-sgd-dynamics}. Hence, for every fixed
recovery threshold $c>0$,
\begin{equation}
\mathbb P\left(N_{\mathrm{wr},S}^{(d)}(c)\le Cd^2\log d\right)\longrightarrow0.
\end{equation}
Since $C>0$ is arbitrary, $S$-SGD cannot achieve weak recovery within $\mathcal O(d^2\log d)$ samples.

This completes the proof.

\section{Theory in Section \ref{sec:untied}}
\label{app:untied}
\subsection{Proof of Theorem \ref{theo:untied-SGD}}
\subsubsection{A stronger version of Theorem \ref{theo:general}}
We first estimate the error scaling in Theorem \ref{theo:general}. It is closely related to  existing work on the multivariate Gaussian approximation \cite{nourdin2010multivariate,noreddine2011gaussian}, and we present its independent proof in Appendix \ref{app:proof-gaussian-approximation} for completeness.
\begin{lemma}
\label{lemma:quantitative-gaussian-approximation}
Fix constants \(C_*,R_*<\infty\). Consider deterministic matrices
\(M_k\in\mathbb R^{d\times d}\) satisfying $\max_k\|M_k\|_{\mathrm{op}}\le C_*$ and $\max_k\frac1{\sqrt d}|\Tr(M_k)|\le R_*$. Let $G_{ijk}^{(d)}=\frac1{\sqrt d}x_i^TM_kx_j$, where \(x\sim\mathcal N(0,\mathcal C\otimes I_d)\), and let \(G_{\mathrm G}^{(d)}\) be a Gaussian tensor having the same mean and covariance as \(G^{(d)}\).

Then for every \(f\in C^3(\mathbb R^{L\times L\times K})\) whose derivatives up to third order have at most polynomial growth, there exists a constant $C=C(C_*,R_*,\mathcal C,K,L,f)$ independent of \(d\) and of the matrices \(M_k\), such that
\begin{equation}
\left|\mathbb E f(G^{(d)})-\mathbb E f(G_{\mathrm G}^{(d)})\right|\le\frac{C}{\sqrt d}.
\end{equation}
\end{lemma}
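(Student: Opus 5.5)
We will prove the lemma by a Gaussian interpolation (smart path) combined with Gaussian integration by parts, rather than by cumulant expansions. This gives a quantitative bound directly in terms of the third-order cumulants, which are already controlled in the proof of Theorem \ref{theo:general}.

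\textbf{Setup and centering.} Write $X=(\mathcal C^{1/2}\otimes I_d)z$ with $z\sim\mathcal N(0,I_{Ld})$. Each coordinate is then $G^{(d)}_a=z^TA_az+\text{const}$, where $A_a$ is the symmetric block matrix from the proof of Theorem \ref{theo:general}, conjugated by $\mathcal C^{1/2}\otimes I_d$. It satisfies $\|A_a\|_{\rm op}\le C/\sqrt d$ and $\|A_a\|_F\le C$, by the same bounds as \eqref{eq:Aijk-norm-bounds}. The means coincide, so we may work with the centered parts $F_a:=z^TA_az-\Tr A_a$. The mean is at most $R_*\|\mathcal C\|$, so polynomial growth of $f$ absorbs the shift into constants.

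\textbf{Interpolation.} Take $Y$ Gaussian with the covariance of $F$, independent of $z$, and set $U_s=\sqrt s\,F+\sqrt{1-s}\,Y+m$. Then
\[
\mathbb E f(G^{(d)})-\mathbb E f(G_{\rm G}^{(d)})=\int_0^1\frac12\sum_a\mathbb E\Bigl[\partial_af(U_s)\Bigl(\frac{F_a}{\sqrt s}-\frac{Y_a}{\sqrt{1-s}}\Bigr)\Bigr]ds.
\]
The $Y$-term is handled by Stein's identity, giving $\sum_b\mathrm{Cov}(F_a,F_b)\,\mathbb E\,\partial^2_{ab}f$. For the $F$-term we use Malliavin-type integration by parts in $z$. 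Write $F_a=\langle -DL^{-1}F_a, DF_a\rangle$ in the Gaussian sense: for a second-chaos element, $-DL^{-1}F_a=\tfrac12 DF_a=A_az$. This yields
\[
\mathbb E[\partial_af(U_s)F_a]=\sqrt s\sum_b\mathbb E\bigl[\partial^2_{ab}f(U_s)\,\langle A_az,2A_bz\rangle\bigr].
\]
The two terms therefore combine into
\[
\frac12\sum_{a,b}\mathbb E\bigl[\partial^2_{ab}f(U_s)\bigl(2z^TA_aA_bz-\mathrm{Cov}(F_a,F_b)\bigr)\bigr],
\]
using $\mathrm{Cov}(F_a,F_b)=2\Tr(A_aA_b)=\mathbb E[2z^TA_aA_bz]$.

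\textbf{Key estimate.} The bracket $W_{ab}:=2z^TA_aA_bz-2\Tr(A_aA_b)$ is a centered quadratic form. Its variance is $8\|\mathrm{sym}(A_aA_b)\|_F^2\le 8\|A_a\|_{\rm op}^2\|A_b\|_F^2\le C/d$. By Cauchy--Schwarz,
\[
\bigl|\mathbb E[\partial^2_{ab}f(U_s)W_{ab}]\bigr|\le\|\partial^2_{ab}f(U_s)\|_{L^2}\,\|W_{ab}\|_{L^2}\le C\,d^{-1/2}.
\]
Here $\|\partial^2f(U_s)\|_{L^2}$ is bounded uniformly in $s$ and $d$ by polynomial growth together with uniform moment bounds on $F$ and $Y$ (hypercontractivity on the second chaos, or the cumulant bounds from Theorem \ref{theo:general}). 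Integrating over $s$ gives the $C/\sqrt d$ rate. Only second derivatives of $f$ are needed, so $C^3$ is more than sufficient. Alternatively, if one prefers to stay within the cumulant framework, one can do a third-order Taylor/Lindeberg argument in $d$ blocks and use $C^3$.

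\textbf{Main obstacle.} The delicate step is justifying the integration by parts identity for the non-Gaussian $F$ inside $\partial_af(U_s)$, whose argument also depends on $z$. We will do this concretely with the finite-dimensional Gaussian identity $\mathbb E[g(z)(z^TAz-\Tr A)]=\mathbb E[\langle\nabla g, Az\rangle]+\mathbb E[\Tr(A\nabla^2 g)]$, computed by applying Stein's lemma twice. The extra Hessian term brings in third derivatives of $f$ together with products like $z^TA_aA_bA_cz$. These have expectation of size $\Tr(A_aA_bA_c)\le\|A\|_{\rm op}\|A\|_F^2=O(d^{-1/2})$, which is exactly where the $C^3$ hypothesis is used. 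The remaining care is keeping all constants uniform over matrices with $\|M_k\|_{\rm op}\le C_*$, which follows because every bound depends on the $M_k$ only through $\|A\|_{\rm op}$, $\|A\|_F$ and $R_*$.
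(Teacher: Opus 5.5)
Your main line is exactly the paper's proof: the interpolation $U_s=\sqrt s\,F+\sqrt{1-s}\,Y+m$, the identity $\mathbb E[\partial_af(U_s)F_a]=2\sqrt s\sum_b\mathbb E[\partial^2_{ab}f(U_s)\,z^TA_aA_bz]$ obtained from $-DL^{-1}F_a=A_az$, the integrand $\frac12\sum_{a,b}\mathbb E[\partial^2_{ab}f(U_s)W_{ab}]$, the bound $\mathrm{Var}(W_{ab})\le C\|A_a\|_{\rm op}^2\|A_b\|_F^2\le C/d$, and Cauchy--Schwarz. You are also right that this route needs only second derivatives of $f$.

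However, the justification you propose for the step you call delicate is wrong, and followed literally it breaks the argument.

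\textbf{The identity double counts.} You write $\mathbb E[g(z)(z^TAz-\Tr A)]=\mathbb E[\langle\nabla g,Az\rangle]+\mathbb E[\Tr(A\nabla^2 g)]$. A single application of Stein's lemma already gives
\[
\mathbb E[g\,z^TAz]=\sum_j\mathbb E[z_j\,g\,(Az)_j]=\mathbb E[\langle\nabla g,Az\rangle]+\Tr(A)\,\mathbb E[g],
\]
so $\mathbb E[g(z)(z^TAz-\Tr A)]=\mathbb E[\langle\nabla g,Az\rangle]$ exactly. A second application only rewrites this same quantity as $\mathbb E[\Tr(A\nabla^2g)]$. The two expressions are equal, not additive.

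\textbf{The extra term is not small.} For $g=\partial_af(U_s)$,
\[
\Tr(A_a\nabla_z^2g)=2\sqrt s\sum_b\partial^2_{ab}f(U_s)\Tr(A_aA_b)+4s\sum_{b,c}\partial^3_{abc}f(U_s)\,z^TA_cA_aA_bz .
\]
The first piece is $O(1)$, since $\Tr(A_aA_b)=\frac12\mathrm{Cov}(F_a,F_b)$. Adding it puts the uncancelled term $\frac12\sum_{a,b}\mathrm{Cov}(F_a,F_b)\,\mathbb E[\partial^2_{ab}f(U_s)]+O(d^{-1/2})$ into the $s$-derivative, so the computation no longer shows that the derivative is small.

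\textbf{How to fix it.} Use the one-step identity, which is what your main paragraph and the paper actually use. Note also that $F_a=\langle -DL^{-1}F_a,DF_a\rangle$ is not a pointwise identity; the correct statement is $\mathbb E[F_aG]=\mathbb E[\langle DG,-DL^{-1}F_a\rangle]$.

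If you prefer the Hessian form, use it \emph{alone}. Its $\partial^2f$ piece equals $\sqrt s\sum_b\mathrm{Cov}(F_a,F_b)\,\mathbb E[\partial^2_{ab}f(U_s)]$ and cancels the Gaussian Stein term identically. The $\partial^3f$ remainder is $O(d^{-1/2})$, because $|\Tr(A_cA_aA_b)|\le\|A_a\|_{\rm op}\|A_b\|_F\|A_c\|_F\le C/\sqrt d$ and $\|A_cA_aA_b\|_F\le C/d$. That is a valid $C^3$ variant, but it is an alternative to the first-order identity, not a correction to be added to it.
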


\subsubsection{Local well-posedness}
\label{app:untied-local-well-posedness}
We first introduce an extension of the vector field in \eqref{eq:slow_tau_mu} outside the set of admissible covariance parameters. Define
\begin{equation}
\mathfrak Q:=\left\{(q^{(1)},q^{(2)})\in\mathbb S^K\times\mathbb S^K:
q^{(1)}+q^{(2)}\succeq0,\quad q^{(1)}-q^{(2)}\succeq0\right\}.
\label{eq:untied-covariance-cone}
\end{equation}
Let $\mathscr P:\mathbb R^{K\times K}\times\mathbb R^{K\times K}\longrightarrow\mathfrak Q$
denote the orthogonal projection onto the closed convex cone
$\mathfrak Q$. In particular,
\begin{equation}
\|\mathscr P(q)-\mathscr P(q')\|_F\le \|q-q'\|_F.
\label{eq:untied-projection-lipschitz}
\end{equation}
We next define the extended ODE for \eqref{eq:slow_tau_mu}. Let $q(\mu):=\bigl(q^{(1)}(\mu),q^{(2)}(\mu)\bigr)$ and $\widehat q(\mu):=\mathscr P(q(\mu))$. We set
\begin{equation}
\widetilde m_{\mathcal K}^\star(\mu):=m_{\mathcal K}^\star(\widehat q(\mu)),
\label{eq:untied-extended-mstar}
\end{equation}
with the non-trainable components fixed at $m_{\setminus\mathcal K}(0)$, and define
\begin{equation}
a_{kl}^{(r)}(\mu):=\left[\nabla_{q^{(r)}}\Phi\left(\widetilde m^\star(\mu),\widehat q(\mu)\right)\right]_{kl},\qquad r=1,2.
\label{eq:untied-extended-a}
\end{equation}
For $k\in\mathcal K$, let
\begin{equation}
\psi_k^{\mathrm{ext}}(\mu):=2\sum_{l=1}^K\sum_{r=1}^2a_{kl}^{(r)}(\mu)\left[\mu_{(M_l^{(r)},B_k)}+\mu_{(A_k,M_l^{(r)})}\right],
\label{eq:untied-extended-psi}
\end{equation}
and
\begin{equation}
\beta_k^{\mathrm{ext}}(\mu):=-\frac{\psi_k^{\mathrm{ext}}(\mu)}
{t_k^A(\mu)+t_k^B(\mu)}.
\label{eq:untied-extended-beta}
\end{equation}
The extended vector field $V_{\mathrm{ext}}$ is defined by
\begin{align}
[V_{\mathrm{ext}}(\mu)]_\alpha=-\sum_{\substack{i=1\\k_i\in\mathcal K}}^w
\beta_{k_i}^{\mathrm{ext}}(\mu)\mathcal T_i^{\mathrm{bias}}(\mu)-2\sum_{\substack{i=1\\k_i\in\mathcal K}}^w\sum_{l=1}^K\sum_{r=1}^2a_{k_i l}^{(r)}(\mu)\mathcal T_{i,l,r}^{\mathrm{diff}}(\mu)-2\gamma|\alpha|_{\mathcal K}\mu_\alpha .
\label{eq:untied-extended-vector-field}
\end{align}
We refer to
\begin{equation}
\frac{d\mu}{d\tau}=V_{\mathrm{ext}}(\mu)
\label{eq:untied-extended-slow-system}
\end{equation}
as an extension of the original ODE \eqref{eq:slow_tau_mu}.

Whenever $q(\mu)\in\mathfrak Q$, the projection acts trivially,
$\widehat q(\mu)=q(\mu)$. Hence, on the admissible covariance set,
$\widetilde m_{\mathcal K}^\star(\mu)
=m_{\mathcal K}^\star(q(\mu))$, and
\eqref{eq:untied-extended-slow-system} agrees with the original slow
system \eqref{eq:untied-slaving}--\eqref{eq:slow_tau_mu}.

For $s>0$, define the open subset
\begin{equation}
\mathcal U_s:=\left\{\mu\in X_s:\mathscr P(q(\mu))\in\mathcal Q,\quad\min_{k\in\mathcal K}\bigl(t_k^A(\mu)+t_k^B(\mu)\bigr)>0\right\}.
\label{eq:untied-local-domain}
\end{equation}
\begin{lemma}
\label{lemma:local-untied-well-posed}
Let $s_0>0$ and $\mu_0\in\mathcal U_{s_0}$. Under Assumptions \ref{assum:untied} and \ref{assum:untied_convex}, there exist $T_{\rm local}>0$ and a strictly increasing continuous function $s:[0,T_{\rm local})\to(s_0,\infty)$ with $s(0)=s_0$ such that \eqref{eq:untied-extended-slow-system} admits a unique solution $\mu(\tau)\in X_{s(\tau)}$ for $0\le\tau<T_{\rm local}$ satisfying $\mu(\tau)\in\mathcal U_{s(\tau)}$.
\end{lemma}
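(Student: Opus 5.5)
The plan follows the proof of Lemma \ref{lemma:local_existence}. We verify that $V_{\mathrm{ext}}$ maps a small $X_s$-neighbourhood of $\mu_0$ into $X_{s'}$ for every $s<s'$, with a Cauchy-type Lipschitz estimate
\begin{equation*}
\|V_{\mathrm{ext}}(\mu)-V_{\mathrm{ext}}(\nu)\|_{s'}\le\frac{C}{s'-s}\|\mu-\nu\|_s ,
\end{equation*}
and then invoke the Ovsyannikov (abstract Cauchy--Kovalevskaya) theorem \cite{treves1968}. The one new feature compared with the tied case is that the coefficients $a^{(r)}_{kl}(\mu)$ and $\beta_k^{\mathrm{ext}}(\mu)$ depend on $\mu$ through $m^\star$ and through a division. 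For this reason the estimate can only hold locally, on a neighbourhood inside $\mathcal U_s$, and not on arbitrary balls.

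\textbf{Step 1: the scalar coefficients are finite-dimensional and locally Lipschitz.} The coefficients depend on $\mu$ only through finitely many low-order coordinates: $q(\mu)$, $t^A_k,t^B_k$, and the degree-two moments $\mu_{(M_l^{(r)},B_k)},\mu_{(A_k,M_l^{(r)})}$. Each of these is controlled by $s^{2}\|\mu\|_s$, or by $s\|\mu\|_s$ for the traces.
\begin{itemize}
\item Since $\mathscr P$ is non-expansive by \eqref{eq:untied-projection-lipschitz}, $\widehat q$ is Lipschitz in $\mu$.
\item By Assumption \ref{assum:untied_convex}.1, $\nabla^2_{m_{\mathcal K}}\Phi\succeq\lambda_0 I$ on $\mathcal M\times\mathcal Q$. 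Because $\mathcal L\in C^6$ with polynomial growth, Price's theorem makes $\Phi$ and its derivatives up to order three in $(m,q)$ continuous. The implicit function theorem then gives that $q\mapsto m^\star_{\mathcal K}(q)$ is $C^1$, hence locally Lipschitz, on the open set $\mathcal Q$. This is where membership $\mathscr P(q(\mu))\in\mathcal Q$ in \eqref{eq:untied-local-domain} is needed.
\item Composing, $a^{(r)}_{kl}(\mu)=\nabla_{q^{(r)}}\Phi(\widetilde m^\star(\mu),\widehat q(\mu))$ is locally bounded and Lipschitz.
\item Because $\mu_0\in\mathcal U_{s_0}$, the denominator satisfies $t^A_k+t^B_k\ge 2\delta>0$ at $\mu_0$. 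On a small ball $B_r(\mu_0)\subset X_{s}$ it stays above $\delta$, and on that ball $\mathscr P(q(\mu))$ stays in a compact subset of $\mathcal Q$, since $\mathcal Q$ is open.
\item So $\beta^{\mathrm{ext}}_k$ is bounded and Lipschitz on $B_r(\mu_0)$, with constants depending on $r,\delta,s$.
\end{itemize}

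\textbf{Step 2: operator estimates.} Each substitution operator replaces one letter of a word of length $w$ by either a single letter ($\mathcal T^{\mathrm{bias}}_i$, length $w$) or a product of two letters ($\mathcal T^{\mathrm{diff}}_{i,l,r}$, length $w+1$). Each produces a bounded number of terms, depending only on $K$. Hence $|[V_{\mathrm{ext}}(\mu)]_\alpha|\le C w(1+s)s^{w}\|\mu\|_s$. For differences we split exactly as in the tied proof: (coefficient difference)$\times$(moment) plus (coefficient)$\times$(moment difference). The weight-decay term is treated as $G$ was. Dividing by $(s')^w$ and using $\sup_w w(s/s')^w\le s'/(s'-s)$ yields the Cauchy estimate above, uniformly for $s_0\le s<s'\le s_1$ on the ball.

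\textbf{Step 3: Ovsyannikov, and staying in $\mathcal U_{s(\tau)}$.} Ovsyannikov's theorem now produces $T_{\rm local}>0$ and an increasing $s(\tau)$ with $s(0)=s_0$, for example $s(\tau)=s_0+\kappa\tau$. It gives a unique solution $\mu(\tau)\in X_{s(\tau)}$, continuous into $X_{s_1}$ and contained in the ball. Since $\|\cdot\|_{s'}\le\|\cdot\|_s$ for $s<s'$, the ball taken in $X_{s_0}$ embeds into the corresponding balls at larger $s$. The finitely many coordinates defining $\mathcal U$ are continuous in $\tau$, so shrinking $T_{\rm local}$ keeps $\mu(\tau)\in\mathcal U_{s(\tau)}$. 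Uniqueness holds within the class of solutions that remain in this neighbourhood, which is the class the lemma refers to.

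\textbf{Main obstacle.} The delicate part is Step 1: making the local Lipschitz constant of $\mu\mapsto m^\star(\mathscr P(q(\mu)))$ uniform on a genuine $X_s$-neighbourhood. The approach is to use the compactness of a closed ball of $q$'s inside the open set $\mathcal Q$, together with the uniform lower bound $\lambda_0$ on the Hessian, which controls the inverse $(\nabla_m^2\Phi)^{-1}$ in the implicit-function derivative. With that in hand, the remaining steps are a direct transcription of the tied argument.
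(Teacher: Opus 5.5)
Your proposal is correct and follows essentially the same route as the paper. The paper likewise shows local Lipschitz continuity of $m^\star_{\mathcal K}$ on a neighbourhood of $\mathscr P(q(\mu_0))$ whose closure lies in $\mathcal Q$, uses non-expansiveness of $\mathscr P$ and a lower bound $d_0/2$ on $t_k^A+t_k^B$ to make $a^{(r)}_{kl}$ and $\beta^{\rm ext}_k$ locally Lipschitz, transcribes the Cauchy estimate from Lemma \ref{lemma:local_existence}, and then invokes the Ovsyannikov theorem. The only small difference is that the paper gets $\|m^\star_{\mathcal K}(q)-m^\star_{\mathcal K}(q')\|\le (L_q/\lambda_0)\|q-q'\|_F$ directly from strong monotonicity of $\nabla_{m_{\mathcal K}}\Phi$ rather than from the implicit function theorem, which avoids needing $C^1$ dependence on $q$ at boundary points of $\mathfrak Q$.
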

\begin{proof}
Fix $s_0>0$ and $\mu_0\in\mathcal U_{s_0}$, and write
\begin{equation}
q_0:=\mathscr P(q(\mu_0)),\qquad
d_0:=\min_{k\in\mathcal K}\left(t_k^A(\mu_0)+t_k^B(\mu_0)\right)>0.
\label{eq:untied-local-q0-d0}
\end{equation}
Since $q_0\in\mathcal Q$, we may choose a neighborhood
$\mathcal V_q$ of $q_0$ in $\mathfrak Q$ whose closure is contained
in $\mathcal Q$.

We first prove the local regularity of the slaved mean $m_{\mathcal K}^\star(q)$. By Assumption \ref{assum:untied_convex}, for every $q\in\mathcal V_q$ the map $m_{\mathcal K}\mapsto\Phi(m,q)$ is uniformly strongly convex on $\mathcal M$ and admits a critical point $m_{\mathcal K}^\star(q)\in\mathcal M$, which is therefore unique. Moreover, on compact subsets of $\mathcal M\times\mathcal V_q$ there exists $L_q<\infty$ such that
\begin{equation}
\left\|\nabla_{m_{\mathcal K}}\Phi(m,q)-\nabla_{m_{\mathcal K}}\Phi(m,q')\right\|\le L_q\|q-q'\|_F.
\label{eq:untied-m-gradient-q-lipschitz}
\end{equation}
Uniform strong convexity gives
\begin{equation}
\left\langle\nabla_{m_{\mathcal K}}\Phi(m,q)-\nabla_{m_{\mathcal K}}\Phi(m',q),m-m'\right\rangle\ge\lambda_0\|m-m'\|^2.
\label{eq:untied-strong-monotonicity}
\end{equation}
Applying \eqref{eq:untied-m-gradient-q-lipschitz} and \eqref{eq:untied-strong-monotonicity} to
$m=m^\star(q)$ and $m'=m^\star(q')$ yields
\begin{equation}
\|m_{\mathcal K}^\star(q)-m_{\mathcal K}^\star(q')\|\le\frac{L_q}{\lambda_0}\|q-q'\|_F.
\label{eq:untied-mstar-local-lipschitz}
\end{equation}
Hence $m_{\mathcal K}^\star$ is locally Lipschitz on $\mathcal V_q$.

Since $\mathscr P$ is non-expansive and the finite-order coordinates
of $\mu$ are continuous on $X_s$, there exists a neighborhood $\mathcal V_\mu$ of $\mu_0$ in $X_{s_0}$ such that for every $\mu\in\mathcal V_\mu$,
\begin{equation}
\mathscr P(q(\mu))\in\mathcal V_q,\qquad t_k^A(\mu)+t_k^B(\mu)\ge\frac{d_0}{2},\quad k\in\mathcal K.
\label{eq:untied-local-neighborhood-bounds}
\end{equation}
It follows from \eqref{eq:untied-mstar-local-lipschitz}, Assumption \ref{assum:untied}.1, and the definition that $a_{kl}^{(r)}(\mu)$ is locally bounded and locally Lipschitz on $\mathcal V_\mu$. The same is true for $\psi_k^{\rm ext}(\mu)$, and by the lower bound
\eqref{eq:untied-local-neighborhood-bounds}, for $\beta_k^{\rm ext}(\mu)$.

Then following Lemma \ref{lemma:local_existence}, we can verify that $V_{\rm ext}$ is locally bounded and locally Lipschitz from $X_s$ to $X_{s'}$. The Ovsyannikov theorem therefore yields $T_{\rm local}>0$ and a strictly increasing continuous scale $s(\tau)$ for which \eqref{eq:untied-extended-slow-system} has a unique solution $\mu(\tau)\in X_{s(\tau)}$ on $[0,T_{\rm local})$.
\end{proof}

\subsubsection{Uniform spectral bounds}
Now we prove that the SGD trajectory does not blow up, i.e. the spectral norm of the weights remains uniformly bounded across the training steps.
\paragraph{Step 1: A lower bound for the traces.}
\begin{lemma}
\label{lemma:empirical-balancing}
Fix \(T>0\), and let $N_T:=\left\lfloor\frac{Td}{\alpha_d}\right\rfloor$.
Let \(J\) be any stopping time such that, on \(\{n<J\}\),
\begin{equation}
\max_{1\le k\le K}\left\{\|U_k^{(n)}\|_{\mathrm{op}},\|V_k^{(n)}\|_{\mathrm{op}}\right\}\le C_*,
\qquad\|m^{(n)}\|_\infty\le R_*
\label{eq:untied-stopped-bounds}
\end{equation}
for some constants \(C_*,R_*<\infty\) independent of \(d,T,\gamma\). Under the conditions of Theorem \ref{theo:untied-SGD}, there exists \(c_D>0\), independent of \(d\), such that
\begin{equation}
\lim_{d\to\infty}\mathbb P\left(\inf_{\substack{0\le n<N_T\wedge J\\k\in\mathcal K}}\left(t_k^{A,(n)}+t_k^{B,(n)}\right)
\ge c_D\right)=1.
\label{eq:empirical-D-lower-bound}
\end{equation}
\end{lemma}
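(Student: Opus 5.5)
The lower bound will come from an approximate balancing invariant of the untied factorization. I will control the imbalance $D_k^{(n)}:=t_k^{A,(n)}-t_k^{B,(n)}=\frac1d(\|U_k^{(n)}\|_F^2-\|V_k^{(n)}\|_F^2)$ and then use $t_k^A+t_k^B\ge|t_k^A-t_k^B|$ together with Assumption \ref{assum:untied}.4. The reason this works is structural. Write the stochastic gradient as $\nabla_{U_k}\mathcal L=g_kV_k$ and $\nabla_{V_k}\mathcal L=g_k^TU_k$, with $g_k:=\frac{1}{\sqrt d}\sum_{i,j}\partial_{(ijk)}\mathcal L(G^{(n)})\,x_i^{(n)}(x_j^{(n)})^T$. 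Then $\Tr(U_k^Tg_kV_k)=\Tr(V_k^Tg_k^TU_k)$, so the first-order gradient contributions to $\|U_k\|_F^2-\|V_k\|_F^2$ cancel exactly at every step, not just in expectation.

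Concretely, I will expand the update \eqref{eq:untied-SGD} with $a_d:=1-\alpha_d\gamma/d$, which gives
\begin{equation*}
D_k^{(n+1)}=a_d^2D_k^{(n)}+\frac{\alpha_d^2}{d}\Big(\|g_k^{(n)}V_k^{(n)}\|_F^2-\|(g_k^{(n)})^TU_k^{(n)}\|_F^2\Big)=:a_d^2D_k^{(n)}+\epsilon_k^{(n)} .
\end{equation*}
Unrolling this recursion gives $D_k^{(m)}=a_d^{2m}D_k^{(0)}+\sum_{n<m}a_d^{2(m-1-n)}\epsilon_k^{(n)}$. For $m\le N_T=\lfloor Td/\alpha_d\rfloor$ we have $a_d^{2m}\ge a_d^{2N_T}\to e^{-2\gamma T}$. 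Assumption \ref{assum:untied}.3 gives $D_k^{(0)}\to\bar t_k^A(0)-\bar t_k^B(0)$ in probability, and Assumption \ref{assum:untied}.4 gives $|\bar t_k^A(0)-\bar t_k^B(0)|\ge c_{\rm bal}$. Hence $|a_d^{2m}D_k^{(0)}|\ge \tfrac34 e^{-2\gamma T}c_{\rm bal}$ with probability tending to one.

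It then suffices to show that $\sup_{m\le N_T\wedge J}\sum_{n<m}|\epsilon_k^{(n)}|\to0$ in probability. Once that holds, the statement follows with $c_D:=\tfrac12e^{-2\gamma T}c_{\rm bal}$, which is independent of $d$, after a union bound over the finitely many $k\in\mathcal K$.

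The remaining step, which I expect to be the only real work, is the bound on the cumulative error. I will multiply by $\mathbf 1_{\{n<J\}}$, which is $\mathcal F_n$-measurable, and bound conditional expectations. On $\{n<J\}$ the bounds \eqref{eq:untied-stopped-bounds} control the inputs: $\|U_k\|_{\rm op},\|V_k\|_{\rm op}\le C_*$ and $|m|\le R_*$. These imply uniformly bounded moments of $G^{(n)}$, for instance through Hanson--Wright exactly as in Lemma \ref{lemma:bound-tied-SGD}, now for a non-centered quadratic form with bounded mean. Polynomial growth of $\partial\mathcal L$ then gives bounded moments of the loss derivatives. The same Hölder argument as in \eqref{eq:gradient-second-moment} then yields $\|\mathbb E[g_k^Tg_k\mid\mathcal F_n]\|_{\rm op},\|\mathbb E[g_kg_k^T\mid\mathcal F_n]\|_{\rm op}\le C(C_*,R_*)$. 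Consequently
\begin{equation*}
\mathbf 1_{\{n<J\}}\mathbb E\big[\|g_kV_k\|_F^2+\|g_k^TU_k\|_F^2\mid\mathcal F_n\big]\le C\big(\|V_k\|_F^2+\|U_k\|_F^2\big)\le C' d ,
\end{equation*}
using $\|U_k\|_F^2\le d_k$-independent bounds such as $\|U_k\|_F^2\le d\,\|U_k\|_{\rm op}^2$ when $d_k\ge d$. If $d_k$ can exceed $d$, I will instead bound $\|g_kV_k\|_F^2=\Tr(V_k^Tg_k^Tg_kV_k)\le\|V_kV_k^T\|_{\rm op}\|g_k\|_F^2$, together with $\mathbb E\|g_k\|_F^2\le Cd$. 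This gives $\mathbb E\,\mathbf 1_{\{n<J\}}|\epsilon_k^{(n)}|\le C\alpha_d^2$. Summing over $n<N_T$ gives $C T d\alpha_d\le CTc_0/\log d\to0$, and Markov's inequality concludes. No martingale concentration is needed, because the error is second order and nonnegative in absolute value. The one subtlety to check is that the moment bounds hold uniformly in $\gamma$, $T$ and $d$ on $\{n<J\}$ even though $G$ has nonzero mean; the assumed bound on $m^{(n)}$ is exactly what guarantees this.
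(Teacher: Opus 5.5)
Your proposal is correct and is essentially the paper's argument: the exact identity $\langle \nabla_{U_k}\mathcal L,U_k\rangle_F=\langle \nabla_{V_k}\mathcal L,V_k\rangle_F$ yields the recursion $\delta_k^{(n+1)}=a_d^2\delta_k^{(n)}+\frac{\alpha_d^2}{d}\bigl(\|g_U^{(n)}\|_F^2-\|g_V^{(n)}\|_F^2\bigr)$. The stopped conditional bound $\mathbb E_n[\|g_U^{(n)}\|_F^2+\|g_V^{(n)}\|_F^2]\le Cd$ makes the cumulative remainder $O(d\alpha_d)\to0$ in $L^1$, and Markov, Assumptions \ref{assum:untied}.3--\ref{assum:untied}.4 and $t_k^A+t_k^B\ge|t_k^A-t_k^B|$ finish the proof; the paper only uses the looser constant $c_D=\frac{c_{\rm bal}}{4}e^{-3\gamma T}$, and your case distinction on $d_k$ versus $d$ is unnecessary because $\|V_k\|_F^2\le\operatorname{rank}(V_k)\|V_k\|_{\rm op}^2\le d\,C_*^2$ always holds.
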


\begin{proof}
Fix \(k\in\mathcal K\), and suppress the index \(k\) when there is no
ambiguity. Let
\begin{equation}
g_U^{(n)}:=\nabla_U\mathcal L(G^{(n)}),
\qquad
g_V^{(n)}:=\nabla_V\mathcal L(G^{(n)})
\end{equation}
denote the stochastic gradients. The SGD updates can be written as
\begin{equation}
U^{(n+1)}=a_dU^{(n)}-\alpha_d g_U^{(n)},
\qquad
V^{(n+1)}=a_dV^{(n)}-\alpha_d g_V^{(n)},
\label{eq:untied-update-balancing}
\end{equation}
where $a_d:=1-\frac{\alpha_d\gamma}{d}$. The key observation is that
\begin{align}
\left\langle g_U^{(n)},U^{(n)}\right\rangle_F=
\sum_{i,j=1}^L\partial_{ijk}\mathcal L(G^{(n)})G_{ijk}^{(n)}
=\left\langle g_V^{(n)},V^{(n)}\right\rangle_F.
\label{eq:exact-balancing-gradient}
\end{align}
Therefore, defining $\delta_k^{(n)}:=t_k^{A,(n)}-t_k^{B,(n)}$,
\eqref{eq:untied-update-balancing} gives the recursion
\begin{equation}
\delta^{(n+1)}=a_d^2\delta^{(n)}+r^{(n)},
\qquad
r^{(n)}:=\frac{\alpha_d^2}{d}
\left(\|g_U^{(n)}\|_F^2-\|g_V^{(n)}\|_F^2\right).
\label{eq:delta-exact-recursion}
\end{equation}
On \(\{n<J\}\), the bounds in \eqref{eq:untied-stopped-bounds}
imply uniform bounds on all moments of \(G^{(n)}\). By Assumption \ref{assum:untied}.1, the same holds for the polynomially growing first derivatives of \(\mathcal L\). Since $g_U^{(n)}=\frac1{\sqrt d}\sum_{i,j=1}^L\partial_{ijk}\mathcal L(G^{(n)})x_i^{(n)}(x_j^{(n)})^T V^{(n)}$, and similarly for \(g_V^{(n)}\), the estimates in Lemma \ref{lemma:bound-gradient} give
\begin{equation}
\mathbf 1_{\{n<J\}}\mathbb E\left[\|g_U^{(n)}\|_F^2+\|g_V^{(n)}\|_F^2
\,\middle|\,\mathcal F_n\right]\le C d,
\label{eq:gradient-Frobenius-second-moment-untied}
\end{equation}
where \(C\) depends on \(C_*,R_*,\mathcal C,\mathcal L,K,L\), but not
on \(d\) or \(n\).

Unrolling \eqref{eq:delta-exact-recursion}, for every
\(m<N_T\wedge J\),
\begin{equation}
\delta^{(m)}-a_d^{2m}\delta^{(0)}=\sum_{n=0}^{m-1}
a_d^{2(m-1-n)}r^{(n)}.
\end{equation}
Hence
\begin{align}
\sup_{m<N_T\wedge J}
\left|\delta^{(m)}-a_d^{2m}\delta^{(0)}\right|
&\le\sum_{n=0}^{N_T-1}\mathbf 1_{\{n<J\}}|r^{(n)}|.
\end{align}
Using \eqref{eq:gradient-Frobenius-second-moment-untied},
\begin{align}
\mathbb E\left[\sum_{n=0}^{N_T-1}\mathbf 1_{\{n<J\}}|r^{(n)}|
\right]\le\frac{\alpha_d^2}{d}
\sum_{n=0}^{N_T-1}Cd\le
C_T d\alpha_d\longrightarrow0,
\label{eq:balancing-residual-L1}
\end{align}
because \(\alpha_d\le c_0(d\log d)^{-1}\).
Markov's inequality proves 
\begin{equation}
\max_{k\in\mathcal K}\sup_{0\le n<N_T\wedge J}\left|\delta_k^{(n)}-a_d^{2n}\delta_k^{(0)}\right|\xrightarrow{\mathbb P}0.
\label{eq:empirical-balancing}
\end{equation}
Since
\(K=\Theta(1)\), the convergence holds simultaneously for all
\(k\in\mathcal K\).

Finally, Assumptions \ref{assum:untied}.3--\ref{assum:untied}.4 imply
\begin{equation}
\mathbb P\left(\min_{k\in\mathcal K}
|\delta_k^{(0)}|\ge\frac{c_{\mathrm{bal}}}{2}\right)\longrightarrow1.
\label{eq:initial-empirical-imbalance}
\end{equation}
Moreover, uniformly for \(n\le N_T\),
\begin{equation}
a_d^{2n}=\left(1-\frac{\alpha_d\gamma}{d}\right)^{2n}
\ge e^{-3\gamma T}
\end{equation}
for all sufficiently large \(d\). Combining this with
\eqref{eq:empirical-balancing}, with probability tending to one,
\begin{equation}
\inf_{\substack{0\le n<N_T\wedge J\\k\in\mathcal K}}
|\delta_k^{(n)}|\ge\frac{c_{\mathrm{bal}}}{4}e^{-3\gamma T}.
\end{equation}
Since
\begin{equation}
t_k^{A,(n)}+t_k^{B,(n)}\ge|t_k^{A,(n)}-t_k^{B,(n)}|=|\delta_k^{(n)}|,
\end{equation}
we obtain \eqref{eq:empirical-D-lower-bound}, for example with $c_D:=\frac{c_{\mathrm{bal}}}{4}e^{-3\gamma T}$.
\end{proof}

\paragraph{Step 2: Finite-dimensional Stein expansion.}
\begin{lemma}
\label{lemma:finite-dimensional-Stein-untied}
Fix a compact set \(\mathfrak D\Subset\mathcal M\times\mathcal Q\) and a constant
\(C_*<\infty\). Let \(J\) be a stopping time such that, on
\(\{n<J\}\), 
\begin{equation}
\left(m_{\mathcal K}^{(n)},q^{(1,n)},q^{(2,n)}\right)\in\mathfrak D,\quad
\max_{1\le l\le K}\left\{\|U_l^{(n)}\|_{\rm op},\|V_l^{(n)}\|_{\rm op}\right\}\le C_*.
\label{eq:stopping-time}
\end{equation}
Write \(\mathbb E_n[\cdot]=\mathbb E[\cdot\mid\mathcal F_n]\), and define $b_k^{(n)}:=\sum_{i,j=1}^L\mathcal C_{ij}\,\mathbb E_n\left[\partial_{ijk}\mathcal L(G^{(n)})\right]$.
For \(r=1,2\), set $\mathcal C^{(1)}_{ij,i'j'}:=\mathcal C_{ii'}\mathcal C_{jj'}$ and $\mathcal C^{(2)}_{ij,i'j'}:=\mathcal C_{ij'}\mathcal C_{ji'}$ and define
$a_{kl}^{(r,d,n)}:=\frac12
\sum_{i,j,i',j'=1}^L
\mathcal C^{(r)}_{ij,i'j'}\,\mathbb E_n\left[\partial^2_{(ijk),(i'j'l)}\mathcal L(G^{(n)})\right]$.

Then, uniformly for \(n<J\) and \(k\in\mathcal K\),
\begin{align}
\bar g_{U,k}^{(n)}&=\frac{b_k^{(n)}}{\sqrt d}V_k^{(n)}+\frac{2}{d}\sum_{l=1}^K\sum_{r=1}^2a_{kl}^{(r,d,n)}M_l^{(r,n)}V_k^{(n)}+\mathcal E_{U,k}^{(n)},
\label{eq:finite-Stein-U}
\\
\bar g_{V,k}^{(n)}&=\frac{b_k^{(n)}}{\sqrt d}U_k^{(n)}+\frac{2}{d}\sum_{l=1}^K\sum_{r=1}^2a_{kl}^{(r,d,n)}M_l^{(\bar r,n)}U_k^{(n)}+\mathcal E_{V,k}^{(n)}.
\label{eq:finite-Stein-V}
\end{align}
Moreover, there exists $C=C(C_*,\mathfrak D,\mathcal C,\mathcal L,K,L)<\infty$
such that $\max_{k,l,r}|a_{kl}^{(r,d,n)}|+\max_k|b_k^{(n)}|\le C$,\\ $\max_{k\in\mathcal K}\left\{\|\mathcal E_{U,k}^{(n)}\|_{\rm op},\|\mathcal E_{V,k}^{(n)}\|_{\rm op}\right\}\le Cd^{-3/2}$ and $\max_{1\le k\le K}
\left|b_k^{(n)}-\frac{\partial\Phi}{\partial m_k}\left(m^{(n)},q^{(1,n)},q^{(2,n)}\right)\right|\le \frac{C}{\sqrt d}$.
\end{lemma}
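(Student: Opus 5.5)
Since everything is conditioned on $\mathcal F_n$, I treat $U_l,V_l,M_l$ as deterministic matrices, with $x\sim\mathcal N(0,\mathcal C\otimes I_d)$. The proof follows Lemma \ref{lemma:bound-gradient}, adapted to the non-centered bilinear forms $G_{ijk}=\frac1{\sqrt d}x_i^TM_kx_j$. Writing $\nabla_{U_k}G_{ijk}=\frac1{\sqrt d}x_ix_j^TV_k$ and $\nabla_{V_k}G_{ijk}=\frac1{\sqrt d}x_jx_i^TU_k$, we get
\[
\bar g_{U,k}=\sum_{i,j}\mathbb E_n\!\left[f_{ijk}(x)\tfrac{1}{\sqrt d}x_ix_j^T\right]V_k ,
\qquad f_{ijk}:=\partial_{ijk}\mathcal L(G).
\]
I split $x_ix_j^T=\mathcal C_{ij}I_d+(x_ix_j^T-\mathcal C_{ij}I_d)$. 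The constant part gives exactly $\frac{b_k}{\sqrt d}V_k$ after summation. On the centered part I apply the second-order Stein identity $\mathbb E[f(xx^T-\Sigma)]=\Sigma\,\mathbb E[\nabla^2f]\,\Sigma$ and take the $(i,j)$ block.

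The Hessian of $f_{ijk}$ splits by the chain rule into a term $\sum_b\partial^2_{a,b}\mathcal L\,\nabla^2_{x}G_b$ and a remainder $R$ containing $\partial^3\mathcal L\,\nabla G_b\nabla G_c^T$. Since $\nabla_{x_u}\nabla_{x_v}G_{i'j'l}=\frac1{\sqrt d}(\delta_{ui'}\delta_{vj'}M_l+\delta_{uj'}\delta_{vi'}M_l^T)$, contracting with $\mathcal C$ produces $\frac1d$ times $\mathcal C^{(1)}$-weighted copies of $M_l$ plus $\mathcal C^{(2)}$-weighted copies of $M_l^T$. This is precisely $\frac2d\sum_{l,r}a^{(r,d,n)}_{kl}M_l^{(r)}$; the factor $2$ and the $\frac12$ in $a^{(r)}$ should be matched by careful bookkeeping of the symmetrization.

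The remainder is handled as in Lemma \ref{lemma:bound-gradient}. Testing against unit vectors $p,q$ and applying Hölder gives $\|\mathbb E R\|_{\rm op}\le C/d$, since the $\nabla G$ factors are $\frac1{\sqrt d}M_lx$ with $\|M_l\|_{\rm op}\le C_*^2$. After the prefactor $\frac1{\sqrt d}$ and right-multiplication by $V_k$, this yields $\|\mathcal E_U\|_{\rm op}\le Cd^{-3/2}$. The $V$-gradient is identical with the roles of $i,j$ swapped, which replaces $M^{(r)}$ by $M^{(\bar r)}$. All moment bounds require uniform control of $\mathbb E|G|^p$. These follow from $\|M_k\|_{\rm op}\le C_*^2$, $\|M_k\|_F\le C_*^2\sqrt d$, and $|m_k|\le R_*$, the last holding because $\mathfrak D$ is compact. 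Combined with the polynomial growth of $\mathcal L$'s derivatives, this also gives the uniform bounds on $a^{(r,d,n)}$ and $b$.

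The main step is the quantitative bound $|b_k-\partial_{m_k}\Phi|\le C/\sqrt d$. By Price/Stein for a Gaussian with mean $\mathcal C_{ij}m_k$, $\partial_{m_k}\Phi=\sum_{ij}\mathcal C_{ij}\mathbb E[\partial_{ijk}\mathcal L(G_{\rm G})]$, where $G_{\rm G}$ has exactly the empirical mean and covariance $(m^{(n)},q^{(1,n)},q^{(2,n)})$ at the finite $d$. So $b_k-\partial_{m_k}\Phi$ is a finite combination of $\mathbb E f(G^{(d)})-\mathbb E f(G_{\rm G}^{(d)})$ with $f=\partial_{ijk}\mathcal L$. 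Lemma \ref{lemma:quantitative-gaussian-approximation} bounds each such difference by $C/\sqrt d$, provided $f\in C^3$ with polynomially growing derivatives. This holds because $\mathcal L\in C^6$ by Assumption \ref{assum:untied}.1. The constant is uniform because $C_*$ and $R_*$ are fixed on $\mathfrak D$.

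The obstacle I expect is the remainder estimate. Non-centered $G$ introduces mean shifts, but these enter only through moment bounds, so I expect no real change from the tied case.
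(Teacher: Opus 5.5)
Your proposal is correct and takes essentially the same route as the paper. You peel off $\mathcal C_{ij}I_d$ to produce the $b_k$ term, then apply the second-order Stein identity and the chain rule (as in Lemma \ref{lemma:bound-gradient}) to get the $\frac2d\sum_{l,r}a^{(r,d,n)}_{kl}M_l^{(r)}$ term, with a Hölder bound on the $\partial^3\mathcal L$ remainder. Finally, you invoke Lemma \ref{lemma:quantitative-gaussian-approximation} with $f=\partial_{ijk}\mathcal L$, against the Gaussian carrying the exact finite-$d$ mean and covariance, to get $|b_k-\partial_{m_k}\Phi|\le C/\sqrt d$.

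The bookkeeping you flagged is immediate. The $(i,j)$ block of $\Sigma\,\nabla_x^2G_{i'j'l}\,\Sigma$ is $d^{-1/2}\bigl(\mathcal C^{(1)}_{ij,i'j'}M_l+\mathcal C^{(2)}_{ij,i'j'}M_l^T\bigr)$, so no symmetrization occurs and the prefactor $2$ simply cancels the $\frac12$ in $a^{(r,d,n)}_{kl}$.
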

\begin{proof}
Throughout the proof, we condition on \(\mathcal F_n\) and suppress the
superscript \((n)\). Conditional on \(\mathcal F_n\), all weights are
deterministic and $x=(x_1,\ldots,x_L)\sim\mathcal N(0,\mathcal C\otimes I_d)$.
Write $f_{ijk}(x):=\partial_{ijk}\mathcal L(G(x))$.

The stochastic gradients are
\begin{align}
g_{U,k}&=\frac1{\sqrt d}\sum_{i,j=1}^Lf_{ijk}(x)x_ix_j^TV_k,
\label{eq:finite-stochastic-gradient-U}
\\
g_{V,k}&=\frac1{\sqrt d}\sum_{i,j=1}^Lf_{ijk}(x)x_jx_i^TU_k.
\label{eq:finite-stochastic-gradient-V}
\end{align}
Similarly to the expansion used in Lemma \ref{lemma:bound-gradient}, we can obtain
\begin{align}
\bar g_{U,k}=\frac1{\sqrt d}\sum_{i,j=1}^L\mathcal C_{ij}\mathbb E[f_{ijk}]V_k\nonumber+\frac1d\sum_{\substack{i,j,i',j'=1\\l=1}}^{L,L,L,L,K}\mathbb E\left[\partial^2_{(ijk),(i'j'l)}\mathcal L(G)\right]\left(\mathcal C_{ii'}\mathcal C_{jj'}M_l+\mathcal C_{ij'}\mathcal C_{ji'}M_l^T\right)V_k+\mathcal E_{U,k}.
\label{eq:finite-Stein-U-expanded}
\end{align}
and similarly for \(g_{V,k}\). This proves \eqref{eq:finite-Stein-U} and \eqref{eq:finite-Stein-V}.

The remainders are controlled following Lemma \ref{lemma:bound-gradient}, which gives $\|\mathcal E_{U,k}\|_{\mathrm{op}}+\|\mathcal E_{V,k}\|_{\mathrm{op}}\le Cd^{-3/2}$.

We now identify the coefficients. Let \(G_{\mathrm G}\) be the Gaussian
tensor having the same mean and covariance as \(G\). Then we have
\begin{equation}
\frac{\partial\Phi}{\partial m_k}=\sum_{i,j=1}^L\mathcal C_{ij}\mathbb E
\left[\partial_{ijk}\mathcal L(G_{\mathrm G})\right].
\label{eq:finite-Price-m}
\end{equation}
and
\begin{align}
\left[\nabla_{q^{(1)}}\Phi\right]_{kl}&=\frac12\sum_{i,j,i',j'=1}^L\mathcal C_{ii'}\mathcal C_{jj'}\mathbb E\left[\partial^2_{(ijk),(i'j'l)}\mathcal L(G_{\mathrm G})\right],
\label{eq:finite-Price-q1}
\\
\left[\nabla_{q^{(2)}}\Phi\right]_{kl}&=\frac12\sum_{i,j,i',j'=1}^L\mathcal C_{ij'}\mathcal C_{ji'}
\mathbb E\left[\partial^2_{(ijk),(i'j'l)}\mathcal L(G_{\mathrm G})\right].
\label{eq:finite-Price-q2}
\end{align}
by Price's theorem. Applying Lemma
\ref{lemma:quantitative-gaussian-approximation} finishes the proof.
\end{proof}

\paragraph{Step 3: One-step increment.}
For any process \(Y^{(n)}\), we define $\Delta Y^{(n)}:=Y^{(n+1)}-Y^{(n)}$.

\begin{lemma}
\label{lemma:untied-one-step-estimates}
Under the conditions of Theorem \ref{theo:untied-SGD}, let \(J\) be a
stopping time satisfying \eqref{eq:stopping-time}. Set $g^{(n)}:=\nabla_{m_{\mathcal K}}\Phi\left(m^{(n)},q^{(1,n)},q^{(2,n)}\right)$ and $D^{(n)}:=\operatorname{diag}\left(t_k^{A,(n)}+t_k^{B,(n)}
\right)_{k\in\mathcal K}$, and $\vartheta^{(n)}:=\left(
t^{A,(n)},t^{B,(n)},q^{(1,n)},q^{(2,n)}\right)$.
Then, uniformly on \(\{n<J\}\), 
\begin{equation}
\mathbb E_n\left[\Delta m_{\mathcal K}^{(n)}\right]=
-\alpha_dD^{(n)}\left(g^{(n)}+\frac{e^{(n)}}{\sqrt d}\right)+r_m^{(n)},
\label{eq:untied-one-step-m-drift}
\end{equation}
where $\|e^{(n)}\|\le C$ and $\|r_m^{(n)}\|\le C\left(\frac{(1+\gamma)\alpha_d}{d}+\alpha_d^2\right)$. Moreover,
\begin{equation}
\mathbb E_n\left[\left\|\Delta m_{\mathcal K}^{(n)}-\mathbb E_n[\Delta m_{\mathcal K}^{(n)}]\right\|^2\right]\le C\alpha_d^2,
\label{eq:untied-one-step-m-noise}
\end{equation}
and
\begin{equation}
\left\|\mathbb E_n[\Delta\vartheta^{(n)}]\right\|\le
C\left(\frac{\alpha_d}{\sqrt d}\|g^{(n)}\|+\frac{(1+\gamma)\alpha_d}{d}\right),
\label{eq:untied-one-step-slow-drift}
\end{equation}
\begin{equation}
\mathbb E_n\left[\|\Delta\vartheta^{(n)}\|^2\right]\le
C(1+\gamma)^2\frac{\alpha_d^2}{d}.
\label{eq:untied-one-step-slow-second-moment}
\end{equation}
Here $C=C(C_*,\mathfrak D,\mathcal C,\mathcal L,K,L)<\infty$ is independent of \(d,n,T\), and \(\gamma\).
\end{lemma}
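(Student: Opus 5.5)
The plan is to expand the one-step SGD updates \eqref{eq:untied-SGD} directly and feed each linear-in-gradient piece through the finite-dimensional Stein expansion of Lemma \ref{lemma:finite-dimensional-Stein-untied}.

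\textbf{The mean $m_k$.} Write $U_k^{(n+1)}=a_dU_k-\alpha_d g_{U,k}$ and $V_k^{(n+1)}=a_dV_k-\alpha_d g_{V,k}$ with $a_d=1-\alpha_d\gamma/d$. Then
\begin{equation*}
\Delta m_k=\frac{1}{\sqrt d}\Tr\!\left[(a_d^2-1)M_k-\alpha_d a_d\left(g_{U,k}V_k^T+U_kg_{V,k}^T\right)+\alpha_d^2 g_{U,k}g_{V,k}^T\right].
\end{equation*}
The weight-decay term is $O(\alpha_d\gamma/d)$, since $|m_k|\le R$ on $\mathfrak D$. Taking $\mathbb E_n$ of the linear term and inserting \eqref{eq:finite-Stein-U}--\eqref{eq:finite-Stein-V}:
\begin{itemize}
\item The $b_k/\sqrt d$ parts give $-\alpha_d b_k\,\frac1d\Tr(V_kV_k^T+U_kU_k^T)=-\alpha_d b_k(t_k^A+t_k^B)$.
\item The $a^{(r)}$ parts give $\frac{\alpha_d}{d^{3/2}}\Tr[M_l^{(r)}(B_k+A_k)]$-type terms. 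These are $O(\alpha_d/\sqrt d)$, because the trace is $O(d)$ by operator-norm bounds.
\item The Stein errors contribute $O(\alpha_d d^{-3/2}\cdot d/\sqrt d)=O(\alpha_d/d)$.
\end{itemize}
Replacing $b_k$ by $\partial_{m_k}\Phi$ costs $C/\sqrt d$ (last claim of the lemma). All $O(\alpha_d/\sqrt d)$ pieces are absorbed into $D^{(n)}e^{(n)}/\sqrt d$; this is legitimate because $D^{(n)}$ is bounded, and it is bounded below on the event of Lemma \ref{lemma:empirical-balancing}, or one can simply define $e$ after factoring $D$. The quadratic term has conditional expectation bounded by $\alpha_d^2 d^{-1/2}\,\mathbb E\|g_U\|_F\|g_V\|_F\le C\alpha_d^2\sqrt d$. \textbf{Main obstacle:} this naive bound is too weak, since $r_m$ is required to be $O(\alpha_d^2)$. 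I would handle it by writing $g_{U}g_V^T=\frac1d\sum f_{ijk}f_{i'j'k}\,x_ix_j^TV_kU_k^Tx_{j'}x_{i'}^T$, whose trace is $\frac1d\sum f f\,(x_j^TM_k^Tx_{j'})(x_{i'}^Tx_i)$. Here $x_{i'}^Tx_i=O(d)$ and $x_j^TM_k^Tx_{j'}=O(\sqrt d)$ after centering, with mean $\mathcal C_{jj'}\Tr M_k=O(\sqrt d)$. The trace is therefore $O(\sqrt d)$, and dividing by $\sqrt d$ gives $O(\alpha_d^2)$. I will carry this out with H\"older and Gaussian moment bounds for quadratic forms, as in Lemma \ref{lemma:bound-gradient}.

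\textbf{Noise bound \eqref{eq:untied-one-step-m-noise}.} The fluctuation of $\alpha_d\frac1{\sqrt d}\Tr(g_UV^T+Ug_V^T)$ is $\alpha_d\sum f_{ijk}(x_j^T(B_k+A_k)x_i)/\sqrt d$ up to centering. Each such quadratic form has $O(1)$ conditional variance after subtracting its mean $\mathcal C_{ij}\sqrt d(t^A+t^B)$. That mean multiplies the fluctuation of $f_{ijk}$, which is itself $O(1)$, so the whole fluctuation has second moment $O(\alpha_d^2)$. Careful accounting requires centering $f\cdot Q=(f-\mathbb Ef)Q+\mathbb Ef\,Q$. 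Since $(f-\mathbb E f)\sqrt d$ would be large, I will instead expand $\Tr(x_ix_j^T(B_k+A_k))/\sqrt d=\sqrt d\,\mathcal C_{ij}(t^A+t^B)+O_P(1)$ and accept that the leading $\sqrt d(f-\mathbb Ef)$ term is not small. That means the bound must come from the fact that the relevant combination is $\sum_{ij}f_{ijk}\,\frac{1}{\sqrt d}x_j^T(A+B)x_i$. This is a genuine difficulty I expect to revisit; if needed, I will use that $\alpha_d\sqrt d\to0$ to absorb it.

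\textbf{Slow variables $\vartheta$.} Here the normalization is $1/d$ rather than $1/\sqrt d$. For $t^A=\frac1d\|U\|_F^2$,
\begin{equation*}
\mathbb E_n\Delta t^A=-\frac{2\alpha_d}{d}\langle \bar g_U,U\rangle_F+O\!\left(\frac{\alpha_d\gamma}{d}+\frac{\alpha_d^2}{d}\mathbb E\|g_U\|_F^2\right).
\end{equation*}
With Stein, $\frac1d\langle \bar g_U,U\rangle=\frac{b}{\sqrt d}\frac{\Tr M}{d}+O(1/d)=\frac{b\,m}{d}+O(1/d)$. Since $b=g+O(d^{-1/2})$, this gives the claimed $\frac{\alpha_d}{\sqrt d}\|g\|+\frac{\alpha_d}{d}$ form once $m/\sqrt d$ is regrouped; I need to check that the $b$-term scales as $\alpha_d\|g\|/d$, which is even better. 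The same computation applies to $t^B$ and to $q^{(r)}=\frac1d\Tr(MM_l^{(\cdot)})$, where the $b$-term produces $\frac{\alpha_d b}{d^{3/2}}\Tr[(B+A)M_l]=O(\alpha_d\|b\|/\sqrt d)$. The quadratic term is $\alpha_d^2\cdot O(1)=o(\alpha_d/d)$ by $\alpha_d\le c_0/(d\log d)$. Finally, the second moment \eqref{eq:untied-one-step-slow-second-moment} follows from $\Delta\vartheta=\frac{\alpha_d}{d}\langle\xi,\cdot\rangle+\dots$, since linear functionals $\frac1d\Tr[A X]$ of the noise have variance $O(1/d)$ times $d$ from the $x$-norms, as in \eqref{eq:eta-variance} with the $\sqrt d$ prefactor.
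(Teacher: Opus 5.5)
Your overall route matches the paper's: expand $M_k^+$, $A_k^+$, $B_k^+$, insert Lemma \ref{lemma:finite-dimensional-Stein-untied}, and control $\frac1{\sqrt d}\mathbb E_n\Tr(g_{U,k}g_{V,k}^T)$ through $(x_j^TM_k^Tx_{j'})(x_{i'}^Tx_i)$ using $\Tr M_k=O(\sqrt d)$. The slow-variable estimates are essentially fine. Two steps, however, do not go through as written.

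The first gap is absorbing the $a^{(r)}$-terms into $-\alpha_dD_ke_k/\sqrt d$. These terms are $-\frac{\alpha_d}{\sqrt d}\psi_k^{(d)}$ with $\psi_k^{(d)}=2\sum_{l,r}a_{kl}^{(r,d)}\bigl[\mu_{(M_l^{(r)},B_k)}+\mu_{(A_k,M_l^{(r)})}\bigr]$. Your bound ``the trace is $O(d)$ by operator-norm bounds'' gives only $|\psi_k^{(d)}|\le C$, so dividing by $D_k$ needs a lower bound on $D_k$. The lower bound from Lemma \ref{lemma:empirical-balancing} is $c_D=\frac{c_{\rm bal}}4e^{-3\gamma T}$; it holds only with probability tending to one and depends on $\gamma$ and $T$. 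The statement instead requires $\|e^{(n)}\|\le C$ uniformly on $\{n<J\}$, with $C$ independent of $T$ and $\gamma$. That independence matters downstream: $e^{(n)}$ determines $C_u$, hence $C_b$, in Lemma \ref{lemma:local-mean-gradient-refinement}. Then $\gamma$ is chosen large against $C_b$ in Lemma \ref{lemma:spectral_bounds_untied}, which fails if $C_b\sim e^{3\gamma T}$. ``Defining $e$ after factoring $D$'' does not help, since nothing bounds the ratio. The missing idea is positivity: $A_k=U_kU_k^T$ and $B_k=V_kV_k^T$ are PSD, so $\bigl|\frac1d\Tr(M_l^{(r)}B_k)\bigr|=\bigl|\frac1d\Tr(V_k^TM_l^{(r)}V_k)\bigr|\le\|M_l\|_{\rm op}\,t_k^B$, and likewise $\bigl|\frac1d\Tr(A_kM_l^{(r)})\bigr|\le\|M_l\|_{\rm op}\,t_k^A$. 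Hence $|\psi_k^{(d)}|\le CD_k$ deterministically. Then $e_k:=\zeta_k+\psi_k^{(d)}/D_k$, with $\zeta_k:=\sqrt d\,(b_k-\partial_{m_k}\Phi)$ and $e_k:=0$ when $D_k=0$, is bounded by a constant depending only on $C_*,\mathfrak D,\mathcal C,\mathcal L,K,L$. Cauchy--Schwarz alone would give only $C\sqrt{D_k}$.

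The second gap is the noise bound \eqref{eq:untied-one-step-m-noise}, which you leave open because of a normalization slip. Since $g_{U,k}=\frac1{\sqrt d}\sum_{i,j}f_{ijk}x_ix_j^TV_k$, one has $\frac1{\sqrt d}\Tr(g_{U,k}V_k^T)=\frac1d\sum_{i,j}f_{ijk}\,x_j^TB_kx_i$. The prefactor is $1/d$, not $1/\sqrt d$. Your own drift computation confirms this: it gives this quantity conditional mean $b_kt_k^B+O(d^{-1/2})=O(1)$, whereas your version would have mean of order $\sqrt d$. With the correct scaling, $\frac1d x_j^T(A_k+B_k)x_i$ has bounded conditional moments (mean $\mathcal C_{ij}(t_k^A+t_k^B)$, fluctuation $O(d^{-1/2})$), and so does $f_{ijk}$. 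H\"older then gives $\mathbb E_n|L_k|^2\le C$ for $L_k:=\frac1{\sqrt d}\bigl[\Tr(g_{U,k}V_k^T)+\Tr(U_kg_{V,k}^T)\bigr]$. Your quadratic-term computation, run in $L^2$ instead of $L^1$, gives $\mathbb E_n|Q_k|^2\le C$ for $Q_k:=\frac1{\sqrt d}\Tr(g_{U,k}g_{V,k}^T)$. Combined with the exact identity $\Delta m_k=(a_d^2-1)m_k-\alpha_da_dL_k+\alpha_d^2Q_k$, the conditional variance is at most $C\alpha_d^2$. So there is no $\sqrt d(f-\mathbb Ef)$ obstruction. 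Your proposed fallback of absorbing one via $\alpha_d\sqrt d\to0$ would in any case yield only $O(\alpha_d^2d)$, not the claimed $O(\alpha_d^2)$.
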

\begin{proof}
Throughout the proof, we condition on \(\mathcal F_n\) and suppress the
superscript \((n)\). Thus, for any state variable \(Y^{(n)}\), we write
$Y:=Y^{(n)}$, $Y^+:=Y^{(n+1)}$ and $\Delta Y:=Y^+-Y$. We write $a_d:=1-\frac{\alpha_d\gamma}{d}$. All constants below are uniform over the stopped class \eqref{eq:stopping-time}. 

The spectral bounds \eqref{eq:stopping-time} imply $\max_{1\le l\le K}\left\{\|M_l\|_{\rm op}, \|A_l\|_{\rm op},\|B_l\|_{\rm op}\right\}\le C_*^2$. Together with the compactness assumption on \((m_{\mathcal K},q)\), this gives uniform bounds on all fixed moments of \(G\). Assumption \ref{assum:untied}.1 therefore gives uniform bounds on all fixed moments of the derivatives of \(\mathcal L(G)\).

We shall repeatedly use the following elementary conditional Gaussian
estimates. If \(P\) is \(\mathcal F_n\)-measurable and
\(\|P\|_{\rm op}\le C\), then, for every fixed polynomially growing
function \(F(G)\) arising below,
\begin{equation}
\mathbb E_n\left[\left|\frac1d F(G)x_i^TPx_j\right|^2\right]\le C,\qquad
\mathbb E_n\left[\left|\frac1{d^{3/2}}F(G)x_i^TPx_j\right|^2\right]\le\frac{C}{d},
\label{eq:untied-basic-trace-moments}
\end{equation}
and
\begin{equation}
\mathbb E_n\left|x_i^TM_l^{(r)}x_j\right|^p\le C_p d^{p/2},
\qquad r=1,2,
\label{eq:untied-M-quadratic-moment}
\end{equation}
which follows from \(\|M_l\|_{\rm op}=\mathcal O(1)\),
\(\|M_l\|_F=\mathcal O(\sqrt d)\), and \(\Tr M_l=\mathcal O(\sqrt d)\).

For \(k\in\mathcal K\), by definition we have
\begin{equation}
M_k^{+}=a_d^2M_k-\alpha_da_d\left(g_{U,k}V_k^T+U_kg_{V,k}^T\right)+\alpha_d^2g_{U,k}g_{V,k}^T.
\label{eq:untied-M-exact-one-step}
\end{equation}
Consequently,
\begin{align}
\mathbb E_n[\Delta m_k]=(a_d^2-1)m_k-\frac{\alpha_da_d}{\sqrt d}
\left[\Tr(\bar g_{U,k}V_k^T)+\Tr(U_k\bar g_{V,k}^T)\right]+
\frac{\alpha_d^2}{\sqrt d}\mathbb E_n\left[\Tr(g_{U,k}g_{V,k}^T)\right],
\label{eq:untied-m-exact-one-step}
\end{align}
where $\bar g_{U,k}:=\mathbb E_n[g_{U,k}]$, $\bar g_{V,k}:=\mathbb E_n[g_{V,k}]$. Let \(b_k\) and \(a_{kl}^{(r,d)}\) be the coefficients from Lemma \ref{lemma:finite-dimensional-Stein-untied}, and define
\begin{equation}
\psi_k^{(d)}:=2\sum_{l=1}^K\sum_{r=1}^2a_{kl}^{(r,d)}
\left[\mu_{(M_l^{(r)},B_k)}+\mu_{(A_k,M_l^{(r)})}\right].
\label{eq:untied-finite-psi-one-step}
\end{equation}
Lemma \ref{lemma:finite-dimensional-Stein-untied} gives
\begin{align}
\frac1{\sqrt d}\left[\Tr(\bar g_{U,k}V_k^T)+\Tr(U_k\bar g_{V,k}^T)
\right]=D_kb_k+\frac{\psi_k^{(d)}}{\sqrt d}+\rho_k,
\label{eq:untied-mean-trace-expansion}
\end{align}
where $D_k:=t_k^A+t_k^B$ and $|\rho_k|\le\frac{C}{d}$. Set
\begin{equation}
\zeta_k:=\sqrt d\left[b_k-\frac{\partial\Phi}{\partial m_k}
\left(m,q^{(1)},q^{(2)}\right)\right].
\label{eq:untied-zeta-one-step}
\end{equation}
Lemma \ref{lemma:finite-dimensional-Stein-untied} gives $|\zeta_k|\le C$. Moreover, since \(A_k,B_k\succeq0\),
\begin{align}
\left|\mu_{(M_l^{(r)},B_k)}\right|\le
\|M_l\|_{\rm op}\,t_k^B,\quad\left|\mu_{(A_k,M_l^{(r)})}\right|\le\|M_l\|_{\rm op}\,t_k^A.
\end{align}
The coefficients \(a_{kl}^{(r,d)}\) are uniformly bounded (Lemma \ref{lemma:finite-dimensional-Stein-untied}), and hence $|\psi_k^{(d)}|\le CD_k$. Define
\begin{equation}
e_k:=\zeta_k+\frac{\psi_k^{(d)}}{D_k},
\label{eq:untied-e-one-step}
\end{equation}
where the ratio is set equal to zero when \(D_k=0\). Thus we have $|e_k|\le C$.
Finally, define $\chi_k:=\frac1{\sqrt d}\mathbb E_n\left[\Tr(g_{U,k}g_{V,k}^T)\right]$. Expanding the two gradients and using
\eqref{eq:untied-M-quadratic-moment} gives $|\chi_k|\le C$.

Substituting $b_k=\frac{\partial\Phi}{\partial m_k}
+\frac{\zeta_k}{\sqrt d}$ and \eqref{eq:untied-mean-trace-expansion} into
\eqref{eq:untied-m-exact-one-step}, we obtain
\begin{equation}
\mathbb E_n[\Delta m_k]=-\alpha_dD_k\left(\frac{\partial\Phi}{\partial m_k}
+\frac{e_k}{\sqrt d}\right)+r_{m,k},
\end{equation}
where
\begin{align}
r_{m,k}=(a_d^2-1)m_k+\alpha_d(1-a_d)D_k\left(\frac{\partial\Phi}{\partial m_k}
+\frac{e_k}{\sqrt d}\right)-\alpha_da_d\rho_k+\alpha_d^2\chi_k.
\end{align}
By $1-a_d=\frac{\alpha_d\gamma}{d}$, $|a_d^2-1|\le
\frac{2\alpha_d\gamma}{d}+\frac{\alpha_d^2\gamma^2}{d^2}$ and \eqref{eq:stopping-time},
\begin{equation}
|r_{m,k}|\le C\left(\frac{(1+\gamma)\alpha_d}{d}+\alpha_d^2\right)
\end{equation}
for all sufficiently large \(d\). Summing over the finitely many
trainable indices proves \eqref{eq:untied-one-step-m-drift}.

Now we define
\begin{equation}
L_k:=\frac1{\sqrt d}\left[\Tr(g_{U,k}V_k^T)+\Tr(U_kg_{V,k}^T)\right],\qquad
Q_k:=\frac1{\sqrt d}\Tr(g_{U,k}g_{V,k}^T).
\end{equation}
Thus
\begin{equation}
\Delta m_k=(a_d^2-1)m_k-\alpha_da_dL_k+\alpha_d^2Q_k.
\end{equation}
By \eqref{eq:untied-basic-trace-moments}, $\mathbb E_n|L_k|^2\le C$. The argument used for $\chi$ also gives $\mathbb E_n|Q_k|^2\le C$. Therefore, using \(a_d\le1\) for all sufficiently large \(d\),
\begin{equation}
\mathbb E_n\left[\left|\Delta m_k-\mathbb E_n[\Delta m_k]\right|^2\right]\le C\alpha_d^2+C\alpha_d^4\le C\alpha_d^2.
\end{equation}
Since \(|\mathcal K|=\mathcal O(1)\), this proves
\eqref{eq:untied-one-step-m-noise}.

Recall that for a trainable index \(k\), 
\begin{align}
M_k^+&=a_d^2M_k-\alpha_da_d\left(g_{U,k}V_k^T+U_kg_{V,k}^T\right)
+\alpha_d^2g_{U,k}g_{V,k}^T,
\label{eq:untied-M-update-slow-coordinates}
\\
A_k^+&=a_d^2A_k-\alpha_da_d\left(g_{U,k}U_k^T+U_kg_{U,k}^T\right)+\alpha_d^2g_{U,k}g_{U,k}^T,
\label{eq:untied-A-update-slow-coordinates}
\\
B_k^+&=a_d^2B_k-\alpha_da_d\left(g_{V,k}V_k^T+V_kg_{V,k}^T\right)+\alpha_d^2g_{V,k}g_{V,k}^T.
\label{eq:untied-B-update-slow-coordinates}
\end{align}
The preceding expansions imply, for every
\(Z_k\in\{M_k,M_k^T,A_k,B_k\}\),
\begin{equation}
\left\|\mathbb E_n[\Delta Z_k]\right\|_{\rm op}\le C\left(\frac{\alpha_d}{\sqrt d}\|g\|+\frac{(1+\gamma)\alpha_d}{d}+\alpha_d^2\right).
\label{eq:untied-letter-mean-increment}
\end{equation}
They also give
\begin{equation}
\mathbb E_n\left[\|\Delta Z_k\|_F^2\right]\le C(1+\gamma)^2\alpha_d^2d.
\label{eq:untied-letter-F-increment}
\end{equation}
Using \eqref{eq:untied-letter-mean-increment},
\eqref{eq:untied-letter-F-increment}, and $\frac1d\left|\Tr(PQ)\right|\le\frac1d\|P\|_F\|Q\|_F$, we obtain
\begin{align}
\left\|\mathbb E_n[\Delta\vartheta]\right\|\le C\left(\frac{\alpha_d}{\sqrt d}\|g\|+\frac{(1+\gamma)\alpha_d}{d}+\alpha_d^2\right).
\end{align}
The learning-rate condition in Theorem \ref{theo:untied-SGD} implies \(\alpha_d^2\le C\alpha_d/d\) for all sufficiently large \(d\), and hence proves \eqref{eq:untied-one-step-slow-drift}.

It remains to estimate the second moments. Expanding
\eqref{eq:untied-M-update-slow-coordinates}--\eqref{eq:untied-B-update-slow-coordinates} shows that
\begin{equation}
\Delta\vartheta
=
\alpha_d\Lambda+R_\vartheta,
\label{eq:untied-theta-linear-remainder}
\end{equation}
where each component of \(\Lambda\) is a finite sum of terms of the
form $\frac1{d^{3/2}}\partial_{ijk}\mathcal L(G)\,x_i^TPx_j$ for an \(\mathcal F_n\)-measurable matrix \(P\) satisfying \(\|P\|_{\rm op}\le C\). Thus
\eqref{eq:untied-basic-trace-moments} gives
\begin{equation}
\mathbb E_n\|\Lambda\|^2\le\frac{C}{d}.
\label{eq:untied-Lambda-bound}
\end{equation}
The remainder \(R_\vartheta\) consists of the weight-decay terms and terms containing at least two weight increments. Similarly we obtain
\begin{equation}
\mathbb E_n\left[\|R_\vartheta\|^2\right]\le
C(1+\gamma)^2\left(\frac{\alpha_d^2}{d^2}+\alpha_d^4\right).
\label{eq:untied-theta-remainder-second-moment}
\end{equation}
Since \(\alpha_d^2\le C/d\) for all sufficiently large \(d\),
\eqref{eq:untied-theta-linear-remainder}--\eqref{eq:untied-theta-remainder-second-moment}
give
\begin{equation}
\mathbb E_n\left[\|\Delta\vartheta\|^2\right]\le C(1+\gamma)^2\frac{\alpha_d^2}{d}.
\end{equation}
This proves
\eqref{eq:untied-one-step-slow-second-moment}.
\end{proof}

\paragraph{Step 4: A bound for the mean variable.}
\begin{lemma}
\label{lemma:coarse-localization-untied}
Under the conditions of Theorem \ref{theo:untied-SGD}, let \(J\) be a stopping time satisfying \eqref{eq:stopping-time}.
Write $q^{(n)}:=\left(q^{(1,n)},q^{(2,n)}\right)$ and $N_T:=
\left\lfloor\frac{Td}{\alpha_d}\right\rfloor$.
Then there exist constants $C_{\rm loc}=C_{\rm loc}(C_*,\mathfrak D,\mathcal C,\mathcal L,K,L)<\infty$ and $c_{\rm loc}=c_{\rm loc}(T,\gamma,c_{\rm bal},\lambda_0)>0$, independent of \(d\), such that for any $\epsilon>0$
\begin{equation}
\lim_{d\to\infty}\mathbb P\left(\sup_{0\le n<N_T\wedge J}\left[\left\|m_{\mathcal K}^{(n)}-m_{\mathcal K}^{\star}(q^{(n)})\right\|-C_{\rm loc}e^{-c_{\rm loc}\alpha_dn}\right]_+>\epsilon
\right)=0.
\label{eq:coarse-localization-conclusion}
\end{equation}
\end{lemma}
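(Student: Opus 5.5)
We prove this with a discrete Lyapunov argument for the fast variable. The Lyapunov function is the distance to the slaved mean, $\Delta^{(n)}:=m_{\mathcal K}^{(n)}-m_{\mathcal K}^\star(q^{(n)})$, and we work with $\|\Delta^{(n)}\|^2$ on $\{n<J\}$. We write
\begin{equation*}
\Delta^{(n+1)}-\Delta^{(n)}=\Delta m_{\mathcal K}^{(n)}-\bigl[m^\star_{\mathcal K}(q^{(n+1)})-m^\star_{\mathcal K}(q^{(n)})\bigr].
\end{equation*}
The second bracket is controlled by the Lipschitz bound \eqref{eq:untied-mstar-local-lipschitz} (valid on the compact set $\mathfrak D$) together with the slow-increment estimates \eqref{eq:untied-one-step-slow-drift}--\eqref{eq:untied-one-step-slow-second-moment}. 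Its conditional mean is $O(\alpha_d\|g^{(n)}\|/\sqrt d+(1+\gamma)\alpha_d/d)$ and its second moment is $O(\alpha_d^2/d)$. For the first term we use Lemma~\ref{lemma:untied-one-step-estimates}:
\begin{equation*}
\mathbb E_n[\Delta m_{\mathcal K}]=-\alpha_d D^{(n)}\bigl(g^{(n)}+e^{(n)}/\sqrt d\bigr)+r_m^{(n)}.
\end{equation*}
Strong convexity (Assumption~\ref{assum:untied_convex}.1) and $g^\star=0$ at $m^\star$ give
\begin{equation*}
\langle \Delta^{(n)},g^{(n)}\rangle\ge\lambda_0\|\Delta^{(n)}\|^2,\qquad \|g^{(n)}\|\le C\|\Delta^{(n)}\|.
\end{equation*}
This requires the segment between $m^{(n)}$ and $m^\star$ to stay in the convex set $\mathcal M$, which holds by convexity. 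Lemma~\ref{lemma:empirical-balancing} gives $D^{(n)}\succeq c_D I$ with probability $\to1$. We intersect the stopping time with that event, which is harmless for a statement in probability.

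Combining these estimates, we obtain the one-step inequality
\begin{equation*}
\mathbb E_n\|\Delta^{(n+1)}\|^2\le(1-2c\alpha_d)\|\Delta^{(n)}\|^2+C\alpha_d\|\Delta^{(n)}\|\Bigl(\tfrac1{\sqrt d}+\tfrac{1+\gamma}{d}\Bigr)+C\alpha_d^2,
\end{equation*}
with $c=\lambda_0c_D/2$, say, once the $\alpha_d\|g\|/\sqrt d$ cross term is absorbed into the contraction for large $d$. Applying Young's inequality to the middle term yields
\begin{equation*}
\mathbb E_n\|\Delta^{(n+1)}\|^2\le(1-c\alpha_d)\|\Delta^{(n)}\|^2+C\alpha_d\bigl(d^{-1}+\alpha_d\bigr).
\end{equation*}
We then unroll the recursion. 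Define $Y_n:=(1-c\alpha_d)^{-n}\|\Delta^{(n)}\|^2$, stopped at $J$, and decompose it into its predictable part plus a martingale $\sum_n(1-c\alpha_d)^{-(n+1)}\bigl(\|\Delta^{(n+1)}\|^2-\mathbb E_n\|\Delta^{(n+1)}\|^2\bigr)$. The deterministic part contributes
\begin{equation*}
\|\Delta^{(0)}\|^2e^{-c\alpha_dn}+C(d^{-1}+\alpha_d)/c.
\end{equation*}
The first term gives the claimed $C_{\rm loc}e^{-c_{\rm loc}\alpha_dn}$. The constant $C_{\rm loc}$ comes from $\|\Delta^{(0)}\|\le\mathrm{diam}$ of the compact $m$-projection of $\mathfrak D$, and after taking square roots $c_{\rm loc}=c/2$. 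The second term vanishes as $d\to\infty$.

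The main obstacle is controlling the martingale fluctuations uniformly over $n<N_T\wedge J$. The horizon $N_T\sim Td/\alpha_d$ is much longer than the relaxation time $1/\alpha_d$, so a crude Doob bound on the undiscounted sum fails. To get around this we proceed as in Lemma~\ref{lemma:bound-tied-SGD}. For each terminal time $m$ we consider the discounted sum $\sum_{n<m}(1-c\alpha_d)^{m-1-n}\xi_n$, where $\xi_n$ is the martingale increment of $\|\Delta\|^2$. Its conditional variance is $O(\alpha_d^2)$ by \eqref{eq:untied-one-step-m-noise}, and its increments are bounded by $O(\alpha_d\,\mathrm{polylog}\,d)$ after truncating the samples as in Step 1 of that lemma. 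The discounted variance is therefore $O(\alpha_d)$, which is at most $O((d\log d)^{-1})$. A scalar Freedman inequality at level $\epsilon^2$ then gives tail bounds of order $\exp(-c\epsilon^4 d\log d)$, which survive a union bound over the $N_T\le C_Td^{1+\iota}$ terminal times. On the resulting high-probability event, $\|\Delta^{(n)}\|^2\le C_{\rm loc}^2e^{-c\alpha_dn}+o(1)$ uniformly in $n$. Taking square roots and using $[\sqrt{a+b}-\sqrt a]_+\le\sqrt b$ yields \eqref{eq:coarse-localization-conclusion}.
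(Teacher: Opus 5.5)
\textbf{Gap in the contraction step.} The drift of $\|\Delta^{(n)}\|^2$ coming from $\mathbb E_n[\Delta m_{\mathcal K}]=-\alpha_d D^{(n)}(g^{(n)}+e^{(n)}/\sqrt d)+r_m^{(n)}$ is, to leading order, $-2\alpha_d\langle\Delta^{(n)},D^{(n)}g^{(n)}\rangle$. To get your rate $c=\lambda_0c_D/2$ you therefore need $\langle\Delta,Dg\rangle\ge\lambda_0c_D\|\Delta\|^2$. This does not follow from $D\succeq c_DI$ and $\langle\Delta,g\rangle\ge\lambda_0\|\Delta\|^2$ unless $D$ is a multiple of the identity, which essentially means $|\mathcal K|=1$.

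In general, $D^{(n)}=\operatorname{diag}(t_k^{A,(n)}+t_k^{B,(n)})_{k\in\mathcal K}$ has unequal entries. Moreover, $\nabla^2_{m_{\mathcal K}}\Phi$ has off-diagonal entries whenever the loss couples the trainable heads. Write $g=\bar H\Delta$ with $\bar H=\int_0^1\nabla^2_{m_{\mathcal K}}\Phi(m^\star+s\Delta,q)\,ds\succeq\lambda_0I$. Then $\langle\Delta,Dg\rangle$ is the quadratic form of $\tfrac12(D\bar H+\bar HD)$, and this matrix can be indefinite. For example, take $D=\operatorname{diag}(1,100)$, $\bar H=\begin{pmatrix}1&0.9\\0.9&1\end{pmatrix}\succeq 0.1\,I$ and $\Delta=(10,-1)$. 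Then $\langle\Delta,\bar H\Delta\rangle=83>0$, but $\langle\Delta,D\bar H\Delta\rangle=91-800=-709$. So the Euclidean distance to $m^\star$ is not a Lyapunov function even for the deterministic preconditioned flow $\dot m=-D\nabla_{m}\Phi$. Your one-step inequality is unjustified as soon as $|\mathcal K|\ge2$.

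\textbf{How to repair it.} Choose a Lyapunov function adapted to the preconditioner. The paper uses the excess energy $\mathscr E(m_{\mathcal K},q)=\Phi(m_{\mathcal K},q)-\Phi(m^\star_{\mathcal K}(q),q)$.
\begin{itemize}
\item Its $m$-drift is $-\alpha_d\, g^TDg\le -c_D\alpha_d\|g\|^2\le-2c_D\lambda_0\alpha_d\mathscr E$. Here $D$ enters quadratically, and strong convexity supplies the PL inequality.
\item The motion of $q$ enters through the envelope identity $\nabla_q\mathscr E=\nabla_q\Phi(m,q)-\nabla_q\Phi(m^\star(q),q)=O(\|m-m^\star\|)=O(\|g\|/\lambda_0)$. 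This is absorbed because it multiplies a $q$-drift of size $O(\alpha_d\|g\|/\sqrt d+(1+\gamma)\alpha_d/d)$.
\item Quadratic growth, $\mathscr E\ge\frac{\lambda_0}{2}\|m-m^\star\|^2$, converts the energy bound back to a distance bound at the end.
\end{itemize}
Alternatively, you could use $\Delta^T(D^{(n)})^{-1}\Delta$, whose drift is $-2\alpha_d\langle\Delta,g\rangle\le-2\lambda_0\alpha_d\|\Delta\|^2$. The price is that you must also control the increments of the slow coordinate $(D^{(n)})^{-1}$, which is routine given $D^{(n)}\succeq c_DI$. This is the same issue the paper handles in Lemma~\ref{lemma:local-mean-gradient-refinement}, via $y=S(q)\Delta$ with the symmetric drift matrix $SDS$.

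With either substitution, the rest of your argument goes through. The Young absorption of the $d^{-1/2}$ terms is unchanged. Your discounted-sum Freedman bound, with a union over the $N_T\le C_Td^{1+\iota}$ terminal times, is a valid and in fact sharper replacement for the paper's approach. The paper instead uses blocks of length $\lceil1/\alpha_d\rceil$, Doob's inequality at threshold $(d\alpha_d)^{1/4}$, and Abel summation.
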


\begin{proof}
Define the excess population loss
\begin{equation}
\mathscr E(m_{\mathcal K},q):=\Phi(m_{\mathcal K},q)-\Phi(m_{\mathcal K}^{\star}(q),q).
\label{eq:coarse-energy-definition}
\end{equation}
Let \(\mathfrak D_q\Subset\mathcal Q\) denote the projection of
\(\mathfrak D\) onto the \(q\)-coordinates. By the implicit function theorem and the positive definiteness of
\(\nabla_{m_{\mathcal K}}^2\Phi\) (Assumption \ref{assum:untied_convex}), the map $q\longmapsto m_{\mathcal K}^{\star}(q)$ is continuously differentiable on a neighborhood of \(\mathfrak D_q\). The envelope identity gives
\begin{align}
\nabla_{m_{\mathcal K}}\mathscr E(m_{\mathcal K},q)&=\nabla_{m_{\mathcal K}}\Phi(m,q),
\label{eq:coarse-energy-m-gradient}
\\
\nabla_q\mathscr E(m_{\mathcal K},q)&=\nabla_q\Phi(m,q)-\nabla_q\Phi(m^\star(q),q).
\label{eq:coarse-energy-q-gradient}
\end{align}
Assumption \ref{assum:untied}.1 implies \(\mathscr E\) has bounded second derivatives on a compact neighborhood of \eqref{eq:stopping-time}.

Strong convexity and compactness give constants
\(\Lambda<\infty\) and \(C_{\mathscr E}<\infty\) such that, uniformly over \eqref{eq:stopping-time},
\begin{align}
\frac{\lambda_0}{2}\left\|m_{\mathcal K}-m_{\mathcal K}^{\star}(q)\right\|^2&\le\mathscr E(m_{\mathcal K},q)\le\frac{\Lambda}{2}\left\|m_{\mathcal K}-m_{\mathcal K}^{\star}(q)\right\|^2,
\label{eq:coarse-energy-distance-equivalence}
\\
\left\|\nabla_{m_{\mathcal K}}\Phi(m,q)\right\|^2&\ge2\lambda_0\mathscr E(m_{\mathcal K},q),
\label{eq:coarse-PL-bound}
\\
\left\|\nabla_q\mathscr E(m_{\mathcal K},q)\right\|&\le C_{\mathscr E}\left\|m_{\mathcal K}-m_{\mathcal K}^{\star}(q)\right\|\le\frac{C_{\mathscr E}}{\lambda_0}
\left\|\nabla_{m_{\mathcal K}}\Phi(m,q)\right\|.
\label{eq:coarse-energy-q-gradient-bound}
\end{align}
The last inequality follows from $\left\|
\nabla_{m_{\mathcal K}}\Phi(m,q)\right\|\ge\lambda_0\left\|m_{\mathcal K}-m_{\mathcal K}^{\star}(q)\right\|$.

For \(k\in\mathcal K\), let $D_k^{(n)}:=t_k^{A,(n)}+t_k^{B,(n)}$ and $D^{(n)}:=\operatorname{diag}\left(D_k^{(n)}\right)_{k\in\mathcal K}$. The spectral bound in
\eqref{eq:stopping-time} implies $0\le D_k^{(n)}\le 2C_*^2$. By Lemma \ref{lemma:empirical-balancing}, there exists  $c_D=c_D(T,\gamma,c_{\rm bal})>0$ such that the event
\begin{equation}
\mathcal B_d:=\left\{\inf_{\substack{0\le n<N_T\wedge J\\k\in\mathcal K}}\left(t_k^{A,(n)}+t_k^{B,(n)}\right)\ge c_D\right\}
\end{equation}
satisfies $\mathbb P(\mathcal B_d)\longrightarrow1$. We work on \(\mathcal B_d\).
Set $\mathscr E_n:=\mathscr E\left(m_{\mathcal K}^{(n)},q^{(n)}\right)$. For \(n<J\), Taylor's expansion and the boundedness of the second derivatives of \(\mathscr E\) (according to Assumption \ref{assum:untied}.1) give
\begin{align}
\mathbb E_n[\mathscr E_{n+1}-\mathscr E_n]=\left\langle
g^{(n)},\mathbb E_n[\Delta m_{\mathcal K}^{(n)}]\right\rangle+\left\langle\nabla_q\mathscr E\left(m_{\mathcal K}^{(n)},q^{(n)}\right),\mathbb E_n[\Delta q^{(n)}]\right\rangle+R_{\mathscr E}^{(n)},
\label{eq:coarse-energy-Taylor}
\end{align}
where
\begin{equation}
|R_{\mathscr E}^{(n)}|\le C\,
\mathbb E_n\left[\|\Delta m_{\mathcal K}^{(n)}\|^2+\|\Delta q^{(n)}\|^2\right]\le C(1+\gamma)^2\alpha_d^2.
\label{eq:coarse-energy-Taylor-remainder}
\end{equation}
Here and below \(\mathbb E_n[\cdot]=\mathbb E[\cdot\mid\mathcal F_n]\).
Using Lemma \ref{lemma:untied-one-step-estimates}, $D_k^{(n)}\le 2C_*^2$, and
\eqref{eq:coarse-energy-q-gradient-bound}, we obtain
\begin{align}
\mathbb E_n[\mathscr E_{n+1}-\mathscr E_n]
\le-\alpha_d(g^{(n)})^TD^{(n)}g^{(n)}+C\frac{\alpha_d}{\sqrt d}
\left(\|g^{(n)}\|+\|g^{(n)}\|^2\right)+C(1+\gamma)\frac{\alpha_d}{d}+C(1+\gamma)^2\alpha_d^2.
\label{eq:coarse-energy-pre-contraction}
\end{align}
All quantities appearing here are uniformly bounded over
\(\mathfrak D\). Since \(D^{(n)}\succeq c_DI\) for \(n<J\), the \(d^{-1/2}\)-terms to be absorbed into the leading term for all sufficiently large \(d\). Moreover, \(\alpha_d^2\le\alpha_d/d\) under the learning-rate assumption.
Consequently, there exist $c_1=c_1(T,\gamma,c_{\rm bal},\lambda_0)>0$ and \(C_1=C_1(T,\gamma,C_*,\mathfrak D)<\infty\) such that
\begin{align}
\mathbb E_n[\mathscr E_{n+1}-\mathscr E_n]\le-\frac{c_D\alpha_d}{2}\|g^{(n)}\|^2+C_1\frac{\alpha_d}{d}\le-c_1\alpha_d\mathscr E_n+C_1\frac{\alpha_d}{d},
\label{eq:coarse-energy-contraction}
\end{align}
for $n<J$, where the second inequality uses
\eqref{eq:coarse-PL-bound}.

Define the stopped martingale differences
\begin{equation}
\xi^{(n+1)}:=\mathbf 1_{\{n<J\}}\left[\mathscr E_{n+1}-\mathscr E_n-\mathbb E_n[\mathscr E_{n+1}-\mathscr E_n]\right].
\label{eq:coarse-energy-martingale-difference}
\end{equation}
Because \(\mathscr E\) has bounded gradient, Lemma \ref{lemma:untied-one-step-estimates} implies
\begin{equation}
\mathbb E_n\left[|\xi^{(n+1)}|^2\right]\le C\alpha_d^2.
\label{eq:coarse-energy-martingale-variance}
\end{equation}
Thus, with $\rho_d:=1-c_1\alpha_d$, \eqref{eq:coarse-energy-contraction} gives, for
\(n<J\),
\begin{equation}
\mathscr E_{n+1}\le\rho_d\mathscr E_n+C_1\frac{\alpha_d}{d}+\xi^{(n+1)}.
\label{eq:coarse-energy-recursion}
\end{equation}
To control this recursion uniformly over \(N_T=\Theta(d/\alpha_d)\) steps, divide the time interval into blocks of length $\ell_d:=\left\lceil\frac1{\alpha_d}\right\rceil$ and 
define $B_d:=\left\lceil\frac{N_T}{\ell_d}\right\rceil\le C_Td$.
For a block starting at \(s=b\ell_d\), set $S_{b,r}:=\sum_{j=s}^{s+r-1}\xi^{(j+1)}$ for $0\le r\le\ell_d$, with the convention that increments after \(J\) are zero. Doob's maximal inequality and \eqref{eq:coarse-energy-martingale-variance} give
\begin{align}
\mathbb P\left(\max_{0\le b<B_d}\max_{0\le r\le\ell_d}|S_{b,r}|>\varepsilon_d\right)
&\le\frac{CB_d\ell_d\alpha_d^2}{\varepsilon_d^2},
\label{eq:coarse-block-Doob}
\end{align}
where we choose $\varepsilon_d:=(d\alpha_d)^{1/4}$. Since \(B_d=\mathcal O(d)\), \(\ell_d=\mathcal O(\alpha_d^{-1})\), and \(d\alpha_d\to0\),
\begin{equation}
\frac{B_d\ell_d\alpha_d^2}{\varepsilon_d^2}\le C_T(d\alpha_d)^{1/2}
\longrightarrow0.
\label{eq:coarse-block-Doob-vanish}
\end{equation}
On the complementary high-probability event, Abel summation gives,
for any \(s<m\) in the same block,
\begin{equation}
\left|\sum_{j=s}^{m-1}\rho_d^{\,m-1-j}\xi^{(j+1)}
\right|\le2\max_{1\le r\le m-s}|S_{b,r}|\le2\varepsilon_d.
\label{eq:coarse-Abel-martingale}
\end{equation}
Moreover, $\rho_d^{\ell_d}\le e^{-c_1\alpha_d\ell_d}\le e^{-c_1}<1$. Iterating \eqref{eq:coarse-energy-recursion} block by block, using
\eqref{eq:coarse-Abel-martingale} and $\frac{\alpha_d}{d}
\sum_{r=0}^{\infty}\rho_d^r=\frac{1}{c_1d}$, yields
\begin{equation}
\mathscr E_n\le C_0\rho_d^n+C\left(\varepsilon_d+\frac1d\right),\qquad0\le n<N_T\wedge J,
\label{eq:coarse-energy-uniform-bound}
\end{equation}
where $C_0:=\sup_{(m_{\mathcal K},q)\in\mathfrak D}\mathscr E(m_{\mathcal K},q)
<\infty$. Using \eqref{eq:coarse-energy-distance-equivalence} and
\(\rho_d^n\le e^{-c_1\alpha_dn}\), we conclude that, on an event whose
probability tends to one,
\begin{align}
\left\|m_{\mathcal K}^{(n)}-m_{\mathcal K}^{\star}(q^{(n)})\right\|\le
C_{\rm loc}e^{-\frac{c_1}{2}\alpha_dn}+C\left[(d\alpha_d)^{1/8}+d^{-1/2}\right]
\label{eq:coarse-distance-quantitative}
\end{align}
uniformly for \(0\le n<N_T\wedge J\). The term in square brackets tends to zero, which proves \eqref{eq:coarse-localization-conclusion}, with \(c_{\rm loc}=c_1/2\). Removing \(\mathcal B_d\) is legitimate because
\(\mathbb P(\mathcal B_d^c)\to0\).
\end{proof}

\paragraph{Step 5: A refined bound for the mean gradients.}
From Lemma \ref{lemma:coarse-localization-untied}, we can obtain the following bound for the mean gradients
\begin{equation}
\lim_{d\to\infty}\mathbb P\left(\sup_{0\le n<N_T\wedge J}
\left[\|b_k^{(n)}\|-C_{\rm loc}e^{-c_{\rm loc}\alpha_dn}\right]_+>\epsilon\right)=0
\end{equation}
by the boundedness of the Hessian. However, we would require a refined bound given by the following lemma.
\begin{lemma}
\label{lemma:local-mean-gradient-refinement}
Under the conditions of Theorem \ref{theo:untied-SGD}, fix \(T>0\) and let \(J\) be a
stopping time satisfying \eqref{eq:stopping-time}. Set $N_T:=\left\lfloor\frac{Td}{\alpha_d}\right\rfloor$ and $b_{\mathcal K}^{(n)}:=\left(b_k^{(n)}\right)_{k\in\mathcal K}$,
where \(b_k^{(n)}\) is defined in Lemma \ref{lemma:finite-dimensional-Stein-untied}. Then there exist
constants $C_{\rm tr}=C_{\rm tr}(T,\gamma,C_*,\mathfrak D)
<\infty,c_{\rm tr}=c_{\rm tr}(T,\gamma,C_*,\mathfrak D)
>0,C_b=C_b(C_*,\mathfrak D,\mathcal C,\mathcal L,K,L)<\infty,
c_0^\star=c_0^\star(T,\gamma,C_*,\mathfrak D)
>0$ such that, whenever \(c_0\le c_0^\star\),
\begin{equation}
\lim_{d\to\infty}\mathbb P\left(\sup_{0\le n<N_T\wedge J}
\left[\|b_{\mathcal K}^{(n)}\|-C_{\rm tr}e^{-c_{\rm tr}\alpha_dn}\right]_+>\frac{C_b}{\sqrt d}
\right)=0.
\label{eq:local-b-refinement}
\end{equation}
\end{lemma}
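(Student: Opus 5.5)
Since $b_{\mathcal K}^{(n)}=g^{(n)}+O(d^{-1/2})$ by Lemma \ref{lemma:finite-dimensional-Stein-untied}, it suffices to prove the refined bound for $g^{(n)}=\nabla_{m_{\mathcal K}}\Phi(m^{(n)},q^{(n)})$. Squaring this gradient is the wrong target: the energy argument of Lemma \ref{lemma:coarse-localization-untied} only gives $\|g\|^2\lesssim\varepsilon_d$, which is too weak. Instead I will work with the vector $g^{(n)}$ itself and derive a linear stochastic recursion for it, to which a strongly contracting matrix applies.

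By Taylor expansion of $\nabla_{m_{\mathcal K}}\Phi$, using bounded third derivatives on a neighbourhood of $\mathfrak D$,
\[
g^{(n+1)}-g^{(n)}=H^{(n)}\Delta m_{\mathcal K}^{(n)}+\nabla_q\nabla_{m}\Phi\,\Delta q^{(n)}+O\bigl(\|\Delta m\|^2+\|\Delta q\|^2\bigr),
\]
where $H^{(n)}=\nabla^2_{m_{\mathcal K}}\Phi\succeq\lambda_0I$. I then insert Lemma \ref{lemma:untied-one-step-estimates}. The mean increment gives
\[
\mathbb E_n[g^{(n+1)}]=\bigl(I-\alpha_dH^{(n)}D^{(n)}\bigr)g^{(n)}-\frac{\alpha_d}{\sqrt d}H^{(n)}D^{(n)}e^{(n)}+O\Bigl(\frac{\alpha_d}{\sqrt d}\|g^{(n)}\|+\frac{(1+\gamma)\alpha_d}{d}+\alpha_d^2\Bigr).
\]
The slow drift \eqref{eq:untied-one-step-slow-drift} contributes only $\alpha_d d^{-1/2}\|g\|$ plus $\alpha_d/d$. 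The driving term of order $\alpha_d d^{-1/2}$ summed against the contraction then produces an $O(d^{-1/2})$ stationary level, which is exactly the claimed $C_b/\sqrt d$.

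For the contraction itself, $HD$ is not symmetric. I therefore use the norm $\|v\|_{H^{-1}}$, or equivalently $D^{1/2}HD^{1/2}$ after conjugation; on $\mathcal B_d$ (Lemma \ref{lemma:empirical-balancing}) together with the bounds on $H$ and $D$, this gives $\|I-\alpha_dHD\|\le1-c\alpha_d$. The slow variation of $H^{(n)}$ and $D^{(n)}$, of size $O(\alpha_d/\sqrt d)$ per step by \eqref{eq:untied-one-step-slow-drift}, has to be absorbed. Unrolling with a Duhamel sum, the deterministic part is bounded by $C\rho^n+C/\sqrt d$, and the remainders accumulate to $O(\sqrt{d\alpha_d}\cdot)$ terms; the $\alpha_d^2$ remainder gives $\alpha_d/c$, which is $\ll d^{-1/2}$ because $\alpha_d\le c_0/(d\log d)$.

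The main obstacle is the martingale part $\xi^{(n+1)}=g^{(n+1)}-\mathbb E_n g^{(n+1)}$. It has conditional variance $O(\alpha_d^2)$ by \eqref{eq:untied-one-step-m-noise}, so the discounted sum $\sum_j\rho^{n-j}\xi^{(j+1)}$ has variance $\alpha_d^2/\alpha_d=\alpha_d$ and size $\sqrt{\alpha_d}\le\sqrt{c_0/(d\log d)}$. This is already $o(d^{-1/2})$ at each time, but it must hold uniformly over $N_T=\Theta(d/\alpha_d)$ steps. I would handle this as in Lemma \ref{lemma:bound-tied-SGD}: truncate the samples so that the increments are bounded by $\alpha_d\,\mathrm{polylog}\,d$, then apply a scalar Freedman inequality to each discounted sum with deviation threshold $\delta=C_b/(2\sqrt d)$. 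This gives tail bounds $\exp(-c\,\delta^2/\alpha_d)=\exp(-c'/(d\alpha_d))\le d^{-c'\log d/c_0}$, and a union bound over $N_T\le d^{1+\iota}$ terminal times and $|\mathcal K|$ coordinates then succeeds once $c_0\le c_0^\star$. The dependence on $c_0$ enters only here, which explains the threshold $c_0^\star$ in the statement.

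Combining the pieces gives $\|b^{(n)}_{\mathcal K}\|\le C_{\rm tr}e^{-c_{\rm tr}\alpha_dn}+C_b/\sqrt d$ uniformly for $n<N_T\wedge J$ with probability tending to one. The initial condition enters through $\|g^{(0)}\|\le C$ on $\mathfrak D$.
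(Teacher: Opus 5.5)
Your reduction to $g^{(n)}$, the conditional drift $\mathbb E_n[g^{(n+1)}]=(I-\alpha_dH^{(n)}D^{(n)})g^{(n)}-\frac{\alpha_d}{\sqrt d}H^{(n)}D^{(n)}e^{(n)}+\dots$, the $O(d^{-1/2})$ level produced by the forcing, and the bookkeeping of the $\alpha_d^2$ and $(1+\gamma)\alpha_d/d$ remainders are all fine. The martingale step, however, is the heart of the lemma, and it does not go through as written.

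The propagator in your Duhamel formula is not the scalar $\rho^{n-j}$. It is the ordered random product $\Pi_{n,j+1}=P_{n-1}\cdots P_{j+1}$ with $P_k=I-\alpha_dH^{(k)}D^{(k)}$, which depends on the trajectory after time $j$, including the very sample that produced $\xi^{(j+1)}$. So $\sum_j\Pi_{n,j+1}\xi^{(j+1)}$ is not a martingale transform, and Freedman's inequality does not apply to it. The term-by-term bound $\|\Pi_{n,j+1}\|\le\rho^{n-j}$ only gives $\sum_j\rho^{n-j}\|\xi^{(j+1)}\|=O(1)$. In Lemma \ref{lemma:bound-tied-SGD} the discount $a_d^{m-1-n}$ is a deterministic scalar, which is exactly what makes that argument work.

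The natural repair is to freeze the matrix at the start of blocks of length of order $1/\alpha_d$. That is not available in your formulation, because $H^{(n)}=\nabla^2_{m_{\mathcal K}}\Phi(m^{(n)},q^{(n)})$ is \emph{not} slowly varying. It depends on the fast variable $m^{(n)}$, whose increments are of order $\alpha_d$ per step (drift $\alpha_dD^{(n)}g^{(n)}$ plus noise of size $\alpha_d$). Hence $H^{(n)}$ changes by $O(1)$ over one relaxation time during the transient; \eqref{eq:untied-one-step-slow-drift} controls only $\vartheta^{(n)}$.

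For the same reason, the two norms you call equivalent are not. The norm $\|\cdot\|_{(H^{(n)})^{-1}}$ corresponds to conjugating by $H^{1/2}$, and its metric moves by $O(\alpha_d)$ per step, as much as the contraction gains. The $D$-weighted norm $v^\top D^{(n)}v$ corresponds to conjugating by $D^{1/2}$; it contracts for every $H$ with $\lambda_0I\preceq H\preceq\Lambda I$, and its metric depends on slow variables only.

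The paper sidesteps this by linearizing around the slaved mean. With $x^{(n)}=m^{(n)}_{\mathcal K}-m^\star_{\mathcal K}(q^{(n)})$ and $y^{(n)}=S(q^{(n)})x^{(n)}$, where $S=H_\star^{1/2}$, one has $g^{(n)}=H_\star(q^{(n)})x^{(n)}+O(\|x^{(n)}\|^2)$. The contraction matrix $A^{(n)}=S_nD^{(n)}S_n$ is then symmetric and a function of slow coordinates only. The argument proceeds as follows.
\begin{itemize}
\item Lemma \ref{lemma:coarse-localization-untied} brings $y$ into the region $\|y\|\le r_0/4$ after $n_0=\lceil t_0/\alpha_d\rceil$ steps; the earlier range is absorbed into $C_{\rm tr}$.
\item $A$ is frozen at $A^{(s_b)}$ on blocks of length $\lceil1/(a_-\alpha_d)\rceil$. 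The freezing error $(A^{(n)}-A^{(s_b)})y^{(n)}$ goes into the remainder and is small because $\delta_{A,d}\to0$.
\item Freedman is applied to the genuine martingale transforms $\sum_jP_b^{r-1-j}\xi^{(s_b+j+1)}$, where $P_b$ is $\mathcal F_{s_b}$-measurable, with a union bound over the at most $N_T+1$ pairs $(b,r)$.
\item A block-to-block contraction factor $\theta<1$ closes the argument.
\end{itemize}

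If you want to keep your global $g$-formulation, you need a substitute for this step. One option is to split off $Z_{n+1}=P_nZ_n+\xi^{(n+1)}$ and run a scalar stopping-time argument for $\|Z_n\|^2_{D^{(n)}}$. Its martingale increments $2\langle P_nZ_n,\xi^{(n+1)}\rangle_{D^{(n)}}$ have predictable coefficients and a deterministic scalar discount. As written, this step is missing from your plan.

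A minor slip: the Freedman tail is $\exp(-c'/(d\alpha_d))\le d^{-c'/c_0}$, not $d^{-c'\log d/c_0}$. Only the former explains why $c_0\le c_0^\star$ is needed.
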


\begin{proof}
Write $q^{(n)}:=\left(q^{(1,n)},q^{(2,n)}\right)$ and $D^{(n)}:=\operatorname{diag}\left(t_k^{A,(n)}+t_k^{B,(n)}\right)_{k\in\mathcal K}$. All constants below are uniform over \eqref{eq:stopping-time}. By Lemma \ref{lemma:empirical-balancing}, there exists 
$c_D=c_D(T,\gamma,c_{\rm bal})>0$ such that the event $\mathcal B_d:=\left\{\inf_{\substack{0\le n<N_T\wedge J\\k\in\mathcal K}}\left(t_k^{A,(n)}+t_k^{B,(n)}\right)\ge c_D\right\}$ satisfies $\mathbb P(\mathcal B_d)\longrightarrow1$. We work on \(\mathcal B_d\).

Let $H_\star(q):=\nabla_{m_{\mathcal K}}^2\Phi\left(m^\star(q),q\right)$ and $S(q):=H_\star(q)^{1/2}$. By Assumption \ref{assum:untied_convex}, the implicit function theorem, and Assumption \ref{assum:untied}.1, the maps
$q\longmapsto m_{\mathcal K}^\star(q),q\longmapsto H_\star(q),
q\longmapsto S(q)$ are $C^2$ on a neighborhood of the projection of \(\mathfrak D\) onto the \(q\)-coordinates. In particular, on that neighborhood,
\begin{equation}
\lambda_0I\preceq H_\star(q)\preceq\Lambda I,
\qquad\|S(q)\|_{\rm op}+\|S(q)^{-1}\|_{\rm op}\le C,
\label{eq:local-refinement-S-bounds}
\end{equation}
and \(S,S^{-1}\), and \(m^\star\) are Lipschitz.
Define
\begin{equation}
x^{(n)}:=m_{\mathcal K}^{(n)}-m_{\mathcal K}^\star(q^{(n)}),
\qquad
y^{(n)}:=S(q^{(n)})x^{(n)}.
\label{eq:local-refinement-xy}
\end{equation}
Taylor expansion gives
\begin{equation}
g^{(n)}:=\nabla_{m_{\mathcal K}}\Phi\left(m^{(n)},q^{(n)}\right)=H_\star(q^{(n)})x^{(n)}+\rho_g^{(n)},
\qquad
\|\rho_g^{(n)}\|\le C\|x^{(n)}\|^2.
\label{eq:local-refinement-gradient-linearization}
\end{equation}
Fix \(r_0>0\), to be chosen sufficiently small below. For convenience, write
\begin{equation}
S_n:=S(q^{(n)}),\qquad
H_n:=H_\star(q^{(n)})=S_n^2,\qquad
\mathcal D_n:=D^{(n)}.
\end{equation}
Define
\begin{equation}
A^{(n)}:=S_n\mathcal D_nS_n,\qquad
u^{(n)}:=S_n^{-1}e^{(n)},
\label{eq:local-refinement-A-u-definition}
\end{equation}
where \(e^{(n)}\) is the bounded vector appearing in Lemma \ref{lemma:untied-one-step-estimates}. On the event \(\mathcal B_d\), we have $c_DI\preceq \mathcal D_n\preceq 2C_*^2I$. Together with \eqref{eq:local-refinement-S-bounds}, this yields
\begin{equation}
a_-I\preceq A^{(n)}\preceq a_+I,
\qquad
a_-:=\lambda_0c_D>0,
\qquad
a_+:=2\Lambda C_*^2.
\label{eq:local-refinement-A-bounds}
\end{equation}
Moreover,
\begin{equation}
\sup_{n<J}\|u^{(n)}\|\le\sup_{q}\|S(q)^{-1}\|_{\rm op}
\sup_{n<J}\|e^{(n)}\|=:C_u<\infty.
\label{eq:local-refinement-u-bound}
\end{equation}
We next derive the recursion for \(y^{(n)}\). Define
\begin{equation}
F(m_{\mathcal K},q):=S(q)\left(m_{\mathcal K}-m_{\mathcal K}^\star(q)\right),
\end{equation}
so that
\begin{equation}
y^{(n)}=F(m_{\mathcal K}^{(n)},q^{(n)}).
\end{equation}
By Lemma \ref{lemma:untied-one-step-estimates},
\begin{equation}
\mathbb E_n[\Delta m_{\mathcal K}^{(n)}]=-\alpha_d\mathcal D_n
\left(g^{(n)}+\frac{e^{(n)}}{\sqrt d}\right)+r_m^{(n)},
\label{eq:local-refinement-m-drift}
\end{equation}
where $\|r_m^{(n)}\|\le C\left(\frac{(1+\gamma)\alpha_d}{d}+\alpha_d^2\right)$. Furthermore,
\begin{equation}
\left\|\mathbb E_n[\Delta q^{(n)}]\right\|\le C\left(
\frac{\alpha_d}{\sqrt d}\|g^{(n)}\|+\frac{(1+\gamma)\alpha_d}{d}\right),
\label{eq:local-refinement-q-drift}
\end{equation}
and
\begin{equation}
\mathbb E_n\|\Delta q^{(n)}\|^2\le C(1+\gamma)^2\frac{\alpha_d^2}{d}.
\label{eq:local-refinement-q-second-moment}
\end{equation}
Also,
\begin{equation}
\mathbb E_n\|\Delta m_{\mathcal K}^{(n)}\|^2\le C\alpha_d^2.
\label{eq:local-refinement-m-second-moment}
\end{equation}
Since \(F\) is \(C^2\) on a compact neighborhood before the stopping time, a second-order Taylor expansion gives
\begin{align}
\mathbb E_n[y^{(n+1)}-y^{(n)}]=S_n\mathbb E_n[\Delta m_{\mathcal K}^{(n)}]+\partial_qF(m_{\mathcal K}^{(n)},q^{(n)})
\left[\mathbb E_n[\Delta q^{(n)}]\right]+\rho_F^{(n)},
\label{eq:local-refinement-F-Taylor}
\end{align}
where, uniformly before the stopping time,
\begin{equation}
\|\rho_F^{(n)}\|\le C\mathbb E_n\left[\|\Delta m_{\mathcal K}^{(n)}\|^2+\|\Delta q^{(n)}\|^2\right]\le C(1+\gamma)^2\alpha_d^2.
\label{eq:local-refinement-F-Taylor-remainder}
\end{equation}
Using \eqref{eq:local-refinement-gradient-linearization},
\begin{equation}
g^{(n)}=H_nx^{(n)}+\rho_g^{(n)},
\qquad
\|\rho_g^{(n)}\|\le C\|x^{(n)}\|^2,
\end{equation}
and the identities $S_n\mathcal D_nH_nx^{(n)}=S_n\mathcal D_nS_ny^{(n)}=A^{(n)}y^{(n)}$, $S_n\mathcal D_ne^{(n)}=A^{(n)}u^{(n)}$, we obtain, whenever \(n<J\) and \(\|y^{(n)}\|\le r_0\),
\begin{equation}
\mathbb E_n[y^{(n+1)}-y^{(n)}]=-\alpha_dA^{(n)}y^{(n)}
-\frac{\alpha_d}{\sqrt d}A^{(n)}u^{(n)}+r_y^{(n)},
\label{eq:local-refinement-conditional-drift}
\end{equation}
where
\begin{equation}
\|r_y^{(n)}\|\le
C\left[\alpha_d\|y^{(n)}\|^2+\frac{\alpha_d}{\sqrt d}\|y^{(n)}\|+\frac{(1+\gamma)\alpha_d}{d}+(1+\gamma)^2\alpha_d^2\right].
\label{eq:local-refinement-ry-bound}
\end{equation}
Here we used \eqref{eq:local-refinement-S-bounds} and the fact that
\(\partial_qF\) is uniformly bounded before the stopping time.

Define
\begin{equation}
\xi^{(n+1)}:=y^{(n+1)}-\mathbb E_n[y^{(n+1)}].
\label{eq:local-refinement-xi-definition}
\end{equation}
Then $\mathbb E_n[\xi^{(n+1)}]=0$. Since \(F\) is Lipschitz before the stopping time, Lemma \ref{lemma:untied-one-step-estimates} and
\eqref{eq:local-refinement-m-second-moment}--
\eqref{eq:local-refinement-q-second-moment} imply
\begin{equation}
\mathbb E_n\|\xi^{(n+1)}\|^2\le C\alpha_d^2.
\label{eq:local-refinement-xi-variance}
\end{equation}
Consequently,
\begin{equation}
y^{(n+1)}=\left(I-\alpha_dA^{(n)}\right)y^{(n)}-\frac{\alpha_d}{\sqrt d}A^{(n)}u^{(n)}+\xi^{(n+1)}+r_y^{(n)}.
\label{eq:local-refinement-recursion}
\end{equation}
We first use Lemma \ref{lemma:coarse-localization-untied} to enter the region in which \eqref{eq:local-refinement-recursion} can be iterated. Let $C_S:=\sup_{q}\|S(q)\|_{\rm op}<\infty$, where the supremum is taken over a compact neighborhood of the \(q\)-projection of \(\mathfrak D\). Choose \(t_0=t_0(T,\gamma,r_0)\)
sufficiently large that $C_SC_{\rm loc}e^{-c_{\rm loc}t_0}<\frac{r_0}{8}$ and set $n_0:=\left\lceil\frac{t_0}{\alpha_d}\right\rceil$.
Applying Lemma \ref{lemma:coarse-localization-untied} with \(\epsilon=r_0/(8C_S)\), and using \(\alpha_dn_0\ge t_0\), gives
\begin{equation}
\mathbb P\left(\sup_{n_0\le n<N_T\wedge J}\|y^{(n)}\|>\frac{r_0}{4}
\right)\longrightarrow0.
\label{eq:local-refinement-entry}
\end{equation}
Here and below, the supremum over an empty index set is understood to
be zero.

Starting from the entry time \(n_0\), define the local exit time
\begin{equation}
J_{\rm loc}:=
\inf\left\{n\ge n_0:\|y^{(n)}\|>r_0\right\}.
\label{eq:local-refinement-local-exit}
\end{equation}
All local estimates below are carried out up to
\(N_T\wedge J\wedge J_{\rm loc}\). We divide the time interval after \(n_0\) into blocks. Set
\begin{equation}
\ell_d:=\left\lceil\frac{1}{a_-\alpha_d}\right\rceil,
\qquad
s_b:=n_0+b\ell_d,
\label{eq:local-refinement-block-definition}
\end{equation}
and, whenever \(s_b<N_T\wedge J\wedge J_{\rm loc}\), let
$I_b:=\left\{s_b,\ldots,\min\left(s_b+\ell_d-1,\,N_T\wedge J\wedge J_{\rm loc}-1\right)\right\}$ to be the $b-$th block.

We next analyze the dynamics on each block. For a block \(I_b\), set
\begin{equation}
\bar A_b:=A^{(s_b)},\qquad P_b:=I-\alpha_d\bar A_b.
\label{eq:local-refinement-Pb}
\end{equation}
By \eqref{eq:local-refinement-A-bounds}, for all sufficiently large \(d\),
\begin{equation}
0\preceq P_b\preceq (1-a_-\alpha_d)I,
\qquad\|P_b^j\|_{\rm op}\le 1,\quad j\ge0.
\label{eq:local-refinement-Pb-contraction}
\end{equation}
For \(n\in I_b\), rewrite \eqref{eq:local-refinement-recursion} as
\begin{equation}
y^{(n+1)}=P_b y^{(n)}-\frac{\alpha_d}{\sqrt d}\bar A_bu^{(n)}+\xi^{(n+1)}+\rho_b^{(n)},
\label{eq:local-refinement-frozen-recursion}
\end{equation}
where
\begin{equation}
\rho_b^{(n)}:=-\alpha_d\bigl(A^{(n)}-\bar A_b\bigr)y^{(n)}
-\frac{\alpha_d}{\sqrt d}\bigl(A^{(n)}-\bar A_b\bigr)u^{(n)}+r_y^{(n)}.
\label{eq:local-refinement-rho-b}
\end{equation}
For \(0\le r\le |I_b|\), define
\begin{equation}
R_{b,r}:=\sum_{j=0}^{r-1}P_b^{\,r-1-j}\rho_b^{(s_b+j)}
\label{eq:local-refinement-Rbr-definition}
\end{equation}
with $R_{b,0}:=0$. Iterating \eqref{eq:local-refinement-frozen-recursion} within the
block gives
\begin{align}
y^{(s_b+r)}=P_b^r y^{(s_b)}-\frac1{\sqrt d}\sum_{j=0}^{r-1}P_b^{r-1-j}\alpha_d\bar A_bu^{(s_b+j)}+\sum_{j=0}^{r-1}P_b^{r-1-j}\xi^{(s_b+j+1)}+R_{b,r}.
\label{eq:local-refinement-block-Duhamel}
\end{align}

We first control the second term of \eqref{eq:local-refinement-block-Duhamel}. Diagonalize
$\bar A_b=Q_b\operatorname{diag}(\lambda_1,\ldots,\lambda_p)Q_b^T$, where $p:=|\mathcal K|$. By \eqref{eq:local-refinement-A-bounds}, \(\lambda_i\in[a_-,a_+]\). For every coordinate \(i\), using \eqref{eq:local-refinement-u-bound},
\begin{align}
\left|\sum_{j=0}^{r-1}(1-\alpha_d\lambda_i)^{r-1-j}\alpha_d\lambda_i\left(Q_b^Tu^{(s_b+j)}\right)_i\right|\le C_u\sum_{j=0}^{r-1}(1-\alpha_d\lambda_i)^{r-1-j}\alpha_d\lambda_i\le C_u.
\end{align}
Consequently,
\begin{equation}
\max_{0\le r\le |I_b|}\left\|\sum_{j=0}^{r-1}P_b^{r-1-j}\alpha_d\bar A_bu^{(s_b+j)}\right\|\le\sqrt p\,C_u.
\label{eq:local-refinement-forcing-bound}
\end{equation}

We next show that \(A^{(n)}\) is approximately frozen on each block. Write
\begin{equation}
\Delta\vartheta^{(n)}=\mathbb E_n[\Delta\vartheta^{(n)}]+\zeta^{(n+1)},\qquad
\mathbb E_n[\zeta^{(n+1)}]=0.
\end{equation}
By Lemma \ref{lemma:untied-one-step-estimates}, uniformly before the stopping time,
\begin{equation}
\left\|\mathbb E_n[\Delta\vartheta^{(n)}]\right\|\le C\left(\frac{\alpha_d}{\sqrt d}+\frac{(1+\gamma)\alpha_d}{d}\right),
\label{eq:local-refinement-theta-block-drift}
\end{equation}
where we used the boundedness of \(g^{(n)}\) on \eqref{eq:stopping-time}, while
\begin{equation}
\mathbb E_n\|\zeta^{(n+1)}\|^2\le C(1+\gamma)^2\frac{\alpha_d^2}{d}.
\label{eq:local-refinement-theta-block-noise}
\end{equation}
Since $\ell_d=\left\lceil\frac{1}{a_-\alpha_d}\right\rceil=O\!\left((a_-\alpha_d)^{-1}\right)$, the accumulated conditional drift over a single block is \(o(1)\).
Moreover, Doob's maximal inequality applied on each block, followed by a union bound over the \(\mathcal O(d)\) blocks, yields
\begin{equation}
\max_b\sup_{n\in I_b}\left\|\vartheta^{(n)}-\vartheta^{(s_b)}\right\|\xrightarrow{\mathbb P}0.
\label{eq:local-refinement-theta-freezing}
\end{equation}
Since $A(\vartheta)=S(q)D(\vartheta)S(q)$ is Lipschitz on the compact stopped region
\eqref{eq:stopping-time}, it follows that
\begin{equation}
\delta_{A,d}:=\max_b\sup_{n\in I_b}\left\|A^{(n)}-\bar A_b\right\|\xrightarrow{\mathbb P}0.
\label{eq:local-refinement-A-freezing}
\end{equation}

We now bound the block remainder \(R_{b,r}\) (the last term of \eqref{eq:local-refinement-block-Duhamel}). Recall from \eqref{eq:local-refinement-ry-bound} that, on the region \(\|y^{(n)}\|\le r_0\),
\begin{equation}
\|r_y^{(n)}\|\le C\left[\alpha_d\|y^{(n)}\|^2+\frac{\alpha_d}{\sqrt d}\|y^{(n)}\|+\frac{(1+\gamma)\alpha_d}{d}+(1+\gamma)^2\alpha_d^2\right].
\label{eq:local-refinement-ry-bound-recalled}
\end{equation}
Using
\eqref{eq:local-refinement-Rbr-definition}, \eqref{eq:local-refinement-Pb-contraction},
\eqref{eq:local-refinement-rho-b}, \eqref{eq:local-refinement-u-bound}, and
\eqref{eq:local-refinement-ry-bound-recalled}, we obtain, for every
\(0\le r\le |I_b|\),
\begin{align}
\|R_{b,r}\|\le C r\alpha_d\left(\delta_{A,d}+r_0+\frac1{\sqrt d}\right)\max_{0\le j\le r}\|y^{(s_b+j)}\|+C r\alpha_d\left[\frac{\delta_{A,d}}{\sqrt d}+\frac{1+\gamma}{d}+(1+\gamma)^2\alpha_d\right].
\label{eq:local-refinement-Rbr-prebound}
\end{align}
Since \(r\le |I_b|\le\ell_d\) and \(\alpha_d\ell_d\le C/a_-\), this implies
\begin{align}
\|R_{b,r}\|\le\frac{C}{a_-}\left(r_0+\delta_{A,d}+\frac1{\sqrt d}\right)\max_{0\le j\le r}
\|y^{(s_b+j)}\|+\frac{C}{a_-}\left[\frac{\delta_{A,d}}{\sqrt d}+\frac{1+\gamma}{d}+(1+\gamma)^2\alpha_d\right].
\label{eq:local-refinement-Rbr-bound}
\end{align}
Choose \(r_0=r_0(T,\gamma)>0\) sufficiently small so that $\frac{C}{a_-}r_0\le\frac1{16}$.
By \eqref{eq:local-refinement-A-freezing}, \(\delta_{A,d}\xrightarrow{\mathbb P}0\), and therefore
\begin{equation}
\frac{C}{a_-}\left(\delta_{A,d}+\frac1{\sqrt d}\right)\le\frac1{16}
\label{eq:local-refinement-small-random-coefficient}
\end{equation}
with probability tending to one. Furthermore, for fixed \(T,\gamma\), the learning-rate condition \(\alpha_d\le c_0(d\log d)^{-1}\) gives
\begin{equation}
\frac{\delta_{A,d}}{\sqrt d}+\frac{1+\gamma}{d}+(1+\gamma)^2\alpha_d=\frac{o_{\mathbb P}(1)}{\sqrt d}.
\label{eq:local-refinement-independent-remainder-small}
\end{equation}
Consequently, there exists a nonnegative random sequence
\(\varepsilon_d\xrightarrow{\mathbb P}0\) such that, with probability
tending to one,
\begin{equation}
\|R_{b,r}\|\le\frac18\max_{0\le j\le r}\|y^{(s_b+j)}\|+\frac{\varepsilon_d}{\sqrt d},
\qquad 0\le r\le |I_b|,
\label{eq:local-refinement-block-remainder}
\end{equation}
uniformly over all blocks \(b\).

Finally, We control the martingale term (the third term of \eqref{eq:local-refinement-block-Duhamel}). Truncate the fresh sample at step \(n\) by $\mathcal A_n=\left\{\|x^{(n)}\|^2\le c_xd\right\}\cap\left\{\max_{i,j,k}|G_{ijk}^{(n)}|
\le c_G\log d\right\}$.
The probability of any truncation failure before \(N_T\wedge J\) tends to zero. On the truncated event,
\begin{equation}
\|\xi^{(n+1)}\|\le C\alpha_d(1+(\log d)^r)
\label{eq:local-refinement-xi-increment}
\end{equation}
for some fixed \(r<\infty\). For each block, Freedman's inequality applied to \eqref{eq:local-refinement-block-Duhamel} gives
\begin{align}
\mathbb P\left(\left|\sum_{j=0}^{r-1}
(1-\alpha_d\lambda_i)^{r-1-j}\left(Q_b^T\xi^{(s_b+j+1)}\right)_i\right|>\frac{\eta}{\sqrt d}\right)\le2\exp\left[-\frac{c\eta^2a_-}{d\alpha_d}\right]
\label{eq:local-refinement-Freedman}
\end{align}
for every fixed \(\eta>0\) and all sufficiently large \(d\). Here we used
\begin{equation}
\sum_{j\ge0}(1-\alpha_d\lambda_i)^{2j}
\mathbb E_{s_b+j}\left[\left|\left(Q_b^T\xi^{(s_b+j+1)}\right)_i\right|^2\right]\le C\frac{\alpha_d}{a_-}.
\end{equation}
We now make the estimate uniform over all blocks and all terminal
times within each block. Let $\eta_0:=\frac{1}{\sqrt p}$ and $p:=|\mathcal K|$.
The total number of pairs \((b,r)\), with \(b\) ranging over the
blocks and \(0\le r\le |I_b|\), is at most \(N_T+1\). Therefore,
a union bound gives
\begin{align}
\mathbb P\Bigg(\max_b\max_{0\le r\le |I_b|}
\max_{1\le i\le p}\left|\sum_{j=0}^{r-1}
(1-\alpha_d\lambda_i)^{r-1-j}\left(Q_b^T\xi^{(s_b+j+1)}\right)_i
\right|>\frac{\eta_0}{\sqrt d}\Bigg)\le2p(N_T+1)
\exp\left[-\frac{c\eta_0^2a_-}{d\alpha_d}\right].
\label{eq:local-refinement-Freedman-union}
\end{align}
By the lower learning-rate bound \(\alpha_d=\Omega(d^{-\iota})\), $N_T\le C_Td^{1+\iota}$, while $\frac{1}{d\alpha_d}\ge\frac{\log d}{c_0}$ by the upper learning-rate bound. Hence
\begin{align}
2p(N_T+1)\exp\left[-\frac{c\eta_0^2a_-}{d\alpha_d}\right]\le C_Td^{\,1+\iota-c\eta_0^2a_-/c_0}.
\end{align}
We may therefore choose \(c_0^\star=c_0^\star(T,\gamma,C_*,\mathfrak D)>0\) sufficiently small so that
\begin{equation}
\frac{c\eta_0^2a_-}{c_0^\star}>1+\iota.
\label{eq:local-refinement-c0-choice}
\end{equation}
Whenever \(c_0\le c_0^\star\), the right-hand side of
\eqref{eq:local-refinement-Freedman-union} converges to zero.
Let $v_{b,r}:=\sum_{j=0}^{r-1}P_b^{r-1-j}\xi^{(s_b+j+1)}$.
Since $P_b=Q_b\operatorname{diag}(1-\alpha_d\lambda_1,\ldots,
1-\alpha_d\lambda_p)Q_b^T$,
we have
\begin{equation}
(Q_b^Tv_{b,r})_i=\sum_{j=0}^{r-1}(1-\alpha_d\lambda_i)^{r-1-j}
(Q_b^T\xi^{(s_b+j+1)})_i.
\end{equation}
Hence, on the high-probability event obtained by \eqref{eq:local-refinement-Freedman-union},
\begin{equation}
\max_{b,r,i}|(Q_b^Tv_{b,r})_i|\le\frac{\eta_0}{\sqrt d}.
\end{equation}
Choosing $\eta_0=p^{-1/2}$ and using the orthogonality of $Q_b$,
\begin{align}
\|v_{b,r}\|=\|Q_b^Tv_{b,r}\|\le\sqrt p\,
\max_{1\le i\le p}|(Q_b^Tv_{b,r})_i|\le\frac1{\sqrt d}.
\end{align}
Therefore,
\begin{equation}
\max_b\max_{0\le r\le |I_b|}
\left\|\sum_{j=0}^{r-1}P_b^{r-1-j}\xi^{(s_b+j+1)}
\right\|\le\frac1{\sqrt d}
\label{eq:local-refinement-uniform-martingale}
\end{equation}
with probability tending to one.

Combining
\eqref{eq:local-refinement-block-remainder},
\eqref{eq:local-refinement-forcing-bound}, and
\eqref{eq:local-refinement-uniform-martingale}, we obtain, on an event whose probability tends to one,
\begin{equation}
\max_{0\le r\le |I_b|}\|y^{(s_b+r)}\|
\le2\|y^{(s_b)}\|+\frac{C_y}{\sqrt d},
\label{eq:local-refinement-within-block}
\end{equation}
where $C_y=C_y(C_*,\mathfrak D,\mathcal C,\mathcal L,K,L)$ is independent of \(\gamma\).

Moreover, for a full block, i.e. whenever
\(s_b+\ell_d<N_T\wedge J\wedge J_{\rm loc}\), we additionally have $\|P_b^{\ell_d}\|_{\rm op}\le(1-a_-\alpha_d)^{\ell_d}
\le e^{-1}$. Evaluating \eqref{eq:local-refinement-block-Duhamel} at the block endpoint and using \eqref{eq:local-refinement-within-block} therefore gives
\begin{equation}
\|y^{(s_b+\ell_d)}\|\leq\|P_b^{\ell_d}\|_{\rm op}\|y^{(s_b)}\|+\frac{1}{8}\max_{0\leq j\leq\ell_d}\|y^{(s_b+j)}\|+\frac{C}{\sqrt{d}}\leq\theta\|y^{(s_b)}\|+\frac{C}{\sqrt{d}},
\end{equation}
where we use \eqref{eq:local-refinement-block-remainder} for the first inequality and choose $\theta:=e^{-1}+\frac{1}{4}\in(0,1)$ for the second inequality. Iteration gives
\begin{equation}
\|y^{(s_b)}\|\le\theta^b\|y^{(s_0)}\|+\frac{C_y}{(1-\theta)\sqrt d}.
\label{eq:local-refinement-block-iteration}
\end{equation}

We now close the local stopping-time argument. At the entry time, $\|y^{(n_0)}\|\le\frac{r_0}{4}$. By \eqref{eq:local-refinement-within-block}, for all sufficiently large \(d\),
$\max_{0\le r\le |I_0|}\|y^{(n_0+r)}\|<r_0$. Moreover, by
\eqref{eq:local-refinement-block-iteration}, the value at the next block endpoint is again at most \(r_0/4\) for all sufficiently large \(d\). Repeating the same argument block by block shows that $J_{\rm loc}>N_T\wedge J$ with probability tending to one. Hence the local stopping time $J_{\rm loc}$ may be removed.

The number of blocks between \(n_0\) and \(n\) is bounded
below by \(a_-\alpha_d(n-n_0)-2\). Hence
\eqref{eq:local-refinement-within-block}--
\eqref{eq:local-refinement-block-iteration} imply
\begin{equation}
\|y^{(n)}\|\le C_{\rm tr}'e^{-c_{\rm tr}'\alpha_dn}+\frac{C_y'}{\sqrt d},
\qquad n_0\le n<N_T\wedge J,
\label{eq:local-refinement-y-conclusion}
\end{equation}
where $c_{\rm tr}'=-a_-\log\theta>0$. The constants \(C_{\rm tr}'\) and \(c_{\rm tr}'\) may depend on \(T,\gamma\), whereas \(C_y'\) is independent of \(\gamma\).

We finally convert the estimate on \(y^{(n)}\) into the desired
estimate on \(b_{\mathcal K}^{(n)}\). By \eqref{eq:local-refinement-gradient-linearization} and
Lemma \ref{lemma:finite-dimensional-Stein-untied},
\begin{align}
\|b_{\mathcal K}^{(n)}\|\le\|g^{(n)}\|+\frac{C}{\sqrt d}\le
C\|y^{(n)}\|+C\|y^{(n)}\|^2+\frac{C}{\sqrt d}.
\label{eq:local-refinement-b-from-y}
\end{align}
Combining this with \eqref{eq:local-refinement-y-conclusion}, there exist constants \(C_1<\infty\), \(C_b<\infty\), and $c_{\rm tr}:=c_{\rm tr}'>0$, such that, with probability tending to one,
\begin{equation}
\|b_{\mathcal K}^{(n)}\|\le C_1e^{-c_{\rm tr}\alpha_dn}
+\frac{C_b}{\sqrt d},
\qquad n_0\le n<N_T\wedge J.
\label{eq:local-refinement-b-after-entry}
\end{equation}
Here \(C_b\) can be chosen independently of \(\gamma\).

It remains to cover the initial interval \(0\le n<n_0\wedge J\).
By Lemma \ref{lemma:finite-dimensional-Stein-untied}, there exists
\(B_0<\infty\), independent of \(d\), such that
\begin{equation}
\sup_{0\le n<J}\|b_{\mathcal K}^{(n)}\|\le B_0.
\label{eq:local-refinement-b-early-bound}
\end{equation}
For all sufficiently large \(d\), \(\alpha_d\le1\), and hence
\begin{equation}
n<n_0\quad\Longrightarrow\quad\alpha_dn<\alpha_dn_0
\le t_0+\alpha_d\le t_0+1.
\end{equation}
Choose $C_{\rm tr}\ge\max\left\{C_1,\,B_0e^{c_{\rm tr}(t_0+1)}\right\}$. Then, for every \(0\le n<n_0\wedge J\),
$B_0\le C_{\rm tr}e^{-c_{\rm tr}\alpha_dn}$. Combining this with
\eqref{eq:local-refinement-b-after-entry} yields
\begin{equation}
\|b_{\mathcal K}^{(n)}\|\le C_{\rm tr}e^{-c_{\rm tr}\alpha_dn}
+\frac{C_b}{\sqrt d},
\qquad0\le n<N_T\wedge J,
\end{equation}
on an event whose probability tends to one. This proves
\eqref{eq:local-b-refinement}. Removing \(\mathcal B_d\) is legitimate because \(\mathbb P(\mathcal B_d^c)\to0\).
\end{proof}

\paragraph{Step 6: Final spectral bounds}
\begin{lemma}
\label{lemma:spectral_bounds_untied}
Fix $T>0$. Let $\mathfrak D\Subset\mathcal M\times\mathcal Q$ be a compact set containing the limiting trajectory, and define
\begin{equation}
J_{\rm mac}:=\inf\left\{n\ge0:\left(m_{\mathcal K}^{(n)},q^{(1,n)},q^{(2,n)}\right)\notin\mathfrak D\right\}.
\label{eq:J-mac}
\end{equation}
Under the conditions of Theorem \ref{theo:untied-SGD}, there exists a constant $C_*>C_0$, independent of $d$, such that
\begin{equation}
\lim_{d\to\infty}\mathbb P\left(\sup_{0\le n<N_T\wedge J_{\rm mac}}\max_{1\le k\le K}\left\{\|U_k^{(n)}\|_{\rm op},\|V_k^{(n)}\|_{\rm op}\right\}\le C_*\right)=1,
\label{eq:stopped-untied-spectral-bound}
\end{equation}
where $N_T:=\left\lfloor\frac{Td}{\alpha_d}\right\rfloor$.
\end{lemma}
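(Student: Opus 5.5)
The plan is to follow the stopping-time and matrix Freedman argument of Lemma \ref{lemma:bound-tied-SGD}. The new ingredient is the mean term $\frac{b_k^{(n)}}{\sqrt d}V_k$ in the expected gradient of Lemma \ref{lemma:finite-dimensional-Stein-untied}. Naively this term is of order $d^{-1/2}$, which summed over $d/\alpha_d$ steps would blow up the operator norm; it is controlled by the refined bound of Lemma \ref{lemma:local-mean-gradient-refinement}.

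Fix $C_*:=2C_0+1$ and define $J:=J_{\rm mac}\wedge J_{\rm op}$, where $J_{\rm op}$ is the first step at which some $\|U_k^{(n)}\|_{\rm op}$ or $\|V_k^{(n)}\|_{\rm op}$ exceeds $C_*$. Before $J$, condition \eqref{eq:stopping-time} holds, so Lemmas \ref{lemma:empirical-balancing}--\ref{lemma:local-mean-gradient-refinement} all apply. The bound $\|m^{(n)}\|_\infty\le R_*$ follows from compactness of $\mathfrak D$ together with the fixed non-trainable means.

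I would first truncate the fresh samples on the events $\mathcal A_n$ exactly as in Step 1 of Lemma \ref{lemma:bound-tied-SGD}. I would then decompose $g_{U,k}^{(n)}=\bar g_{U,k}^{(n)}-\mathcal T^{(n)}+\widehat\xi_{U,k}^{(n)}$ and unroll the update:
\[
U_k^{(m)}=a_d^mU_k^{(0)}-\alpha_d\sum_{n<m}a_d^{m-1-n}\bar g_{U,k}^{(n)}+Y_{U,k,m}+o(1),
\]
and similarly for $V_k$. The discounted martingale $Y_{U,k,m}$ is bounded by $\delta=(C_*-C_0)/4$ uniformly in $m\le N_T$, using the matrix Freedman inequality and a union bound, as in \eqref{eq:noise-union-bound}. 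This step needs the variance bound $\sigma_d^2\lesssim d\alpha_d/\gamma$, which requires $\mathbb E_n\|\xi\xi^T\|_{\rm op}=O(1)$. That bound holds because $\mathbb E_n|\partial\mathcal L|^4$ is bounded on $\mathfrak D$ and $\|V_k\|_{\rm op}\le C_*$. It also needs $c_0$ small.

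Next, the drift splits into three parts.
\begin{itemize}
\item The Stein part $\frac2d\sum a^{(r,d,n)}_{kl}M_l^{(r)}V_k$ has operator norm at most $C_H(C_*)/d$. Summing it with the discount $\alpha_d\sum a_d^r=d/\gamma$ gives at most $C_HC_*/\gamma$, which is at most $\delta$ for $\gamma$ large by Assumption \ref{assum:untied}.5.
\item The Stein remainder contributes $O(d^{-1/2}/\gamma)$.
\item The mean part is handled using Lemma \ref{lemma:local-mean-gradient-refinement}. With probability tending to one, $|b_k^{(n)}|\le C_{\rm tr}e^{-c_{\rm tr}\alpha_dn}+C_b/\sqrt d$. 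The contribution is therefore at most
\[
\frac{\alpha_dC_*}{\sqrt d}\sum_{n}\Bigl(C_{\rm tr}e^{-c_{\rm tr}\alpha_dn}+\frac{C_b}{\sqrt d}\Bigr)a_d^{m-1-n}\le\frac{C_*C_{\rm tr}}{c_{\rm tr}\sqrt d}+\frac{C_*C_b}{\gamma},
\]
and the last term is again at most $\delta$ once $\gamma$ is large. Note that $C_b$ is independent of $\gamma$, which is exactly why that lemma tracks this dependence.
\end{itemize}

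Collecting terms, on the high-probability event and for $m\le J\wedge N_T$ we get $\|U_k^{(m)}\|_{\rm op}\le C_0+3\delta+o(1)<C_*$. If $J_{\rm op}\le N_T\wedge J_{\rm mac}$, evaluating this bound at $m=J_{\rm op}$ gives a contradiction, exactly as in Step 4 of Lemma \ref{lemma:bound-tied-SGD}. Hence $J_{\rm op}>N_T\wedge J_{\rm mac}$, which is \eqref{eq:stopped-untied-spectral-bound}.

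The main obstacle is a circularity: Lemmas \ref{lemma:empirical-balancing}--\ref{lemma:local-mean-gradient-refinement} are stated for a stopping time that already includes the spectral bound, while we are proving that bound. I would resolve this by applying those lemmas with $J=J_{\rm mac}\wedge J_{\rm op}$ itself, which is legitimate since their statements only require \eqref{eq:stopping-time} before $J$. I also need to check that the constants $C_{\rm tr}$, $c_0^\star$, $c_{\rm tr}$ may depend on $\gamma$ and $T$ without breaking the choice of $\gamma$. That check goes through because the $\gamma$-sensitive terms only involve $C_H$ and $C_b$, which do not depend on $\gamma$.
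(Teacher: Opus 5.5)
Your proposal is correct and follows the paper's proof essentially step by step. It uses the same stopping time $J_{\rm sp}\wedge J_{\rm mac}$ and the same unrolled recursion with a matrix Freedman bound on the discounted martingale. It also splits the drift the same way: Lemma~\ref{lemma:local-mean-gradient-refinement} turns the $b_k/\sqrt d$ term into an $O(d^{-1/2})$ transient plus a $C_bC_*/\gamma$ stationary part that large $\gamma$ absorbs. One small correction: the Freedman variance proxy requires $\|\mathbb E_n[\xi\xi^T]\|_{\rm op}=O(1)$ (and likewise for $\xi^T\xi$), not $\mathbb E_n\|\xi\xi^T\|_{\rm op}=O(1)$, since the latter equals $\mathbb E_n\|\xi\|_{\rm op}^2$ and is of order $d$.
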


\begin{proof}
The weights with $k\notin\mathcal K$ remain fixed throughout training, so it suffices to consider $k\in\mathcal K$.

Fix $C_*>C_0$, to be chosen below, and define the spectral stopping time
\begin{equation}
J_{\rm sp}:=\inf\left\{n\ge0:\max_{1\le k\le K}
\left\{\|U_k^{(n)}\|_{\rm op},\|V_k^{(n)}\|_{\rm op}\right\}>C_*\right\},
\label{eq:untied-spectral-stopping-time}
\end{equation}
and set $J:=J_{\rm sp}\wedge J_{\rm mac}$. All estimates below are uniform on $\{n<J\}$.

For $k\in\mathcal K$, denote the stochastic gradients by
\begin{equation}
g_{U,k}^{(n)}:=\nabla_{U_k}\mathcal L(G^{(n)}),
\qquad
g_{V,k}^{(n)}:=\nabla_{V_k}\mathcal L(G^{(n)}),
\end{equation}
and their conditional means by
$\bar g_{U,k}^{(n)}:=\mathbb E[g_{U,k}^{(n)}\mid\mathcal F_n]$ and $\bar g_{V,k}^{(n)}:=\mathbb E[g_{V,k}^{(n)}\mid\mathcal F_n]$. Lemma \ref{lemma:finite-dimensional-Stein-untied} gives
\begin{align}
\bar g_{U,k}^{(n)}&=\frac{b_k^{(n)}}{\sqrt d}V_k^{(n)}+\frac1d H_{U,k}^{(n)}V_k^{(n)}+\mathcal E_{U,k}^{(n)},
\label{eq:mean-gradient-U-spectral}
\\
\bar g_{V,k}^{(n)}&=\frac{b_k^{(n)}}{\sqrt d}U_k^{(n)}+\frac1d H_{V,k}^{(n)}U_k^{(n)}+\mathcal E_{V,k}^{(n)}
\label{eq:mean-gradient-V-spectral}
\end{align}
with $\max_{k\in\mathcal K}\left\{\|H_{U,k}^{(n)}\|_{\rm op},\|H_{V,k}^{(n)}\|_{\rm op}\right\}\le C_H$ and $\max_{k\in\mathcal K}\left\{\|\mathcal E_{U,k}^{(n)}\|_{\rm op},\|\mathcal E_{V,k}^{(n)}\|_{\rm op}\right\}\le C_Ed^{-3/2}$ uniformly on $\{n<J\}$. Here $C_H,C_E$ are constants independent of $d$ and $n$.

By Lemma \ref{lemma:local-mean-gradient-refinement}: there exist constants $C_{\rm tr}(T,\gamma),c_{\rm tr}(T,\gamma)>0$ and $C_b(T)<\infty$ such that, with probability tending to one,
\begin{equation}
\max_{k\in\mathcal K}|b_k^{(n)}|\le C_{\rm tr}(T,\gamma)e^{-c_{\rm tr}(T,\gamma)\alpha_dn}+\frac{C_b(T)}{\sqrt d},
\qquad0\le n<N_T\wedge J.
\label{eq:b-coarse-bound-spectral}
\end{equation}

Define
\begin{equation}
\mathcal A_n:=\left\{\|x^{(n)}\|^2\le c_xd\right\}\cap
\left\{\max_{i,j,k}|G_{ijk}^{(n)}|\le c_G\log d\right\},
\label{eq:untied-data-truncation}
\end{equation}
where $c_x,c_G$ are sufficiently large constants. Thus we have
\begin{equation}
\mathbb P\left(\exists\,n<N_T\wedge J:\mathcal A_n^c\right)\longrightarrow0.
\label{eq:untied-data-high-prob}
\end{equation}
Denote the complementary high-probability event by
$\mathcal A_{\rm data}$.

Define the stopped and truncated gradients
$\widehat g_{U,k}^{(n)}:=\mathbf 1_{\{n<J\}}
g_{U,k}^{(n)}\mathbf 1_{\mathcal A_n}$, $\widehat g_{V,k}^{(n)}:=\mathbf 1_{\{n<J\}}g_{V,k}^{(n)}
\mathbf 1_{\mathcal A_n}$, their conditional means
$\widehat{\bar g}_{U,k}^{(n)}$, $\widehat{\bar g}_{V,k}^{(n)}$, and the martingale differences $\widehat\xi_{U,k}^{(n)}:=\widehat g_{U,k}^{(n)}-\widehat{\bar g}_{U,k}^{(n)}$, $\widehat\xi_{V,k}^{(n)}:=\widehat g_{V,k}^{(n)}-
\widehat{\bar g}_{V,k}^{(n)}$.

Set $a_d:=1-\frac{\alpha_d\gamma}{d}$. Define the discounted martingale sums
\begin{align}
Y_{U,k,m}:=-\alpha_d\sum_{n=0}^{m-1}a_d^{m-1-n}\widehat\xi_{U,k}^{(n)},\quad
Y_{V,k,m}:=-\alpha_d\sum_{n=0}^{m-1}a_d^{m-1-n}\widehat\xi_{V,k}^{(n)}.
\end{align}
Similarly to Lemma \ref{lemma:bound-tied-SGD} we have
\begin{equation}
\mathbb P\left(\max_{\substack{k\in\mathcal K\\1\le m\le N_T}}\max\left\{\|Y_{U,k,m}\|_{\rm op},\|Y_{V,k,m}\|_{\rm op}\right\}>\delta\right)\longrightarrow0.
\label{eq:untied-noise-high-prob}
\end{equation}
Denote the complementary event by
$\mathcal A_{\rm noise}$.
Also the truncation-tail corrections are small:
\begin{equation}
\mathbf 1_{\{n<J\}}\max\left\{\| \mathbb E\left[g_{U,k}^{(n)}\mathbf 1_{\mathcal A_n^c}\,\middle|\,\mathcal F_n\right\|_{\rm op},\|\mathbb E\left[g_{V,k}^{(n)}\mathbf 1_{\mathcal A_n^c}\,\middle|\,\mathcal F_n
\right]\|_{\rm op}\right\}\le r_d,
\label{eq:untied-tail-error}
\end{equation}
where $r_d=o(d^{-M})$ for every fixed $M>0$.

On $\mathcal A_{\rm data}$, for every $m\le J$ and $n<m$ unrolling the SGD recursion gives
\begin{align}
U_k^{(m)}=a_d^mU_k^{(0)}-\frac{\alpha_d}{\sqrt d}\sum_{n=0}^{m-1}a_d^{m-1-n}b_k^{(n)}V_k^{(n)}-
\frac{\alpha_d}{d}\sum_{n=0}^{m-1}a_d^{m-1-n}H_{U,k}^{(n)}V_k^{(n)}+Y_{U,k,m}+R_{U,k,m},
\label{eq:unrolled-U-untied}
\end{align}
where, uniformly for $m\leq J\wedge N_T$, $\|R_{U,k,m}\|_{\rm op}\le\alpha_d\sum_{n=0}^{m-1}a_d^{m-1-n}\left(C_Ed^{-3/2}+r_d\right)=o(1)$. The analogous expansion holds for $V_k^{(m)}$.

We now bound the second term of \eqref{eq:unrolled-U-untied}. On the high-probability event where \eqref{eq:b-coarse-bound-spectral} holds, the first term of \eqref{eq:b-coarse-bound-spectral} contributes as
\begin{align}
\frac{\alpha_d}{\sqrt d}\sum_{n=0}^{m-1}a_d^{m-1-n}C_{\rm tr}(T,\gamma)e^{-c_{\rm tr}(T,\gamma)\alpha_dn}\|V_k^{(n)}\|_{\rm op}\le\frac{C(T,\gamma)}{\sqrt d}=o(1),
\label{eq:b-transient-weight-bound}
\end{align}
uniformly in $m\leq J\wedge N_T$, where we use
$\sum_{n\ge0}e^{-c_{\rm tr}(T,\gamma)\alpha_dn}\le C(T,\gamma)/\alpha_d$.
Using $\alpha_d\sum_{r=0}^{\infty}a_d^r=\frac{d}{\gamma}$, the second term of \eqref{eq:b-coarse-bound-spectral} contributes as
\begin{equation}
\frac{\alpha_d}{\sqrt d}\sum_{n=0}^{m-1}
a_d^{m-1-n}\frac{C_b(T)}{\sqrt d}
\|V_k^{(n)}\|_{\rm op}\le\frac{C_b(T)C_*}{\gamma}.
\label{eq:b-stationary-weight-bound}
\end{equation}
The third term of \eqref{eq:unrolled-U-untied} can be bounded as
\begin{equation}
\frac{\alpha_d}{d}\sum_{n=0}^{m-1}a_d^{m-1-n}\|H_{U,k}^{(n)}V_k^{(n)}\|_{\rm op}\le\frac{C_HC_*}{\gamma}.
\label{eq:H-weight-bound}
\end{equation}
Combining
\eqref{eq:unrolled-U-untied}--
\eqref{eq:H-weight-bound}, on
$\mathcal A_{\rm data}\cap\mathcal A_{\rm noise}$ and the
high-probability event
\eqref{eq:b-coarse-bound-spectral},
\begin{equation}
\|U_k^{(m)}\|_{\rm op}\le C_0+\frac{(C_b(T)+C_H)C_*}{\gamma}+\delta+o(1),
\qquad m\leq J\wedge N_T.
\label{eq:U-closing-untied}
\end{equation}
The same estimate holds for $V_k^{(m)}$.

Choose $C_*:=2C_0+1$ and $\delta:=\frac{C_*-C_0}{4}$.
By Assumption \ref{assum:untied}.5, choose $\gamma$ sufficiently large (which might depend on $T$) such that
\begin{equation}
\frac{(C_b(T)+C_H)C_*}{\gamma}\le\frac{C_*-C_0}{4}.
\label{eq:untied-gamma-closing}
\end{equation}
Therefore, for all sufficiently large $d$,
\begin{equation}
\max_{k\in\mathcal K}\max\left\{\|U_k^{(m)}\|_{\rm op},
\|V_k^{(m)}\|_{\rm op}\right\}<C_*
\end{equation}
for every $m\leq J\wedge N_T$ on an event whose probability tends to one. Hence
\begin{equation}
\mathbb P\left(J_{\rm sp}\le J_{\rm mac}\wedge N_T\right)\longrightarrow0.
\end{equation}
This is precisely
\eqref{eq:stopped-untied-spectral-bound}.
\end{proof}

\subsubsection{Fast-dynamics convergence}
\label{app:proof-untied-fast-convergence}
We now prove the convergence on the fast timescale. Fix a finite fast-time horizon \(\widetilde T>0\), and set $N_{\widetilde T}^{\mathrm f}:=\left\lfloor\frac{\widetilde T}{\alpha_d}\right\rfloor$. Let $D_0:=\operatorname{diag}\left(
\bar t_k^A(0)+\bar t_k^B(0)
\right)_{k\in\mathcal K}$, $q_0:=\left(\bar q^{(1)}(0),\bar q^{(2)}(0)\right)$ and let \(\tilde m_{\mathcal K}(t)\) be the solution of
\begin{equation}
\frac{d\tilde m_{\mathcal K}}{dt}=-D_0\nabla_{m_{\mathcal K}}
\Phi\left(\tilde m(t),q_0\right),
\qquad
\tilde m_{\mathcal K}(0)=\lim_{d\to\infty}m_{\mathcal K}^{(d)}(0),
\label{eq:fast-limit-ode-proof}
\end{equation}
with the non-trainable components fixed at their initial values.

By Assumption \ref{assum:untied_convex},
\begin{equation}
\frac{d}{dt}\Phi(\tilde m(t),q_0)=-
\nabla_{m_{\mathcal K}}\Phi(\tilde m(t),q_0)^T
D_0\nabla_{m_{\mathcal K}}\Phi(\tilde m(t),q_0)
\le0.
\end{equation}
Hence the trajectory \(\{\tilde m_{\mathcal K}(t):0\le t\le\widetilde T\}\) remains in a compact subset of \(\mathcal M\).
Choose compact neighborhoods $\mathfrak D_{\mathrm f}
\Subset\mathcal M\times\mathcal Q$
and \(C_*\) in Lemma
\ref{lemma:spectral_bounds_untied} such that the deterministic trajectory
\((\tilde m_{\mathcal K}(t),q_0)\), \(0\le t\le\widetilde T\), lies in
the interior of \(\mathfrak D_{\mathrm f}\). Define the stopping times
\begin{align}
J_{\mathrm f}&:=\inf\left\{n\ge0:
\left(m_{\mathcal K}^{(n)},q^{(1,n)},q^{(2,n)}\right)
\notin\mathfrak D_{\mathrm f}\right\},
\\
J_{\mathrm sp}&:=\inf\left\{n\ge0:\max_k\left\{
\|U_k^{(n)}\|_{\rm op},\|V_k^{(n)}\|_{\rm op}\right\}>C_*\right\},
\end{align}
and set \(J:=J_{\mathrm f}\wedge J_{\mathrm sp}\).
The spectral estimate of Lemma
\ref{lemma:spectral_bounds_untied} gives
\begin{equation}
\mathbb P\left(
J_{\mathrm sp}\le J_{\mathrm f}\wedge
N_{\widetilde T}^{\mathrm f}
\right)\longrightarrow0.
\label{eq:fast-spectral-no-exit}
\end{equation}

We first show that all slow coordinates remain frozen on the fast timescale. Recall $\vartheta^{(n)}:=\left(t^{A,(n)},t^{B,(n)},q^{(1,n)},q^{(2,n)}\right)$.
Write
\begin{equation}
\Delta\vartheta^{(n)}=\mathbb E_n[\Delta\vartheta^{(n)}]+\zeta_\vartheta^{(n+1)},
\qquad
\mathbb E_n[\zeta_\vartheta^{(n+1)}]=0.
\end{equation}
On \(\{n<J\}\), Lemma
\ref{lemma:untied-one-step-estimates} and the boundedness of \(\nabla_{m_{\mathcal K}}\Phi\) on \(\mathfrak D_{\mathrm f}\) give
\begin{equation}
\left\|\mathbb E_n[\Delta\vartheta^{(n)}]
\right\|\le C\left(\frac{\alpha_d}{\sqrt d}
+\frac{(1+\gamma)\alpha_d}{d}\right)
\label{eq:fast-theta-drift}
\end{equation}
and
\begin{equation}
\mathbb E_n\left[\|\zeta_\vartheta^{(n+1)}\|^2\right]
\le C(1+\gamma)^2\frac{\alpha_d^2}{d}.
\label{eq:fast-theta-noise}
\end{equation}
Therefore the accumulated conditional drift over
\(N_{\widetilde T}^{\mathrm f}=\mathcal O(\alpha_d^{-1})\) steps satisfies
\begin{equation}
\sup_{m\le N_{\widetilde T}^{\mathrm f}}
\left\|\sum_{n=0}^{m-1}\mathbf 1_{\{n<J\}}
\mathbb E_n[\Delta\vartheta^{(n)}]\right\|\le C_{\widetilde T}
\left(d^{-1/2}+\frac{1+\gamma}{d}\right)
=o(1).
\label{eq:fast-theta-drift-sum}
\end{equation}
On the other hand, Doob's maximal inequality and
\eqref{eq:fast-theta-noise} imply
\begin{align}
\mathbb E\left[\sup_{m\le N_{\widetilde T}^{\mathrm f}}\left\|
\sum_{n=0}^{m-1}\mathbf 1_{\{n<J\}}\zeta_\vartheta^{(n+1)}\right\|^2\right]\le C N_{\widetilde T}^{\mathrm f}
\frac{\alpha_d^2}{d}\le C_{\widetilde T}\frac{\alpha_d}{d}\longrightarrow0.
\label{eq:fast-theta-martingale}
\end{align}
Consequently,
\begin{equation}
\sup_{0\le n\le N_{\widetilde T}^{\mathrm f}\wedge J}
\left\|\vartheta^{(n)}-\vartheta^{(0)}\right\|
\xrightarrow{\mathbb P}0.
\label{eq:fast-theta-freezing}
\end{equation}
Together with Assumption \ref{assum:untied}.3, this yields
\begin{equation}
\sup_{0\le n\le N_{\widetilde T}^{\mathrm f}\wedge J}
\left\|\vartheta^{(n)}-\bar\vartheta(0)\right\|
\xrightarrow{\mathbb P}0.
\label{eq:fast-theta-freezing-limit}
\end{equation}
In particular,
\begin{equation}
\sup_{0\le n\le N_{\widetilde T}^{\mathrm f}\wedge J}
\left(\|D^{(n)}-D_0\|+\|q^{(n)}-q_0\|\right)\xrightarrow{\mathbb P}0.
\label{eq:fast-Dq-freezing}
\end{equation}

We next turn to the mean variables. Define
\begin{equation}
F(m):=-D_0\nabla_{m_{\mathcal K}}\Phi(m,q_0).
\end{equation}
By Lemma \ref{lemma:untied-one-step-estimates}, on \(\{n<J\}\),
\begin{equation}
\mathbb E_n[\Delta m_{\mathcal K}^{(n)}]=\alpha_d F(m^{(n)})+\alpha_d\rho_m^{(n)},
\label{eq:fast-m-Euler}
\end{equation}
where, using \eqref{eq:fast-Dq-freezing}, the local Lipschitz continuity of \(\nabla_{m_{\mathcal K}}\Phi\), and the bound
\(\|e^{(n)}\|\le C\),
\begin{equation}
\sup_{0\le n<N_{\widetilde T}^{\mathrm f}\wedge J}\|\rho_m^{(n)}\|\xrightarrow{\mathbb P}0.
\label{eq:fast-m-drift-error}
\end{equation}
This is because
\begin{align}
\|\rho_m^{(n)}\|\le C\|D^{(n)}-D_0\|+C\|q^{(n)}-q_0\|+\frac{C}{\sqrt d}+C\left(\frac{1+\gamma}{d}+\alpha_d\right).
\end{align}

Let
\begin{equation}
\zeta_m^{(n+1)}:=\Delta m_{\mathcal K}^{(n)}-\mathbb E_n[\Delta m_{\mathcal K}^{(n)}].
\end{equation}
Again by Lemma \ref{lemma:untied-one-step-estimates}, $\mathbb E_n\left[\|\zeta_m^{(n+1)}\|^2\right]\le C\alpha_d^2$. Hence Doob's maximal inequality gives
\begin{equation}
\sup_{m\le N_{\widetilde T}^{\mathrm f}}
\left\|\sum_{n=0}^{m-1}\mathbf 1_{\{n<J\}}\zeta_m^{(n+1)}\right\|
\xrightarrow{\mathbb P}0,
\label{eq:fast-m-martingale}
\end{equation}
because its quadratic variation is bounded by $
CN_{\widetilde T}^{\mathrm f}\alpha_d^2
\le C_{\widetilde T}\alpha_d
\longrightarrow0$.

Summing \eqref{eq:fast-m-Euler}, we therefore obtain, uniformly for \(m\le N_{\widetilde T}^{\mathrm f}\wedge J\),
\begin{equation}
m_{\mathcal K}^{(m)}=m_{\mathcal K}^{(0)}+\alpha_d\sum_{n=0}^{m-1}
F(m^{(n)})+o_{\mathbb P}(1),
\label{eq:fast-Euler-integral}
\end{equation}
where the error is uniform in \(m\).
Since \(F\) is Lipschitz on the compact neighborhood
\(\mathfrak D_{\mathrm f}\), the standard discrete error estimate, together with Assumption \ref{assum:untied}.3 and the Gr\"onwall inequality (see the proof of Theorem \ref{theo:S-sgd-dynamics}), gives
\begin{equation}
\sup_{0\le n\le
N_{\widetilde T}^{\mathrm f}\wedge J}
\left\|m_{\mathcal K}^{(n)}-\tilde m_{\mathcal K}(n\alpha_d)\right\|
\xrightarrow{\mathbb P}0.
\label{eq:fast-stopped-convergence}
\end{equation}

It remains to remove the stopping time. By construction, the deterministic trajectory
\(\{(\tilde m_{\mathcal K}(t),q_0):0\le t\le\widetilde T\}\) has positive distance from
\(\partial\mathfrak D_{\mathrm f}\).
Equations \eqref{eq:fast-Dq-freezing} and
\eqref{eq:fast-stopped-convergence} therefore imply
\begin{equation}
\mathbb P\left(J_{\mathrm f}\le N_{\widetilde T}^{\mathrm f}\wedge J_{\mathrm sp}\right)\longrightarrow0.
\label{eq:fast-macro-no-exit}
\end{equation}
Combining \eqref{eq:fast-spectral-no-exit} and
\eqref{eq:fast-macro-no-exit}, we conclude that
\begin{equation}
\mathbb P\left(
J\le N_{\widetilde T}^{\mathrm f}
\right)\longrightarrow0.
\end{equation}
Hence the stopping may be removed from
\eqref{eq:fast-stopped-convergence}. Using the piecewise-constant
interpolation \eqref{eq:untied-fast-interpolation}, we obtain
\begin{equation}
\sup_{t\in[0,\widetilde T]}\left\|\tilde m^{(d)}(t)-\tilde m(t)\right\|_\infty\xrightarrow{\mathbb P}0,
\end{equation}
which proves \eqref{eq:untied-fast-convergence}.

\subsubsection{Moment expansion}
\paragraph{Step 1: One-step expansion}
For \(k\in\mathcal K\), define the centered gradient noises $\xi_{U,k}^{(n)}:=g_{U,k}^{(n)}-\bar g_{U,k}^{(n)}$, $\xi_{V,k}^{(n)}:=g_{V,k}^{(n)}-\bar g_{V,k}^{(n)}$, where $\bar g_{U,k}^{(n)}:=\mathbb E_n[g_{U,k}^{(n)}]$,
$\bar g_{V,k}^{(n)}=\mathbb E_n[g_{V,k}^{(n)}]$.
Let $a_d:=1-\frac{\alpha_d\gamma}{d}$. For \(Z_k\in\{M_k,M_k^T,A_k,B_k\}\), define
\begin{equation}
\mathscr N_k^{(n)}(Z_k):=\begin{cases}
-a_d\left(\xi_{U,k}^{(n)}V_k^T+U_k(\xi_{V,k}^{(n)})^T\right),& Z_k=M_k,
\\[1mm]
-a_d\left(V_k(\xi_{U,k}^{(n)})^T+\xi_{V,k}^{(n)}U_k^T\right),& Z_k=M_k^T,
\\[1mm]
-a_d\left(\xi_{U,k}^{(n)}U_k^T+U_k(\xi_{U,k}^{(n)})^T\right),& Z_k=A_k,
\\[1mm]
-a_d\left(\xi_{V,k}^{(n)}V_k^T+V_k(\xi_{V,k}^{(n)})^T\right),& Z_k=B_k.
\end{cases}
\label{eq:untied-letter-noise}
\end{equation}
For \(k\notin\mathcal K\), set
\(\mathscr N_k^{(n)}(Z_k)=0\). Define the finite-dimensional raw vector field as
\begin{align}
[\mathcal V_{\mathrm{raw}}^{(d,n)}]_\alpha:=-\sum_{\substack{i=1\\k_i\in\mathcal K}}^w
\sqrt d\,b_{k_i}^{(n)}\mathcal T_i^{\mathrm{bias}}(\mu^{(n)})-2\sum_{\substack{i=1\\k_i\in\mathcal K}}^w
\sum_{l=1}^K\sum_{r=1}^2a_{k_il}^{(r,d,n)}\mathcal T_{i,l,r}^{\mathrm{diff}}(\mu^{(n)})-2\gamma|\alpha|_{\mathcal K}\mu_\alpha^{(n)}.
\label{eq:untied-raw-vector-field}
\end{align}
\begin{lemma}
\label{lemma:untied-raw-moment-expansion}
Under the conditions of Theorem \ref{theo:untied-SGD}, let \(J\) be a
stopping time satisfying \eqref{eq:stopping-time}. Fix a finite word $\alpha=(Z_1,\ldots,Z_w)\in\mathscr Z^w$, and let \(k_i\) denote the index carried by the letter \(Z_i\). Then, uniformly on \(\{n<J\}\),
\begin{equation}
\mu_\alpha^{(n+1)}-\mu_\alpha^{(n)}=\frac{\alpha_d}{d}
[\mathcal V_{\mathrm{raw}}^{(d,n)}]_\alpha+\alpha_d\eta_\alpha^{(n)}+R_\alpha^{(n)},
\label{eq:untied-raw-moment-one-step}
\end{equation}
where
\begin{equation}
\eta_\alpha^{(n)}:=\sum_{\substack{i=1\\k_i\in\mathcal K}}^w
\frac1d\Tr\left[Z_1^{(n)}\cdots Z_{i-1}^{(n)}
\mathscr N_{k_i}^{(n)}(Z_i)Z_{i+1}^{(n)}\cdots Z_w^{(n)}\right]
\label{eq:untied-moment-martingale}
\end{equation}
is a martingale difference: $\mathbb E_n[\eta_\alpha^{(n)}]=0$.
Moreover, for every fixed word \(\alpha\), there exists
\(C_\alpha<\infty\), independent of \(d\) and \(n\), such that
\begin{equation}
\mathbf 1_{\{n<J\}}\mathbb E_n\left[
|\eta_\alpha^{(n)}|^2\right]\le\frac{C_\alpha}{d},
\label{eq:untied-moment-martingale-variance}
\end{equation}
and
\begin{equation}
\mathbf 1_{\{n<J\}}\mathbb E_n\left[|R_\alpha^{(n)}|
\right]\le C_\alpha\left[(1+\gamma)^2\alpha_d^2+\frac{\alpha_d}{d^{3/2}}\right].
\label{eq:untied-raw-moment-remainder}
\end{equation}
\end{lemma}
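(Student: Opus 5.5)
The plan follows the tied-case argument of Lemma \ref{lemma:macro-tied-SGD}, adapted to the four-letter alphabet and to the $1/\sqrt d$ mean term. First we write exact one-letter updates. For a trainable $k$, the identities \eqref{eq:untied-M-update-slow-coordinates}--\eqref{eq:untied-B-update-slow-coordinates} give $Z_k^+=a_d^2Z_k-\alpha_d a_d(\text{linear in }g_{U,k},g_{V,k})+\alpha_d^2(\text{quadratic in }g)$. We split $g=\bar g+\xi$ and insert the Stein expansion of Lemma \ref{lemma:finite-dimensional-Stein-untied} for $\bar g_{U,k},\bar g_{V,k}$. The linear part then decomposes into four pieces:
\begin{itemize}
\item[(a)] the term $\frac{b_k}{\sqrt d}(V_kV_k^T+U_kU_k^T)=\frac{b_k}{\sqrt d}(A_k+B_k)$ for $Z_k=M_k$, and similarly $\frac{b_k}{\sqrt d}(M_k+M_k^T)$ for $Z_k\in\{A_k,B_k\}$;
\item[(b)] the term $\frac{2}{d}\sum_{l,r}a^{(r,d)}_{kl}(M_l^{(r)}B_k+A_kM_l^{(r)})$ and its analogues for the other letters;
\item[(c)] the Stein remainders $\mathcal E_{U,k},\mathcal E_{V,k}$, which are $O_{\rm op}(d^{-3/2})$;
\item[(d)] the noise $\mathscr N_k(Z_k)$ of \eqref{eq:untied-letter-noise}.
\end{itemize}
Together with $(a_d^2-1)Z_k=-\frac{2\alpha_d\gamma}{d}Z_k+O(\alpha_d^2/d^2)$, we then check letter by letter that (a) reproduces $-\frac{\alpha_d}{d}\sqrt d\,b_k\mathcal T^{\rm bias}$, (b) reproduces the $\mathcal T^{\rm diff}$ substitution with coefficient $-2\frac{\alpha_d}{d}$, and the weight decay gives $-2\gamma\frac{\alpha_d}{d}$ per trainable letter. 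This bookkeeping, in particular matching $r\leftrightarrow\bar r$ for $M_k^T,A_k,B_k$, is the routine algebraic core.

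Next we multiply out $\frac1d\Tr[Z_1^+\cdots Z_w^+]$. The zeroth-order term is $\mu^{(n)}_\alpha$. The terms with exactly one drift increment give $\frac{\alpha_d}{d}[\mathcal V_{\rm raw}]_\alpha$, and the terms with exactly one noise increment give $\alpha_d\eta_\alpha$. Every other term goes into $R_\alpha$, which has three sources:
\begin{itemize}
\item single occurrences of (c), contributing $\alpha_d d^{-3/2}$;
\item $\alpha_d^2 g g^T$ corrections and products of two or more increments;
\item the $O(\alpha_d^2\gamma^2/d^2)$ part of $a_d^2$.
\end{itemize}
The identity $\mathbb E_n\eta_\alpha=0$ is immediate, since $\mathscr N$ is linear in $\xi$ and all other factors are $\mathcal F_n$-measurable.

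For the variance bound \eqref{eq:untied-moment-martingale-variance}, each constituent of $\eta_\alpha$ is $\frac1d\Tr[P\,\xi_{U,k}V_k^T]$ or a similar expression, with $\|P\|_{\rm op}\le C_\alpha$ on $\{n<J\}$. Substituting $g_{U,k}=\frac1{\sqrt d}\sum f_{ijk}x_ix_j^TV_k$ turns this into $\frac1{d^{3/2}}f_{ijk}\,x_j^T\tilde Px_i$. By \eqref{eq:untied-basic-trace-moments} and Cauchy--Schwarz its second moment is $O(1/d)$. This is weaker than the tied $1/d^2$ rate because the trace part $\Tr\tilde P=O(d)$ is no longer cancelled by centering.

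The main obstacle is the remainder. A term with $r_B\ge2$ noise increments and arbitrary bounded factors is bounded via $\frac1d|\Tr(P_0B_1\cdots)|\le\frac1d\prod\|\cdot\|_{\rm op}\,\|B_1\|_F\|B_2\|_F$. The letter noise has Frobenius moments $\mathbb E_n\|\mathscr N\|_F^p\le C d^{p/2}$, so each extra increment costs at most $\alpha_d\sqrt d=o(1)$ and the total is $O(\alpha_d^2)$. The drift increments are $O_{\rm op}(\alpha_d/\sqrt d)$ because of the (a) term, rather than $O(\alpha_d/d)$. Products of two drift pieces, or of a drift and a noise piece, are therefore $O(\alpha_d^2)$ after normalising traces, and hence within $(1+\gamma)^2\alpha_d^2$. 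The $\alpha_d^2 gg^T$ terms are handled by Hölder and \eqref{eq:untied-M-quadratic-moment}, giving $\frac{\alpha_d^2}{d}\mathbb E\|g\|_F^2\le C\alpha_d^2$ by \eqref{eq:gradient-Frobenius-second-moment-untied}. Collecting all contributions yields \eqref{eq:untied-raw-moment-remainder}.
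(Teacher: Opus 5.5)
Your proposal is correct and follows essentially the same route as the paper: exact letter updates, then the Stein expansion of $\bar g_{U,k},\bar g_{V,k}$ to identify the $\mathcal T^{\rm bias}$, $\mathcal T^{\rm diff}$ and weight-decay pieces, then the first-order product expansion giving $\frac{\alpha_d}{d}[\mathcal V_{\rm raw}]_\alpha+\alpha_d\eta_\alpha$, and finally Frobenius/H\"older bounds showing each additional increment costs a factor $\alpha_d\sqrt d=o(1)$. The only bookkeeping item you omit from the remainder is the $\alpha_d(1-a_d)$ correction coming from the factor $a_d$ in front of the drift part of the linear term; the paper also places it in the remainder, and it is of order $\alpha_d^2\gamma d^{-3/2}$, so it sits well within the stated bound.
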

\begin{proof}
Throughout the proof we condition on \(\mathcal F_n\) and suppress the superscript \((n)\). Thus \(U_k,V_k,M_k,A_k,B_k\) are deterministic
conditional on \(\mathcal F_n\), while the stochastic gradients depend on the fresh sample \(x^{(n)}\). All estimates below are uniform over \eqref{eq:stopping-time}.

For \(k\in\mathcal K\), the SGD recursion can be written as $U_k^+=a_dU_k-\alpha_d g_{U,k}$ and $V_k^+=a_dV_k-\alpha_d g_{V,k}$. Therefore
\begin{align}
M_k^+&=a_d^2M_k-\alpha_da_d\left(g_{U,k}V_k^T+U_kg_{V,k}^T\right)+\alpha_d^2g_{U,k}g_{V,k}^T,
\label{eq:untied-raw-M-exact}
\\
A_k^+&=a_d^2A_k-\alpha_da_d\left(g_{U,k}U_k^T+U_kg_{U,k}^T\right)+\alpha_d^2g_{U,k}g_{U,k}^T,
\label{eq:untied-raw-A-exact}
\\
B_k^+&=a_d^2B_k-\alpha_da_d\left(g_{V,k}V_k^T+V_kg_{V,k}^T\right)+\alpha_d^2g_{V,k}g_{V,k}^T.
\label{eq:untied-raw-B-exact}
\end{align}
Write $g_{U,k}=\bar g_{U,k}+\xi_{U,k}$ and $g_{V,k}=\bar g_{V,k}+\xi_{V,k}$. By Lemma \ref{lemma:finite-dimensional-Stein-untied},
\begin{align}
\bar g_{U,k}&=\frac{b_k}{\sqrt d}V_k+\frac{2}{d}
\sum_{l=1}^K\sum_{r=1}^2a_{kl}^{(r,d)}M_l^{(r)}V_k+\mathcal E_{U,k},
\label{eq:untied-raw-Stein-U}
\\
\bar g_{V,k}&=\frac{b_k}{\sqrt d}U_k+\frac{2}{d}
\sum_{l=1}^K\sum_{r=1}^2a_{kl}^{(r,d)}M_l^{(\bar r)}U_k+\mathcal E_{V,k},
\label{eq:untied-raw-Stein-V}
\end{align}
where
\begin{equation}
|b_k|+\max_{l,r}|a_{kl}^{(r,d)}|\le C,
\qquad
\|\mathcal E_{U,k}\|_{\rm op}+\|\mathcal E_{V,k}\|_{\rm op}\le Cd^{-3/2}.
\end{equation}
Substituting \eqref{eq:untied-raw-Stein-U}--\eqref{eq:untied-raw-Stein-V}
into \eqref{eq:untied-raw-M-exact}, using $(M_l^{(\bar r)})^T=M_l^{(r)}$, we obtain
\begin{align}
\Delta M_k=-\frac{\alpha_db_k}{\sqrt d}(A_k+B_k)
-\frac{2\alpha_d}{d}\sum_{l=1}^K\sum_{r=1}^2
a_{kl}^{(r,d)}\left(M_l^{(r)}B_k+A_kM_l^{(r)}\right)-\frac{2\alpha_d\gamma}{d}M_k+\alpha_d\mathscr N_k(M_k)+\mathscr R_k(M_k).
\label{eq:untied-raw-letter-M}
\end{align}
Taking the transpose gives
\begin{align}
\Delta M_k^T=-\frac{\alpha_db_k}{\sqrt d}(A_k+B_k)-\frac{2\alpha_d}{d}\sum_{l=1}^K\sum_{r=1}^2a_{kl}^{(r,d)}\left(B_kM_l^{(\bar r)}+M_l^{(\bar r)}A_k\right)-\frac{2\alpha_d\gamma}{d}M_k^T+\alpha_d\mathscr N_k(M_k^T)+\mathscr R_k(M_k^T).
\label{eq:untied-raw-letter-MT}
\end{align}
Likewise, substituting
\eqref{eq:untied-raw-Stein-U} into
\eqref{eq:untied-raw-A-exact} gives
\begin{align}
\Delta A_k=-\frac{\alpha_db_k}{\sqrt d}(M_k+M_k^T)-\frac{2\alpha_d}{d}\sum_{l=1}^K\sum_{r=1}^2a_{kl}^{(r,d)}\left(M_l^{(r)}M_k^T+M_kM_l^{(\bar r)}\right)-\frac{2\alpha_d\gamma}{d}A_k+\alpha_d\mathscr N_k(A_k)+\mathscr R_k(A_k),
\label{eq:untied-raw-letter-A}
\end{align}
whereas
\eqref{eq:untied-raw-Stein-V} and
\eqref{eq:untied-raw-B-exact} give
\begin{align}
\Delta B_k=-\frac{\alpha_db_k}{\sqrt d}(M_k+M_k^T)-\frac{2\alpha_d}{d}\sum_{l=1}^K\sum_{r=1}^2
a_{kl}^{(r,d)}\left(M_l^{(\bar r)}M_k+M_k^TM_l^{(r)}\right)-\frac{2\alpha_d\gamma}{d}B_k+\alpha_d\mathscr N_k(B_k)+\mathscr R_k(B_k).
\label{eq:untied-raw-letter-B}
\end{align}
We next make the remainders \(\mathscr R_k(Z_k)\) explicit. For \(Z_k=M_k\),
\begin{align}
\mathscr R_k(M_k)=&\alpha_d(1-a_d)\Bigg[\frac{b_k}{\sqrt d}(A_k+B_k)+\frac{2}{d}\sum_{l=1}^K\sum_{r=1}^2
a_{kl}^{(r,d)}\left(M_l^{(r)}B_k+A_kM_l^{(r)}\right)\Bigg]\nonumber\\
&-\alpha_da_d\left(\mathcal E_{U,k}V_k^T+U_k\mathcal E_{V,k}^T
\right)+\frac{\alpha_d^2\gamma^2}{d^2}M_k+\alpha_d^2g_{U,k}g_{V,k}^T,
\label{eq:untied-letter-remainder-M-explicit}
\end{align}
which follows directly from $a_d^2-1=-\frac{2\alpha_d\gamma}{d}+\frac{\alpha_d^2\gamma^2}{d^2}$ and $1-a_d=\frac{\alpha_d\gamma}{d}$. Moreover, $\mathscr R_k(M_k^T)=\mathscr R_k(M_k)^T$.
Similarly,
\begin{align}
\mathscr R_k(A_k)
={}&\alpha_d(1-a_d)\Bigg[\frac{b_k}{\sqrt d}(M_k+M_k^T)+\frac{2}{d}\sum_{l=1}^K\sum_{r=1}^2a_{kl}^{(r,d)}
\left(M_l^{(r)}M_k^T+M_kM_l^{(\bar r)}
\right)\Bigg]\nonumber\\
&-\alpha_da_d\left(\mathcal E_{U,k}U_k^T+U_k\mathcal E_{U,k}^T
\right)+\frac{\alpha_d^2\gamma^2}{d^2}A_k+\alpha_d^2g_{U,k}g_{U,k}^T,
\label{eq:untied-letter-remainder-A-explicit}
\end{align}
and
\begin{align}
\mathscr R_k(B_k)={}&\alpha_d(1-a_d)
\Bigg[\frac{b_k}{\sqrt d}(M_k+M_k^T)+\frac{2}{d}
\sum_{l=1}^K\sum_{r=1}^2a_{kl}^{(r,d)}\left(M_l^{(\bar r)}M_k+M_k^TM_l^{(r)}\right)\Bigg]
\nonumber\\
&-\alpha_da_d\left(\mathcal E_{V,k}V_k^T+V_k\mathcal E_{V,k}^T
\right)+\frac{\alpha_d^2\gamma^2}{d^2}B_k+\alpha_d^2g_{V,k}g_{V,k}^T.
\label{eq:untied-letter-remainder-B-explicit}
\end{align}
We now estimate these remainders in the form needed for the moment expansion. Let \(P\) be any \(\mathcal F_n\)-measurable \(d\times d\) matrix satisfying $\|P\|_{\rm op}\le C_P$. On the stopped \eqref{eq:stopping-time}, all letters satisfy
\begin{equation}
\max_{1\le l\le K}\left\{\|M_l\|_{\rm op},\|A_l\|_{\rm op},
\|B_l\|_{\rm op}\right\}\le C_*^2.
\label{eq:untied-remainder-letter-op-bound}
\end{equation}
Hence every fixed product of the letters appearing in
\eqref{eq:untied-letter-remainder-M-explicit}--
\eqref{eq:untied-letter-remainder-B-explicit} has uniformly bounded operator norm, i.e., $\frac1d|\Tr(PQ)|\le\|P\|_{\rm op}\|Q\|_{\rm op}\le C$ for every such product \(Q\).

Since $1-a_d=\frac{\alpha_d\gamma}{d}$, the terms are therefore bounded by
\begin{align}
\alpha_d(1-a_d)
\left(
\frac{C}{\sqrt d}+\frac{C}{d}
\right)
\le
C\alpha_d^2\gamma
\left(
d^{-3/2}+d^{-2}
\right).
\label{eq:untied-remainder-ad-correction}
\end{align}
Likewise,
\begin{equation}
\frac{\alpha_d^2\gamma^2}{d^2}
\frac1d|\Tr(PZ_k)|
\le
C\frac{\alpha_d^2\gamma^2}{d^2}.
\label{eq:untied-remainder-decay-correction}
\end{equation}
For the remainders, Lemma
\ref{lemma:finite-dimensional-Stein-untied} gives $\|\mathcal E_{U,k}\|_{\rm op}+\|\mathcal E_{V,k}\|_{\rm op}\le Cd^{-3/2}$.
Since \(\|U_k\|_{\rm op},\|V_k\|_{\rm op}\le C_*\), we obtain, for example,
\begin{align}
\frac1d\left|\Tr(P\mathcal E_{U,k}V_k^T)
\right|\le\|P\mathcal E_{U,k}V_k^T\|_{\rm op}
\le C_P\|\mathcal E_{U,k}\|_{\rm op}\|V_k\|_{\rm op}\le Cd^{-3/2}.
\end{align}
The same estimate applies to all the other Stein-remainder terms, and hence their total contribution is bounded by $C\alpha_d d^{-3/2}$.

Finally, consider the quadratic-gradient terms. For instance, by Cauchy--Schwarz and
\eqref{eq:gradient-Frobenius-second-moment-untied},
\begin{align}
\mathbb E_n\left[\frac1d\left|\Tr\left[P g_{U,k}g_{V,k}^T\right]\right|\right]\le\frac{C_P}{d}\left(\mathbb E_n\|g_{U,k}\|_F^2\right)^{1/2}\left(\mathbb E_n\|g_{V,k}\|_F^2\right)^{1/2}\le C.
\label{eq:untied-remainder-quadratic-gradient}
\end{align}
The same argument gives
\begin{equation}
\mathbb E_n\left[\frac1d\left|\Tr\left[P g_{U,k}g_{U,k}^T\right]\right|+\frac1d\left|\Tr\left[P g_{V,k}g_{V,k}^T\right]\right|\right]\le C.
\end{equation}
Thus the quadratic-gradient part of the letter remainder contributes at most \(C\alpha_d^2\).

Combining
\eqref{eq:untied-remainder-ad-correction}--
\eqref{eq:untied-remainder-quadratic-gradient}, we conclude that, uniformly on \(\{n<J\}\),
\begin{equation}
\mathbb E_n\left[\left|\frac1d\Tr\left[
P\mathscr R_k(Z_k)\right]\right|\right]\le C_P
\left[(1+\gamma)^2\alpha_d^2+\frac{\alpha_d}{d^{3/2}}
\right],\qquad Z_k\in\{M_k,M_k^T,A_k,B_k\}.
\label{eq:untied-letter-remainder-trace-bound}
\end{equation}

Fix $\alpha=(Z_1,\ldots,Z_w)$. Writing $Z_i^+=Z_i+\Delta Z_i$ in $\mu_\alpha^+=\frac1d
\Tr[Z_1^+\cdots Z_w^+]$ and expanding the product gives
\begin{align}
\mu_\alpha^+-\mu_\alpha=\sum_{i=1}^w\frac1d\Tr\left[
Z_1\cdots Z_{i-1}\Delta Z_iZ_{i+1}\cdots Z_w\right]+R_{\alpha,\mathrm{mult}},
\label{eq:untied-word-product-expansion}
\end{align}
where \(R_{\alpha,\mathrm{mult}}\) is the sum of all terms containing increments at two or more positions.

Insert
\eqref{eq:untied-raw-letter-M}--
\eqref{eq:untied-raw-letter-B}
into the first-order sum in
\eqref{eq:untied-word-product-expansion}. By the definition
\eqref{eq:untied-substitution}, we obtain
\begin{equation}
\mu_\alpha^+-\mu_\alpha=\frac{\alpha_d}{d}[\mathcal{V}_{\mathrm{raw}}^{(d)}]_\alpha+\alpha_d\eta_\alpha+R_\alpha.
\end{equation}
\(\eta_\alpha^{(n)}\) is defined in
\eqref{eq:untied-moment-martingale}. By $\mathbb E_n[\mathscr N_{k_i}(Z_i)]=0$ we immediately obtain $\mathbb E_n[\eta_\alpha]=0$. Moreover, on \eqref{eq:stopping-time} all letters satisfy $\max_{Z\in\mathscr Z}\|Z\|_{\rm op}\le C_*^2$, which gives $\mathbb E_n|\eta_\alpha|^2
\le\frac{C_\alpha}{d}$ and proves
\eqref{eq:untied-moment-martingale-variance}. $R_\alpha$ is the sum of the first-order letter
remainders and $R_{\alpha,\mathrm{mult}}$.

The final step is to control the first-order letter remainders and \(R_{\alpha,\mathrm{mult}}\). By \eqref{eq:untied-letter-remainder-trace-bound}, the sum of all first-order letter remainders satisfies
\begin{equation}
\mathbb E_n\left|\sum_{\substack{i=1\\k_i\in\mathcal K}}^w
\frac1d\Tr\left[Z_1\cdots Z_{i-1}\mathscr R_{k_i}(Z_i)
Z_{i+1}\cdots Z_w\right]\right|\le
C_\alpha\left[(1+\gamma)^2\alpha_d^2+\frac{\alpha_d}{d^{3/2}}\right].
\label{eq:untied-first-order-letter-remainder}
\end{equation}
We next consider products containing at least two letter increments. The spectral bounds \eqref{eq:stopping-time}, polynomial growth of the derivatives of \(\mathcal L\), and Gaussian moment estimates imply, for every fixed
\(p<\infty\),
\begin{equation}
\mathbb E_n\left[\|\Delta Z_k\|_F^p\right]\le
C_p(1+\gamma)^p\alpha_d^p d^{p/2},
\qquad Z_k\in\{M_k,M_k^T,A_k,B_k\}.
\label{eq:untied-letter-increment-F-moment}
\end{equation}
The same estimate holds with the Frobenius norm replaced by the
operator norm, since \(\|\cdot\|_{\rm op}\le\|\cdot\|_F\).

Consider a term in \(R_{\alpha,\mathrm{mult}}\) containing
\(r\ge2\) increments. H\"older's inequality gives
\begin{equation}
\frac1d\mathbb E_n
\left|\Tr\left[P_0\Delta Z_{i_1}P_1\cdots\Delta Z_{i_r}P_r
\right]\right|\le C_\alpha(1+\gamma)^r\alpha_d^r d^{(r-2)/2},
\label{eq:untied-multiple-increment-bound}
\end{equation}
where the \(P_j\)'s are products of unchanged letters.
Since the length \(w\) is fixed and $\alpha_d\sqrt d\longrightarrow0$, under the learning-rate condition of Theorem
\ref{theo:untied-SGD}, for every \(2\le r\le w\),
\begin{equation}
\alpha_d^r d^{(r-2)/2}=\alpha_d^2(\alpha_d\sqrt d)^{r-2}\le C\alpha_d^2
\end{equation}
for all sufficiently large \(d\). Summing over the finitely many
multi-increment terms therefore gives
\begin{equation}
\mathbb E_n|R_{\alpha,\mathrm{mult}}|\le C_\alpha(1+\gamma)^2\alpha_d^2.
\label{eq:untied-multiple-increment-total}
\end{equation}
Combining
\eqref{eq:untied-first-order-letter-remainder} and
\eqref{eq:untied-multiple-increment-total}, and defining
\(R_\alpha^{(n)}\) to be their sum, yields
\begin{equation}
\mathbb E_n|R_\alpha^{(n)}|\le C_\alpha\left[(1+\gamma)^2\alpha_d^2+\frac{\alpha_d}{d^{3/2}}\right].
\end{equation}
This proves
\eqref{eq:untied-raw-moment-remainder}, and completes the proof.
\end{proof}

\paragraph{Step 2: Vanishing errors}
For a fixed word \(\alpha\in\mathscr Z^w\), define the stopped
interpolation
\begin{equation}
\tilde\mu_{\alpha,J}^{(d)}(\tau):=\mu_\alpha^{(m_d(\tau)\wedge J)},\qquad
m_d(\tau):=\left\lfloor\frac{\tau}{\tau_d}\right\rfloor,
\qquad
0\le\tau\le T.
\label{eq:untied-stopped-moment-interpolation}
\end{equation}
Also define the piecewise-constant stopped finite-dimensional vector field by
\begin{equation}
[\widetilde{\mathcal V}_{\mathrm{raw},J}^{(d)}(\tau)]_\alpha:=\mathbf 1_{\{m_d(\tau)<J\}}
[\mathcal V_{\mathrm{raw}}^{(d,m_d(\tau))}]_\alpha.
\label{eq:untied-stopped-raw-vector-field}
\end{equation}
The following lemma proves the integral equation for the interpolated moments, in the spirit of Lemma \ref{lemma:macro-tied-SGD}.
\begin{lemma}
\label{lemma:untied-moment-error}
Under the conditions of Theorem \ref{theo:untied-SGD}, let \(J\) be a stopping time satisfying \eqref{eq:stopping-time}. Fix \(T>0\), and set $\tau_d:=\frac{\alpha_d}{d}$, $N_T:=\left\lfloor\frac{T}{\tau_d}\right\rfloor=\left\lfloor\frac{Td}{\alpha_d}\right\rfloor$.
Then there exists \(E_{\alpha}^{(d)}:[0,T]\to\mathbb R\) such that
\begin{equation}
\tilde\mu_{\alpha,J}^{(d)}(\tau)=\mu_\alpha^{(0)}+\int_0^\tau
[\widetilde{\mathcal V}_{\mathrm{raw},J}^{(d)}(s)]_\alpha\,ds+E_{\alpha}^{(d)}(\tau),
\qquad
0\le\tau\le T,
\label{eq:untied-stopped-raw-integral-equation}
\end{equation}
and $\sup_{\tau\in[0,T]}|E_{\alpha}^{(d)}(\tau)|\xrightarrow{\mathbb P}0$.
\end{lemma}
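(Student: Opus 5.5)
The plan is to mirror Steps 2--4 of the proof of Lemma \ref{lemma:macro-tied-SGD}, now on the slow clock $\tau_d=\alpha_d/d$ and using the one-step expansion of Lemma \ref{lemma:untied-raw-moment-expansion}. I would sum \eqref{eq:untied-raw-moment-one-step} from $n=0$ to $(m_d(\tau)\wedge J)-1$. Since the increments are written with the stopping indicator $\mathbf 1_{\{n<J\}}$, the drift part equals $\tau_d\sum_{n<m_d(\tau)}\mathbf 1_{\{n<J\}}[\mathcal V_{\mathrm{raw}}^{(d,n)}]_\alpha$. The piecewise-constant convention \eqref{eq:untied-stopped-raw-vector-field} turns this into $\int_0^\tau[\widetilde{\mathcal V}_{\mathrm{raw},J}^{(d)}(s)]_\alpha ds$ up to a discretization error of size $(\tau-m_d(\tau)\tau_d)|[\mathcal V_{\mathrm{raw}}^{(d,m_d(\tau))}]_\alpha|$. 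I would then define $E_\alpha^{(d)}$ as the sum of this discretization error, the stopped martingale $\sum_{n<m_d(\tau)}\alpha_d\eta_\alpha^{(n)}\mathbf 1_{\{n<J\}}$, and the stopped remainder $\sum_{n<m_d(\tau)}R_\alpha^{(n)}\mathbf 1_{\{n<J\}}$. With these definitions, \eqref{eq:untied-stopped-raw-integral-equation} holds exactly, and it remains to show that each of the three pieces vanishes uniformly in probability.

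\textbf{Martingale term.} By \eqref{eq:untied-moment-martingale-variance}, its predictable quadratic variation up to $N_T$ is at most $N_T\alpha_d^2C_\alpha/d\le C_{\alpha}T\,\alpha_d$. Doob's maximal inequality then gives $\mathbb P(\sup_{m\le N_T}|\cdot|>\varepsilon)\le CT\alpha_d/\varepsilon^2\to0$. This is weaker than the tied bound $\alpha_d/d$, but it is still sufficient.

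\textbf{Remainder term.} By \eqref{eq:untied-raw-moment-remainder} and the triangle inequality,
\[
\mathbb E\sum_{n<N_T}\mathbf 1_{\{n<J\}}|R_\alpha^{(n)}|\le C_\alpha\frac{Td}{\alpha_d}\Bigl[(1+\gamma)^2\alpha_d^2+\alpha_d d^{-3/2}\Bigr]=C_{\alpha,T}\bigl[(1+\gamma)^2d\alpha_d+d^{-1/2}\bigr].
\]
This tends to zero because $d\alpha_d\le c_0/\log d$. Markov's inequality then gives uniform vanishing, since the partial sums of $|R|$ are monotone in $m$.

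\textbf{Discretization term and main obstacle.} The discretization error requires a uniform bound on $|[\mathcal V_{\mathrm{raw}}^{(d,n)}]_\alpha|$ for $n<J$, and this is the one genuinely new difficulty. Unlike the tied case, $\mathcal V_{\mathrm{raw}}$ contains the factor $\sqrt d\,b_{k_i}^{(n)}$, which is a priori of order $\sqrt d$. I see two ways to handle it. The first, and crude, option is to bound it by $C\sqrt d$ using Lemma \ref{lemma:finite-dimensional-Stein-untied} and the operator-norm bounds on the letters from \eqref{eq:stopping-time}. The discretization error is then at most $\tau_d C_\alpha\sqrt d=C_\alpha\alpha_d/\sqrt d\to0$, which already suffices because $\tau_d$ is so small. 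The second, more refined, option is to use Lemma \ref{lemma:local-mean-gradient-refinement} to bound $\sqrt d|b_k^{(n)}|\le\sqrt d C_{\rm tr}e^{-c_{\rm tr}\alpha_dn}+C_b$ with high probability. That refined bound will be needed later, to show that the integral term is tight and that the boundary layer contributes $\int_0^\tau\sqrt d C_{\rm tr}e^{-c_{\rm tr}\,d s}\,ds=O(d^{-1/2})$. For the present lemma, however, the crude bound suffices, so I would use it and keep the argument short. The remaining coefficients in $\mathcal V_{\mathrm{raw}}$, namely the $a^{(r,d,n)}$ and $\gamma$ terms, are bounded by constants times $C_*^{2(w+1)}$. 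Combining the three estimates yields $\sup_{\tau\le T}|E_\alpha^{(d)}(\tau)|\xrightarrow{\mathbb P}0$.
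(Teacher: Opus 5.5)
Your proposal is correct and follows essentially the same route as the paper. The common steps are the stopped telescoping of the one-step expansion, Doob's inequality with quadratic variation $O(\alpha_d)$ for the martingale term, Markov's inequality with the $(1+\gamma)^2 d\alpha_d+d^{-1/2}$ bound for the remainder, and the crude bound $|[\mathcal V_{\mathrm{raw}}^{(d,n)}]_\alpha|\le C_\alpha(\sqrt d+1+\gamma)$ to remove the final-cell discretization error. You are also right that the refined bound of Lemma~\ref{lemma:local-mean-gradient-refinement} is not needed here; the paper only uses it later, when it replaces $\sqrt d\,b$ by the effective bias.
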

\begin{proof}
Fix a word
\[
\alpha=(Z_1,\ldots,Z_w).
\]
Throughout the proof all estimates are uniform on \eqref{eq:stopping-time}. Recall from Lemma
\ref{lemma:untied-raw-moment-expansion} that, on \(\{n<J\}\),
\begin{equation}
\mu_\alpha^{(n+1)}-\mu_\alpha^{(n)}=\tau_d
[\mathcal V_{\mathrm{raw}}^{(d,n)}]_\alpha+\alpha_d\eta_\alpha^{(n)}+R_\alpha^{(n)},
\label{eq:untied-error-starting-recursion}
\end{equation}
where
\begin{equation}
\mathbb E_n[\eta_\alpha^{(n)}]=0,
\qquad
\mathbb E_n|\eta_\alpha^{(n)}|^2
\le\frac{C_\alpha}{d},
\label{eq:untied-error-noise-input}
\end{equation}
and
\begin{equation}
\mathbb E_n|R_\alpha^{(n)}|\le C_\alpha\left[(1+\gamma)^2\alpha_d^2+\frac{\alpha_d}{d^{3/2}}\right].
\label{eq:untied-error-remainder-input}
\end{equation}

For \(m\le N_T\), the stopped process satisfies 
\begin{equation}
\mu_\alpha^{(m\wedge J)}-\mu_\alpha^{(0)}=\sum_{n=0}^{m-1}
\mathbf 1_{\{n<J\}}\left(\mu_\alpha^{(n+1)}-\mu_\alpha^{(n)}\right).
\label{eq:untied-stopped-telescoping}
\end{equation}
Substituting
\eqref{eq:untied-error-starting-recursion} into
\eqref{eq:untied-stopped-telescoping} gives
\begin{align}
\mu_\alpha^{(m\wedge J)}
=\mu_\alpha^{(0)}+\tau_d
\sum_{n=0}^{m-1}\mathbf 1_{\{n<J\}}[\mathcal V_{\mathrm{raw}}^{(d,n)}]_\alpha+
\sum_{n=0}^{m-1}\alpha_d\mathbf 1_{\{n<J\}}
\eta_\alpha^{(n)}+\sum_{n=0}^{m-1}
\mathbf 1_{\{n<J\}}R_\alpha^{(n)}.
\label{eq:untied-stopped-summed-recursion}
\end{align}
Accordingly, define $E_{\mathrm{noise},\alpha}^{(d)}(\tau)
:=\sum_{n=0}^{m_d(\tau)-1}
\alpha_d\mathbf 1_{\{n<J\}}\eta_\alpha^{(n)}$
and $E_{\mathrm{rem},\alpha}^{(d)}(\tau)
:=\sum_{n=0}^{m_d(\tau)-1}
\mathbf 1_{\{n<J\}}R_\alpha^{(n)}$.

Since \(J\) is a stopping time,
\(\mathbf 1_{\{n<J\}}\) is \(\mathcal F_n\)-measurable. Hence $\alpha_d\mathbf 1_{\{n<J\}}\eta_\alpha^{(n)}$ is a martingale difference sequence.
Let
\begin{equation}
M_{\alpha,m}^{(d)}:=\sum_{n=0}^{m-1}
\alpha_d\mathbf 1_{\{n<J\}}\eta_\alpha^{(n)},
\qquad0\le m\le N_T.
\end{equation}
Using
\eqref{eq:untied-error-noise-input}, its predictable quadratic variation satisfies
\begin{align}
\mathbb E\left[\left\langle M_\alpha^{(d)}
\right\rangle_{N_T}\right]\le
\sum_{n=0}^{N_T-1}\alpha_d^2
\mathbb E\left[\mathbf 1_{\{n<J\}}
\mathbb E_n|\eta_\alpha^{(n)}|^2\right]
\le N_TC_\alpha\frac{\alpha_d^2}{d}
\le C_{\alpha,T}\alpha_d.
\label{eq:untied-moment-noise-qv}
\end{align}
Since \(\alpha_d\to0\), Doob's maximal inequality gives, for every
\(\varepsilon>0\),
\begin{align}
\mathbb P\left(\sup_{\tau\in[0,T]}
|E_{\mathrm{noise},\alpha}^{(d)}(\tau)|>\varepsilon\right)\le
\mathbb P\left(\max_{m\le N_T}|M_{\alpha,m}^{(d)}|>\varepsilon\right)\le
\frac{C_{\alpha,T}\alpha_d}{\varepsilon^2}
\longrightarrow0.
\end{align}
Therefore
\begin{equation}
\sup_{\tau\in[0,T]}
|E_{\mathrm{noise},\alpha}^{(d)}(\tau)|
\xrightarrow{\mathbb P}0.
\label{eq:untied-moment-noise-vanish}
\end{equation}

By
\eqref{eq:untied-error-remainder-input},
\begin{align}
\mathbb E\left[\sum_{n=0}^{N_T-1}
\mathbf 1_{\{n<J\}}|R_\alpha^{(n)}|\right]\le N_TC_\alpha\left[(1+\gamma)^2\alpha_d^2+\frac{\alpha_d}{d^{3/2}}\right]\le
C_{\alpha,T}\left[(1+\gamma)^2d\alpha_d+d^{-1/2}\right].
\label{eq:untied-cumulative-remainder-bound}
\end{align}
For the fixed time horizon \(T\),
\(\gamma\) is independent of \(d\). Moreover,
$d\alpha_d\le\frac{c_0}{\log d}\longrightarrow0$
under the learning-rate assumption. Hence the right-hand side of \eqref{eq:untied-cumulative-remainder-bound} tends to zero.

Since $\sup_{\tau\in[0,T]}
|E_{\mathrm{rem},\alpha}^{(d)}(\tau)|
\le\sum_{n=0}^{N_T-1}\mathbf 1_{\{n<J\}}
|R_\alpha^{(n)}|$,
Markov's inequality yields
\begin{equation}
\sup_{\tau\in[0,T]}|E_{\mathrm{rem},\alpha}^{(d)}(\tau)|\xrightarrow{\mathbb P}0.
\label{eq:untied-moment-remainder-vanish}
\end{equation}

For \(\tau\in[0,T]\), write $m:=m_d(\tau)$.
By the definition
\eqref{eq:untied-stopped-raw-vector-field},
\begin{align}
\int_0^\tau[\widetilde{\mathcal V}_{\mathrm{raw},J}^{(d)}(s)]_\alpha\,ds=
\tau_d\sum_{n=0}^{m-1}\mathbf 1_{\{n<J\}}[\mathcal V_{\mathrm{raw}}^{(d,n)}]_\alpha+
(\tau-m\tau_d)\mathbf 1_{\{m<J\}}
[\mathcal V_{\mathrm{raw}}^{(d,m)}]_\alpha.
\label{eq:untied-raw-riemann-sum}
\end{align}
We therefore define the discretization error
\begin{equation}
E_{\mathrm{disc},\alpha}^{(d)}(\tau):=-(\tau-m\tau_d)\mathbf 1_{\{m<J\}}[\mathcal V_{\mathrm{raw}}^{(d,m)}]_\alpha.
\label{eq:untied-moment-discretization-error}
\end{equation}
By \eqref{eq:stopping-time}, all letters satisfy
$\max_{Z\in\mathscr Z}\|Z\|_{\rm op}\le C_*^2$.
Consequently, for the fixed \(\alpha\), $T_i^{\mathrm{bias}}$ and $T_i^{\mathrm{diff}}$ in \eqref{eq:untied-raw-vector-field} are uniformly bounded. In addition,
Lemma \ref{lemma:finite-dimensional-Stein-untied} gives $\max_{k}|b_k^{(n)}|+\max_{k,l,r}|a_{kl}^{(r,d,n)}|\le C$ uniformly on \(\{n<J\}\). Hence
\begin{equation}
\mathbf 1_{\{n<J\}}\left|[\mathcal V_{\mathrm{raw}}^{(d,n)}]_\alpha
\right|\le C_\alpha\left(\sqrt d+1+\gamma\right).
\label{eq:untied-raw-drift-coarse-bound}
\end{equation}
Since $0\le\tau-m\tau_d<\tau_d
=\frac{\alpha_d}{d}$, we obtain 
\begin{align}
\sup_{\tau\in[0,T]}|E_{\mathrm{disc},\alpha}^{(d)}(\tau)|\le\frac{\alpha_d}{d}
C_\alpha(\sqrt d+1+\gamma)\le
C_\alpha\left(\frac{\alpha_d}{\sqrt d}+\frac{(1+\gamma)\alpha_d}{d}\right)
\longrightarrow0.
\label{eq:untied-moment-discretization-vanish}
\end{align}

Finally, combining
\eqref{eq:untied-stopped-summed-recursion} and
\eqref{eq:untied-raw-riemann-sum} gives
\begin{equation}
\tilde\mu_{\alpha,J}^{(d)}(\tau)=\mu_\alpha^{(0)}+\int_0^\tau[\widetilde{\mathcal V}_{\mathrm{raw},J}^{(d)}(s)]_\alpha\,ds+E_{\mathrm{noise},\alpha}^{(d)}(\tau)+E_{\mathrm{rem},\alpha}^{(d)}(\tau)+E_{\mathrm{disc},\alpha}^{(d)}(\tau).
\end{equation}
Defining $E_{\alpha}^{(d)}=E_{\mathrm{noise},\alpha}^{(d)}+E_{\mathrm{rem},\alpha}^{(d)}
+E_{\mathrm{disc},\alpha}^{(d)}$ and using
\eqref{eq:untied-moment-noise-vanish},
\eqref{eq:untied-moment-remainder-vanish},
\eqref{eq:untied-moment-discretization-vanish} finishes the proof.
\end{proof}

\paragraph{Step 3: Effective bias of the fast flow}
For \(k\in\mathcal K\), define $D_k^{(n)}:=t_k^{A,(n)}+t_k^{B,(n)}$ and
\begin{equation}
\psi_k^{(d,n)}:=2\sum_{l=1}^K\sum_{r=1}^2
a_{kl}^{(r,d,n)}\left[\mu_{(M_l^{(r)},B_k)}^{(n)}+\mu_{(A_k,M_l^{(r)})}^{(n)}\right].
\label{eq:untied-finite-effective-psi}
\end{equation}
Define the finite-dimensional effective bias by
\begin{equation}
\beta_k^{(d,n)}:=-\frac{\psi_k^{(d,n)}}{D_k^{(n)}}
\label{eq:untied-finite-effective-beta}
\end{equation}
whenever \(D_k^{(n)}>0\), and set
\(\beta_k^{(d,n)}:=0\) when \(D_k^{(n)}=0\). The following lemma proves its convergence.
\begin{lemma}
\label{lemma:untied-effective-bias}
Under the conditions of Theorem \ref{theo:untied-SGD}, let \(J\) be a
stopping time satisfying \eqref{eq:stopping-time}, and fix \(T>0\). Set $\tau_d:=\frac{\alpha_d}{d}$ and $N_T:=\left\lfloor\frac{Td}{\alpha_d}\right\rfloor$.
Then, 
\begin{align}
&\sup_{0\le\tau\le T}
\left|
\tau_d
\sum_{n=0}^{m_d(\tau)-1}
\mathbf 1_{\{n<J\}}
\sum_{\substack{i=1\\k_i\in\mathcal K}}^w
\left(
\sqrt d\,b_{k_i}^{(n)}
-
\beta_{k_i}^{(d,n)}
\right)
\mathcal T_i^{\mathrm{bias}}(\mu^{(n)})
\right|
\xrightarrow{\mathbb P}0,
\label{eq:untied-effective-bias-summed}
\end{align}
where $m_d(\tau):=\left\lfloor\frac{\tau d}{\alpha_d}\right\rfloor$.
\end{lemma}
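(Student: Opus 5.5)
The approach is to write $\sqrt d\,b_k^{(n)}$ as a sum of the effective bias $\beta_k^{(d,n)}$ and a term that is, up to a factor $\sqrt d/\alpha_d$ times a bounded matrix, the one-step increment of the mean variable $m_k^{(n)}$. That term then sums to a bounded telescoping quantity, which the prefactor $\tau_d=\alpha_d/d$ sends to zero.

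From the exact expansion in the proof of Lemma~\ref{lemma:untied-one-step-estimates},
\begin{equation*}
\mathbb E_n[\Delta m_k^{(n)}]=-\alpha_d D_k^{(n)}b_k^{(n)}-\frac{\alpha_d}{\sqrt d}\psi_k^{(d,n)}+\tilde r_k^{(n)},\qquad |\tilde r_k^{(n)}|\le C\Big(\tfrac{(1+\gamma)\alpha_d}{d}+\alpha_d^2\Big).
\end{equation*}
On the event $\mathcal B_d$ of Lemma~\ref{lemma:empirical-balancing} we have $D_k^{(n)}\ge c_D$. Dividing by $D_k^{(n)}$ and multiplying by $\sqrt d/\alpha_d$ gives
\begin{equation*}
\sqrt d\, b_k^{(n)}-\beta_k^{(d,n)}=-\frac{\sqrt d}{\alpha_d D_k^{(n)}}\Big(\Delta m_k^{(n)}-\zeta_{m,k}^{(n+1)}-\tilde r_k^{(n)}\Big),
\end{equation*}
where $\zeta_{m,k}^{(n+1)}$ is the martingale increment of $m_k$.

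Substituting this into the sum, the quantity in \eqref{eq:untied-effective-bias-summed} splits into three pieces. Each carries the prefactor $\tau_d\sqrt d/\alpha_d=d^{-1/2}$ and the bounded $\mathcal F_n$-measurable weight $w_n:=\mathcal T_i^{\mathrm{bias}}(\mu^{(n)})/D_k^{(n)}$.

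\emph{Residual piece.} We have $d^{-1/2}\sum_{n<N_T}|w_n||\tilde r^{(n)}|\le C d^{-1/2}N_T\big(\alpha_d/d+\alpha_d^2\big)\le C\big(d^{-1/2}+\sqrt d\,\alpha_d\big)\to0$.

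\emph{Martingale piece.} The sum $d^{-1/2}\sum w_n\zeta^{(n+1)}_{m,k}$ is a stopped martingale. Its quadratic variation is at most $d^{-1}N_T C\alpha_d^2\le CT\alpha_d\to0$, so Doob's inequality gives the uniform convergence.

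\emph{Increment piece.} The term $d^{-1/2}\sum_{n<m}w_n\Delta m_k^{(n)}$ is handled by Abel summation:
\begin{equation*}
\sum_{n<m}w_n\Delta m_k^{(n)}=w_{m-1}m_k^{(m)}-w_0m_k^{(0)}-\sum_{n=1}^{m-1}(w_n-w_{n-1})m_k^{(n)}.
\end{equation*}
The boundary terms are $O(1)$, since $m$ stays in the compact set $\mathfrak D$ before $J$, so they vanish after multiplication by $d^{-1/2}$. The weights $w_n$ depend only on the slow variables $\vartheta^{(n)}$ and finite-order moments. By Lemma~\ref{lemma:untied-raw-moment-expansion} and the bounds $D_k\ge c_D$ and $\|m\|\le C$, the sum $\sum_n|w_n-w_{n-1}|$ therefore splits into a drift part and a noise part. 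The drift part is bounded by $N_T\,C(\alpha_d/d)(\sqrt d+1+\gamma)=O(\sqrt d)$. The noise part $\sum_n \alpha_d|\eta^{(n)}|$ is bounded in expectation by $N_T\alpha_d d^{-1/2}=O(\sqrt d)$. This crude estimate gives only $O(1)$ after multiplication by $d^{-1/2}$, which is not enough.

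The main obstacle is the drift of $w$, which contains the large coefficient $\sqrt d\,b$. I would handle it with Lemma~\ref{lemma:local-mean-gradient-refinement}, which gives $\sqrt d\,|b_k^{(n)}|\le \sqrt d\,C_{\rm tr}e^{-c_{\rm tr}\alpha_d n}+C_b$.
\begin{itemize}
\item[1.] The transient part contributes $\sum_n(\alpha_d/d)\sqrt d\,e^{-c\alpha_dn}\le C/\sqrt d$.
\item[2.] The stationary part contributes $N_T\alpha_d/d\cdot C=O(1)$.
\end{itemize}
Hence the drift part of $\sum_n|w_n-w_{n-1}|$ is actually $O(1)$, and after the factor $d^{-1/2}$ it vanishes.

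For the noise part I would not use absolute values. Instead I would write $\sum_n(w_n-w_{n-1})m^{(n)}$ and use the fact that $m^{(n)}$ is $\mathcal F_n$-measurable. The corresponding martingale $\sum_n m^{(n)}\cdot(\text{noise of }w)$ then has variance $N_T\alpha_d^2/d=O(1)$, so multiplying by $d^{-1/2}$ sends it to zero by Doob's inequality. Together these estimates yield \eqref{eq:untied-effective-bias-summed} on $\mathcal B_d\cap\{$the refinement event$\}$, and both of these events have probability tending to one.
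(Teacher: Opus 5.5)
Your plan matches the paper's proof almost step for step. It uses the same rewriting of $\sqrt d\,b_k^{(n)}-\beta_k^{(d,n)}$ through $\mathbb E_n[\Delta m_k^{(n)}]$, the same residual/martingale/increment split with prefactor $d^{-1/2}$, and the same summation by parts with weights $H_n/D_n$. It also uses Lemma~\ref{lemma:local-mean-gradient-refinement} in the same way, to show that the drift of those weights sums to $O(1)$.

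One step does not hold as written. In your Abel formula the coefficient of $w_n-w_{n-1}$ is $m_k^{(n)}$. The noise part $\zeta_w^{(n)}$ of $w_n-w_{n-1}$ is centered only conditionally on $\mathcal F_{n-1}$. But $m_k^{(n)}=m_k^{(n-1)}+\Delta m_k^{(n-1)}$ is built from the same fresh sample $x^{(n-1)}$, so it is $\mathcal F_n$-measurable but not $\mathcal F_{n-1}$-measurable. Hence $\sum_n m_k^{(n)}\zeta_w^{(n)}$ is not a martingale. The hidden covariation $\mathbb E_{n-1}[\Delta m_k^{(n-1)}\zeta_w^{(n)}]$, between the fast noise of $m$ and the slow noise of $w$, is generally nonzero and must be bounded separately.

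The paper handles this by splitting $m_k^{(n)}=m_k^{(n-1)}+\Delta m_k^{(n-1)}$. The first part gives a genuine martingale with quadratic variation $O(\alpha_d)$. The cross term is bounded in absolute value by Cauchy--Schwarz:
$d^{-1/2}\sum_{n<N_T}\mathbb E|\Delta m_k^{(n-1)}\zeta_w^{(n)}|\le C d^{-1/2}N_T\,\alpha_d\cdot\alpha_d d^{-1/2}=CT\alpha_d\to0$.

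Two smaller points also need attention.
\begin{itemize}
\item The increment of $w_n$ is not only drift plus noise. It also contains the moment-expansion remainders $R_\alpha^{(n)}$ and the second-order Taylor remainder of $(H,D)\mapsto H/D$. You should list these; both sum to $o(1)$.
\item The event $\mathcal B_d$ and the ``refinement event'' are not adapted, so you cannot use them directly inside expectation and Doob bounds. Convert them into stopping times, as the paper does with $J_D$ and $J_b$. Then work up to $J\wedge J_D\wedge J_b$, where $w_n$ and $\sqrt d\,|b_k^{(n)}|$ are predictably bounded. Finally remove the extra stopping, using that both events have probability tending to one.
\end{itemize}
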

\begin{proof}
Fix a trainable position \(i\) and write $k:=k_i$, $H_n:=H_{\alpha,i}^{(n)}:=\mathcal T_i^{\mathrm{bias}}(\mu^{(n)})$, $D_n:=D_k^{(n)}$, $
b_n:=b_k^{(n)}$ and $\beta_n:=\beta_k^{(d,n)}$.
All constants below may depend on the fixed word \(\alpha\), but are independent of \(d\) and \(n\).

Returning to the proof of Lemma
\ref{lemma:untied-one-step-estimates}, the expansion
\eqref{eq:untied-mean-trace-expansion} gives 
\begin{equation}
\mathbb E_n[\Delta m_k^{(n)}]=-\alpha_d\mathcal D_n b_n
-\frac{\alpha_d}{\sqrt d}\psi_k^{(d,n)}+r_{k}^{(n)},
\label{eq:untied-effective-bias-m-increment}
\end{equation}
where, uniformly on \(\{n<J\}\),
\begin{equation}
|r_k^{(n)}|\le
C\left(\frac{(1+\gamma)\alpha_d}{d}+\alpha_d^2\right).
\label{eq:untied-effective-bias-m-remainder}
\end{equation}
Using $\psi_k^{(d,n)}=-D_n\beta_n$, we may rewrite
\eqref{eq:untied-effective-bias-m-increment} as
\begin{equation}
\mathbb E_n[\Delta m_k^{(n)}]=-\frac{\alpha_d\mathcal D_n}{\sqrt d}
\left(\sqrt d\,b_n-\beta_n\right)+r_k^{(n)}.
\label{eq:untied-effective-bias-key-identity}
\end{equation}
Let $\zeta_{m,k}^{(n+1)}:=\Delta m_k^{(n)}-\mathbb E_n[\Delta m_k^{(n)}]$. By Lemma \ref{lemma:untied-one-step-estimates},
\begin{equation}
\mathbb E_n\left[|\zeta_{m,k}^{(n+1)}|^2\right]\le C\alpha_d^2.
\label{eq:untied-effective-bias-m-noise-bound}
\end{equation}
Therefore
\eqref{eq:untied-effective-bias-key-identity} gives
\begin{align}
\tau_d H_n\left(\sqrt d\,b_n-\beta_n\right)=-\frac1{\sqrt d}\frac{H_n}{D_n}\Delta m_k^{(n)}+\frac1{\sqrt d}\frac{H_n}{D_n}\zeta_{m,k}^{(n+1)}+\frac1{\sqrt d}\frac{H_n}{D_n}r_k^{(n)}.
\label{eq:untied-effective-bias-decomposition}
\end{align}
Thus it remains to show that the three terms on the right vanish after summation.

By Lemma \ref{lemma:empirical-balancing}, there exists \(c_D=c_D(T,\gamma,c_{\rm bal})>0\) such that
\begin{equation}
\mathbb P\left(\inf_{\substack{0\le n<N_T\wedge J\\k\in\mathcal K}}D_k^{(n)}\ge c_D\right)\longrightarrow1.
\label{eq:untied-effective-bias-D-event}
\end{equation}
Moreover, Lemma \ref{lemma:local-mean-gradient-refinement} gives
constants \(C_{\rm tr},c_{\rm tr},C_b>0\) such that
\begin{equation}
\mathbb P\left(\sup_{0\le n<N_T\wedge J}
\left[\|b_{\mathcal K}^{(n)}\|-C_{\rm tr}e^{-c_{\rm tr}\alpha_dn}\right]_+>\frac{C_b}{\sqrt d}
\right)\longrightarrow0.
\label{eq:untied-effective-bias-b-event}
\end{equation}
Now define
\begin{align}
J_D&:=\inf\left\{n\ge0:\min_{k\in\mathcal K}D_k^{(n)}<c_D\right\},
\\
J_b&:=\inf\left\{n\ge0:\|b_{\mathcal K}^{(n)}\|>C_{\rm tr}e^{-c_{\rm tr}\alpha_dn}+\frac{2C_b}{\sqrt d}\right\},
\end{align}
and set $\widehat J:=J\wedge J_D\wedge J_b$. Equations
\eqref{eq:untied-effective-bias-D-event} and
\eqref{eq:untied-effective-bias-b-event} imply
\begin{equation}
\mathbb P\left(\widehat J<N_T\wedge J\right)\longrightarrow0.
\label{eq:untied-effective-bias-extra-stopping-remove}
\end{equation}
It therefore suffices to prove the desired convergence with \(J\) replaced by \(\widehat J\).
Before \(\widehat J\), $D_n\ge c_D$, $|b_n|\le C_{\rm tr}e^{-c_{\rm tr}\alpha_dn}+\frac{2C_b}{\sqrt d}$, and also
$|H_n|\le C_\alpha$, $|m_k^{(n)}|\le C$ by \eqref{eq:stopping-time}.

Define $C_n:=\frac{H_n}{D_n}$ satisfying $|C_n|\le C_\alpha$
for $n<\widehat J$. To control the second term of \eqref{eq:untied-effective-bias-decomposition}, consider
\begin{equation}
M_m:=\frac1{\sqrt d}\sum_{n=0}^{m-1}\mathbf 1_{\{n<\widehat J\}}
C_n\zeta_{m,k}^{(n+1)}.
\end{equation}
Since \(C_n\mathbf 1_{\{n<\widehat J\}}\) is
\(\mathcal F_n\)-measurable, \(M_m\) is a martingale. By
\eqref{eq:untied-effective-bias-m-noise-bound},
\begin{align}
\mathbb E[\langle M\rangle_{N_T}]\le\frac{C_\alpha}{d}N_T\alpha_d^2\le C_{\alpha,T}\alpha_d.
\end{align}
Hence Doob's maximal inequality yields
\begin{equation}
\max_{m\le N_T}|M_m|\xrightarrow{\mathbb P}0.
\label{eq:untied-effective-bias-m-martingale-vanish}
\end{equation}
For the third term of \eqref{eq:untied-effective-bias-decomposition}, using
\eqref{eq:untied-effective-bias-m-remainder},
\begin{align}
\frac1{\sqrt d}\sum_{n=0}^{N_T-1}
\mathbf 1_{\{n<\widehat J\}}
|C_nr_k^{(n)}|\le\frac{C_\alpha}{\sqrt d}N_T\left(\frac{(1+\gamma)\alpha_d}{d}+\alpha_d^2\right)\le
C_{\alpha,T}\left(\frac{1+\gamma}{\sqrt d}+\sqrt d\,\alpha_d\right)\longrightarrow0,
\label{eq:untied-effective-bias-m-remainder-vanish}
\end{align}
because \(\alpha_d\le c_0(d\log d)^{-1}\).

For the following we control the first term of \eqref{eq:untied-effective-bias-decomposition}. Since \(H_n\) is a finite linear combination of moments of the same degree, Lemma \ref{lemma:untied-raw-moment-expansion} applies to \(H_n\). The same lemma applies to $D_n=t_k^{A,(n)}+t_k^{B,(n)}$.
Using the smooth map $(H,D)\longmapsto\frac{H}{D}$ on the region \(|H|\le C_\alpha\), \(D\ge c_D\), together with a
first-order Taylor expansion, we obtain, for \(n<\widehat J\),
\begin{equation}
C_{n+1}-C_n=A_C^{(n)}+\zeta_C^{(n+1)}+\rho_C^{(n)},
\label{eq:untied-effective-bias-C-decomposition}
\end{equation}
where \(A_C^{(n)}\) is \(\mathcal F_n\)-measurable,
\(\zeta_C^{(n+1)}\) is a martingale difference, and
\begin{align}
|A_C^{(n)}|&\le C_\alpha\left[\frac{\alpha_d}{\sqrt d}
\|b_{\mathcal K}^{(n)}\|+\frac{(1+\gamma)\alpha_d}{d}+(1+\gamma)^2\alpha_d^2\right],
\label{eq:untied-effective-bias-C-drift}
\\
\mathbb E_n\left[|\zeta_C^{(n+1)}|^2\right]&\le C_\alpha\frac{\alpha_d^2}{d},
\label{eq:untied-effective-bias-C-noise}
\\
\mathbb E_n\left[|\rho_C^{(n)}|\right]&\le C_\alpha\left[(1+\gamma)^2\alpha_d^2+\frac{\alpha_d}{d^{3/2}}\right].
\label{eq:untied-effective-bias-C-remainder}
\end{align}
The first term on the right-hand side of
\eqref{eq:untied-effective-bias-C-drift} comes from \eqref{eq:untied-raw-vector-field}, while
\eqref{eq:untied-effective-bias-C-noise} and
\eqref{eq:untied-effective-bias-C-remainder} follow from
\eqref{eq:untied-moment-martingale-variance} and
\eqref{eq:untied-raw-moment-remainder}, together with the quadratic Taylor remainder of \(H/D\).

For $n<J_b$ we have
\begin{align}
\sum_{n=0}^{N_T-1}\mathbf 1_{\{n<\widehat J\}}|A_C^{(n)}|
&\le C_\alpha\frac{\alpha_d}{\sqrt d}\sum_{n\ge0}\left(C_{\rm tr}e^{-c_{\rm tr}\alpha_dn}+\frac{2C_b}{\sqrt d}\right)+C_{\alpha,T}\left(1+\gamma+d\alpha_d\right)=\mathcal O(1),
\label{eq:untied-effective-bias-C-total-drift}
\end{align}
uniformly in \(d\). Here we use
$\alpha_d\sum_{n\ge0}e^{-c_{\rm tr}\alpha_dn}\le C$
and $N_T\frac{\alpha_d}{d}=\mathcal O(1)$, $N_T\alpha_d^2=\mathcal O(d\alpha_d)=o(1)$.
Moreover,
\eqref{eq:untied-effective-bias-C-remainder} gives
\begin{equation}
\sum_{n=0}^{N_T-1}\mathbf 1_{\{n<\widehat J\}}
|\rho_C^{(n)}|\xrightarrow{\mathbb P}0,
\label{eq:untied-effective-bias-C-total-remainder}
\end{equation}

For every \(m\le N_T\),
\begin{align}
\sum_{n=0}^{m-1}\mathbf 1_{\{n<\widehat J\}}
C_n\Delta m_k^{(n)}=C_{m_\ast-1}m_k^{(m_\ast)}-C_0m_k^{(0)}-\sum_{n=1}^{m_\ast-1}m_k^{(n)}
\left(C_n-C_{n-1}\right),
\label{eq:untied-effective-bias-summation-by-parts}
\end{align}
where $m_\ast:=m\wedge\widehat J$ with the first term interpreted as zero when \(m_\ast=0\).

The boundary terms in
\eqref{eq:untied-effective-bias-summation-by-parts} are uniformly \(\mathcal O(1)\), and hence become \(o(1)\) after multiplication by
\(d^{-1/2}\).

For the \(C_n-C_{n-1}\) term, using the boundedness of \(m_k^{(n)}\) and \eqref{eq:untied-effective-bias-C-total-drift},
\begin{equation}
\frac1{\sqrt d}\sup_{m\le N_T}\left|\sum_{n=1}^{m_\ast-1}m_k^{(n)}A_C^{(n-1)}\right|\longrightarrow0.
\label{eq:untied-effective-bias-ibp-drift}
\end{equation}
The contribution of \(\rho_C^{(n)}\) vanishes by
\eqref{eq:untied-effective-bias-C-total-remainder}.

It remains to treat the martingale part (the second term of \eqref{eq:untied-effective-bias-C-decomposition}). Write $m_k^{(n)}=m_k^{(n-1)}+\Delta m_k^{(n-1)}$. Then
\begin{align}
\sum_{n=1}^{m_\ast-1}m_k^{(n)}\zeta_C^{(n)}=\sum_{n=1}^{m_\ast-1}m_k^{(n-1)}\zeta_C^{(n)}+\sum_{n=1}^{m_\ast-1}
\Delta m_k^{(n-1)}\zeta_C^{(n)}.
\label{eq:untied-effective-bias-ibp-martingale-split}
\end{align}
The first term of \eqref{eq:untied-effective-bias-ibp-martingale-split} is a martingale with predictable quadratic variation bounded by \(C_{\alpha,T}\alpha_d\), because
\(m_k^{(n-1)}\) is uniformly bounded. Hence it is upper bounded by \(O_{\mathbb P}(\sqrt{\alpha_d})\).
For the second term of \eqref{eq:untied-effective-bias-ibp-martingale-split}, Lemma \ref{lemma:untied-one-step-estimates} gives $\mathbb E_{n-1}\left[|\Delta m_k^{(n-1)}|^2
\right]\le C\alpha_d^2$ before the stopping time. Together with
\eqref{eq:untied-effective-bias-C-noise} and Cauchy--Schwarz,
\begin{align}
\mathbb E\left[\sum_{n=1}^{N_T\wedge\widehat J-1}
\left|\Delta m_k^{(n-1)}\zeta_C^{(n)}
\right|\right]\le CN_T\frac{\alpha_d^2}{\sqrt d}
\le C_T\alpha_d\sqrt d.
\end{align}
Therefore
\begin{equation}
\frac1{\sqrt d}\sum_{n=1}^{N_T\wedge\widehat J-1}
\left|\Delta m_k^{(n-1)}\zeta_C^{(n)}\right|\xrightarrow{\mathbb P}0,
\label{eq:untied-effective-bias-cross-term}
\end{equation}
since \(\alpha_d\to0\).
Combining these estimates with
\eqref{eq:untied-effective-bias-summation-by-parts}, we obtain
\begin{equation}
\sup_{m\le N_T}
\left|
\frac1{\sqrt d}
\sum_{n=0}^{m-1}
\mathbf 1_{\{n<\widehat J\}}
C_n\Delta m_k^{(n)}
\right|
\xrightarrow{\mathbb P}0.
\label{eq:untied-effective-bias-m-increment-vanish}
\end{equation}
Summing
\eqref{eq:untied-effective-bias-decomposition} from \(n=0\) to
\(m_d(\tau)-1\), and using
\eqref{eq:untied-effective-bias-m-martingale-vanish},
\eqref{eq:untied-effective-bias-m-remainder-vanish}, and
\eqref{eq:untied-effective-bias-m-increment-vanish}, gives
\begin{equation}
\sup_{0\le\tau\le T}\left|\tau_d\sum_{n=0}^{m_d(\tau)-1}
\mathbf 1_{\{n<\widehat J\}}H_n\left(\sqrt d\,b_n-\beta_n\right)\right|\xrightarrow{\mathbb P}0.
\end{equation}
Finally,
\eqref{eq:untied-effective-bias-extra-stopping-remove} allows us to replace \(\widehat J\) by \(J\). Summing over the trainable positions proves
\eqref{eq:untied-effective-bias-summed}.
\end{proof}
\paragraph{Step 4: Final results}
Finally we can write an approximate integral equation for the moments. 
For a word
\(\alpha=(Z_1,\ldots,Z_w)\in\mathscr Z^w\), define the finite-dimensional effective vector field by
\begin{align}
[\mathcal V_{\mathrm{eff}}^{(d,n)}]_\alpha:=-\sum_{\substack{i=1\\k_i\in\mathcal K}}^w
\beta_{k_i}^{(d,n)}\mathcal T_i^{\mathrm{bias}}(\mu^{(n)})-2\sum_{\substack{i=1\\k_i\in\mathcal K}}^w\sum_{l=1}^K\sum_{r=1}^2a_{k_il}^{(r,d,n)}
\mathcal T_{i,l,r}^{\mathrm{diff}}(\mu^{(n)})-2\gamma|\alpha|_{\mathcal K}\mu_\alpha^{(n)}.
\label{eq:untied-effective-vector-field-finite}
\end{align}
Let
\begin{equation}
[\widetilde{\mathcal V}_{\mathrm{eff},J}^{(d)}(\tau)]_\alpha:=\mathbf 1_{\{m_d(\tau)<J\}}
[\mathcal V_{\mathrm{eff}}^{(d,m_d(\tau))}]_\alpha
\label{eq:untied-stopped-effective-vector-field}
\end{equation}
and $m_d(\tau):=\left\lfloor\frac{\tau d}{\alpha_d}\right\rfloor$.
\begin{lemma}
\label{lemma:untied-approximate-integral-equation}
Under the conditions of Theorem \ref{theo:untied-SGD}, let \(J\) be a
stopping time satisfying \eqref{eq:stopping-time}, and fix \(T>0\). Set $\tau_d:=\frac{\alpha_d}{d}$ and $N_T:=\left\lfloor\frac{Td}{\alpha_d}\right\rfloor$.
Then, for every \(\alpha\), there exists
\(\mathcal E_\alpha^{(d)}:[0,T]\to\mathbb R\) such that
\begin{equation}
\tilde\mu_{\alpha,J}^{(d)}(\tau)=\mu_\alpha^{(0)}+\int_0^\tau
[\widetilde{\mathcal V}_{\mathrm{eff},J}^{(d)}(s)]_\alpha\,ds+\mathcal E_\alpha^{(d)}(\tau),
\qquad
0\le\tau\le T,
\label{eq:untied-approximate-effective-integral-equation}
\end{equation}
and $\sup_{\tau\in[0,T]}|\mathcal E_\alpha^{(d)}(\tau)|\xrightarrow{\mathbb P}0$.
\end{lemma}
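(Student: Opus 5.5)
The plan is to obtain \eqref{eq:untied-approximate-effective-integral-equation} by combining Lemma \ref{lemma:untied-moment-error} with Lemma \ref{lemma:untied-effective-bias}. Lemma \ref{lemma:untied-moment-error} already gives the integral identity with the raw vector field $\widetilde{\mathcal V}_{\mathrm{raw},J}^{(d)}$ and an error $E_\alpha^{(d)}$ that vanishes uniformly in probability. It therefore suffices to show that
\begin{equation*}
\sup_{\tau\in[0,T]}\left|\int_0^\tau\left([\widetilde{\mathcal V}_{\mathrm{raw},J}^{(d)}(s)]_\alpha-[\widetilde{\mathcal V}_{\mathrm{eff},J}^{(d)}(s)]_\alpha\right)ds\right|\xrightarrow{\mathbb P}0.
\end{equation*}
We would then set $\mathcal E_\alpha^{(d)}$ equal to $E_\alpha^{(d)}$ plus this difference.

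Comparing \eqref{eq:untied-raw-vector-field} with \eqref{eq:untied-effective-vector-field-finite}, the diffusion terms (those involving $a^{(r,d,n)}_{k_il}\mathcal T^{\mathrm{diff}}_{i,l,r}$) and the weight-decay terms coincide. The only difference is therefore the bias term:
\begin{equation*}
[\mathcal V_{\mathrm{raw}}^{(d,n)}]_\alpha-[\mathcal V_{\mathrm{eff}}^{(d,n)}]_\alpha=-\sum_{\substack{i=1\\k_i\in\mathcal K}}^w\left(\sqrt d\,b_{k_i}^{(n)}-\beta_{k_i}^{(d,n)}\right)\mathcal T_i^{\mathrm{bias}}(\mu^{(n)}).
\end{equation*}
Because both stopped fields are piecewise constant on intervals of length $\tau_d$, the integral over $[0,\tau]$ splits into two parts. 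The first is the Riemann sum $\tau_d\sum_{n=0}^{m_d(\tau)-1}\mathbf 1_{\{n<J\}}(\cdots)$. The second is a boundary piece $(\tau-m_d(\tau)\tau_d)\mathbf 1_{\{m<J\}}(\cdots)$ at $m=m_d(\tau)$. Lemma \ref{lemma:untied-effective-bias}, specifically \eqref{eq:untied-effective-bias-summed}, states exactly that the Riemann sum tends to zero uniformly in $\tau\in[0,T]$ in probability.

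For the boundary piece we would use the coarse bounds available on $\{n<J\}$. Lemma \ref{lemma:finite-dimensional-Stein-untied} gives $|b_k^{(n)}|\le C$, so $\sqrt d\,|b_k^{(n)}|\le C\sqrt d$. The stopped spectral bounds control $|\mathcal T_i^{\mathrm{bias}}|\le C_\alpha$. The ratio $\beta^{(d,n)}_k$ is bounded because $|\psi_k^{(d,n)}|\le C D_k^{(n)}$, as shown in the proof of Lemma \ref{lemma:untied-one-step-estimates}; when $D_k^{(n)}=0$ the convention $\beta=0$ applies. The boundary piece is then at most $\tau_d C_\alpha(\sqrt d+1)=C_\alpha(\alpha_d/\sqrt d+\alpha_d/d)\to0$ deterministically, by the same argument as \eqref{eq:untied-moment-discretization-vanish}.

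All the genuine difficulty is contained in Lemma \ref{lemma:untied-effective-bias}. There, the large factor $\sqrt d\,b$ is converted into increments of $m$ through summation by parts, and this is where the refined $d^{-1/2}$ control of Lemma \ref{lemma:local-mean-gradient-refinement} is needed. Given that lemma, the present statement is bookkeeping. The only point requiring care is that the same stopping time $J$ is used throughout, so that the indicator $\mathbf 1_{\{m_d(\tau)<J\}}$ in \eqref{eq:untied-stopped-effective-vector-field} matches the one in \eqref{eq:untied-stopped-raw-vector-field} and in \eqref{eq:untied-effective-bias-summed}. We would also note that the convergence holds separately for each fixed word $\alpha$, with constants depending on $\alpha$, which is all the statement asks for.
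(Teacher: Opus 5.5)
Your proposal is correct and follows the paper's proof essentially step for step. You reduce to the raw integral equation of Lemma~\ref{lemma:untied-moment-error}, note that the raw and effective fields differ only in the bias term, dispose of the Riemann sum with Lemma~\ref{lemma:untied-effective-bias}, and bound the final partial cell by $C_\alpha(\alpha_d/\sqrt d+\alpha_d/d)$ using $|b_k^{(n)}|\le C$, $|\mathcal T_i^{\mathrm{bias}}|\le C_\alpha$, and $|\psi_k^{(d,n)}|\le C D_k^{(n)}$ (which follows from $A_k,B_k\succeq0$, with the convention $\beta=0$ when $D_k^{(n)}=0$).
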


\begin{proof}
By Lemma \ref{lemma:untied-moment-error}, for every fixed \(\alpha\),
\begin{equation}
\tilde\mu_{\alpha,J}^{(d)}(\tau)=\mu_\alpha^{(0)}+\int_0^\tau
[\widetilde{\mathcal V}_{\mathrm{raw},J}^{(d)}(s)]_\alpha\,ds+E_\alpha^{(d)}(\tau),
\label{eq:untied-approximate-proof-raw}
\end{equation}
where $\sup_{\tau\in[0,T]}
|E_\alpha^{(d)}(\tau)|\xrightarrow{\mathbb P}0$.
It therefore suffices to prove
\begin{equation}
\sup_{\tau\in[0,T]}\left|\int_0^\tau
\left([\widetilde{\mathcal V}_{\mathrm{raw},J}^{(d)}(s)]_\alpha-[\widetilde{\mathcal V}_{\mathrm{eff},J}^{(d)}(s)]_\alpha\right)ds\right|\xrightarrow{\mathbb P}0.
\label{eq:untied-raw-effective-integrated-difference}
\end{equation}
By the definitions
\eqref{eq:untied-raw-vector-field} and
\eqref{eq:untied-effective-vector-field-finite}, the two vector fields
differ only in the bias term. Thus
\begin{align}
[\mathcal V_{\mathrm{raw}}^{(d,n)}]_\alpha
-[\mathcal V_{\mathrm{eff}}^{(d,n)}]_\alpha=
-\sum_{\substack{i=1\\k_i\in\mathcal K}}^w
\left(\sqrt d\,b_{k_i}^{(n)}-\beta_{k_i}^{(d,n)}
\right)\mathcal T_i^{\mathrm{bias}}(\mu^{(n)}).
\label{eq:untied-raw-effective-difference}
\end{align}
Fix \(\tau\in[0,T]\) and write $m:=m_d(\tau)=\left\lfloor\frac{\tau}{\tau_d}\right\rfloor$.
Since both vector fields are piecewise constant,
\begin{align}
&\int_0^\tau\left([\widetilde{\mathcal V}_{\mathrm{raw},J}^{(d)}(s)]_\alpha-[\widetilde{\mathcal V}_{\mathrm{eff},J}^{(d)}(s)]_\alpha\right)ds=-\tau_d
\sum_{n=0}^{m-1}\mathbf 1_{\{n<J\}}
\sum_{\substack{i=1\\k_i\in\mathcal K}}^w
\left(\sqrt d\,b_{k_i}^{(n)}-\beta_{k_i}^{(d,n)}
\right)\mathcal T_i^{\mathrm{bias}}(\mu^{(n)})+R_{\mathrm{cell},\alpha}^{(d)}(\tau),
\label{eq:untied-raw-effective-riemann-decomposition}
\end{align}
where the final term is
\begin{align}
R_{\mathrm{cell},\alpha}^{(d)}(\tau):=-(\tau-m\tau_d)\mathbf 1_{\{m<J\}}\sum_{\substack{i=1\\k_i\in\mathcal K}}^w
\left(\sqrt d\,b_{k_i}^{(m)}-\beta_{k_i}^{(d,m)}
\right)\mathcal T_i^{\mathrm{bias}}(\mu^{(m)}).
\label{eq:untied-effective-final-cell}
\end{align}
The first term in
\eqref{eq:untied-raw-effective-riemann-decomposition} converges to zero uniformly in \(\tau\) in probability by Lemma \ref{lemma:untied-effective-bias}. Hence it remains only to control \(R_{\mathrm{cell},\alpha}^{(d)}\).

Before the stopping time \eqref{eq:stopping-time}, all letters satisfy a uniform operator-norm bound. Therefore, for the fixed word \(\alpha\), $\left|\mathcal T_i^{\mathrm{bias}}(\mu^{(n)})\right|\le C_\alpha$ uniformly for \(n<J\).
Moreover, Lemma
\ref{lemma:finite-dimensional-Stein-untied} gives $|b_k^{(n)}|\le C$ uniformly on \(\{n<J\}\). We also have a uniform bound on
\(\beta_k^{(d,n)}\). Indeed, since \(A_k,B_k\succeq0\),
\begin{align}
\left|\mu_{(M_l^{(r)},B_k)}^{(n)}
\right|
&\le\|M_l^{(r,n)}\|_{\rm op}\,t_k^{B,(n)},
\\
\left|\mu_{(A_k,M_l^{(r)})}^{(n)}\right|
&\le\|M_l^{(r,n)}\|_{\rm op}\,t_k^{A,(n)}.
\end{align}
The coefficients \(a_{kl}^{(r,d,n)}\) are uniformly bounded on \eqref{eq:stopping-time}, and hence
\begin{equation}
|\psi_k^{(d,n)}|\le C\left(
t_k^{A,(n)}+t_k^{B,(n)}\right)=CD_k^{(n)}.
\label{eq:untied-final-cell-psi-bound}
\end{equation}
Consequently, $|\beta_k^{(d,n)}|\le C$ whenever \(D_k^{(n)}>0\), while the same bound is trivial under our convention \(\beta_k^{(d,n)}=0\) when \(D_k^{(n)}=0\).

Since $0\le\tau-m\tau_d<\tau_d=\frac{\alpha_d}{d}$, the bounds above give
\begin{align}
\sup_{\tau\in[0,T]}|R_{\mathrm{cell},\alpha}^{(d)}(\tau)|\le C_\alpha\frac{\alpha_d}{d}
(\sqrt d+1)\le
C_\alpha\left(\frac{\alpha_d}{\sqrt d}+\frac{\alpha_d}{d}\right)
\longrightarrow0.
\label{eq:untied-effective-final-cell-vanish}
\end{align}
Combining this with Lemma
\ref{lemma:untied-effective-bias} proves
\eqref{eq:untied-raw-effective-integrated-difference}.

Finally, define
\begin{align}
\mathcal E_\alpha^{(d)}(\tau):=E_\alpha^{(d)}(\tau)+\int_0^\tau\left([\widetilde{\mathcal V}_{\mathrm{raw},J}^{(d)}(s)]_\alpha-[\widetilde{\mathcal V}_{\mathrm{eff},J}^{(d)}(s)]_\alpha\right)ds.
\end{align}
Then we finish the proof by 
\eqref{eq:untied-approximate-proof-raw} and
\eqref{eq:untied-raw-effective-integrated-difference}.
\end{proof}

\subsubsection{Limiting slow dynamics}
We now pass from the finite-dimensional effective dynamics \eqref{eq:untied-approximate-effective-integral-equation} to the limiting dynamics. We first establish compactness of the trajectories.
\paragraph{Step 1: Compactness} 
Fix a compact set \(\mathfrak D\Subset\mathcal M\times\mathcal Q\), and recall $J_{\rm mac}
:=\inf\left\{n\ge0:\left(m_{\mathcal K}^{(n)},q^{(1,n)},q^{(2,n)}\right)
\notin\mathfrak D\right\}$.
Let \(C_*\) be the constant in Lemma
\ref{lemma:spectral_bounds_untied}, and define
$J_{\rm sp}:=\inf\left\{n\ge0:\max_{1\le k\le K}
\left\{\|U_k^{(n)}\|_{\rm op},\|V_k^{(n)}\|_{\rm op}\right\}>C_*\right\}$.
For $N_T:=\left\lfloor\frac{Td}{\alpha_d}\right\rfloor$, set
\begin{equation}
\mathcal E_{\rm sp}^{(d)}(T):=\left\{J_{\rm sp}>J_{\rm mac}\wedge N_T\right\}.
\label{eq:untied-slow-spectral-event}
\end{equation}
By Lemma \ref{lemma:spectral_bounds_untied},
\begin{equation}
\mathbb P\left(\mathcal E_{\rm sp}^{(d)}(T)
\right)\longrightarrow1.
\label{eq:untied-slow-spectral-event-prob}
\end{equation}
For \(0\le n\le N_T\), define the auxiliary weights
\begin{equation}
\check U_k^{(n)}:=\begin{cases}
U_k^{(n\wedge J_{\rm mac})},
&\text{on }\mathcal E_{\rm sp}^{(d)}(T),
\\
U_k^{(0)},
&\text{on }\mathcal E_{\rm sp}^{(d)}(T)^c,
\end{cases}
\qquad
\check V_k^{(n)}
:=
\begin{cases}
V_k^{(n\wedge J_{\rm mac})},
&\text{on }\mathcal E_{\rm sp}^{(d)}(T),
\\
V_k^{(0)},
&\text{on }\mathcal E_{\rm sp}^{(d)}(T)^c.
\end{cases}
\label{eq:untied-auxiliary-weights}
\end{equation}
Let $\check M_k^{(n)}=\check U_k^{(n)}(\check V_k^{(n)})^T$, $\check A_k^{(n)}=\check U_k^{(n)}(\check U_k^{(n)})^T$, $\check B_k^{(n)}=\check V_k^{(n)}(\check V_k^{(n)})^T$,
and define the corresponding auxiliary moments
$\check\mu_\alpha^{(n)}:=\frac1d
\Tr\left[\check Z_1^{(n)}\cdots\check Z_w^{(n)}\right]$, $\alpha=(Z_1,\ldots,Z_w)\in\mathscr Z^w$.
Their interpolation is denoted by
\begin{equation}
\check\mu^{(d)}(\tau)=\check\mu^{(n)},\qquad\tau\in\left[\frac{n\alpha_d}{d},\frac{(n+1)\alpha_d}{d}\right).
\end{equation}

\begin{lemma}
\label{lemma:untied-slow-compactness}
Under the conditions of Theorem \ref{theo:untied-SGD}, for every fixed
\(T>0\) and every \(\rho>C_*^2\), the sequence
$\left\{\check\mu^{(d)}(\cdot)
\right\}_{d\ge1}$ is \(C\)-tight in $D([0,T];X_\rho^0)$. Moreover, for every word \(\alpha\) of length \(w\), $\sup_{\tau\in[0,T]}
|\check\mu_\alpha^{(d)}(\tau)|\le C_*^{2w}$ almost surely, and the auxiliary process agrees with the empirical process stopped at
\(J_{\rm mac}\) with probability tending to one.
\end{lemma}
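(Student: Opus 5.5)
The plan mirrors the proof of Lemma~\ref{lemma:compactness} for the tied model, adapted to the stopped auxiliary process. The last two claims are nearly immediate, so I dispose of them first. On $\mathcal E_{\rm sp}^{(d)}(T)$, every index $n\wedge J_{\rm mac}\le N_T$ satisfies $n\wedge J_{\rm mac}<J_{\rm sp}$. Hence $\|\check U_k^{(n)}\|_{\rm op},\|\check V_k^{(n)}\|_{\rm op}\le C_*$. On the complement, the auxiliary weights are the initial ones, bounded by $C_0<C_*$ through Assumption~\ref{assum:untied}.2. Every letter in $\mathscr Z$ therefore has operator norm at most $C_*^2$ almost surely, and $|\check\mu_\alpha^{(d)}(\tau)|\le\|\check Z_1\cdots\check Z_w\|_{\rm op}\le C_*^{2w}$. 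Agreement with the empirical process stopped at $J_{\rm mac}$ holds on $\mathcal E_{\rm sp}^{(d)}(T)$, whose probability tends to one by \eqref{eq:untied-slow-spectral-event-prob}.

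For tightness, I introduce the continuous approximation
\[
\hat\mu^{(d)}(\tau):=\check\mu^{(d)}(0)+\mathbf 1_{\mathcal E_{\rm sp}^{(d)}(T)}\int_0^\tau\widetilde{\mathcal V}_{\mathrm{eff},J}^{(d)}(s)\,ds ,
\]
with $J:=J_{\rm sp}\wedge J_{\rm mac}$. This choice of $J$ satisfies \eqref{eq:stopping-time}. Coordinatewise, Lemma~\ref{lemma:untied-approximate-integral-equation} gives $\sup_\tau|\check\mu^{(d)}_\alpha-\hat\mu^{(d)}_\alpha|\to0$ in probability on $\mathcal E_{\rm sp}^{(d)}(T)$. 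On the complement the difference vanishes identically, because both processes are constant equal to $\check\mu^{(d)}(0)$ there.

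The key quantitative input is a coordinate bound $|[\mathcal V_{\rm eff}^{(d,n)}]_\alpha|\le B\,w\,C_*^{2(w+1)}(1+\gamma)$ for $n<J$, with $B$ independent of $d$, $n$ and $w$. The raw field carries a $\sqrt d\,b_k$ term, whereas the effective field replaces it by $\beta_k$. The bound $|\beta_k^{(d,n)}|\le C$ was already derived in the proof of Lemma~\ref{lemma:untied-approximate-integral-equation} via $|\psi_k|\le CD_k$ using $A_k,B_k\succeq0$. The coefficients $a^{(r,d,n)}_{kl}$ are bounded by Lemma~\ref{lemma:finite-dimensional-Stein-untied}. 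Each substitution operator $\mathcal T^{\rm bias}_i$ or $\mathcal T^{\rm diff}_{i,l,r}$ produces at most a constant number of words of length at most $w+1$, each bounded by $C_*^{2(w+1)}$. The main obstacle is exactly this step: ensuring the effective (not raw) field is used, since the raw field is $O(\sqrt d)$. That is why the approximation must go through Lemma~\ref{lemma:untied-approximate-integral-equation} rather than Lemma~\ref{lemma:untied-moment-error}. The uniformity of $B$ in $w$ also has to be checked, since the number of trainable positions is at most $w$.

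With this bound, $\hat\mu^{(d)}$ takes values in the set $\{\mu:\sup_{|\alpha|=w}|\mu_\alpha|\le a_w\}$ with $a_w:=(1+TBw(1+\gamma))C_*^{2(w+1)}$. Since $a_w/\rho^w\to0$ for $\rho>C_*^2$, this set is compact in $X_\rho^0$, exactly as argued for \eqref{eq:compact-moment-set}. The paths are uniformly Lipschitz in $X_\rho^0$ with constant $B(1+\gamma)C_*^2\sup_w w(C_*^2/\rho)^w$, so Arzel\`a--Ascoli yields tightness of $\hat\mu^{(d)}$ in $C([0,T];X_\rho^0)$.

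It remains to upgrade the coordinatewise closeness of $\check\mu^{(d)}$ and $\hat\mu^{(d)}$ to closeness in $\|\cdot\|_\rho$. I handle the high degrees and the low degrees separately. For high degrees, both processes are bounded by $a_w$, so words of length above some $w_\star$ contribute less than $\varepsilon$ after division by $\rho^w$. For low degrees there are finitely many words, so the coordinatewise convergence holds uniformly over them by a union bound. Together these give $\sup_\tau\|\check\mu^{(d)}-\hat\mu^{(d)}\|_\rho\to0$ in probability, which transfers tightness to $\check\mu^{(d)}$ in $D([0,T];X_\rho^0)$ with every limit supported on continuous paths, i.e.\ $C$-tightness.
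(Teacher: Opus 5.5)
Your proposal is correct and follows the paper's proof essentially step for step: the almost-sure spectral bound on the auxiliary weights, the representation through Lemma~\ref{lemma:untied-approximate-integral-equation} with $J=J_{\rm sp}\wedge J_{\rm mac}$ and the effective (not raw) vector field, the uniform coordinate bound on that field via $|\beta_k^{(d,n)}|\le C$, and Arzel\`a--Ascoli for $\hat\mu^{(d)}$ combined with the high/low-degree upgrade to $\|\cdot\|_\rho$ from the tied case. Your explicit tracking of the $(1+\gamma)$ factor and of the length-$(w+1)$ words produced by $\mathcal T^{\rm diff}_{i,l,r}$ only spells out what the paper absorbs into its constant $B$.
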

\begin{proof}
By construction and
\eqref{eq:untied-slow-spectral-event}, almost surely
\begin{equation}
\sup_{0\le n\le N_T}\max_{1\le k\le K}\left\{\|\check U_k^{(n)}\|_{\rm op},
\|\check V_k^{(n)}\|_{\rm op}\right\}\le C_*.
\label{eq:untied-auxiliary-uniform-spectral-bound}
\end{equation}
Hence every letter $\check Z_k^{(n)}
\in\left\{\check M_k^{(n)},
(\check M_k^{(n)})^T,\check A_k^{(n)},\check B_k^{(n)}\right\}$
satisfies $\|\check Z_k^{(n)}\|_{\rm op}\le C_*^2$. Therefore, for every word \(|\alpha|=w\),
$|\check\mu_\alpha^{(d)}(\tau)|
\le C_*^{2w}$. In particular, $\check\mu^{(d)}(\tau)\in X_\rho^0$ for $0\le\tau\le T$ and every \(\rho>C_*^2\).

We next record a uniform bound on the finite-\(d\) effective vector field. Before the stopping time, Lemma \ref{lemma:finite-dimensional-Stein-untied} gives $\max_{k,l,r}
|a_{kl}^{(r,d,n)}|\le C$.
Moreover, $|\psi_k^{(d,n)}|\le C
\left(t_k^{A,(n)}+t_k^{B,(n)}\right)$, and hence, under the convention
\(\beta_k^{(d,n)}=0\) when
\(t_k^{A,(n)}+t_k^{B,(n)}=0\),  $|\beta_k^{(d,n)}|\le C$.
Consequently, there exists a constant
\(B<\infty\), independent of \(d,n,w\), such that
\begin{equation}
\mathbf 1_{\{n<J_{\rm sp}\wedge J_{\rm mac}\}}
\sup_{|\alpha|=w}\left|[\mathcal V_{\rm eff}^{(d,n)}]_\alpha\right|\le BwC_*^{2w}.
\label{eq:untied-effective-vector-field-coordinate-bound}
\end{equation}

We now use Lemma
\ref{lemma:untied-approximate-integral-equation}. Set $J:=J_{\rm sp}\wedge J_{\rm mac}$. On \(\mathcal E_{\rm sp}^{(d)}(T)\), stopping at \(J\) agrees with stopping at \(J_{\rm mac}\) throughout the time interval
\([0,T]\). Therefore, for every fixed word \(\alpha\), the auxiliary process admits the representation
\begin{equation}
\check\mu_\alpha^{(d)}(\tau)=\check\mu_\alpha^{(d)}(0)+\mathbf 1_{\mathcal E_{\rm sp}^{(d)}(T)}\int_0^\tau[\widetilde{\mathcal V}_{\rm eff,J}^{(d)}(s)]_\alpha\,ds+\check{\mathcal E}_\alpha^{(d)}(\tau),
\label{eq:untied-auxiliary-approximate-integral}
\end{equation}
where
\begin{equation}
\sup_{\tau\in[0,T]}|\check{\mathcal E}_\alpha^{(d)}(\tau)|
\xrightarrow{\mathbb P}0
\label{eq:untied-auxiliary-coordinate-error}
\end{equation}
for every fixed \(\alpha\). On
\(\mathcal E_{\rm sp}^{(d)}(T)^c\), we set
\(\check{\mathcal E}_\alpha^{(d)}\equiv0\).

Define the continuous approximation
\begin{equation}
\hat\mu^{(d)}(\tau):=\check\mu^{(d)}(0)+\mathbf 1_{\mathcal E_{\rm sp}^{(d)}(T)}
\int_0^\tau\widetilde{\mathcal V}_{\rm eff,J}^{(d)}(s)\,ds.
\label{eq:untied-hat-mu-definition}
\end{equation}
The convergence in
\eqref{eq:untied-auxiliary-coordinate-error} can be upgraded to convergence in
\(X_\rho^0\), exactly as in the tied case (Lemma \ref{lemma:macro-tied-SGD}):
\begin{equation}
\sup_{\tau\in[0,T]}\left\|\check\mu^{(d)}(\tau)-\hat\mu^{(d)}(\tau)\right\|_\rho
\xrightarrow{\mathbb P}0.
\label{eq:untied-check-hat-equivalence}
\end{equation}

It remains to prove compactness of
\(\{\hat\mu^{(d)}\}\). By
\eqref{eq:untied-effective-vector-field-coordinate-bound},
for every \(|\alpha|=w\),
\begin{equation}
\sup_{\tau\in[0,T]}|\hat\mu_\alpha^{(d)}(\tau)|
\le(1+TBw)C_*^{2w}.
\label{eq:untied-hat-coordinate-bound}
\end{equation}
Furthermore, for \(0\le s<t\le T\),
\begin{align}
\left\|\hat\mu^{(d)}(t)-\hat\mu^{(d)}(s)
\right\|_\rho\le B|t-s|\sup_{w\ge1}w\left(\frac{C_*^2}{\rho}\right)^w=:L_\rho|t-s|,
\label{eq:untied-hat-equicontinuity}
\end{align}
where \(L_\rho<\infty\) is independent of \(d\).
The remainder of the argument is identical to the proof of Lemma
\ref{lemma:compactness} in the tied case. Namely, \eqref{eq:untied-hat-coordinate-bound} defines a compact subset of \(X_\rho^0\), because its tail decays exponentially, while
\eqref{eq:untied-hat-equicontinuity} gives uniform equicontinuity. The Arzel\`a--Ascoli theorem therefore implies that $\{\hat\mu^{(d)}\}_{d\ge1}$ is tight in
\(C([0,T];X_\rho^0)\). Together with
\eqref{eq:untied-check-hat-equivalence}, this proves that $\{\check\mu^{(d)}\}_{d\ge1}$ is \(C\)-tight in \(D([0,T];X_\rho^0)\).

Finally, on
\(\mathcal E_{\rm sp}^{(d)}(T)\), the auxiliary process is precisely the empirical moment process stopped at \(J_{\rm mac}\). Since
\(\mathbb P(\mathcal E_{\rm sp}^{(d)}(T))\to1\), the two processes are
asymptotically equivalent on \([0,T]\).
\end{proof}

\paragraph{Step 2: Convergence of the effective vector field}
\begin{lemma}
\label{lemma:untied-effective-drift-convergence}
Under the conditions of Theorem \ref{theo:untied-SGD}, fix
\(0<T_1<T_2\le T\), and let \(J\) be a stopping time satisfying \eqref{eq:stopping-time}. Let \(C_*\) denote the corresponding spectral bound in Lemma \ref{lemma:spectral_bounds_untied} and fix \(\rho>C_*^2\).
Let \(\mathcal V_{\rm eff}^{(d,n)}\) be the finite-\(d\) effective vector field defined in \eqref{eq:untied-effective-vector-field-finite}, and let \(V_{\rm ext}\) be the extended limiting vector field defined in \eqref{eq:untied-extended-vector-field}. Then,
\begin{equation}
\sup_{\tau\in[T_1,T_2]}\mathbf 1_{\{m_d(\tau)<J\}}\left\|\mathcal V_{\rm eff}^{(d,m_d(\tau))}-V_{\rm ext}(\mu^{(m_d(\tau))})\right\|_\rho
\xrightarrow{\mathbb P}0.
\label{eq:untied-effective-drift-convergence-time}
\end{equation}
\end{lemma}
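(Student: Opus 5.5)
The plan is to reduce \eqref{eq:untied-effective-drift-convergence-time} to uniform convergence of finitely many scalar coefficients, exactly as in Lemma \ref{lemma:drift_convergence}. Compare \eqref{eq:untied-effective-vector-field-finite} with \eqref{eq:untied-extended-vector-field} coordinatewise. Both vector fields are built from the same substitution operators $\mathcal T_i^{\rm bias}$ and $\mathcal T_{i,l,r}^{\rm diff}$ applied to the same moment sequence $\mu^{(n)}$, and they share the weight-decay term. They differ only through the coefficients:
\[
a_{kl}^{(r,d,n)}\ \text{versus}\ a_{kl}^{(r)}(\mu^{(n)}), \qquad \beta_k^{(d,n)}\ \text{versus}\ \beta_k^{\rm ext}(\mu^{(n)}).
\]
Before $J$, every letter has operator norm at most $C_*^2$. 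Hence $|\mathcal T_i^{\rm bias}(\mu^{(n)})|\le 2C_*^{2w}$ and $|\mathcal T_{i,l,r}^{\rm diff}(\mu^{(n)})|\le 2C_*^{2(w+1)}$ for words of length $w$. This gives
\[
\|\mathcal V_{\rm eff}^{(d,n)}-V_{\rm ext}(\mu^{(n)})\|_\rho\le C\,\Delta_n \sup_{w}w(C_*^2/\rho)^w ,
\]
where
\[
\Delta_n:=\max_{k,l,r}|a^{(r,d,n)}_{kl}-a^{(r)}_{kl}(\mu^{(n)})|+\max_k|\beta^{(d,n)}_k-\beta^{\rm ext}_k(\mu^{(n)})|.
\]
The supremum over $w$ is finite because $\rho>C_*^2$. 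It therefore suffices to show that $\sup_{\tau\in[T_1,T_2]}\mathbf 1_{\{m_d(\tau)<J\}}\Delta_{m_d(\tau)}\to0$ in probability.

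\textbf{Convergence of the $a$-coefficients.} Lemma \ref{lemma:finite-dimensional-Stein-untied}, combined with the quantitative Gaussian approximation Lemma \ref{lemma:quantitative-gaussian-approximation} and Price's theorem \eqref{eq:finite-Price-q1}--\eqref{eq:finite-Price-q2}, gives
\[
|a_{kl}^{(r,d,n)}-[\nabla_{q^{(r)}}\Phi(m^{(n)},q^{(n)})]_{kl}|\le C/\sqrt d
\]
uniformly before $J$. The limiting coefficient, however, is evaluated at $(\widetilde m^\star(\mu^{(n)}),\widehat q(\mu^{(n)}))$. Empirical $q^{(n)}$ always lies in the closed cone $\mathfrak Q$, so $\widehat q(\mu^{(n)})=q^{(n)}$. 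What remains is the mean discrepancy $\|m^{(n)}_{\mathcal K}-m^\star_{\mathcal K}(q^{(n)})\|$, which is where the restriction $\tau\ge T_1>0$ enters. For $\tau\ge T_1$ we have $n=m_d(\tau)\ge T_1d/\alpha_d$, so the transient $C_{\rm loc}e^{-c_{\rm loc}\alpha_dn}\le C_{\rm loc}e^{-c_{\rm loc}T_1 d}$ is negligible. Lemma \ref{lemma:coarse-localization-untied} (or the sharper Lemma \ref{lemma:local-mean-gradient-refinement} together with strong convexity, $\|m-m^\star\|\le\lambda_0^{-1}\|\nabla_m\Phi\|$) then makes this discrepancy $o_{\mathbb P}(1)$ uniformly over $n\in[T_1d/\alpha_d,N_T\wedge J)$. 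Since $\nabla_{q}\Phi$ is Lipschitz on the compact set $\mathfrak D$ by Assumption \ref{assum:untied}.1, the $a$-part of $\Delta_n$ vanishes uniformly.

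\textbf{Convergence of the $\beta$-coefficients.} By definition $\beta_k^{(d,n)}=-\psi_k^{(d,n)}/D_k^{(n)}$ and $\beta^{\rm ext}_k(\mu^{(n)})=-\psi^{\rm ext}_k(\mu^{(n)})/(t^A_k+t^B_k)(\mu^{(n)})$, and the two share the same denominator $D_k^{(n)}$. The numerators differ only through the $a$-coefficients multiplying the bounded moments $\mu_{(M_l^{(r)},B_k)}$ and $\mu_{(A_k,M_l^{(r)})}$, so $|\psi_k^{(d,n)}-\psi_k^{\rm ext}(\mu^{(n)})|\le C\max|a^{(d)}-a|$. Lemma \ref{lemma:empirical-balancing} gives $D_k^{(n)}\ge c_D$ with probability tending to one, so the ratios converge as well.

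\textbf{Main obstacle.} The difficult step is the uniform-in-$n$ control of $m^{(n)}-m^\star(q^{(n)})$ on the slow window. It relies on the localization lemmas having been proved with stopping times compatible with $J$, and on the exponential transient being killed only because $T_1>0$. This is exactly why the statement excludes $\tau$ near $0$. All other steps are deterministic Lipschitz bookkeeping.
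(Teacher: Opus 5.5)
Your proposal is correct and follows the paper's proof essentially step by step. The shared steps are: $\widehat q(\mu^{(n)})=q^{(n)}$ because empirical covariances lie in $\mathfrak Q$; Gaussian approximation plus Price's theorem for $a^{(r,d,n)}_{kl}$; Lemma \ref{lemma:local-mean-gradient-refinement} with strong convexity on the window $n\ge T_1d/\alpha_d$ to replace $m^{(n)}$ by $m^\star(q^{(n)})$; and finally the $\beta$ comparison and the $\sup_w w(C_*^2/\rho)^w$ bound. The only minor difference is that the paper uses $A_k,B_k\succeq0$ to get $|\psi_k^{(d,n)}-\psi_k^{\rm ext}(\mu^{(n)})|\le C\Delta_{a,d}D_k^{(n)}$, so the $\beta$-difference is controlled after division without really needing the lower bound $D_k^{(n)}\ge c_D$ that you invoke.
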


\begin{proof}
Throughout the proof, all estimates are uniform on \eqref{eq:stopping-time}. Write
$q^{(n)}:=\left(q^{(1,n)},q^{(2,n)}\right)$ and $m_\star^{(n)}:=m^\star(q^{(n)})$,
where the non-trainable components of \(m_\star^{(n)}\) remain fixed at their initial values.

For notational convenience, define
\begin{equation}
A_{kl}^{(r)}(m,q):=\left[\nabla_{q^{(r)}}\Phi(m,q)
\right]_{kl},\qquad r=1,2.
\label{eq:untied-limiting-a-coefficient}
\end{equation}
An empirical state \(q^{(n)}\in\mathfrak Q\). This is because for arbitrary \(c=(c_1,\ldots,c_K)\), we can write $M_c:=\sum_{k=1}^Kc_kM_k$ and have
\begin{equation}
c^T(q^{(1)}+q^{(2)})c=\frac1d\left(\Tr[M_cM_c^T]+\Tr[M_c^2]\right)=\frac{2}{d}\|\operatorname{Sym}M_c\|_F^2\ge0,
\end{equation}
Similarly
\begin{equation}
c^T(q^{(1)}-q^{(2)})c=\frac{2}{d}\|\operatorname{Skew}M_c\|_F^2\ge0,
\end{equation}
where $\operatorname{Sym}M_c:=\frac{M_c+M_c^T}{2}$ and $\operatorname{Skew}M_c:=\frac{M_c-M_c^T}{2}$. Hence the projection in \(V_{\rm ext}\) (see Appendix \ref{app:untied-local-well-posedness}) acts trivially along the empirical trajectory:
\begin{equation}
\widehat q(\mu^{(n)})=q^{(n)}.
\label{eq:untied-empirical-q-projection-trivial}
\end{equation}

Recall
\begin{equation}
a_{kl}^{(r,d,n)}:=\frac12
\sum_{i,j,i',j'=1}^L\mathcal C_{ij,i'j'}^{(r)}
\mathbb E_n\left[\partial^2_{(ijk),(i'j'l)}\mathcal L(G^{(n)})\right].
\end{equation}
We first prove its convergence to \(A_{kl}^{(r)}(m^{(n)},q^{(n)})\).

Before the stopping time \eqref{eq:stopping-time}, $\max_k\|M_k^{(n)}\|_{\rm op}\le C_*^2$,
while compactness of \(\mathfrak D\) gives
\begin{equation}
\max_k|m_k^{(n)}|=\max_k\frac1{\sqrt d}|\Tr M_k^{(n)}|\le R_*
\end{equation}
for a constant \(R_*<\infty\). Therefore Lemma
\ref{lemma:quantitative-gaussian-approximation} applies uniformly in \(n<J\).

For fixed indices \((ijk),(i'j'l)\), apply Lemma
\ref{lemma:quantitative-gaussian-approximation} to $\partial^2_{(ijk),(i'j'l)}\mathcal L(G)$.
Assumption \ref{assum:untied}.1 guarantees the required regularity and polynomial growth. By Price's theorem,
\begin{equation}
A_{kl}^{(r)}(m^{(n)},q^{(n)})=\frac12\sum_{i,j,i',j'}\mathcal C_{ij,i'j'}^{(r)}\mathbb E\left[\partial^2_{(ijk),(i'j'l)}\mathcal L(G_{\rm G}^{(n)})\right],
\end{equation}
where \(G_{\rm G}^{(n)}\) is Gaussian with the same mean and covariance as \(G^{(n)}\). Consequently,
\begin{equation}
\sup_{n<J}\max_{k,l,r}\left|a_{kl}^{(r,d,n)}-A_{kl}^{(r)}(m^{(n)},q^{(n)})
\right|\le\frac{C}{\sqrt d}.
\label{eq:untied-a-gaussianization}
\end{equation}
We next replace \(m^{(n)}\) by \(m_\star^{(n)}\). By Lemma
\ref{lemma:finite-dimensional-Stein-untied},
\begin{equation}
\left\|b_{\mathcal K}^{(n)}-\nabla_{m_{\mathcal K}}\Phi(m^{(n)},q^{(n)})
\right\|\le\frac{C}{\sqrt d}.
\label{eq:untied-drift-b-gradient-comparison}
\end{equation}
Lemma \ref{lemma:local-mean-gradient-refinement} gives, with
probability tending to one,
\begin{equation}
\|b_{\mathcal K}^{(n)}\|
\le C_{\rm tr}e^{-c_{\rm tr}\alpha_dn}+\frac{C_b}{\sqrt d},
\qquad0\le n<N_T\wedge J.
\label{eq:untied-drift-refined-b}
\end{equation}
Hence, uniformly for $n\ge\left\lceil\frac{T_1d}{\alpha_d}\right\rceil$, we have $e^{-c_{\rm tr}\alpha_dn}\le e^{-c_{\rm tr}T_1d}$,
and therefore
\begin{equation}
\sup_{\substack{\lceil T_1d/\alpha_d\rceil\le n\le\lfloor T_2d/\alpha_d\rfloor\\
n<J}}\left\|\nabla_{m_{\mathcal K}}
\Phi(m^{(n)},q^{(n)})\right\|=O_{\mathbb P}(d^{-1/2}).
\label{eq:untied-drift-gradient-small}
\end{equation}
Since $\nabla_{m_{\mathcal K}}
\Phi(m_\star^{(n)},q^{(n)})=0$, the uniform strong convexity in Assumption \ref{assum:untied_convex} implies
\begin{equation}
\lambda_0\left\|m_{\mathcal K}^{(n)}-m_{\mathcal K}^\star(q^{(n)})
\right\|\le\left\|\nabla_{m_{\mathcal K}}
\Phi(m^{(n)},q^{(n)})\right\|.
\label{eq:untied-drift-strong-convex-distance}
\end{equation}
Thus
\begin{equation}
\sup_{\substack{\lceil T_1d/\alpha_d\rceil\le n\le\lfloor T_2d/\alpha_d\rfloor\\
n<J}}\left\|m^{(n)}-m_\star^{(n)}\right\|=O_{\mathbb P}(d^{-1/2}).
\label{eq:untied-drift-m-tracking}
\end{equation}

The map $(m,q)\longmapsto
\nabla_{q^{(r)}}\Phi(m,q)$
is Lipschitz on a compact neighborhood of \eqref{eq:stopping-time} and of
\(\{(m^\star(q),q):q\in\operatorname{proj}_q\mathfrak D\}\).
Therefore,
\begin{equation}
\sup_{\substack{\lceil T_1d/\alpha_d\rceil\le n\le\lfloor T_2d/\alpha_d\rfloor\\
n<J}}
\max_{k,l,r}\left|A_{kl}^{(r)}(m^{(n)},q^{(n)})-A_{kl}^{(r)}(m_\star^{(n)},q^{(n)})
\right|\xrightarrow{\mathbb P}0.
\label{eq:untied-a-slaving}
\end{equation}
Combining
\eqref{eq:untied-a-gaussianization} and
\eqref{eq:untied-a-slaving}, we conclude that
\begin{equation}
\Delta_{a,d}:=\sup_{\substack{
\lceil T_1d/\alpha_d\rceil\le n\le
\lfloor T_2d/\alpha_d\rfloor\\
n<J}}\max_{k,l,r}\left|a_{kl}^{(r,d,n)}-A_{kl}^{(r)}(m_\star^{(n)},q^{(n)})
\right|\xrightarrow{\mathbb P}0.
\label{eq:untied-a-uniform-convergence}
\end{equation}

For \(k\in\mathcal K\), recall $\psi_k^{(d,n)}=2\sum_{l=1}^K\sum_{r=1}^2
a_{kl}^{(r,d,n)}\left[\mu_{(M_l^{(r)},B_k)}^{(n)}+\mu_{(A_k,M_l^{(r)})}^{(n)}\right]$
and $\beta_k^{(d,n)}=-\frac{\psi_k^{(d,n)}}{D_k^{(n)}}$, $D_k^{(n)}:=t_k^{A,(n)}+t_k^{B,(n)}$. Now we prove the convergence of $\beta_k^{(d,n)}$ to $\beta_k^{\rm ext}$

Along the empirical trajectory,
\eqref{eq:untied-empirical-q-projection-trivial} implies 
\begin{align}
\left|\psi_k^{(d,n)}-\psi_k^{\rm ext}(\mu^{(n)})
\right|\le2\Delta_{a,d}\sum_{l=1}^K\sum_{r=1}^2\left(\left|\mu_{(M_l^{(r)},B_k)}^{(n)}\right|+\left|\mu_{(A_k,M_l^{(r)})}^{(n)}\right|\right).
\label{eq:untied-psi-difference-pre}
\end{align}

The spectral bound \eqref{eq:stopping-time} gives $\|M_l^{(r,n)}\|_{\rm op}\le C_*^2$.
Since \(A_k,B_k\succeq0\), we have $\left|
\mu_{(M_l^{(r)},B_k)}^{(n)}\right|\le
C_*^2 t_k^{B,(n)}$ and $\left|\mu_{(A_k,M_l^{(r)})}^{(n)}
\right|\le C_*^2 t_k^{A,(n)}$. Therefore
\begin{equation}
\left|\psi_k^{(d,n)}-\psi_k^{\rm ext}(\mu^{(n)})
\right|\le C\Delta_{a,d}D_k^{(n)}.
\label{eq:untied-psi-difference}
\end{equation}
Whenever \(D_k^{(n)}>0\), division by \(D_k^{(n)}\) yields
\begin{equation}
\left|\beta_k^{(d,n)}-\beta_k^{\rm ext}(\mu^{(n)})\right|\le C\Delta_{a,d}.
\label{eq:untied-beta-difference}
\end{equation}
By Lemma \ref{lemma:empirical-balancing},
\(D_k^{(n)}\) is uniformly bounded away from zero before \(J\), for all \(k\in\mathcal K\), with probability tending to one. Hence
\eqref{eq:untied-a-uniform-convergence} implies
\begin{equation}
\Delta_{\beta,d}:=\sup_{\substack{\lceil T_1d/\alpha_d\rceil\le n\le\lfloor T_2d/\alpha_d\rfloor\\n<J}}\max_{k\in\mathcal K}\left|\beta_k^{(d,n)}-\beta_k^{\rm ext}(\mu^{(n)})\right|\xrightarrow{\mathbb P}0.
\label{eq:untied-beta-uniform-convergence}
\end{equation}

Finally, fix \(n<J\) and a word
\(\alpha=(Z_1,\ldots,Z_w)\). The weight-decay terms in \(\mathcal V_{\rm eff}^{(d,n)}\) and
\(V_{\rm ext}(\mu^{(n)})\) coincide exactly. Thus their difference
contains only $\mathcal{T}^{\text{bias}}$ and $\mathcal{T}^{\text{diff}}$ terms.
Under the spectral bounds \eqref{eq:stopping-time}, for every trainable position \(i\), $\left|\mathcal T_i^{\rm bias}(\mu^{(n)})
\right|\le2C_*^{2w}$ and $\left|
\mathcal T_{i,l,r}^{\rm diff}(\mu^{(n)})\right|\le2C_*^{2(w+1)}$.
Consequently,
\begin{align}
\left|[\mathcal V_{\rm eff}^{(d,n)}-V_{\rm ext}(\mu^{(n)})]_\alpha\right|\le Cw
\left(\Delta_{\beta,d}C_*^{2w}+\Delta_{a,d}C_*^{2(w+1)}\right),
\label{eq:untied-drift-coordinate-difference}
\end{align}
where \(C\) depends only on \(K\).

Dividing by \(\rho^w\) and taking the supremum over all words of degree \(w\ge1\) gives
\begin{align}
\left\|
\mathcal V_{\rm eff}^{(d,n)}-V_{\rm ext}(\mu^{(n)})\right\|_\rho
&\le C\left(\Delta_{\beta,d}+C_*^2\Delta_{a,d}
\right)\sup_{w\ge1}w\left(\frac{C_*^2}{\rho}
\right)^w.
\label{eq:untied-drift-Xrho-bound}
\end{align}
Because \(\rho>C_*^2\) and $\sup_{w\ge1}w\left(\frac{C_*^2}{\rho}
\right)^w<\infty$.
Equations
\eqref{eq:untied-a-uniform-convergence} and
\eqref{eq:untied-beta-uniform-convergence} therefore imply
\begin{equation}
\sup_{\substack{\lceil T_1d/\alpha_d\rceil\le n\le\lfloor T_2d/\alpha_d\rfloor\\n<J}}\left\|\mathcal V_{\rm eff}^{(d,n)}-V_{\rm ext}(\mu^{(n)})
\right\|_\rho\xrightarrow{\mathbb P}0.
\label{eq:untied-effective-drift-convergence},
\end{equation}
which proves \eqref{eq:untied-effective-drift-convergence-time}.
\end{proof}

\paragraph{Step 3: Global well-posedness}
We call a pair \((\bar m,\bar\mu)\) an admissible solution on \([0,T]\) if there exist
\(0<s_-<s_+<\infty\) such that
$\bar\mu\in C([0,T];X_{s_-}^0)$,
\begin{equation}
\bar\mu(\tau)=\bar\mu(0)+\int_0^\tau
V_{\rm ext}(\bar\mu(s))\,ds
\label{eq:integral-equation-untied}
\end{equation}
holds as an identity in \(X_{s_+}^0\), and, for every
\(\tau\in[0,T]\),
\begin{equation}
q(\bar\mu(\tau))\in\mathfrak Q,
\qquad
\min_{k\in\mathcal K}
\left(\bar t_k^A(\tau)+\bar t_k^B(\tau)\right)>0,
\label{eq:untied-admissibility}
\end{equation}
with
\begin{equation}
\bar m_{\mathcal K}(\tau)=m_{\mathcal K}^\star(q(\bar\mu(\tau))),
\qquad
\bar m_{\setminus\mathcal K}(\tau)=\bar m_{\setminus\mathcal K}(0).
\label{eq:untied-admissible-slaving}
\end{equation}

\begin{lemma}
\label{lemma:untied-global-slow-dynamics}
Under the conditions of Theorem \ref{theo:untied-SGD}, the extended slow system \eqref{eq:untied-extended-slow-system}, initialized at
\(\bar\mu(0)\), admits a unique admissible solution
$\bar\mu\in C([0,T];X_{\rho_-}^0)$ for every pair
$C_*^2<\rho_-<\rho_+$, where the integral equation \eqref{eq:integral-equation-untied} is understood in \(X_{\rho_+}^0\). Moreover, \(\bar\mu\) also satisfies the original ODE \eqref{eq:untied-slaving}--\eqref{eq:slow_tau_mu}.
\end{lemma}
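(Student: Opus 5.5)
I will follow the strategy of Lemma \ref{lemma:global_existence} for the tied case: existence is obtained by compactness of the empirical auxiliary process, and uniqueness by local uniqueness in the Banach scale (Lemma \ref{lemma:local-untied-well-posed}). The new features are the initial boundary layer, the slaving of the means, and the admissibility constraints \eqref{eq:untied-admissibility}.

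\textbf{Existence.} Fix $C_*^2<\rho_-<\rho_+$ and a compact $\mathfrak D\Subset\mathcal M\times\mathcal Q$ containing a neighbourhood of the putative trajectory; Assumption \ref{assum:untied_convex}.3 guarantees such a set exists. By Lemma \ref{lemma:untied-slow-compactness}, $\{\check\mu^{(d)}\}$ is $C$-tight in $D([0,T];X_{\rho_-}^0)$, so I extract a subsequential limit $\mu^*$. By Lemma \ref{lemma:untied-approximate-integral-equation}, $\check\mu^{(d)}(\tau)=\check\mu^{(d)}(0)+\int_0^\tau\widetilde{\mathcal V}^{(d)}_{\rm eff,J}\,ds+o_{\mathbb P}(1)$. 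I then split $\int_0^\tau=\int_0^{T_1}+\int_{T_1}^\tau$.
\begin{itemize}
\item On $[T_1,\tau]$, Lemma \ref{lemma:untied-effective-drift-convergence} replaces $\mathcal V_{\rm eff}$ by $V_{\rm ext}(\check\mu^{(d)})$.
\item On $[0,T_1]$, the integrand is bounded in $X_{\rho_+}$ by the coordinate bound \eqref{eq:untied-effective-vector-field-coordinate-bound}, so this piece is $O(T_1)$, uniformly in $d$.
\end{itemize}
After Skorokhod representation and the Cauchy-type Lipschitz estimate $X_{\rho_-}\to X_{\rho_+}$ for $V_{\rm ext}$, I pass $d\to\infty$ and then $T_1\downarrow0$. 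This yields \eqref{eq:integral-equation-untied} for $\mu^*$, with initial value $\bar\mu(0)$ by Assumption \ref{assum:untied}.3, upgraded to $X_{\rho_-}^0$ through the uniform moment bound exactly as in \eqref{eq:initial-Xrho-convergence}.

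\textbf{Admissibility.} The cone condition $q(\mu^*)\in\mathfrak Q$ passes to the limit by closedness, since every empirical $q^{(n)}\in\mathfrak Q$, as shown in the proof of Lemma \ref{lemma:untied-effective-drift-convergence}. The positivity $\bar t^A_k+\bar t^B_k\ge c_D>0$ follows from Lemma \ref{lemma:empirical-balancing}. The slaving \eqref{eq:untied-admissible-slaving} is then a definition, since $V_{\rm ext}$ already uses $\widetilde m^\star$, which equals $m^\star(q)$ on $\mathfrak Q$.

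\textbf{Main obstacle.} The hard step is that all of this holds only up to the exit time $J_{\rm mac}$ from $\mathfrak D$; one must show this exit does not occur before $T$. I will use a continuation argument. Let $\tau^*$ be the supremum of times on which the limit exists and $q(\bar\mu)$ lies in the compact set prescribed by Assumption \ref{assum:untied_convex}.3 (the interior of $\mathfrak D$'s projection). Before $\tau^*$, Lemma \ref{lemma:local-mean-gradient-refinement} together with strong convexity gives $m^{(n)}-m^\star(q^{(n)})=O_{\mathbb P}(d^{-1/2})$. Hence, with probability tending to one, the empirical pair stays at positive distance from $\partial\mathfrak D$, so $J_{\rm mac}>N_T$. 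Since this would let the solution continue past $\tau^*$, this contradicts $\tau^*<T$.

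\textbf{Uniqueness and the original ODE.} Uniqueness follows by applying Lemma \ref{lemma:local-untied-well-posed} at successive times, exactly as in the tied case. Uniqueness then identifies every subsequential limit with a single deterministic $\bar\mu$. Finally, because the projection acts trivially on admissible states, $V_{\rm ext}(\bar\mu)$ coincides with the right-hand side of \eqref{eq:slow_tau_mu}, so $\bar\mu$ also solves \eqref{eq:untied-slaving}--\eqref{eq:slow_tau_mu}.
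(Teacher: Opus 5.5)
Your existence, admissibility and uniqueness steps match the paper's proof almost line by line. The paper also cuts off an initial window $[0,\varepsilon]$, applies Lemma \ref{lemma:untied-effective-drift-convergence} only on $[\varepsilon,t]$, passes to the limit via Skorokhod and the Cauchy-type Lipschitz bound, and then lets $\varepsilon\downarrow0$.

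\textbf{The gap} is in your argument that $J_{\rm mac}$ is inactive. You claim that before $\tau^*$, Lemma \ref{lemma:local-mean-gradient-refinement} plus strong convexity gives $m^{(n)}_{\mathcal K}-m^\star_{\mathcal K}(q^{(n)})=O_{\mathbb P}(d^{-1/2})$. That lemma only gives $\|b^{(n)}_{\mathcal K}\|\le C_{\rm tr}e^{-c_{\rm tr}\alpha_d n}+C_b/\sqrt d$, so $O(d^{-1/2})$ tracking holds only for $n\ge T_1d/\alpha_d$.

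During the fast boundary layer $n=O(\alpha_d^{-1})$ the mean is not slaved at all. It follows the fast flow \eqref{eq:m-fast} from $\tilde m(0)$ toward $m^\star(q_0)$, at $O(1)$ distance from the slaved curve. So a set $\mathfrak D$ chosen only as a neighbourhood of the slow curve $\{(m^\star_{\mathcal K}(\bar q(\tau)),\bar q(\tau))\}$ need not contain the initial state $(\tilde m_{\mathcal K}(0),q_0)$. That curve is all Assumption \ref{assum:untied_convex}.3 provides. If the initial state lies outside $\mathfrak D$, then $J_{\rm mac}=0$. Every lemma you invoke (spectral bounds, refinement, compactness) requires the path to stay in $\mathfrak D$ from $n=0$, so the argument cannot even start. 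Your continuation over $\tau^*$ only controls the slow regime and never addresses this layer.

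\textbf{The fix} is the following, and with it your continuation argument goes through and coincides with the paper's.
\begin{enumerate}
\item Choose $\mathfrak D$ to contain, in its interior, both $\mathcal C_{\rm slow}$ and the compact sublevel set $\mathcal C_{\rm fast}$ from Assumption \ref{assum:untied_convex}.2.
\item For $n\le t_{\rm f}/\alpha_d$: use the fast-time convergence \eqref{eq:untied-fast-convergence}, together with the freezing of $\vartheta^{(n)}$ from Appendix \ref{app:proof-untied-fast-convergence}. This keeps the empirical path near $\mathcal C_{\rm fast}$.
\item For $t_{\rm f}/\alpha_d\le n\le T_1d/\alpha_d$: use the exponential term in Lemma \ref{lemma:coarse-localization-untied}. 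Take $t_{\rm f}$ large enough that $C_{\rm loc}e^{-c_{\rm loc}t_{\rm f}}$ is below the distance margin to $\partial\mathfrak D$.
\item For $n\ge T_1d/\alpha_d$: here your $O(d^{-1/2})$ slaving argument applies.
\end{enumerate}
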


\begin{proof}
Fix $C_*^2<\rho_-<\rho_+$.
By Assumption \ref{assum:untied}.2, every initial letter satisfies
$\|M_k^{(0)}\|_{\rm op},\|A_k^{(0)}\|_{\rm op},
\|B_k^{(0)}\|_{\rm op}\le C_0^2$.
Hence, for every word \(|\alpha|=w\),
$|\mu_\alpha^{(d)}(0)|\le C_0^{2w}$.
Together with Assumption
\ref{assum:untied}.3 and \(\rho_->C_0^2\), the same tail argument used in the tied case (Lemma \ref{lemma:global_existence}) gives
\begin{equation}
\mu^{(d)}(0)
\xrightarrow{\mathbb P}
\bar\mu(0)
\qquad
\text{in }X_{\rho_-}^0.
\label{eq:untied-initial-Xrho-convergence}
\end{equation}

By Lemma \ref{lemma:untied-slow-compactness}, the auxiliary processes
\(\{\check\mu^{(d)}\}\) are \(C\)-tight in
\(D([0,T];X_{\rho_-}^0)\). Hence, from every subsequence, we may
extract a further subsequence, still indexed by \(d\), such that
\begin{equation}
\check\mu^{(d)}\to\mu^*\qquad
\text{in }D([0,T];X_{\rho_-}^0),
\label{eq:untied-global-subsequence}
\end{equation}
where $\mu^*\in C([0,T];X_{\rho_-}^0)$ almost surely.

As in the proof of Lemma \ref{lemma:global_existence} for the tied
model, by the Skorokhod representation theorem we may assume
\begin{equation}
\sup_{\tau\in[0,T]}
\|\check\mu^{(d)}(\tau)-\mu^*(\tau)\|_{\rho_-}
\longrightarrow0
\qquad\text{almost surely}.
\label{eq:untied-global-uniform-subsequence}
\end{equation}
Fix $0<\varepsilon<t\le T$ and first work before the stopping time \eqref{eq:stopping-time}. Lemma
\ref{lemma:untied-approximate-integral-equation}, together with Lemma
\ref{lemma:untied-slow-compactness}, gives
\begin{align}
\check\mu^{(d)}(t)-\check\mu^{(d)}(\varepsilon)=\int_\varepsilon^t
\mathcal V_{\rm eff}^{(d)}(s)\,ds+R_d(\varepsilon,t),
\label{eq:untied-global-effective-integral}
\end{align}
where
\begin{equation}
\sup_{\varepsilon\le s\le t}
\|R_d(\varepsilon,s)\|_{\rho_+}
\xrightarrow{\mathbb P}0.
\label{eq:untied-global-effective-error}
\end{equation}
By Lemma \ref{lemma:untied-effective-drift-convergence},
\begin{equation}
\sup_{s\in[\varepsilon,t]}
\left\|\mathcal V_{\rm eff}^{(d)}(s)-V_{\rm ext}(\check\mu^{(d)}(s))\right\|_{\rho_+}\xrightarrow{\mathbb P}0.
\label{eq:untied-global-drift-replacement}
\end{equation}
The local Lipschitz estimate from Lemma
\ref{lemma:local-untied-well-posed} implies, on every bounded subset,
\begin{equation}
\|V_{\rm ext}(\mu)-V_{\rm ext}(\nu)\|_{\rho_+}
\le\frac{L}{\rho_+-\rho_-}\|\mu-\nu\|_{\rho_-}.
\label{eq:untied-global-V-Lipschitz}
\end{equation}
Hence
\eqref{eq:untied-global-uniform-subsequence} gives
\begin{equation}
\sup_{s\in[\varepsilon,t]}
\left\|V_{\rm ext}(\check\mu^{(d)}(s))-V_{\rm ext}(\mu^*(s))\right\|_{\rho_+}\longrightarrow0.
\label{eq:untied-global-V-pass-limit}
\end{equation}
Passing to the limit in
\eqref{eq:untied-global-effective-integral} yields
\begin{equation}
\mu^*(t)-\mu^*(\varepsilon)=\int_\varepsilon^t
V_{\rm ext}(\mu^*(s))\,ds.
\label{eq:untied-global-limit-away-zero}
\end{equation}
The effective vector fields \(\mathcal V_{\rm eff}^{(d,n)}\) are uniformly bounded in \(X_{\rho_+}^0\) before the stopping time, as established in the proof of Lemma \ref{lemma:untied-slow-compactness}. Therefore,
letting \(\varepsilon\downarrow0\) in
\eqref{eq:untied-global-limit-away-zero} and using the continuity of \(\mu^*\) gives
\begin{equation}
\mu^*(t)=\bar\mu(0)+\int_0^tV_{\rm ext}(\mu^*(s))\,ds,\qquad
0\le t\le T.
\label{eq:untied-global-limit-integral-equation}
\end{equation}

For every finite \(d\), the empirical covariance belongs to \(\mathfrak Q\) (see the proof of Lemma \ref{lemma:untied-effective-drift-convergence}). Since \(\mathfrak Q\) is closed and convergence in
\(X_{\rho_-}^0\) implies convergence of the second-order coordinates,
\begin{equation}
q(\mu^*(\tau))\in\mathfrak Q,
\qquad
0\le\tau\le T.
\label{eq:untied-global-limit-covariance-cone}
\end{equation}
Next, Lemma \ref{lemma:empirical-balancing} gives a constant \(c_T>0\) such that, with probability tending to one, $t_k^{A,(n)}+t_k^{B,(n)}\ge c_T$ for every trainable \(k\) before the stopping time. Passing to the limit gives
\begin{equation}
t_k^A(\mu^*(\tau))+t_k^B(\mu^*(\tau))
\ge c_T,
\qquad
k\in\mathcal K,\quad0\le\tau\le T.
\label{eq:untied-global-limit-D-lower-bound}
\end{equation}
Finally, Lemma \ref{lemma:untied-slow-compactness} gives $|\check\mu_\alpha^{(d)}(\tau)|
\le C_*^{2|\alpha|}$. Passing to the limit yields
\begin{equation}
|\mu_\alpha^*(\tau)|
\le
C_*^{2|\alpha|}.
\label{eq:untied-global-limit-moment-bound}
\end{equation}
Thus every subsequential limit is admissible. In particular, $\mathscr P(q(\mu^*(\tau)))=q(\mu^*(\tau))$,
so along the limiting trajectory $\widetilde m_{\mathcal K}^\star(\mu^*(\tau))=m_{\mathcal K}^\star(q(\mu^*(\tau)))$.
Hence \eqref{eq:untied-global-limit-integral-equation} is precisely the
original system \eqref{eq:untied-slaving}--\eqref{eq:slow_tau_mu}.

We now prove uniqueness. Suppose that \(\mu_1\) and \(\mu_2\) are two admissible solutions with the same initial condition. Both solve the
extended ODE. By Lemma \ref{lemma:local-untied-well-posed}, if $\mu_1(\tau_0)=\mu_2(\tau_0)$, at some \(\tau_0<T\), then the two trajectories agree on a nontrivial interval to the right of \(\tau_0\). The standard contradiction argument shows that
$\mu_1(\tau)=\mu_2(\tau)$ for $0\le\tau\le T$. Thus the admissible solution is unique. We denote it by
\(\bar\mu\).

It remains to verify that the stopping time \(J_{\rm mac}\) is asymptotically inactive. Recall that \(\mathfrak D\Subset\mathcal M\times\mathcal Q\) is chosen to contain both the compact fast-time set $\mathcal C_{\rm fast} := \left\{ (m_{\mathcal K},q_0): \Phi(m_{\mathcal K},m_{\setminus\mathcal K}(0),q_0) \le \Phi(\widetilde m(0),q_0) \right\}$ and the compact slow trajectory
$\mathcal C_{\rm slow} := \left\{ (m^\star_{\mathcal K}(\bar q(\tau)),\bar q(\tau)): 0\le\tau\le T \right\} $
in its interior. Hence
$\delta:= \operatorname{dist} (\mathcal C_{\rm fast}\cup\mathcal C_{\rm slow}, \partial\mathfrak D)>0$.

By C-tightness (Lemma \ref{lemma:untied-slow-compactness}) and uniqueness of the subsequential limit, the stopped moment trajectory converges uniformly in probability to \(\bar\mu\). In particular,
\begin{equation}
\sup_{0\le n\le N_T\wedge J_{\rm mac}} \|q^{(n)}-\bar q(n\alpha_d/d)\| \xrightarrow{\mathbb P}0.
\end{equation}
Moreover, for every fixed \(T_1>0\), Lemma \ref{lemma:local-mean-gradient-refinement} and the strong convexity in Assumption \ref{assum:untied_convex} imply
\begin{equation}
\sup_{T_1d/\alpha_d\le n\le N_T\wedge J_{\rm mac}} \|m_{\mathcal K}^{(n)} -m^\star_{\mathcal K}(q^{(n)})\| \xrightarrow{\mathbb P}0.
\label{eq:untied-global-mac-tracking}
\end{equation}
Since \(q\mapsto m^\star_{\mathcal K}(q)\) is Lipschitz on the compact subset of \(\mathcal Q\), it follows that
\begin{equation}
\sup_{T_1d/\alpha_d\le n\le N_T\wedge J_{\rm mac}} \left\| (m_{\mathcal K}^{(n)},q^{(n)}) - (m^\star_{\mathcal K}(\bar q(\tau_n)),\bar q(\tau_n)) \right\| \xrightarrow{\mathbb P}0,
\end{equation}
where $\tau_n=\frac{n\alpha_d}{d}$. Therefore, with probability tending to one, the trajectory stays at least \(\delta/2\) away from \(\partial\mathfrak D\) for \(T_1d/\alpha_d\le n\le N_T\).

It remains to control the initial regime and the intermediate regime. Choose a fixed fast-time horizon \(t_{\rm f}>0\) sufficiently large that
$C_{\rm loc}e^{-c_{\rm loc}t_{\rm f}}<\delta/8$,
where \(C_{\rm loc},c_{\rm loc}\) are the constants in Lemma \ref{lemma:coarse-localization-untied}. On \(0\le n\le t_{\rm f}/\alpha_d\), the fast-time convergence and the freezing of the slow coordinates proved in Appendix \ref{app:proof-untied-fast-convergence} imply that the empirical trajectory remains uniformly close to the fast limiting trajectory. Since the latter stays in \(\mathcal C_{\rm fast}\), this part of the trajectory remains at least a $\delta/4$ distance from \(\partial\mathfrak D\) with probability tending to one.

For \(t_{\rm f}/\alpha_d\le n\le T_1d/\alpha_d\), Lemma \ref{lemma:coarse-localization-untied} yields
\begin{equation}
\|m_{\mathcal K}^{(n)} -m^\star_{\mathcal K}(q^{(n)})\| \le C_{\rm loc}e^{-c_{\rm loc}t_{\rm f}} +o_{\mathbb P}(1)
\end{equation}
uniformly before \(J_{\rm mac}\). Together with the uniform convergence of \(q^{(n)}\) to \(\bar q(n\alpha_d/d)\), this places the empirical state in a \(\delta/4\)-neighborhood of \(\mathcal C_{\rm slow}\) throughout the intermediate regime. Combining the three time intervals gives
\begin{equation}
\mathbb P(J_{\rm mac}\le N_T)\longrightarrow0.
\end{equation}
This completes the proof.
\end{proof}

\subsubsection{Final proof of Theorem \ref{theo:untied-SGD}}
The fast-time convergence \eqref{eq:untied-fast-convergence} was established in Appendix \ref{app:proof-untied-fast-convergence}. It remains to prove the convergence on the slow time scale.

Fix $0<T_1<T_2\le T$. Choose $C_*^2<\rho_-<\rho_+$,
where \(C_*\) is the spectral constant appearing in Lemma \ref{lemma:spectral_bounds_untied}.

By Lemma \ref{lemma:untied-slow-compactness}, the auxiliary  moment trajectories $\{\check\mu^{(d)}\}_{d\ge1}$ are \(C\)-tight in $D([0,T_2];X_{\rho_-}^0)$. Lemma \ref{lemma:untied-global-slow-dynamics} shows that every
subsequential weak limit is almost surely equal to the same deterministic admissible solution
$\bar\mu\in C([0,T_2];X_{\rho_-}^0)$ of the slow system \eqref{eq:untied-slaving}--\eqref{eq:slow_tau_mu}. Consequently,
\begin{equation}
\check\mu^{(d)}\to\bar\mu
\qquad\text{in }D([0,T_2];X_{\rho_-}^0).
\label{eq:untied-final-auxiliary-weak-convergence}
\end{equation}
Similarly to the tied case, this implies convergence in the uniform topology. Hence
\begin{equation}
\sup_{\tau\in[0,T_2]}
\left\|\check\mu^{(d)}(\tau)-\bar\mu(\tau)\right\|_{\rho_-}\xrightarrow{\mathbb P}0.
\label{eq:untied-final-auxiliary-uniform}
\end{equation}
In particular, for every fixed word \(\alpha\),
\begin{equation}
\sup_{\tau\in[0,T_2]}
\left|\check\mu_\alpha^{(d)}(\tau)-\bar\mu_\alpha(\tau)\right|\le\rho_-^{|\alpha|}
\sup_{\tau\in[0,T_2]}\left\|\check\mu^{(d)}(\tau)-\bar\mu(\tau)\right\|_{\rho_-},
\end{equation}
and therefore
\begin{equation}
\sup_{\tau\in[0,T_2]}\left|\check\mu_\alpha^{(d)}(\tau)-\bar\mu_\alpha(\tau)\right|\xrightarrow{\mathbb P}0.
\label{eq:untied-final-auxiliary-coordinate}
\end{equation}
We now transfer this convergence to the SGD trajectory. Lemma \ref{lemma:untied-global-slow-dynamics} gives
$\mathbb P(J_{\rm mac}\le N_{T_2})\longrightarrow0$, where $N_{T_2}:=
\left\lfloor\frac{T_2d}{\alpha_d}\right\rfloor$,
while Lemma \ref{lemma:spectral_bounds_untied} gives
$\mathbb P\left(J_{\rm sp}\le J_{\rm mac}\wedge N_{T_2}\right)\longrightarrow0$.
Thus the event
\begin{equation}
\mathcal G_d:=\left\{J_{\rm mac}>N_{T_2},\ 
J_{\rm sp}>N_{T_2}\right\}
\label{eq:untied-final-good-event}
\end{equation}
satisfies $\mathbb P(\mathcal G_d)\longrightarrow1$.
On \(\mathcal G_d\), the auxiliary trajectory coincides with the original SGD trajectory throughout \([0,T_2]\). Hence, for every fixed \(\alpha\) and every \(\varepsilon>0\),
\begin{align}
\mathbb P\left(\sup_{\tau\in[0,T_2]}
\left|\tilde\mu_\alpha^{(d)}(\tau)-\bar\mu_\alpha(\tau)\right|>\varepsilon\right)
\le\mathbb P(\mathcal G_d^c)+\mathbb P\left(
\sup_{\tau\in[0,T_2]}\left|\check\mu_\alpha^{(d)}(\tau)-\bar\mu_\alpha(\tau)\right|>\varepsilon\right)\longrightarrow0.
\end{align}
This proves \eqref{eq:untied-slow-moment-convergence}.

It remains to establish the convergence of the mean variables on \([T_1,T_2]\). \eqref{eq:untied-slow-moment-convergence} implies
\begin{equation}
\sup_{\tau\in[0,T_2]}
\left(\|q^{(1,d)}(\tau)-\bar q^{(1)}(\tau)\|_F+\|q^{(2,d)}(\tau)-\bar q^{(2)}(\tau)\|_F
\right)\xrightarrow{\mathbb P}0.
\label{eq:untied-final-q-convergence}
\end{equation}
On the other hand, recall that (see \eqref{eq:untied-global-mac-tracking})
\begin{equation}
\sup_{\substack{\lceil T_1d/\alpha_d\rceil
\le n\le\lfloor T_2d/\alpha_d\rfloor}}
\left\|m_{\mathcal K}^{(n)}-m_{\mathcal K}^\star(q^{(n)})\right\|
\xrightarrow{\mathbb P}0.
\label{eq:untied-final-m-tracking}
\end{equation}
By Assumption \ref{assum:untied_convex} and the implicit function theorem, the map $q\longmapsto m_{\mathcal K}^\star(q)$ is Lipschitz on a compact neighborhood of the limiting covariance trajectory. Combining this with
\eqref{eq:untied-final-q-convergence} gives
\begin{equation}
\sup_{\tau\in[T_1,T_2]}\left\|m_{\mathcal K}^\star(q^{(d)}(\tau))-m_{\mathcal K}^\star(\bar q(\tau))\right\|\xrightarrow{\mathbb P}0.
\label{eq:untied-final-mstar-convergence}
\end{equation}
Since $\bar m_{\mathcal K}(\tau)=m_{\mathcal K}^\star(\bar q(\tau))$, \eqref{eq:untied-final-m-tracking} and \eqref{eq:untied-final-mstar-convergence} imply
\begin{equation}
\sup_{\tau\in[T_1,T_2]}\left\|\tilde m^{(d)}(\tau d)-\bar m(\tau)\right\|_\infty\xrightarrow{\mathbb P}0.
\end{equation}
This proves
\eqref{eq:untied-slow-m-convergence} and completes the proof of Theorem \ref{theo:untied-SGD}.

\subsection{Truncation to finite orders}
\label{app:untied-truncation}

We now show that the infinite-dimensional dynamics can be approximated by a finite-order truncation. A truncated moment sequence need not correspond to actual matrices, and in particular its covariance
parameters need not belong to the admissible cone \(\mathfrak Q\). For this reason, the truncated system is defined using the extended vector field \(V_{\rm ext}\) introduced in \eqref{eq:untied-extended-vector-field}.

For \(M\ge2\), define the projection
\begin{equation}
[\Pi_M\mu]_\alpha:=\begin{cases}
\mu_\alpha,&|\alpha|\le M,\\
0,&|\alpha|>M.
\end{cases}
\label{eq:untied-truncation-projection}
\end{equation}
Let $Y_M:=\left\{\mu:\mu_\alpha=0\text{ for every }|\alpha|>M\right\}$. Recall that \(V_{\rm ext}\) is well defined on the open set
\begin{equation}
\mathcal U_s=\left\{\mu\in X_s:\mathscr P(q(\mu))\in\mathcal Q,\quad
\min_{k\in\mathcal K}\left(t_k^A(\mu)+t_k^B(\mu)\right)>0\right\},
\end{equation}
where \(\mathscr P\) denotes the projection onto the covariance cone \(\mathfrak Q\).

We first prove a stability property of the extended slow flow \eqref{eq:untied-extended-slow-system}.

\begin{lemma}
\label{lemma:untied-truncation-stability}
Let \(\bar\mu\) be the admissible slow trajectory in
Lemma \ref{lemma:untied-global-slow-dynamics}, and fix \(T>0\). Let \(C_*\) be the spectral constant in Lemma \ref{lemma:spectral_bounds_untied} and set
$R:=C_*^2$. Then there exist constants
$R<s_-<s_+<\infty$, $\varepsilon_T>0$ and 
$C_T<\infty$ such that the following holds.

If \(\nu_0\in X_{s_-}^0\) satisfies
\begin{equation}
\|\nu_0-\bar\mu(0)\|_{s_-}\le\varepsilon_T,
\label{eq:untied-truncation-close-initial}
\end{equation}
then the extended ODE
\begin{equation}
\frac{d\nu}{d\tau}=V_{\rm ext}(\nu),
\qquad
\nu(0)=\nu_0,
\label{eq:untied-truncation-perturbed-flow}
\end{equation}
has a unique solution on \([0,T]\), remains in $\mathcal U_s$, and satisfies
\begin{equation}
\sup_{0\le\tau\le T}\|\nu(\tau)-\bar\mu(\tau)\|_{s_+}\le C_T\|\nu_0-\bar\mu(0)\|_{s_-}.
\label{eq:untied-truncation-stability-bound}
\end{equation}
\end{lemma}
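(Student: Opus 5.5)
The plan is to run a perturbative Ovsyannikov/Gr\"onwall argument in a nested scale of spaces, along the lines of the tied truncation proof (Corollary \ref{cor:truncation-tied}). That proof absorbed the loss of one degree through the weight decay $\gamma$. Here I will use the Banach scale directly with a shrinking radius, $s(\tau)=s_- + (s_+-s_-)\tau/T$, which is the usual Cauchy--Kovalevskaya device. Set $R=C_*^2$ and recall from Lemma \ref{lemma:untied-global-slow-dynamics} that $|\bar\mu_\alpha(\tau)|\le R^{|\alpha|}$. Consequently $\bar\mu(\tau)\in X_s^0$ for every $s>R$. Moreover, $\bar\mu(\tau)$ stays in a compact subset of $\mathcal U_s$: the slaved covariance $\bar q(\tau)$ lies in a compact subset of $\mathcal Q$ by Assumption \ref{assum:untied_convex}.3, and $\bar t^A_k+\bar t^B_k\ge c_T>0$ by \eqref{eq:untied-global-limit-D-lower-bound}.

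The first step is a uniform neighbourhood. Since $q$ and $t^A,t^B$ are finitely many coordinates, each Lipschitz in $\|\cdot\|_{s}$ with constant $s^2$ or $s$, there is $r_T>0$ such that every $\nu$ with $\sup_\tau\|\nu-\bar\mu(\tau)\|_{s_+}\le r_T$ satisfies two things. First, $\mathscr P(q(\nu))$ lies in a fixed compact $\mathcal Q'\Subset\mathcal Q$. Second, $t_k^A+t_k^B\ge c_T/2$. On this tube, the argument of Lemma \ref{lemma:local-untied-well-posed} gives uniform bounds on $\widetilde m^\star$, $a^{(r)}_{kl}$, $\psi^{\rm ext}_k$ and $\beta^{\rm ext}_k$, together with uniform Lipschitz constants. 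These rely on the Lipschitz bound \eqref{eq:untied-mstar-local-lipschitz}, the non-expansiveness \eqref{eq:untied-projection-lipschitz}, and the denominator lower bound. As a result, the Cauchy estimate
\[
\|V_{\rm ext}(\mu)-V_{\rm ext}(\nu)\|_{s'}\le \frac{L\, s'}{s'-s}\|\mu-\nu\|_s,\qquad s_-\le s<s'\le s_+,
\]
holds with a single constant $L$, for all $\mu,\nu$ in the tube. The weight-decay term and the substitution operators $\mathcal T^{\rm bias},\mathcal T^{\rm diff}$ each raise the degree by at most one with bounded multiplicity, which is exactly the computation in Lemma \ref{lemma:local_existence}.

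The second step is the error estimate. Write $e=\nu-\bar\mu$. Existence and uniqueness of $\nu$ on a short interval come from Lemma \ref{lemma:local-untied-well-posed}. Let $\tau^*$ be the first exit time from the tube. For $\tau<\tau^*$ I consider the weighted quantity
\[
E(\tau):=\sup_{0\le\sigma\le\tau}\ \sup_{s_-\le s< s(\sigma)'}\ \|e(\sigma)\|_s\,\bigl(1-\tfrac{\sigma}{\sigma_s}\bigr),
\]
which is Nirenberg--Nishida's norm for the abstract Cauchy--Kovalevskaya theorem. Applying the integral identity $e(\tau)=e(0)+\int_0^\tau[V_{\rm ext}(\nu)-V_{\rm ext}(\bar\mu)]$ together with the Cauchy estimate yields $E(\tau)\le \|e(0)\|_{s_-}+c\,L\,T/(s_+-s_-)\,E(\tau)$. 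Choosing $s_+-s_-$ large relative to $LT$ makes the coefficient less than $1/2$; alternatively, if $LT$ is large, I subdivide $[0,T]$ into $O(1)$ subintervals, each with a fresh pair of radii. This gives $\sup_\tau\|e(\tau)\|_{s_+}\le C_T\|e(0)\|_{s_-}$. Taking $\varepsilon_T$ with $C_T\varepsilon_T<r_T/2$ then shows by continuity that $\tau^*>T$. The same contraction inequality, applied to two solutions, gives uniqueness, and iterating the local existence step across the subintervals gives existence on $[0,T]$.

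The main obstacle is the bookkeeping of the scale loss across subintervals. Each restart consumes radius, so the final $s_+$ must be chosen after the number of pieces is fixed. That number depends on $L$, which in turn depends on the tube radius, and the tube must be measured in the largest norm in use. I will break this circularity by defining the tube in the $s_+$ norm for a provisional large $s_+$. The constant $L$ is then uniform for all $s\in[s_-,s_+]$, since $\sup_w w(s/s')^w\le s'/(s'-s)$ and $s'\le s_+$. With $L$ fixed, I pick the subdivision so that each piece costs at most $(s_+-s_-)/N$ in radius.
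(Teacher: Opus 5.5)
Your route is essentially the paper's: Ovsyannikov well-posedness on a tube around $\bar\mu([0,T])$, followed by finitely many restarts to reach the horizon. The step meant to reach an arbitrary fixed $T$ fails.

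\textbf{Why the restarts cannot reach $T$.} On a piece $[\tau_j,\tau_{j+1}]$ with radii $s_j<s_{j+1}$, your estimate $\|V_{\rm ext}(\mu)-V_{\rm ext}(\nu)\|_{s'}\le\frac{Ls'}{s'-s}\|\mu-\nu\|_s$ closes the Nirenberg--Nishida contraction only if
\[
\tau_{j+1}-\tau_j\lesssim \frac{s_{j+1}-s_j}{Ls_{j+1}}\le L^{-1}\log\frac{s_{j+1}}{s_j}.
\]
Summing gives $T\lesssim L^{-1}\log(s_+/s_-)$, whatever the subdivision. If, as you propose, you bound $s'\le s_+$ on every piece, the total is even $\lesssim (s_+-s_-)/(Ls_+)<1/L$, independent of $N$.

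Nor can $L$ stay fixed while you enlarge $s_+$. The diff substitutions $\mathcal T^{\rm diff}_{i,l,r}$ raise the degree by one, so at degree $w$ they contribute $C_a\, s\, w(s/s')^w\|\mu-\nu\|_s\le C_a s\frac{s'}{s'-s}\|\mu-\nu\|_s$, where $C_a$ bounds the $a^{(r)}_{kl}$. This is the factor $ss'$ in Lemma~\ref{lemma:local_existence}. A constant uniform on $[s_-,s_+]$ therefore satisfies $L\gtrsim C_a s_+$, and the reachable horizon is at most of order $\frac{\log(s_+/s_-)}{C_as_+}\le\frac{1}{eC_as_-}<\frac{1}{eC_aR}$. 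In addition, your $L$ counts the weight decay $-2\gamma|\alpha|_{\mathcal K}\mu_\alpha$ as a loss, so a larger $\gamma$ makes the horizon shorter.

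\textbf{The obstruction is genuine.} Take the model hierarchy with the same principal part, $\dot e_w=-c\,w\,e_{w+1}$ (the moment hierarchy of $\dot S=-cS^2$, with $c$ a combination of the $a^{(r)}_{kl}$). Data $e_w(0)=\epsilon\sigma^w$ with $|\sigma|<s_-$ evolve as $\epsilon\bigl(\sigma/(1+c\sigma\tau)\bigr)^w$. This blows up at $\tau=1/|c\sigma|$ when $c\sigma<0$, for every $\epsilon>0$. Adding $-2\gamma w e_w$ prevents blow-up exactly when $|c\sigma|\le 2\gamma$. So the dissipation must enter the argument; scale-of-spaces bookkeeping alone cannot reach an arbitrary $T$. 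The paper's sketch covers $[0,T]$ by finitely many Ovsyannikov intervals in the same way, with a Cauchy constant $L_T$ written as if uniform in $0<s<s'$, and is open to the same objection.

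\textbf{The missing idea.} Treat the weight decay as a gain via variation of constants, as in the proof of Corollary~\ref{cor:truncation-tied}. Write $e=\nu-\bar\mu$ and $r_\alpha=|\alpha|_{\mathcal K}$, and integrate $-2\gamma r_\alpha e_\alpha$ exactly. Every remaining term at word $\alpha$ is a sum over the $r_\alpha$ trainable positions, so $r_\alpha\int_0^\tau e^{-2\gamma r_\alpha(\tau-u)}\,du\le\frac{1}{2\gamma}$ cancels that factor.
\begin{itemize}
\item The terms $\beta^{\rm ext}(\nu)\mathcal T^{\rm bias}_i(e)$ and $a^{(r)}_{k_il}(\nu)\mathcal T^{\rm diff}_{i,l,r}(e)$ contribute $\kappa\sup_{u\le\tau}\|e(u)\|_s$ in one fixed space $X_s$, $s>R$, with $\kappa\lesssim(C_\beta+C_as)/\gamma$. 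Near $\bar\mu$ you may take $C_\beta$ independent of $d_T$, since $|\bar\beta_k|\le C$ by the realizability bound $|\psi_k|\le CD_k$.
\item The coefficient-difference terms $[\beta^{\rm ext}(\nu)-\beta^{\rm ext}(\bar\mu)]\mathcal T^{\rm bias}_i(\bar\mu)$ and $[a^{(r)}(\nu)-a^{(r)}(\bar\mu)]\mathcal T^{\rm diff}_{i,l,r}(\bar\mu)$ depend on finitely many low-order coordinates of $e$ and carry $|\bar\mu_\alpha|\le R^{|\alpha|}$. Since $\sup_w w(R/s)^w<\infty$, they are bounded on $X_s$ and Gr\"onwall handles them. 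The possibly large Lipschitz constant of $\beta^{\rm ext}$ (of order $1/d_T$) then only enters $C_T$.
\end{itemize}
With $\gamma$ large enough that $\kappa<1$ (Assumption~\ref{assum:untied}.5, used as Assumption~\ref{assum:tied}.4 is in the tied case), this gives $\sup_{[0,T]}\|e\|_{s}\le C_T\|e(0)\|_{s}$ with no loss of radius. That implies \eqref{eq:untied-truncation-stability-bound} for any $s_+>s_-:=s$, since $\|\cdot\|_{s_+}\le\|\cdot\|_{s_-}$. The same Duhamel map, a contraction in a time-weighted norm on the tube, also gives existence and uniqueness on $[0,T]$ directly, so no Ovsyannikov restarts are needed.
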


\begin{proof}
By Lemma \ref{lemma:untied-global-slow-dynamics},
\begin{equation}
|\bar\mu_\alpha(\tau)|\le R^{|\alpha|},
\qquad0\le\tau\le T.\label{eq:untied-truncation-true-moment-bound}
\end{equation}
Moreover, admissibility and compactness of the time interval imply
\begin{equation}
d_T:=\min_{\substack{0\le\tau\le T\\k\in\mathcal K}}
\left(\bar t_k^A(\tau)+\bar t_k^B(\tau)\right)>0.
\label{eq:untied-truncation-D-lower}
\end{equation}
The covariance trajectory
\begin{equation}
\mathcal K_q:=\left\{\bar q(\tau):0\le\tau\le T\right\}
\label{eq:untied-truncation-q-compact}
\end{equation}
is a compact subset of \(\mathcal Q\cap\mathfrak Q\). Since \(\mathcal Q\) is open, there exists
\(\delta_T>0\) such that
\begin{equation}
\left\{q'\in\mathfrak Q:\operatorname{dist}(q',\mathcal K_q)\le2\delta_T\right\}\Subset\mathcal Q.
\label{eq:untied-truncation-q-tube}
\end{equation}
Because \(\mathscr P\) is non-expansive, whenever
$\|q-\bar q(\tau)\|_F<\delta_T$ for some \(\tau\in[0,T]\), we also have
\begin{equation}
\|\mathscr P(q)-\bar q(\tau)\|_F<\delta_T,
\end{equation}
and hence \(\mathscr P(q)\in\mathcal Q\).

Choose any \(s>R\). By \eqref{eq:untied-truncation-true-moment-bound}, the trajectory \(\bar\mu([0,T])\) is bounded in \(X_s^0\).
We may therefore choose an open neighborhood
\(\mathcal N_T\) of this trajectory such that, for every \(\mu\in\mathcal N_T\),
\begin{equation}
\mathscr P(q(\mu))\in\mathcal Q,\qquad\min_{k\in\mathcal K}\left(t_k^A(\mu)+t_k^B(\mu)\right)\ge\frac{d_T}{2}.
\label{eq:untied-truncation-neighborhood}
\end{equation}
On \(\mathcal N_T\), the maps
\begin{equation}
\mu\longmapsto
m_{\mathcal K}^\star(\mathscr P(q(\mu))),
\qquad
\mu\longmapsto a_{kl}^{(r)}(\mu),
\qquad
\mu\longmapsto\beta_k^{\rm ext}(\mu)
\label{eq:untied-truncation-coefficient-maps}
\end{equation}
are uniformly bounded and locally Lipschitz. In particular, the denominator in \(\beta_k^{\rm ext}\) is uniformly bounded below by \(d_T/2\).

The same estimates used in Lemma \ref{lemma:local-untied-well-posed} therefore imply that for every bounded subset of \(\mathcal N_T\) and every \(0<s<s'\), there exists \(L_T<\infty\) such that
\begin{equation}
\|V_{\rm ext}(\mu)-V_{\rm ext}(\nu)\|_{s'}
\le\frac{L_T}{s'-s}\|\mu-\nu\|_s.
\label{eq:untied-truncation-Cauchy-estimate}
\end{equation}

The Ovsyannikov theorem, together with
\eqref{eq:untied-truncation-Cauchy-estimate}, gives local existence, uniqueness, and locally Lipschitz dependence on the initial condition in the Banach scale. Since \(\bar\mu([0,T])\) is compact and remains a positive distance from the boundary specified in \eqref{eq:untied-truncation-neighborhood}, the interval \([0,T]\) can be covered by finitely many such local existence intervals. Composing the corresponding local
stability estimates yields constants \(\varepsilon_T>0\) and \(C_T<\infty\) such that
\eqref{eq:untied-truncation-close-initial} implies existence on the whole interval \([0,T]\) and
\eqref{eq:untied-truncation-stability-bound}.

\eqref{eq:untied-truncation-stability-bound} guarantees that the perturbed trajectory remains inside \(\mathcal N_T\). Hence it cannot leave the domain of \(V_{\rm ext}\), which closes the continuation argument.
\end{proof}

We now define the degree-\(M\) truncated slow dynamics by
\begin{equation}
\frac{d}{d\tau}\mu_\alpha^{(M)}(\tau)=[V_{\rm ext}(\mu^{(M)}(\tau))]_\alpha,
\qquad|\alpha|\le M,
\label{eq:truncated-untied}
\end{equation}
with the boundary condition
\begin{equation}
\mu_\alpha^{(M)}(\tau)\equiv0,\qquad|\alpha|>M,
\label{eq:truncated-untied-boundary}
\end{equation}
and initial condition
\begin{equation}
\mu_\alpha^{(M)}(0)=
\begin{cases}
\bar\mu_\alpha(0),&|\alpha|\le M,\\
0,&|\alpha|>M.
\end{cases}
\label{eq:truncated-untied-initial}
\end{equation}
Notice that \(Y_M\) is invariant under \(V_{\rm ext}\). Hence \eqref{eq:truncated-untied}--\eqref{eq:truncated-untied-initial}
is exactly the restriction of the extended infinite-dimensional equation to the invariant finite-dimensional subspace \(Y_M\). The following corollary proves that the truncated system can approximate the original system arbitrarily well.

\begin{corollary}
\label{cor:truncation-untied}
Under the conditions of Theorem \ref{theo:untied-SGD}, fix \(T>0\). Let \(R=C_*^2\), with \(C_*\) the constant in Lemma \ref{lemma:spectral_bounds_untied}. There exist constants $R<s_-<s_+<\infty$, $C_T<\infty$, and \(M_0(T)<\infty\), all independent of \(M\), such that for every \(M\ge M_0(T)\), the truncated system
\eqref{eq:truncated-untied}--\eqref{eq:truncated-untied-initial} has a unique solution on \([0,T]\). Moreover, for every finite word
\(\alpha\),
\begin{equation}
\sup_{0\le\tau\le T}\left|\mu_\alpha^{(M)}(\tau)-\bar\mu_\alpha(\tau)\right|\le
C_T\left(\frac{R}{s_-}\right)^{M+1}s_+^{|\alpha|}.
\label{eq:untied-truncation-coordinate-bound}
\end{equation}
\end{corollary}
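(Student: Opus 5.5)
The plan is to treat the truncated system as a perturbation of the initial condition of the extended flow and then apply Lemma \ref{lemma:untied-truncation-stability}. This replaces the direct Duhamel/weight-decay contraction used in Corollary \ref{cor:truncation-tied}. We cannot use that contraction here: the coefficient $\beta^{\rm ext}$ depends on $\mu$ through $m^\star(\mathscr P(q))$ and through the denominator $t^A+t^B$, so no uniform-in-$\gamma$ estimate is available.

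\textbf{Step 1: reduction to a perturbed initial condition.} First I verify that $Y_M$ is invariant under $V_{\rm ext}$. Every term in $[V_{\rm ext}(\mu)]_\alpha$ is a linear combination of moments of degree $|\alpha|$ (the bias and weight-decay terms) or $|\alpha|+1$ (the diffusion terms), with scalar coefficients depending only on finitely many low-order coordinates. These coordinates are $q$, $t^A$, $t^B$ and the degree-two moments entering $\psi^{\rm ext}$, and all of them are retained once $M\ge2$. Hence $[V_{\rm ext}(\mu)]_\alpha=0$ whenever $\mu\in Y_M$ and $|\alpha|>M$. So $\mu^{(M)}$ is exactly the solution of $\frac{d\nu}{d\tau}=V_{\rm ext}(\nu)$ with $\nu(0)=\nu_0:=\Pi_M\bar\mu(0)$. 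Uniqueness for the finite-dimensional system on $Y_M$ follows from local Lipschitzness of $V_{\rm ext}$ restricted to $Y_M$, which Lemma \ref{lemma:local-untied-well-posed} supplies as long as we stay in $\mathcal U_s$.

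\textbf{Step 2: size of the perturbation.} By Lemma \ref{lemma:untied-global-slow-dynamics}, $|\bar\mu_\alpha(0)|\le R^{|\alpha|}$. Taking $s_-$ from Lemma \ref{lemma:untied-truncation-stability} with $s_->R$, I get
\begin{equation*}
\|\nu_0-\bar\mu(0)\|_{s_-}=\sup_{|\alpha|>M}\frac{|\bar\mu_\alpha(0)|}{s_-^{|\alpha|}}\le\Big(\frac{R}{s_-}\Big)^{M+1}.
\end{equation*}
Also $\nu_0\in X^0_{s_-}$ trivially, since it has finite support. I choose $M_0(T)$ such that $(R/s_-)^{M_0+1}\le\varepsilon_T$.

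\textbf{Step 3: apply stability and convert to coordinates.} For $M\ge M_0(T)$, Lemma \ref{lemma:untied-truncation-stability} gives existence on $[0,T]$, confinement to $\mathcal U_s$ (so the extended field is well defined along the truncated trajectory), uniqueness, and
\begin{equation*}
\sup_{\tau\le T}\|\mu^{(M)}(\tau)-\bar\mu(\tau)\|_{s_+}\le C_T(R/s_-)^{M+1}.
\end{equation*}
The coordinatewise bound \eqref{eq:untied-truncation-coordinate-bound} then follows from $|\mu_\alpha|\le s_+^{|\alpha|}\|\mu\|_{s_+}$.

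The main obstacle is hidden in Lemma \ref{lemma:untied-truncation-stability}, which I am allowed to assume. The remaining delicate point is the consistency of constants: $s_\pm$ and $C_T$ must be fixed by the lemma before $M$ is chosen, so that they are independent of $M$, and $M_0$ must depend only on $\varepsilon_T$ and $R/s_-$. A second point is that the truncated trajectory may leave the admissible cone $\mathfrak Q$. This is harmless precisely because the lemma is stated for the extended field with the projection $\mathscr P$; its conclusion $\mu^{(M)}(\tau)\in\mathcal U_s$ guarantees that $m^\star(\mathscr P(q))$ is defined and that $t^A+t^B$ stays away from zero.
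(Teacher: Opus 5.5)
Your proposal is correct and follows essentially the paper's proof: invariance of $Y_M$ under $V_{\rm ext}$, the bound $\|\Pi_M\bar\mu(0)-\bar\mu(0)\|_{s_-}\le(R/s_-)^{M+1}$, an application of Lemma \ref{lemma:untied-truncation-stability} with $\nu_0=\Pi_M\bar\mu(0)$, and conversion to coordinates through the factor $s_+^{|\alpha|}$. The only difference is cosmetic. The paper gets existence by running the extended flow from $\Pi_M\bar\mu(0)$ and using invariance to keep it in $Y_M$. This avoids the implicit continuation step you need when you start from a possibly short-lived truncated solution and identify it with the extended one.
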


\begin{proof}
For \(M\ge2\), the projection \(\Pi_M\) leaves all first- and second-order moments unchanged:
$q(\Pi_M\bar\mu(0))=q(\bar\mu(0))$ and
$t_k^A(\Pi_M\bar\mu(0))+t_k^B(\Pi_M\bar\mu(0))=\bar t_k^A(0)+\bar t_k^B(0)>0$. Hence \(\Pi_M\bar\mu(0)\) belongs to $\mathcal{U}_s$.

By
\eqref{eq:untied-truncation-true-moment-bound},
\begin{align}
\left\|\Pi_M\bar\mu(0)-\bar\mu(0)\right\|_{s_-}=\sup_{w>M}\sup_{|\alpha|=w}\frac{|\bar\mu_\alpha(0)|}{s_-^w}\le\sup_{w>M}\left(\frac{R}{s_-}\right)^w=\left(\frac{R}{s_-}\right)^{M+1}.
\label{eq:untied-truncation-initial-error}
\end{align}
Since \(R<s_-\), the right-hand side converges exponentially to zero. Therefore, for all sufficiently large \(M\),
\begin{equation}
\left\|\Pi_M\bar\mu(0)-\bar\mu(0)\right\|_{s_-}\le\varepsilon_T,
\end{equation}
where \(\varepsilon_T\) is defined in Lemma \ref{lemma:untied-truncation-stability}.

Apply Lemma \ref{lemma:untied-truncation-stability} with $\nu_0=\Pi_M\bar\mu(0)$.
The resulting solution of the extended ODE exists uniquely on \([0,T]\), and
\begin{equation}
\sup_{0\le\tau\le T}\|\nu(\tau)-\bar\mu(\tau)\|_{s_+}\le C_T\left(\frac{R}{s_-}\right)^{M+1}.
\label{eq:untied-truncation-Xs-bound}
\end{equation}
Since \(\nu_0\in Y_M\) and \(Y_M\) is invariant, we have $\nu(\tau)\in Y_M$ for $0\le\tau\le T$. Consequently, \(\nu\) is precisely the solution
\(\mu^{(M)}\) of the truncated system
\eqref{eq:truncated-untied}--\eqref{eq:truncated-untied-initial}.

Finally, for every word \(\alpha\),
\begin{align}
\left|\mu_\alpha^{(M)}(\tau)-\bar\mu_\alpha(\tau)
\right|\le s_+^{|\alpha|}
\left\|\mu^{(M)}(\tau)-\bar\mu(\tau)\right\|_{s_+}.
\end{align}
Combining this with
\eqref{eq:untied-truncation-Xs-bound} proves
\eqref{eq:untied-truncation-coordinate-bound}.
\end{proof}

The slaved mean variables can be reconstructed from the truncated moments by
\begin{equation}
m_{\mathcal K}^{(M)}(\tau):=m_{\mathcal K}^\star
\left(\mathscr P(q(\mu^{(M)}(\tau)))\right),
\qquad
m_{\setminus\mathcal K}^{(M)}(\tau)=m_{\setminus\mathcal K}(0).
\label{eq:untied-truncation-slaved-mean}
\end{equation}
Since \(m_{\mathcal K}^\star\) is Lipschitz on the compact covariance neighborhood in \eqref{eq:untied-truncation-q-tube}, and since \(\mathscr P\) is non-expansive, \eqref{eq:untied-truncation-Xs-bound} also gives
\begin{equation}
\sup_{0\le\tau\le T}\left\|m_{\mathcal K}^{(M)}(\tau)-\bar m_{\mathcal K}(\tau)\right\|\le
C_T'\left(\frac{R}{s_-}\right)^{M+1}
\label{eq:untied-truncation-m-bound}
\end{equation}
for some \(C_T'<\infty\).

\subsection{Proof of Corollary \ref{cor:untied_weak_recovery}}
\label{app:proof_untied_weak_recovery}
We first prove part (i). By Theorem \ref{theo:untied-SGD}, the fast dynamics move the mean variable to $m_1^\star(\bar q^{(1)}(0),\bar q^{(2)}(0))$, while the normalized moments remain asymptotically frozen.
The slow trajectory therefore starts from 
$\bar{\mathbf q}_{12}(0)=0$ with the mean variables \(\bar m^\star\).

Set $D_1(0):=\bar t_1^A(0)+\bar t_1^B(0)>0$ and write
\begin{equation}
a_{12}^{(r)}:=\left[\nabla_{q^{(r)}}\Phi\left(\bar m^\star,\bar q^{(1)}(0),\bar q^{(2)}(0)\right)\right]_{12},
\qquad r=1,2.
\end{equation}
Thus $g_{12}^\star=(a_{12}^{(1)},a_{12}^{(2)})^T$.

Consider first $\bar q_{12}^{(1)}=\bar\mu_{(M_1,M_2^T)}$.
Since only the first index is trainable, the slow dynamics
\eqref{eq:slow_tau_mu} gives
\begin{align}
\frac{d}{d\tau}\bar q_{12}^{(1)}=-\bar\beta_1\bar\mu_{(A_1+B_1,M_2^T)}-2\sum_{l=1}^2\sum_{r=1}^2a_{1l}^{(r)}\bar\mu_{\left(M_l^{(r)}B_1+A_1M_l^{(r)},M_2^T\right)}-2\gamma\bar q_{12}^{(1)}.
\label{eq:untied-q12-first-dynamics}
\end{align}
At \(\tau=0\), Assumption \ref{assum:untied_initialization}
implies freeness of the student and teacher algebras. Moreover, $\lim_{d\to\infty}\frac1d\Tr M_2=0$. Hence
\begin{equation}
\bar\mu_{(A_1+B_1,M_2^T)}(0)=0,
\end{equation}
and all the \(l=1\) terms in
\eqref{eq:untied-q12-first-dynamics} vanish.

For the \(l=2\) terms, asymptotic freeness gives
\begin{align}
\bar\mu_{(M_2B_1+A_1M_2,M_2^T)}(0)&=D_1(0)\bar q_{22}^{(1)},
\\
\bar\mu_{(M_2^TB_1+A_1M_2^T,M_2^T)}(0)&=D_1(0)\bar q_{22}^{(2)}.
\end{align}
Therefore
\begin{equation}
\left.\frac{d}{d\tau}\bar q_{12}^{(1)}\right|_{\tau=0}=-2D_1(0)\left(a_{12}^{(1)}\bar q_{22}^{(1)}+a_{12}^{(2)}\bar q_{22}^{(2)}\right).
\label{eq:untied-q12-first-initial}
\end{equation}
Similarly, applying \eqref{eq:slow_tau_mu} to
$\bar q_{12}^{(2)}=\bar\mu_{(M_1,M_2)}$
gives
\begin{equation}
\left.\frac{d}{d\tau}\bar q_{12}^{(2)}\right|_{\tau=0}=-2D_1(0)\left(a_{12}^{(1)}\bar q_{22}^{(2)}+a_{12}^{(2)}\bar q_{22}^{(1)}\right).
\label{eq:untied-q12-second-initial}
\end{equation}
Combining \eqref{eq:untied-q12-first-initial} and
\eqref{eq:untied-q12-second-initial},
\begin{equation}
\left.\frac{d}{d\tau}\bar{\mathbf q}_{12}(\tau)
\right|_{\tau=0}=-2D_1(0)\mathsf Q_2g_{12}^\star.
\label{eq:untied-q12-vector-initial}
\end{equation}
By \eqref{eq:untied-weak-recovery-condition}, the right-hand side is nonzero. Define
\begin{equation}
v_0:=\left\|\left.\frac{d}{d\tau}\bar{\mathbf q}_{12}(\tau)\right|_{\tau=0}\right\|_\infty>0.
\end{equation}
The limiting slow trajectory is continuously differentiable. Hence, there exists an index \(r_0\in\{1,2\}\) and $T_0>0$ such that
\begin{equation}
\left|\frac{d}{d\tau}\bar q_{12}^{(r_0)}(\tau)\right|\ge\frac{v_0}{2},\qquad0\le\tau\le T_0.
\end{equation}
In particular, by continuity \(\frac{d}{d\tau}\bar q_{12}^{(r_0)}(\tau)\) has the same sign as \(\frac{d}{d\tau}\bar q_{12}^{(r_0)}(0)\) throughout \([0,T_0]\). Since \(\bar q_{12}^{(r_0)}(0)=0\),
\begin{equation}
\left\|\bar{\mathbf q}_{12}(T_0)\right\|_\infty\ge\frac{v_0T_0}{2}.
\end{equation}
Set $c:=\frac{v_0T_0}{4}$ and $T_+:=T_0$. Then
\begin{equation}
\left\|\bar{\mathbf q}_{12}(T_+)\right\|_\infty\ge2c.
\label{eq:untied-recovery-upper-limit}
\end{equation}
On the other hand, continuity and
\(\bar{\mathbf q}_{12}(0)=0\) imply that there exists
\(T_-\in(0,T_+)\) such that
\begin{equation}
\sup_{0\le\tau\le T_-}\left\|\bar{\mathbf q}_{12}(\tau)\right\|_\infty<\frac c2.
\label{eq:untied-recovery-lower-limit}
\end{equation}
Let
\(\widetilde{\mathbf q}_{12}^{(d)}(\tau)\)
denote the interpolation of the empirical overlap.
Theorem \ref{theo:untied-SGD} gives
\begin{equation}
\sup_{\tau\in[0,T_+]}
\left\|\widetilde{\mathbf q}_{12}^{(d)}(\tau)-\bar{\mathbf q}_{12}(\tau)\right\|_\infty\xrightarrow{\mathbb P}0.
\label{eq:untied-q12-uniform-convergence}
\end{equation}
Using
\eqref{eq:untied-recovery-upper-limit} and
\eqref{eq:untied-recovery-lower-limit}, with probability tending to one,
\begin{equation}
\left\lfloor\frac{T_-d}{\alpha_d}\right\rfloor<N_{\mathrm{wr}}^{(d)}(c)\le\left\lceil\frac{T_+d}{\alpha_d}\right\rceil.
\end{equation}
This proves part (i).

We next prove part (ii).
Let \(\bar\mu(0)\in\mathcal M_{\mathrm{free}}\) be the limiting initial moment sequence. Consider any state
\(\mu\in\mathcal M_{\mathrm{free}}\).
In the slow vector field
\eqref{eq:slow_tau_mu}, the only substitutions which explicitly insert a teacher letter into the evolution of a trainable student letter are $\mathcal T_{i,2,r}^{\mathrm{diff}}$. Their coefficients are
$\left[\nabla_{q^{(r)}}\Phi\left(m^\star(q),q^{(1)},q^{(2)}\right)\right]_{12}$. By assumption \eqref{eq:untied-free-trap-condition}, these coefficients vanish on \(\mathcal M_{\mathrm{free}}\).
The same observation applies to the \(l=2\) contribution to \(\psi_1\), and hence to the effective bias \(\beta_1\). Consequently, when the state belongs to \(\mathcal M_{\mathrm{free}}\), the limiting slow dynamics do not generate mixed student–teacher moments. Thus, \(\mathcal M_{\mathrm{free}}\) is invariant under the limiting slow flow. By uniqueness of the admissible solution in Theorem \ref{theo:untied-SGD}, $\bar\mu(\tau)\in\mathcal M_{\mathrm{free}}$ for $0\le\tau\le T$. In particular, $\bar q_{12}^{(1)}(\tau)=\bar q_{12}^{(2)}(\tau)=0$ for $0\le\tau\le T$.

Applying again the uniform convergence in Theorem \ref{theo:untied-SGD}, for every fixed \(T>0\),
\begin{equation}
\sup_{\tau\in[0,T]}\left\|\widetilde{\mathbf q}_{12}^{(d)}(\tau)\right\|_\infty\xrightarrow{\mathbb P}0.
\end{equation}
Hence, for every fixed \(c>0\),
\begin{equation}
\mathbb P\left(N_{\mathrm{wr}}^{(d)}(c)\le\left\lfloor\frac{Td}{\alpha_d}\right\rfloor
\right)\longrightarrow0.
\end{equation}
This finishes the proof.

\subsection{Convergence rate of strong recovery}
\label{app:untied-strong}
We next study the convergence rate after weak recovery. Recall the first-order moments $t_k^A:=\frac1d\Tr[A_k]$,
$t_k^B:=\frac1d\Tr[B_k]$ and the covariance parameters
$q_{kl}^{(1)}:=\frac1d\Tr[M_kM_l^T]$, $q_{kl}^{(2)}:=\frac1d\Tr[M_kM_l]$. The limiting regularized population loss is
\begin{equation}
\Psi(m,t^A,t^B,q^{(1)},q^{(2)}):=\Phi(m,q^{(1)},q^{(2)})+\frac{\gamma}{2}\sum_{k\in\mathcal K}\left(t_k^A+t_k^B\right).
\label{eq:untied-regularized-potential-strong}
\end{equation}
Along the slow trajectory of Theorem \ref{theo:untied-SGD}, the mean variables are slaved to the covariance,
\begin{equation}
\bar m_{\mathcal K}(\tau)=m_{\mathcal K}^\star
\left(\bar q^{(1)}(\tau),\bar q^{(2)}(\tau)\right).
\end{equation}
Let \(\mathcal Q_{\rm untied}\) denote the asymptotically realizable domain of $\left(m,t^A,t^B,q^{(1)},q^{(2)}\right)$, namely the closure of all limiting order parameters generated by
sequences of matrices $M_k=U_kV_k^T,A_k=U_kU_k^T,B_k=V_kV_k^T$ subject to the fixed teacher components for \(k\notin\mathcal K\). Define
\begin{equation}
\Psi^\star:=\inf_{(m,t^A,t^B,q^{(1)},q^{(2)})\in\mathcal Q_{\rm untied}}\Psi(m,t^A,t^B,q^{(1)},q^{(2)}).
\label{eq:untied-global-optimum-strong}
\end{equation}
We refer to convergence to the global minimizers of \(\Psi\) as strong recovery.

Let $\bar\vartheta(\tau):=\left(\bar t^A(\tau),\bar t^B(\tau),\bar q^{(1)}(\tau),\bar q^{(2)}(\tau)\right)$
denote the corresponding components of the limiting slow trajectory. As in the tied setting, we impose an effective
Polyak--\L ojasiewicz condition along this trajectory.

\begin{assumption}
\label{assum:untied_effective_pl}
There exist constants \(T_0>0\) and \(c>0\) such that, for every \(\tau\ge T_0\),
\begin{equation}
-\frac{d}{d\tau}\Psi\left(\bar m(\tau),\bar t^A(\tau),\bar t^B(\tau),\bar q^{(1)}(\tau),\bar q^{(2)}(\tau)\right)\ge2c\left[\Psi\left(\bar m(\tau),\bar t^A(\tau),\bar t^B(\tau),\bar q^{(1)}(\tau),\bar q^{(2)}(\tau)\right)-\Psi^\star\right].
\label{eq:untied-effective-pl}
\end{equation}
\end{assumption}

For finite \(d\), define the regularized population loss
\begin{align}
\Psi_d(U,V):=\mathbb E_x\left[\mathcal L\left(\left\{\frac1{\sqrt d}x_i^TU_kV_k^Tx_j\right\}_{i,j,k=1}^{L,L,K}
\right)\right]+\frac{\gamma}{2d}\sum_{k\in\mathcal K}
\left(\|U_k\|_F^2+\|V_k\|_F^2\right).
\label{eq:untied-finite-d-population-risk}
\end{align}

\begin{corollary}
\label{cor:untied-strong-recovery}
Suppose that the conditions of Theorem \ref{theo:untied-SGD} and Assumption \ref{assum:untied_effective_pl} hold. Let $n_0:=\left\lceil\frac{T_0d}{\alpha_d}\right\rceil$.
Then, for every fixed target accuracy \(\epsilon>0\), there exists
\begin{equation}
n_\epsilon=n_0+O\left(\frac{d}{\alpha_d}\log_+\frac1\epsilon\right)
\label{eq:untied-strong-recovery-sample-complexity}
\end{equation}
such that
\begin{equation}
\lim_{d\to\infty}\mathbb P\left(\Psi_d(U^{(n_\epsilon)},V^{(n_\epsilon)})-\Psi^\star>\epsilon\right)=0,
\label{eq:untied-strong-recovery-risk-convergence}
\end{equation}
where $\log_+(x):=\max\{0,\log x\}$. In particular, if \(T_0=\Theta(1)\), reaching an \(\epsilon\)-suboptimal population loss requires at most $O\left(\frac{d}{\alpha_d}\log\frac1\epsilon\right)$ additional samples after entering the strong-recovery regime.
For \(\alpha_d=\Theta((d\log d)^{-1})\), this becomes
$O\left(d^2\log d\,\log\frac1\epsilon\right)$.
\end{corollary}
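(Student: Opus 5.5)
The plan is to mirror the proof of Corollary \ref{cor:strong_recovery} in the tied case, replacing the single timescale by the slow timescale $\tau=n\alpha_d/d$ of Theorem \ref{theo:untied-SGD}. It has two parts. First, a deterministic exponential decay of the limiting regularized potential along the slow trajectory. Second, a transfer of that bound to the finite-$d$ population loss at one fixed slow time. Throughout, write $\Delta(\tau):=\Psi(\bar m(\tau),\bar\vartheta(\tau))-\Psi^\star$. Note that $\Delta\ge0$ because the limiting trajectory is a limit of realizable order parameters and hence lies in the closed set $\mathcal Q_{\rm untied}$.

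For the deterministic step, Assumption \ref{assum:untied_effective_pl} gives $\dot\Delta\le-2c\Delta$ for $\tau\ge T_0$. Grönwall then yields $\Delta(\tau)\le\Delta(T_0)e^{-2c(\tau-T_0)}$. I will set
\[
T_\epsilon:=T_0+\frac{1}{2c}\log_+\frac{2\Delta(T_0)}{\epsilon},
\]
so that $\Delta(T_\epsilon)\le\epsilon/2$. One point needs checking: Theorem \ref{theo:untied-SGD} is stated on a fixed horizon $T$, with $\gamma$ and the compact sets allowed to depend on $T$. I will therefore apply it with horizon $T:=T_\epsilon$, which is a fixed number independent of $d$, so $\gamma>\gamma_*(\cdot,T_\epsilon)$ is admissible. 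The map $\tau\mapsto\Psi(\bar m(\tau),\bar\vartheta(\tau))$ is differentiable because $\bar m=m^\star(\bar q)$ with $m^\star$ of class $C^1$ (implicit function theorem under Assumption \ref{assum:untied_convex}), and $\bar\vartheta$ is $C^1$ as a component of the admissible solution.

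For the transfer step, I will show that
\[
\Psi_d(U^{(n)},V^{(n)})-\Psi\bigl(m^{(n)},\vartheta^{(n)}\bigr)\to0
\]
uniformly over $n\le N_{T_\epsilon}$ in probability. The regularizer is exactly $\frac\gamma2\sum_{k\in\mathcal K}(t_k^A+t_k^B)$, so only the loss term needs work. On the event of Lemma \ref{lemma:spectral_bounds_untied}, the letters are uniformly operator-bounded and $|m^{(n)}|\le R_*$ before $J_{\rm mac}$. Lemma \ref{lemma:quantitative-gaussian-approximation}, applied with $f=\mathcal L$ (which is $C^6$ with polynomial growth), gives $|\mathbb E_x\mathcal L(G)-\Phi(m^{(n)},q^{(n)})|\le C/\sqrt d$ uniformly. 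Since $J_{\rm mac}>N_{T_\epsilon}$ with high probability (Lemma \ref{lemma:untied-global-slow-dynamics}), this covers all steps up to $N_{T_\epsilon}$.

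Next I will compare $(m^{(n)},\vartheta^{(n)})$ with $(\bar m(\tau),\bar\vartheta(\tau))$ at the single time $\tau=T_\epsilon>0$. The moments converge by \eqref{eq:untied-slow-moment-convergence}, and the mean converges by \eqref{eq:untied-slow-m-convergence} with $T_1<T_\epsilon\le T_2$. This is the only place where the fast boundary layer matters: at $\tau=0$ the mean is not slaved, but $T_\epsilon\ge T_0>0$, so it lies inside an interval $[T_1,T_2]$. Continuity of $\Phi$ on the compact set $\mathfrak D$ then gives $|\Psi_d-\Psi(\bar m(T_\epsilon),\bar\vartheta(T_\epsilon))|\le\epsilon/2$ with probability tending to one. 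Setting $n_\epsilon:=\lceil T_\epsilon d/\alpha_d\rceil$ proves \eqref{eq:untied-strong-recovery-risk-convergence}.

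Finally, $n_\epsilon-n_0=O\bigl(\frac d{\alpha_d}\log_+\frac1\epsilon\bigr)$, because $\log_+(2\Delta(T_0)/\epsilon)\le\log_+\frac1\epsilon+O(1)$ and $\Delta(T_0)$ is a finite $d$-independent constant. Substituting $\alpha_d=\Theta((d\log d)^{-1})$ gives the stated $O(d^2\log d\log\frac1\epsilon)$. I expect the main obstacle to be the horizon dependence of $\gamma$ in Assumption \ref{assum:untied}.5. I would handle it by treating $\epsilon$ as fixed before taking $d\to\infty$, which is exactly what the statement requires.
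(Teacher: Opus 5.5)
Your proposal is correct and follows the paper's proof essentially step for step. You apply Gr\"onwall to $\Delta$ from the PL condition, use the same $T_\epsilon$ and $n_\epsilon=\lceil T_\epsilon d/\alpha_d\rceil$, and transfer $\Psi_d$ to $\Psi$ at the fixed slow time $T_\epsilon\ge T_0>0$ via Theorem~\ref{theo:untied-SGD}; your use of Lemma~\ref{lemma:quantitative-gaussian-approximation} instead of Theorem~\ref{theo:general} is, if anything, cleaner for random weights.

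One correction to your closing remark. $\gamma$ is a fixed parameter of the objective, and the trajectory, hence $T_\epsilon$, depends on it. So fixing $\epsilon$ before $d\to\infty$ does not by itself make $\gamma>\gamma_*(\cdot,T_\epsilon)$ admissible. The hypothesis must instead be read as holding on every finite horizon, which the paper also assumes implicitly and which Assumption~\ref{assum:untied_effective_pl}, stated for all $\tau\ge T_0$, already presupposes.
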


\begin{proof}
Define the limiting excess regularized population loss
\begin{align}
\Delta(\tau):=\Psi\left(\bar m(\tau),\bar t^A(\tau),\bar t^B(\tau),\bar q^{(1)}(\tau),\bar q^{(2)}(\tau)\right)-\Psi^\star\ge0.
\label{eq:untied-strong-gap}
\end{align}
By Assumption \ref{assum:untied_effective_pl}, for every
\(\tau\ge T_0\),
\begin{equation}
\frac{d}{d\tau}\Delta(\tau)\le-2c\Delta(\tau).
\end{equation}
Hence Gr\"onwall's inequality gives
\begin{equation}
\Delta(\tau)\le\Delta(T_0)e^{-2c(\tau-T_0)},
\qquad
\tau\ge T_0.
\label{eq:untied-deterministic-exponential-decay}
\end{equation}
Set $\Delta_0:=\Delta(T_0)$. If \(\Delta_0=0\), there is nothing to prove. Otherwise, for a fixed
\(\epsilon>0\), define
\begin{equation}
T_\epsilon:=T_0+\frac1{2c}\log_+\left(\frac{2\Delta_0}{\epsilon}\right).
\label{eq:untied-T-epsilon-strong}
\end{equation}
Then
\begin{equation}
\Psi\left(\bar m(T_\epsilon),\bar t^A(T_\epsilon),\bar t^B(T_\epsilon),\bar q^{(1)}(T_\epsilon),\bar q^{(2)}(T_\epsilon)\right)-\Psi^\star\le\frac{\epsilon}{2}.
\label{eq:untied-deterministic-epsilon-gap}
\end{equation}
We next transfer this bound to the finite-dimensional population loss. For \(\tau\ge0\), define $U^{(d)}(\tau):=U^{\left(\left\lfloor\tau d/\alpha_d\right\rfloor
\right)}$ and $V^{(d)}(\tau):=V^{\left(\left\lfloor\tau d/\alpha_d\right\rfloor\right)}$. For every fixed $0<T_1<T_2<\infty$, Theorem \ref{theo:untied-SGD}, together with the uniform spectral bounds in Lemma \ref{lemma:spectral_bounds_untied}, implies
\begin{align}
\sup_{\tau\in[T_1,T_2]}\Big|\Psi_d\left(U^{(d)}(\tau),V^{(d)}(\tau)\right)-\Psi\left(
\bar m(\tau),\bar t^A(\tau),\bar t^B(\tau),\bar q^{(1)}(\tau),\bar q^{(2)}(\tau)\right)\Big|\xrightarrow{\mathbb P}0,
\label{eq:untied-population-risk-uniform-convergence}
\end{align}
where we use Theorem \ref{theo:general}.

Since \(T_0>0\) and \(T_\epsilon<\infty\), we may apply
\eqref{eq:untied-population-risk-uniform-convergence} on
\([T_0,T_\epsilon]\). Consequently,
\begin{align}
\mathbb P\Bigg(\Big|\Psi_d\left(U^{(d)}(T_\epsilon),V^{(d)}(T_\epsilon)\right)-\Psi\left(\bar m(T_\epsilon),\bar t^A(T_\epsilon),\bar t^B(T_\epsilon),\bar q^{(1)}(T_\epsilon),\bar q^{(2)}(T_\epsilon)\right)\Big|>\frac{\epsilon}{2}\Bigg)\longrightarrow0.
\label{eq:untied-risk-transfer-epsilon}
\end{align}
Set $n_\epsilon:=\left\lceil\frac{T_\epsilon d}{\alpha_d}\right\rceil$. Using
\eqref{eq:untied-deterministic-epsilon-gap} and
\eqref{eq:untied-risk-transfer-epsilon}, we obtain
\begin{equation}
\mathbb P\left(\Psi_d(U^{(n_\epsilon)},V^{(n_\epsilon)})-\Psi^\star>\epsilon\right)\longrightarrow0.
\end{equation}
Finally, by
\eqref{eq:untied-T-epsilon-strong},
\begin{equation}
n_\epsilon-n_0=O\left(\frac{d}{\alpha_d}\log_+\frac1\epsilon\right),
\end{equation}
which proves the claimed sample complexity.
\end{proof}

\subsection{Proof of Lemma \ref{lemma:quantitative-gaussian-approximation}}
\label{app:proof-gaussian-approximation}
We concatenate the indices \((i,j,k)\) into a single index
\(a\in\{1,\ldots,P\}\), where \(P=L^2K=\Theta(1)\), and write
$G^{(d)}=(G_a^{(d)})_{a=1}^P$.

As in the proof of Theorem \ref{theo:general}, concatenate the input
vectors into $X=[x_1^T,\ldots,x_L^T]^T
\sim\mathcal N(0,\Sigma)$, where $\Sigma=\mathcal C\otimes I_d$.
For each \(a=(i,j,k)\), there exists a symmetric
\(Ld\times Ld\) matrix \(A_a\) such that $G_a^{(d)}=X^TA_aX$. Writing \(X=\Sigma^{1/2}z\), with \(z\sim\mathcal N(0,I_{Ld})\), and setting
$\widetilde A_a:=\Sigma^{1/2}A_a\Sigma^{1/2}$, we obtain a Gaussian quadratic form
\begin{equation}
G_a^{(d)}=z^T\widetilde A_a z.
\end{equation}

By the same norm estimates as in the proof of Theorem \ref{theo:general}, the assumptions \(\|M_k\|_{\mathrm{op}}\le C_*\) imply, uniformly over \(a\),
\begin{equation}
\|\widetilde A_a\|_{\mathrm{op}}\le\frac{C}{\sqrt d},
\qquad\|\widetilde A_a\|_F\le C,
\label{eq:quantitative-A-bounds}
\end{equation}
where \(C\) depends only on
\(C_*,\mathcal C,L,K\). Moreover, $\max_k\frac1{\sqrt d}|\Tr(M_k)|\le R_*$ implies that the means $\mu_a^{(d)}:=\mathbb E G_a^{(d)}=\Tr(\widetilde A_a)$ are uniformly bounded.

Let
\begin{equation}
W_a^{(d)}:=G_a^{(d)}-\mu_a^{(d)}=z^T\widetilde A_a z-\Tr(\widetilde A_a),
\end{equation}
and let \(Z^{(d)}\) be a centered Gaussian vector having the same
covariance as \(W^{(d)}\). Thus $G_{\mathrm G}^{(d)}=\mu^{(d)}+Z^{(d)}$.

As in Theorem \ref{theo:general}, Wick's formula gives
\begin{equation}
\Xi_{ab}^{(d)}:=\operatorname{Cov}(W_a^{(d)},W_b^{(d)})=2\Tr(\widetilde A_a\widetilde A_b).
\label{eq:quantitative-covariance}
\end{equation}
In particular, \(\Xi^{(d)}\) is uniformly bounded.

We now compare \(W^{(d)}\) with \(Z^{(d)}\) through Gaussian
interpolation. Let \(Z^{(d)}\) be independent of \(z\), and define
\begin{equation}
Y_t:=\mu^{(d)}+\sqrt t\,W^{(d)}+\sqrt{1-t}\,Z^{(d)},
\qquad t\in[0,1],
\end{equation}
and $F(t):=\mathbb E[f(Y_t)]$. Then $F(1)=\mathbb E[f(G^{(d)})]$ and $F(0)=\mathbb E[f(G_{\mathrm G}^{(d)})]$.

For the centered quadratic forms \(W_a^{(d)}\), define $\Gamma_{ab}:=2z^T\widetilde A_a\widetilde A_bz$. The Gaussian integration by parts gives
\begin{equation}
\mathbb E\left[W_a^{(d)}\partial_a f(Y_t)\right]=
\sqrt t\sum_{b=1}^P\mathbb E\left[\Gamma_{ab}\,\partial_{ab}^2f(Y_t)\right].
\label{eq:quadratic-IBP}
\end{equation}
On the other hand, Gaussian integration by parts for \(Z^{(d)}\)
gives
\begin{equation}
\mathbb E\left[Z_a^{(d)}\partial_a f(Y_t)\right]=\sqrt{1-t}
\sum_{b=1}^P\Xi_{ab}^{(d)}\mathbb E\left[\partial_{ab}^2f(Y_t)\right].
\label{eq:Gaussian-IBP-target}
\end{equation}
Differentiating \(F(t)\) and combining
\eqref{eq:quadratic-IBP}--\eqref{eq:Gaussian-IBP-target} yields
\begin{equation}
F'(t)=\frac12\sum_{a,b=1}^P\mathbb E\left[\left(\Gamma_{ab}-\Xi_{ab}^{(d)}\right)\partial_{ab}^2f(Y_t)\right].
\label{eq:Gaussian-interpolation-derivative}
\end{equation}

It remains to control the fluctuation of \(\Gamma_{ab}\).
By \eqref{eq:quantitative-covariance}, $\mathbb E[\Gamma_{ab}]=\Xi_{ab}^{(d)}$.
Since \(\widetilde A_a\) and \(\widetilde A_b\) are symmetric, we have
\begin{align}
\mathbb E\left|
\Gamma_{ab}-\Xi_{ab}^{(d)}
\right|^2&\le C\|\widetilde A_a\widetilde A_b\|_F^2\le C\|\widetilde A_a\|_{\mathrm{op}}^2\|\widetilde A_b\|_F^2\le\frac{C}{d},
\label{eq:Gamma-fluctuation}
\end{align}
where we used \eqref{eq:quantitative-A-bounds} in the last step.

As in the final part of the proof of Theorem
\ref{theo:general}, for every fixed \(p<\infty\),
\begin{equation}
\sup_d\mathbb E\|W^{(d)}\|^p<\infty.
\label{eq:quantitative-W-moments}
\end{equation}
The Gaussian vector \(Z^{(d)}\) has uniformly bounded covariance, so
its moments are uniformly bounded as well. Since the means
\(\mu^{(d)}\) are uniformly bounded, it follows that
\begin{equation}
\sup_{d}\sup_{t\in[0,1]}\mathbb E\|Y_t\|^p<\infty
\label{eq:Y-moment-bound}
\end{equation}
for every fixed \(p\).
Since the second derivatives of \(f\) grow polynomially, \eqref{eq:Y-moment-bound} therefore implies
\begin{equation}
\sup_{d}\sup_{t\in[0,1]}\mathbb E\left|\partial_{ab}^2f(Y_t)\right|^2\le C_f.
\label{eq:f-second-derivative-bound}
\end{equation}
Applying Cauchy--Schwarz to
\eqref{eq:Gaussian-interpolation-derivative}, and using
\eqref{eq:Gamma-fluctuation} and
\eqref{eq:f-second-derivative-bound}, gives
\begin{equation}
|F'(t)|\le\frac{C}{\sqrt d},\qquad t\in[0,1],
\end{equation}
uniformly over all admissible matrix collections.

Integrating from \(0\) to \(1\), we conclude that
\begin{equation}
\left|\mathbb E[f(G^{(d)})]-\mathbb E[f(G_{\mathrm G}^{(d)})]\right|=|F(1)-F(0)|\le\frac{C}{\sqrt d}.
\end{equation}
This proves the lemma.

\section{Declaration of LLM usage} \label{app:LLMusage}
We used LLMs in the following stages of the preparation of this work: editing (e.g., grammar, spelling, word choice), drafting sections of the paper, facilitating or running experiments, visualizing results for submission. All LLM-generated material used in this manuscript was manually verified by the authors to ensure that it performs as intended, in particular with regard to the proof and the numerical experiments.
\end{document}